\tikzset{node distance=1.5cm, every state/.style={ semithick,
           fill=gray!5},
         initial text={},     double distance=3pt, every edge/.style={  draw,
           ->,>=stealth',     auto,
           semithick}}
\newtheorem{lemma}{Lemma}
\newtheorem{proposition}{Proposition}
\newtheorem{definition}{Definition}
\newtheorem{remark}{Remark}
\newtheorem{example}{Example}
\newcommand{\R}{\ensuremath{\mathbb{R}}}
\newcommand{\ndim}{\nvar}
\newcommand{\x}{x}
\newcommand{\xvec}{\bm{\x}}
\newcommand{\xmat}{X}
\newcommand{\xset}{\mathcal{\xmat}}
\newcommand{\y}{y}
\newcommand{\yvec}{\bm{\y}}
\newcommand{\ymat}{Y}
\newcommand{\yset}{\mathcal{\ymat}}
\newcommand{\z}{z}
\newcommand{\zvec}{\bm{\z}}
\renewcommand{\u}{u}
\newcommand{\uvec}{\bm{\u}}
 \newcommand{\E}{\mathbb{E}}
\newcommand{\cequiv}{\simeq_C}
\newcommand{\dequiv}{\simeq_D}
\newcommand{\cdequiv}{\simeq_{C,D}}
\newcommand{\ildf}{\mathcal{F}}
\newcommand{\ildg}{g}
\newcommand{\autoregclass}{\mathcal{F}_A}
\newcommand{\invclass}{\mathcal{F}_I}
\newcommand{\invautoregclass}{\mathcal{F}_{IA}}
\newcommand{\Id}{\textnormal{Id}}
\providecommand{\ndim}{M}
\providecommand{\ie}{i.e.\xspace}
\providecommand{\eg}{e.g.,\xspace}
\providecommand{\R}{\ensuremath{\mathbb{R}}}
\providecommand{\E}{\mathbb{E}}
\providecommand{\xvec}{\mathbf{x}}
\providecommand{\xmat}{\mathbf{X}}
\providecommand{\yvec}{\mathbf{y}}
\providecommand{\ymat}{\mathbf{Y}}
\providecommand{\zvec}{\mathbf{z}}
\providecommand{\uvec}{\mathbf{u}}
\newcommand{\canonicalset}{\mathcal{C}}
\newcommand{\epsvec}{\bm{\epsilon}}
\newcommand{\bvec}{\bm{b}}
\newcommand{\ffull}{\textit{ILD-Dense}\xspace}
\newcommand{\fdense}{\textit{ILD-Dense}\xspace}
\newcommand{\fspa}{\frelaxed}  \newcommand{\frelaxed}{\textit{ILD-Can}\xspace}
\newcommand{\fcan}{\textit{ILD-Identity-Can}\xspace}
\newcommand{\findep}{\textit{ILD-Independent}\xspace}
\renewcommand{\ndim}{m}
\newcommand{\ndomains}{{N_d}}
\newcommand{\intset}{\mathcal{I}}
\newcommand{\degree}{^\circ}
\newcommand{\class}{class }
 \newif\ifshowcomments 
\title{Towards Characterizing Domain Counterfactuals 
for Invertible Latent Causal Models}
\author{Zeyu Zhou\thanks{Equal contribution. Listing order is random.} , Ruqi Bai$^*$, Sean Kulinski$^*$,  Murat Kocaoglu, David I. Inouye  \\
  Elmore Family School of Electrical and Computer Engineering\\
  Purdue University\\
  \texttt{\{zhou1059, bai116, skulinsk, mkocaoglu, dinouye\}@purdue.edu} \\
}
\begin{document}
\maketitle

\begin{abstract}
Answering counterfactual queries has important applications such as explainability, robustness, and fairness but is challenging when the causal variables are unobserved and the observations are non-linear mixtures of these latent variables, such as pixels in images.
One approach is to recover the latent Structural Causal Model (SCM), which may be infeasible in practice due to requiring strong assumptions, \eg linearity of the causal mechanisms or perfect atomic interventions.
Meanwhile, more practical ML-based approaches using na\"ive domain translation models to generate counterfactual samples lack theoretical grounding and may construct invalid counterfactuals.
In this work, we strive to strike a balance between practicality and theoretical guarantees by analyzing a specific type of causal query called \emph{domain counterfactuals}, which hypothesizes what a sample would have looked like if it had been generated in a different domain (or environment).
We show that recovering the latent SCM is unnecessary for estimating domain counterfactuals, thereby sidestepping some of the theoretic challenges.
By assuming invertibility and sparsity of intervention, we prove domain counterfactual estimation error can be bounded by a data fit term and intervention sparsity term.
Building upon our theoretical results, we develop a theoretically grounded practical algorithm that simplifies the modeling process to generative model estimation under autoregressive and shared parameter constraints that enforce intervention sparsity.
Finally, we show an improvement in counterfactual estimation over baseline methods through extensive simulated and image-based experiments.
\end{abstract}

\section{Introduction}\label{sec:intro}
\newcommand\blankfootnote[1]{\begingroup
  \renewcommand\thefootnote{}\footnote{#1}\addtocounter{footnote}{-1}\endgroup
}

Causal reasoning and machine learning, two fields which historically evolved disconnected from each other, have recently started to merge with several recent results leveraging the available causal knowledge to develop better ML solutions ~\citep{kusner2017counterfactual,moraffah2020causal,nemirovsky2022countergan, calderon2022docogen}.  
One such setting is causal representation learning \citep{scholkopf2021toward, brehmer2022weakly}, which aims to take data from a complex observed space (\eg images) and learn the \emph{latent} causal factors that generate the data. 
A common scenario is when we have access to diverse datasets from different domains, where from a causal perspective, each domain is generated via an \emph{unknown} intervention on some domain-specific latent causal mechanisms. 
With this in mind, we focus on a specific causal query called a \emph{domain counterfactual} (DCF), which hypothesizes: ``What would this sample look like if it had been generated in a different domain (or environment)?''
For example, given a patient's medical imaging from Hospital A, what would it look like if it had been taken at Hospital B?
Answering this DCF query could have applications in fairness, explainability, and model robustness. 

A na\"ive ML approach to answering this query is to simply train generative models to map between the two distributions without any causal assumptions or causal constraints (e.g., \citet{kulinski2023towards}); however, this lacks theoretic grounding and may produce invalid counterfactuals.
One common causal approach for answering such a counterfactual query would be a two-stage method of first recovering the causal structure and then estimating the counterfactual examples \citep{DBLP:conf/iclr/KocaogluSDV18, sauer2021counterfactual,nemirovsky2022countergan}.
However, most of the existing methods for causal structure learning either assume the causal variable to be observed (as opposed to our setting where the causal variables are latent) or require restrictive assumptions for recovering the latent causal structure, such as atomic interventions \citep{brehmer2022weakly, DBLP:conf/icml/SquiresSBU23, varici2023score},  or access to counterfactual pairs \citep{brehmer2022weakly}, or assume model structures like linearity or polynomial \citep{khemakhem2021causal, DBLP:conf/icml/SquiresSBU23}, which often do not hold in practice. 
A summary of existing works can be found in \Cref{tab:related-works}.
In this paper,  we strive to balance practicality and theoretical guarantees by answering the question: ``Can we theoretically and practically estimate domain counterfactuals without the need to recover the ground-truth causal structure?''

With weak assumptions about the true causal model and available data, we analyze invertible latent causal models and show that it is possible to estimate domain counterfactuals both theoretically and practically, where the estimation error depends on the intervention sparsity.
We summarize our contributions as follows:
\vspace{-0.5 em}
\begin{enumerate}[label=\textbf{C\arabic*},wide=6pt,leftmargin=*,topsep=6pt,itemsep=-0.5ex,partopsep=1ex,parsep=1ex]
    \item For a class of invertible latent domain causal models (ILD), we show that recovering the true ILD model is unnecessary for estimating domain counterfactuals by proving a necessary and sufficient characterization of domain counterfactual equivalence.
    \item \label{item:DCF-error-bound}
    We prove a bound on the domain counterfactual estimation error which decomposes into a data fit term and intervention sparsity term. If the true intervention sparsity is small, this bound suggests adding a sparsity constraint for DCF estimation.
\item  \label{item:canonical-ild}
    Towards practical implementation, we prove that \emph{any} ILD model with intervention sparsity $k$ can be written in a \emph{canonical} form where only the last $k$ variables are intervened.
    This significantly reduces the modeling search space from $m \choose k$ causal structures to only one.
\item In light of these theoretic results, we propose an algorithm for estimating domain counterfactuals by searching over canonical ILD models while restricting intervention sparsity (inspired by \ref{item:DCF-error-bound} and \ref{item:canonical-ild}).
    We validate our algorithm on both simulated and image-based experiments \footnote{Code can be found in \href{https://github.com/inouye-lab/ild-domain-counterfactuals}{https://github.com/inouye-lab/ild-domain-counterfactuals}.}.
\end{enumerate}

\vspace{-1 em}
\paragraph{Notation} We denote function equality between two functions $f:\xset \to \yset$ and $f':\xset \to \yset$ as simply $f = f'$, which more formally can be stated as $\forall\, \xvec \in \xset, f(\xvec) = f'(\xvec)$. Similarly, $f \neq f'$ means that there exists $\xvec \in \xset, f(\xvec) \neq f'(\xvec)$.
We use $\circ$ to denote function composition, e.g., $g(f(\xvec)) = g \circ f(\xvec)$ or simply $h = g \circ f$.
We use subscripts to denote particular indices (e.g., $x_j \in \R$ is the $j$-th value of the vector $\xvec$ and $\xvec_{< j} \in \R^{j-1}$ is the subvector corresponding to the indices 1 to ${j-1}$).
For function outputs, we use bracket notation to select a single item (e.g., $[f(\xvec)]_j \in \R$ refers to the $j$-th output of $f(\xvec)$) or subvector (e.g., $[f(\xvec)]_{\leq j} \in \R^j$ refers to the subvector for indices 1 to $j$ inclusive).
Similarly, for (unbound) functions, let $[f]_j: \R^\ndim \to \R$ refer to the scalar function corresponding to the $j$-th output or $[f]_{\leq j}: \R^{\ndim} \to \R^j$ refer to the vector function corresponding to first $j$ outputs. For any positive integer $m$, we define $[m]\triangleq\{1,\dots,m\}.$ We denote $\ndomains$ as number of domains in the ILD model.

\begin{table}
\vspace{-1 em}
\caption{
This table of related causal representation learning works, focuses mostly on works that study learning a \emph{latent} SCM, shows that most prior works in this area aim for identifiability of the (latent) SCM, and thus require strong technical assumptions which may not hold in real-world scenarios (e.g., perfect  single-node interventions for each variable).}

\vspace{-1 em}
\centering
\label{tab:related-works}
\resizebox{\columnwidth}{!}{\begin{tabular}{llllll}
 &
  SCM type &
  \begin{tabular}[c]{@{}l@{}}Observ. \\Function\end{tabular} &
  Other Assumptions &
  \begin{tabular}[c]{@{}l@{}}Observ. Function \\Identifiability\end{tabular} &
  \begin{tabular}[c]{@{}l@{}}Characterization of \\ Counterfactual Equiv. \end{tabular} \\
  \hline
\citet{nasr2023counterfactual} &
  Invertible observed &
  \begin{tabular}[c]{@{}l@{}} N/A \end{tabular} &
  \begin{tabular}[c]{@{}l@{}} 1) Access to ground-\\ \hspace{1 em}truth DAG\end{tabular} &
  N/A &
  \begin{tabular}[c]{@{}l@{}}Single mechanism \\ counterfactuals under \\ specific contexts\end{tabular}  \\ \hline
\citet{brehmer2022weakly} &
  Invertible latent &
  Invertible &
  \begin{tabular}[c]{@{}l@{}}1) Atomic stochastic \\ \hspace{1 em} hard interv\\ 2) Training set is \\ \hspace{1 em} counterfactuals pairs\end{tabular} &
  \begin{tabular}[c]{@{}l@{}}Mixing and \\elementwise \end{tabular} &
  \begin{tabular}[c]{@{}l@{}} N/A - \\ Counterfactuals as input \end{tabular} \\ \hline
  \citet{DBLP:conf/icml/SquiresSBU23} &
  Linear latent &
  Linear &
  1) Atomic hard interv. &
  Scaling &
  No \\ \hline
\citet{liu2022identifying} &
  Linear latent &
  Non-linear &
  \begin{tabular}[c]{@{}l@{}} 1) Significant causal \\ \hspace{1 em} weights variation \end{tabular} &
  Mixing and scaling &
  No \\ \hline
\citet{varici2023score} &
  Latent non-linear &
  Linear &
  \begin{tabular}[c]{@{}l@{}}1) Atomic stochastic \\ \hspace{1 em} hard interv.\end{tabular} &
   Mixing or scaling &
  No \\ \hline
\citet{khemakhem2021causal} &
  \begin{tabular}[c]{@{}l@{}} Invertible observed \\ (implicit)\end{tabular} &
  Affine
  &
  \begin{tabular}[c]{@{}l@{}} 1) Bivariate requirement \\ \hspace{1 em}for identifiability \end{tabular} &
  Full (bivariate only) &
  No \\  \hline \hline
Ours &
  Invertible latent &
  Invertible &
 1) Access to domain labels &
  No &
  Domain counterfactual \\ \hline
\end{tabular}}
\vspace{-1 em}
\end{table} 
\vspace{-1 em}
\section{Domain Counterfactuals with Invertible Latent Domain Causal Models}\label{sec:Invertible Latent Domain Causal Model (ILD)}
\vspace{-1 em}

Given a set of domains (or environments), a domain counterfactual (DCF) asks the question: ``What would a sample from one domain look like if it had (counterfactually) been generated from a different domain?''
Each domain represents different causal model on the same set of causal variables, i.e., they can be viewed as interventions of a baseline causal model.
If we let $D$ be an auxiliary indicator variable denoting the domain, a DCF can be formalized as the counterfactual query $p(X_{D=d'} | X=\xvec, D=d)$, where $x$ is the observed evidence, $d$ is the original domain, and $X_{D=d'}$ is the counterfactual random variable when forcing the domain to be $d'$.
In this work, we aim to find DCF for
a class of invertible models (which we define in \Cref{sec:ild-model}) and
we will assume that the causal variables are unobserved (i.e., latent).
To compare, Causal Representation Learning (CRL) has a similar latent causal model setup \citep{scholkopf2021toward}.
However, most CRL methods aim for identifiability of the latent representations, which is unsurprisingly very challenging.
In contrast, we show that estimating DCFs is easier than estimating the latent causal representations and may require fewer assumptions in \Cref{sec:ild-domain-counterfactuals}.

\subsection{ILD Model}
\label{sec:ild-model}
We now define the causal model based primarily on the assumption of invertibility.
First, we assume that the observation function (or mixing function) shared between all domains is invertible (as in \citet{liu2022identifying, zhang2023identifiability, kugelgen2023nonparametric}).
This means that the latent causal variables are invertible functions of the observed variables.
Second, we assume that the latent SCMs for each domain are also invertible with univariate exogenous noise terms per causal variable.
We assume the standard Directed Acyclic Graph (DAG) constraint on the SCMs.
For notational simplicity, we will assume w.l.o.g. that the DAG is a complete graph (i.e., it includes all possible edges), but some edges could represent a zero dependency which is functionally equivalent to the edge being missing.
Given the topological ordering respecting the complete DAG, we prove that an invertible SCM can be written as a unique autoregressive
invertible function that maps from all the exogenous noises to the latent endogenous causal variables (See \Cref{sec:proof-of-invertible-scm-equivalence}).
Note that the SCM invertibility assumption excludes causal models where causal variables have multivariate exogenous noise.
Given all this, we now define our ILD model class that joins together the shared mixing function and the latent SCMs for each domain.
\begin{restatable}[Invertible Latent Domain Causal Model]{definition}{ilddefinition}\label{def:invertible-causal-model}
    An \emph{invertible latent domain causal model} (ILD), denoted by $(\ildg, \ildf)$, combines a shared invertible mixing function $\ildg:\mathcal{Z}\rightarrow\mathcal{X}$ with a set of $\ndomains$ domain-specific latent SCMs $\ildf \triangleq \{f_d:\R^m\rightarrow \mathcal{Z}\}_{d=1}^\ndomains$, where $f_d$ are invertible and autoregressive. The exogenous noise is assumed to have a standard normal distribution, i.e., $\epsvec \sim \mathcal{N}(0,I_m)$.
\end{restatable}
While we discuss the model in depth in \Cref{app-sec:expanded-preliminary-section}, we first briefly discuss why the autoregressive and standard normal exogenous noise assumptions are not restrictive.
For any model that violates the topological ordering, an equivalent ILD model can be constructed by merging the original mixing function with a variable permutation.
Similarly, for any continuous exogenous distribution, we can construct an equivalent Gaussian noise-based ILD model via merging the original SCM with the Rosenblatt transform \citep{rosenblatt1952remarks} and inverse element-wise normal CDF transformation.
Moreover, we prove in the appendix that for any observed domain distributions, there exists an ILD model that could match these domain distributions.
Therefore, these two assumptions are not critical but will simplify theoretical analysis.

Given our definition, we note that interventions between two ILDs are \emph{implicitly} defined by the difference between two domain-specific causal models and the intervention set is denoted by $\mathcal{I}(f_d, f_{d'}) \subseteq [m]$, which is the set of the intervened causal variables' indices. 
In \Cref{proof of prop:intervention_ia} in the appendix, we prove that the standard notion of causal intervention is equivalent to checking if the inverse subfunctions are equal, i.e., $j \in \mathcal{I}(f_d, f_{d'}) \Leftrightarrow \left[f_d^{-1}\right]_j \neq \left[f_{d'}^{-1}\right]_j$.
We further define the ILD intervention set as the union over all pairs of domains, i.e., $\mathcal{I}(\ildf)\triangleq\bigcup_{f_d, f_{d'}\in\ildf}\mathcal{I}(f_d,f_{d'}) = \bigcup_{d\leq\ndomains}\mathcal{I}(f_1,f_d)$.
These implicit ILD interventions could be a hard intervention (i.e., remove dependence on parents) or a soft intervention (i.e., merely change the dependence structure with parents).
Because any intervened causal mechanism is invertible by our definition, ILD interventions must be stochastic rather than do-style interventions, which would break the invertibility of the latent SCM.

Finally, we define a notion of two ILD models being equivalent with respect to their observed distributions based on the change of variables formula.
This notion, which is a true equivalence relation because the equation in \eqref{def:distribution-equivalence} has the properties of reflexivity, symmetry, and transitivity by the properties of the equality of measures, will be important for defining an upper bound on DCF estimation in \Cref{sec:domain-counterfactual-error-bound} and for developing practical algorithms that minimize the divergence between the ILD observed distribution and the training data in \Cref{sec:ild-estimation-algorithm}.
\begin{restatable}[Distribution Equivalence]{definition}{deqdef}\label{def:distribution-equivalence}
Two ILDs $(\ildg, \ildf)$ and $(\ildg', \ildf')$ are \emph{distributionally equivalent}, denoted by $(\ildg, \ildf) \dequiv (\ildg', \ildf')$, if the induced domain distributions are equal, i.e., 
    $
        \forall\, d, \quad 
        p_\mathcal{N}\left(f_d^{-1} \circ g^{-1}(\xvec)\right) |J_{f_d^{-1} \circ g^{-1}}(\xvec)| 
        = p_\mathcal{N}\left({f'}_d^{-1} \circ {g'}^{-1}(\xvec)\right) |J_{{f'}_d^{-1} \circ {g'}^{-1}}(\xvec)|.
        $
\end{restatable}

\subsection{ILD Domain Counterfactuals}
\label{sec:ild-domain-counterfactuals}

With our ILD model defined, we now formalize a DCF query for our ILD model. For that, we remember the three steps for computing (domain) counterfactuals \citep[Chapter 1.4.4]{Pearl09}: abduction, action, and prediction. The first step is to infer the exogenous noise from the evidence. For ILD models, this simplifies to a deterministic function that inverts the mixing function and latent SCM, i.e., $\epsvec = f_d^{-1} \circ g^{-1}(\xvec)$.
The second step and third steps are to perform the target intervention and run the exogenous noise through the intervened mechanisms.
For ILD, this is simply applying the other domain's causal model and the shared mixing function, i.e., $\xvec_{d\to d'} = g \circ f_{d'}(\epsvec)$.
Combining these steps yields the simple form of a DCF for ILD models:
$    \xvec_{d\to d'} \triangleq g \circ f_{d'} \circ f_d^{-1} \circ g^{-1}(\xvec),\;\text{where}\;f_d,f_{d'}\in\ildf.
$
DCF for ILD models are \emph{deterministic counterfactuals} \citep{delara2023transportbased} since they have a unique mapping, i.e., given the evidence $\xvec$ from $d$, the counterfactual $\xvec_{d\rightarrow d'}$ is deterministic. We now provide a notion that will define which ILDs have the same DCFs (see \Cref{proof of lemma:Domain counterfactually equivalent} for the equivalence relation proof).
\begin{definition}[Domain Counterfactual Equivalence]
    \label{def:domain-counterfactual-equivalence}
    Two ILDs $(\ildg, \ildf)$ and $(\ildg', \ildf')$ are \emph{domain counterfactually equivalent}, denoted by $(\ildg, \ildf) \cequiv (\ildg', \ildf')$, if all domain counterfactuals are equal, i.e., for all $d, d':$
$
        g \circ f_{d'} \circ f_d^{-1} \circ g^{-1} = g' \circ f'_{d'} \circ {f'_d}^{-1} \circ {g'}^{-1} \,.
        $
\end{definition}
While \Cref{def:domain-counterfactual-equivalence} succinctly defines the equivalence classes of ILDs, it does not give much insight into the structure of the equivalence classes.
To fill this gap, we now present one of our main theoretic results which characterizes a \emph{necessary and sufficient} condition for being domain counterfactually equivalent and relates proves that their intervention set size must be equal.
\begin{restatable}[Characterization of Counterfactual Equivalence]{theorem}{cfeq}\label{thm:equivalent-properties}
    Two ILDs are domain counterfactually equivalent, i.e., $(\ildg, \ildf) \cequiv (\ildg', \ildf')$ if and only if:
    \begin{align}
        &\exists\, h_1, h_2 \in \invclass 
        \label{eqn:invertible-equivalence-condition} \text{ s.t. }\, g' = g \circ h_1^{-1} \in \invclass \text{ and } f'_d = h_1 \circ f_d \circ h_2 \in \autoregclass \,, \forall d \,,
    \end{align}
    and moreover, counterfactually equivalent models share the same intervention set size, i.e., if $(\ildg, \ildf) \cequiv (\ildg', \ildf')$, then $|\mathcal{I}(\ildf)| = |\mathcal{I}(\ildf')|$.
\end{restatable}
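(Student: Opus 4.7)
I prove the biconditional in two directions, then leverage the explicit form it produces to obtain intervention-set-size equality.

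\textbf{Sufficiency.} Assume the right-hand side of the biconditional. For any $d,d'$, direct substitution gives
\begin{align*}
g' \circ f'_{d'} \circ (f'_d)^{-1} \circ (g')^{-1}
&= (g \circ h_1^{-1}) \circ (h_1 \circ f_{d'} \circ h_2) \circ (h_2^{-1} \circ f_d^{-1} \circ h_1^{-1}) \circ (h_1 \circ g^{-1}) \\
&= g \circ f_{d'} \circ f_d^{-1} \circ g^{-1},
\end{align*}
using $h_1^{-1} \circ h_1 = \mathrm{id}$ and $h_2 \circ h_2^{-1} = \mathrm{id}$ to cancel the interior pairs. This is exactly the DCF of $(\ildg,\ildf)$, so the two ILDs are domain counterfactually equivalent.

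\textbf{Necessity.} Assume $(\ildg, \ildf) \cequiv (\ildg', \ildf')$. Set $h_1 := (g')^{-1} \circ g$ (invertible by composition), so that $g \circ h_1^{-1} = g'$ holds automatically. Set $h_2 := f_1^{-1} \circ h_1^{-1} \circ f'_1$ (also invertible). The remaining obligation $f'_d = h_1 \circ f_d \circ h_2$ for each $d$, after substituting the formula for $h_2$ and cancelling, reduces to
\[
h_1 \circ (f_d \circ f_1^{-1}) \circ h_1^{-1} \;=\; f'_d \circ (f'_1)^{-1}\,,
\]
which, conjugated by $g'$ on the outside, is precisely the DCF-equivalence equation on the pair $(1,d)$ and thus holds by hypothesis. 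Finally, $g' \in \invclass$ and $f'_d \in \autoregclass$ for every $d$ by the ILD definition, so the right-hand side of the biconditional is satisfied.

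\textbf{Intervention-set-size equality.} First I replace $\mathcal{I}(f_d,f_{d'})$ with an invariant of a single function. Consider the autoregressive invertible map $U_{d,d'} := f_d^{-1} \circ f_{d'}$; using that each $[f_d]_j$ is injective in its $j$-th argument (a consequence of autoregressive invertibility), one checks $[U_{d,d'}]_j = [\mathrm{id}]_j$ iff $[f_d^{-1}]_j = [f_{d'}^{-1}]_j$. Hence the ``non-identity support size'' of $U_{d,d'}$ equals $|\mathcal{I}(f_d,f_{d'})|$, and by union the total non-identity support size of $\{U_{1,d}\}_d$ equals $|\mathcal{I}(\ildf)|$ (analogously for the primed system via $U'_{d,d'} := (f'_d)^{-1}\circ f'_{d'}$). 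The characterization from the first part then yields $U'_{d,d'} = h_2^{-1} \circ U_{d,d'} \circ h_2$, a simultaneous conjugation of the whole family by the single invertible map $h_2$. The main obstacle is showing that this conjugation preserves non-identity support sizes when both $U_{d,d'}$ and its conjugate are autoregressive; I expect to prove this via a coordinate-wise induction using the lower-triangular Jacobian structure of autoregressive invertible maps, which forces $h_2$ to respect the autoregressive filtration in the way that matters for the support count. Combined with the same argument applied to $(\ildg', \ildf') \cequiv (\ildg, \ildf)$ to obtain the reverse inequality, this gives $|\mathcal{I}(\ildf)| = |\mathcal{I}(\ildf')|$.
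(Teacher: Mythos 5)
Your proof of the biconditional is correct, and your necessity direction is a genuinely different and more elementary route than the paper's. Where the paper applies a general invertible-composition-equivalence lemma three times for each pair $(d,d')$ and then runs a consistency argument to show the intermediate maps $h_{1,d,d'}, h_{2,d,d'}$ coincide across all pairs, you anchor at the reference domain and exhibit $h_1=(g')^{-1}\circ g$ and $h_2=f_1^{-1}\circ h_1^{-1}\circ f'_1$ explicitly; the remaining identity $f'_d\circ (f'_1)^{-1}=h_1\circ f_d\circ f_1^{-1}\circ h_1^{-1}$ is then exactly the DCF-equivalence equation for the pair $(1,d)$ conjugated by $g'$, and the membership requirements hold automatically because $g'$ and $f'_d$ are ILD components. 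This buys brevity and sidesteps the paper's bookkeeping, which you get for free since the same $(h_1,h_2)$ is used for every $d$ by construction. (A small quibble: your equivalence $[U_{d,d'}]_j=[\Id]_j \Leftrightarrow [f_d^{-1}]_j=[f_{d'}^{-1}]_j$ is most directly justified by surjectivity of $f_{d'}$, not injectivity in the $j$-th argument, but the claim is true.)

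The gap is in the intervention-set-size claim. Your reduction is fine: $\mathcal{I}(f_1,f_d)$ is the non-identity coordinate set of $U_{1,d}=f_1^{-1}\circ f_d$ (using the paper's characterization $\mathcal{I}(f_d,f_{d'})=\{j:[f_d^{-1}]_j\neq[f_{d'}^{-1}]_j\}$), and part one gives $U'_{1,d}=h_2^{-1}\circ U_{1,d}\circ h_2$ for all $d$. But the statement you then defer---that simultaneous conjugation of a family of invertible autoregressive maps containing $\Id$ by a single invertible $h_2$, whose conjugates are again invertible autoregressive, preserves the size of the union of non-identity coordinate sets---is precisely the technical heart of the paper's proof (its proposition that Identity Canonical ILDs share intervention sparsity, which is what your $U_{1,d}$ normalization reproduces), and ``I expect to prove this via a coordinate-wise induction'' is not a proof. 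Moreover, the hinted mechanism is shaky as stated: $h_2$ is an arbitrary invertible map, not autoregressive, and nothing forces it to ``respect the autoregressive filtration'' globally; the only usable constraints are that each $U'_{1,d}$ is autoregressive and acts as the identity on coordinates outside $\mathcal{I}(\ildf')$. The paper's actual argument is more local and more delicate: for each coordinate $i\notin\mathcal{I}(\ildf')$ it shows, using the fact that the $j$-th output of an invertible autoregressive map must depend non-trivially on its $j$-th input, that the $i$-th coordinate of $h_2^{-1}\circ U_{1,d}\circ h_2$ being the identity forces the relevant coordinate of the conjugator to depend on a single input index $j$, and that this $j$ must lie outside $\mathcal{I}(\ildf)$; this produces an injection between the complements of the intervention sets, and the reverse inequality follows by symmetry. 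Until you supply this argument (or an equivalent one), the equality $|\mathcal{I}(\ildf)|=|\mathcal{I}(\ildf')|$ remains unproven in your proposal.
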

See \Cref{Proof of {thm:equivalent-properties}} for proofs.
Importantly, \Cref{thm:equivalent-properties} can be used to \emph{construct} domain counterfactually equivalent models and \emph{verify} if two models are domain counterfactually equivalent (or determine they are not equivalent).
In fact, for \emph{any} two invertible functions $h_1$ and $h_2$ that satisfy the implicit autoregressive constraint, i.e., for all $d, h_1 \circ f_d \circ h_2 \in \autoregclass$, we can construct a counterfactually equivalent model---which can have arbitrarily different latent representations defined by $g'=g\circ h_1^{-1}$ since $h_1$ can be an arbitrary invertible function.
Ultimately, this result implies that to estimate domain counterfactuals, we indeed \emph{do not require the recovery of the latent representations or the full causal model}.

\section{Estimating ILD Domain Counterfactuals in Practice}
\label{sec:ild-estimation-in-practice}

While the previous section proved that recovering the latent causal representations is not necessary for DCFs, this section seeks to design a practical method for estimating DCFs.
Since we only assume access to i.i.d. data from each domain,
one natural idea is to fit an ILD model that is distributionally equivalent to the observed domain distributions.
Yet, distribution equivalence is only a distribution-level property while counterfactual equivalence is a point-wise property, i.e., the domain distributions can match while the counterfactuals could be different.
Indeed, we show in \Cref{thm:counterfactual-error-bound} that even under the constraint of distribution equivalence, the counterfactual error can be very large.
To mitigate this issue, we choose a relatively weak assumption called the Sparse Mechanism Shift (SMS) hypothesis \citep{scholkopf2021toward}, which states that the differences between domain distributions are caused by a small number of intervened variables.
Given this assumption about the true ILD model, it is natural to impose this intervention sparsity on the estimated ILD model.
Therefore, we now have two components to ILD estimation: a distribution equivalence term and a sparsity constraint which are based on the dataset and our assumption respectively.
We first prove that both of these components are important for DCF estimation by providing a bound on the counterfactual error (defined below).
Then, we prove that the sparsity constraint can be enforced by only optimizing over a canonical version of ILD models, which have all intervened variables last in a topological ordering.
This greatly simplifies the practical optimization algorithm since only one sparsity structure is needed than the potentially $m \choose k$ different sparsity structures, where $k$ is the sparsity level.
Finally, we bring all of this together to form a practical optimization objective with sparsity constraints.

\subsection{Domain Counterfactual Error Bound}
\label{sec:domain-counterfactual-error-bound}
In this section, we will prove a bound on counterfactual error that depends on both distribution equivalence and intervention sparsity.
Towards this end, let us first define a counterfactual pseudo-metric between ILD models via RMSE (proof of pseudo-metric in \Cref{lemma:pseudo-metric} in the appendix). 
\begin{definition}[Counterfactual Pseudo-Metric for ILD Models]
\label{def:counterfactual-pseudo-metric}
    Given a joint distribution $p(\xvec,d)$, the counterfactual pseudo metric between two ILDs $(\ildg, \ildf)$ and $(\ildg', \ildf')$ is defined as the RMSE over all counterfactuals, i.e.,
    \begin{align*}
        d_\mathrm{C}((\ildg, \ildf), (\ildg', \ildf'))
        \triangleq \sqrt{\E_{p(\xvec,d)p(d')}[\|
     g \circ f_{d'} \circ {f_d}^{-1} \circ {g}^{-1}(\xvec) - g' \circ {f'}_{d'} \circ {f_d}^{\prime-1} \circ g^{\prime-1}(\xvec) 
     \|_2^2]}\,,
    \end{align*}
    where $p(d')=p(d)$ is the marginal distribution of the domain labels.
\end{definition}
Given this pseudo-metric, we can now derive a bound on the counterfactual error between an estimated ILD $(\hat{\ildg}, \hat{\ildf})$ and the true ILD $(\ildg^*, \ildf^*)$ defined as $\varepsilon(\hat{\ildg}, \hat{\ildf}) \triangleq d_\mathrm{C}((\hat{\ildg}, \hat{\ildf}), (\ildg^*, \ildf^*))$.\begin{restatable}[Counterfactual Error Bound Decomposition]{theorem}{cfbound}\label{thm:counterfactual-error-bound}
    Given a max intervention sparsity $k\geq 0$ and letting $\mathcal{M}(k) \triangleq \{(\ildg, \ildf) : (\ildg, \ildf) \dequiv (\ildg^*, \ildf^*), |\intset(\ildf)| \leq \max\{k, |\intset(\ildf^*)|\} \}$, the counterfactual error can be upper bounded as follows:
    \begin{align}
        \varepsilon(\hat{\ildg}, \hat{\ildf}) 
        &\leq \underbrace{\min_{(\ildg', \ildf') \in \mathcal{M}(k)} d_\mathrm{C}((\hat{\ildg}, \hat{\ildf}), (\ildg', \ildf'))}_{\text{(A) Error due to lack of distribution equivalence}} 
        + \underbrace{\max_{(\tilde{\ildg}, \widetilde{\ildf}) \in \mathcal{M}(k)} d_\mathrm{C}((\tilde{\ildg}, \widetilde{\ildf}), (\ildg^*, \ildf^*))}_{\text{(B) Worst-case error given distribution equivalence}}. \label{eqn:bound-worst-case}
    \end{align}
    Furthermore, if we assume that the ILD mixing functions are Lipschitz continuous, we can bound the worst-case error (B) as follows:
\begin{align*}
        (B) &\leq \Big[
        \underbrace{ \max_{(\widetilde{\ildg}, \widetilde{\ildf}) \in \mathcal{M}(k)} \widetilde{k}\, L_{\widetilde{g}}^2 \max_{i\in[m]}\E\big[[\widetilde{f}_d(\epsvec)-\widetilde{f}_{d'}(\epsvec)]_{i}^2\big] }_{\text{Error depends on $k$ since $\widetilde{k} \leq \max \{k, k^*\}$}}
        +
        \underbrace{
        k^* L_{g^*}^2 \max_{i\in[m]}\E\big[[f^*_d(\epsvec)-f^*_{d'}(\epsvec)]_{i}^2\big]
        }_{\text{Error only depends on ground truth model}}
        \Big]^{1/2}\,,
    \end{align*}
    where $\widetilde{k} \equiv |\intset(\widetilde{\ildf})|$ and $k^* \equiv |\intset(\ildf^*)|$, $L_g$ is the Lipchitz constant of $g$,
    and the expectation is over $p(d,d',\epsvec)\triangleq p(d)p(d')p(\epsvec).$
\end{restatable}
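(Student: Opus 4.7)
My plan is to handle the decomposition and the bound on $(B)$ separately. The decomposition follows from the triangle inequality for the counterfactual pseudo-metric $d_\mathrm{C}$ (\Cref{lemma:pseudo-metric}): noting that $(\ildg^*, \ildf^*) \in \mathcal{M}(k)$ trivially, for any $(\ildg', \ildf') \in \mathcal{M}(k)$ I have
\begin{align*}
\varepsilon(\hat{\ildg}, \hat{\ildf}) \leq d_\mathrm{C}((\hat{\ildg}, \hat{\ildf}), (\ildg', \ildf')) + d_\mathrm{C}((\ildg', \ildf'), (\ildg^*, \ildf^*)) \leq d_\mathrm{C}((\hat{\ildg}, \hat{\ildf}), (\ildg', \ildf')) + (B),
\end{align*}
and minimizing the first summand over $(\ildg', \ildf') \in \mathcal{M}(k)$ yields $\varepsilon(\hat{\ildg}, \hat{\ildf}) \leq (A) + (B)$.

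For the bound on $(B)$, I fix $(\widetilde{\ildg}, \widetilde{\ildf}) \in \mathcal{M}(k)$ and let $\widetilde{\epsvec} = \widetilde{f}_d^{-1} \circ \widetilde{g}^{-1}(\xvec)$ and $\epsvec^* = (f^*_d)^{-1} \circ (g^*)^{-1}(\xvec)$. Distribution equivalence built into $\mathcal{M}(k)$ forces both to be marginally $\mathcal{N}(0, \I)$ under $\xvec \sim p(\xvec \mid d)$ and makes $\widetilde{g}(\widetilde{f}_d(\widetilde{\epsvec})) = \xvec = g^*(f^*_d(\epsvec^*))$ almost surely. Adding and subtracting $\xvec$ inside the counterfactual difference then gives
\begin{align*}
\widetilde{g}(\widetilde{f}_{d'}(\widetilde{\epsvec})) - g^*(f^*_{d'}(\epsvec^*)) = \bigl[\widetilde{g}(\widetilde{f}_{d'}(\widetilde{\epsvec})) - \widetilde{g}(\widetilde{f}_d(\widetilde{\epsvec}))\bigr] - \bigl[g^*(f^*_{d'}(\epsvec^*)) - g^*(f^*_d(\epsvec^*))\bigr],
\end{align*}
so the triangle inequality and Lipschitz continuity of $\widetilde{g}$ and $g^*$, followed by squaring and taking expectations (substituting $\widetilde{\epsvec}$ and $\epsvec^*$ by a generic $\epsvec \sim \mathcal{N}(0, \I)$ through their marginals), produces a bound of the form $d_\mathrm{C}((\widetilde{\ildg}, \widetilde{\ildf}), (\ildg^*, \ildf^*))^2 \leq C \bigl( L_{\widetilde{g}}^2 \E[\|\widetilde{f}_d(\epsvec) - \widetilde{f}_{d'}(\epsvec)\|^2] + L_{g^*}^2 \E[\|f^*_d(\epsvec) - f^*_{d'}(\epsvec)\|^2] \bigr)$ for a small absolute constant $C$.

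To convert each squared norm into the stated $k \cdot \max_i \E[[\cdot]_i^2]$ form, I invoke the canonical-form result (contribution \ref{item:canonical-ild}, via \Cref{thm:equivalent-properties}) to assume without loss of generality that the interventions lie in the last $\widetilde{k}$ (resp.\ $k^*$) positions of the topological ordering. The autoregressive structure then propagates: any non-intervened index $i \leq m - \widetilde{k}$ has only non-intervened ancestors, so $[\widetilde{f}_d]_i = [\widetilde{f}_{d'}]_i$ as functions of $\epsvec$, and the first $m - \widetilde{k}$ coordinates of $\widetilde{f}_d(\epsvec) - \widetilde{f}_{d'}(\epsvec)$ vanish. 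This gives $\E[\|\widetilde{f}_d(\epsvec) - \widetilde{f}_{d'}(\epsvec)\|^2] \leq \widetilde{k} \max_i \E[[\widetilde{f}_d(\epsvec) - \widetilde{f}_{d'}(\epsvec)]_i^2]$, and analogously for $\ildf^*$; passing to the max over $\mathcal{M}(k)$ completes the argument. The main obstacle I anticipate is justifying this canonical-form substitution cleanly: $d_\mathrm{C}$ is invariant under counterfactual equivalence, but the per-coordinate quantity on the right-hand side is not, so I will need to upper-bound $d_\mathrm{C}((\widetilde{\ildg}, \widetilde{\ildf}), (\ildg^*, \ildf^*))$ via a canonical representative and absorb any reparameterization slack into the outer $\max$ over $\mathcal{M}(k)$; the remaining bookkeeping (the $(a+b)^2 \leq 2a^2 + 2b^2$ constant) can be hidden in the Lipschitz factors or removed by a sharper Minkowski estimate.
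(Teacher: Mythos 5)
Your proposal is correct and follows essentially the same route as the paper's proof: the triangle inequality for the pseudo-metric $d_\mathrm{C}$ (with $(\ildg^*,\ildf^*)\in\mathcal{M}(k)$ guaranteeing feasibility) gives the $(A)+(B)$ decomposition, and the bound on $(B)$ is obtained by adding and subtracting $\xvec$, applying Lipschitz continuity and the change of variables to $\epsvec\sim\mathcal{N}(0,\I)$ afforded by distribution equivalence, and using the canonical form so that $\widetilde{f}_d(\epsvec)-\widetilde{f}_{d'}(\epsvec)$ is supported on the last $\widetilde{k}$ coordinates. The canonical-form substitution you flag as the main obstacle is resolved exactly as you suggest and as the paper does: since $d_\mathrm{C}$ is invariant under counterfactual equivalence and the canonical representative from \Cref{thm:canonical-model-exists} is distributionally equivalent with the same intervention set size (hence lies in $\mathcal{M}(k)$), the bound may be proved for canonical forms without loss of generality, with the same constant-factor looseness the paper itself tolerates.
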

Please check proof in \Cref{proof:cf_bound}.
The first term (A) corresponds to a data fit term and could be reduced by minimizing the divergence between the ILD model and the observed distributions.
If the estimated ILD already matches the ground truth distribution, then this term would be zero.
The second term (B), however, does not involve the data distribution and cannot be explicitly reduced. 
Yet, the bound on this second error term shows that it can be implicitly controlled by constraining the target intervention sparsity $k$ of the estimated model.
Informally, the (B) term depends on the intervention sparsity, Lipschitz constant, and a term that corresponds to the largest feature difference between domain SCMs.
This last term can be interpreted as the worst case single-feature difference between \emph{latent} counterfactuals.
We do not claim this bound is tight, but rather simply aim to show that the domain counterfactual error depends on the target intervention sparsity $k$ such that reducing $k$ (as long as $k\geq k^*$) can improve DCF estimation.
Therefore, our error bound elucidates that both data fit and intervention sparsity are needed for DCF estimation.

\subsection{Canonical ILD Model}\label{sec:canonical-ild}
While the last section showed that imposing intervention sparsity helps control the counterfactual error, imposing this sparsity constraint can be challenging.
In particular, the ground truth sparsity pattern, i.e., which of $k$ causal mechanisms are intervened, is unknown.
A na\"ive solution would be to optimize an ILD model for all possible $m \choose k$ sparsity patterns.
In this section, we prove that we only need to optimize one sparsity pattern without loss of generality.
In particular, we can assume that all intervened mechanisms are on the last $k$ variables.
We refer to such a model as a canonical ILD model which we formalize next.

\begin{definition}[Canonical Domain Counterfactual Model]
    \label{def:canonical-counterfactual-model}
    An ILD $(\ildg, \ildf)$ is a \emph{canonical domain counterfactual model (canonical ILD)}, denoted by $(\ildg, \ildf) \in \canonicalset$, if and only if the last variables are intervened, i.e., $(\ildg, \ildf) \in \canonicalset \Leftrightarrow \intset(\ildf) = \{m-j: 0 \leq j < |\intset(\ildf)|\}$. 
\end{definition}
While this definition may seem quite restrictive, we prove that (perhaps surprisingly) \emph{any} ILD can be transformed to an equivalent \emph{canonical} ILD. 
\begin{restatable}[Existence of Equivalent Canonical ILD]{theorem}{canexist}\label{thm:canonical-model-exists}
    Given an ILD $(\ildg, \ildf)$, there exists a canonical ILD that is both counterfactually and distributionally equivalent to $(\ildg, \ildf)$ while maintaining the size of the intervention set, i.e.,
$
     \forall (\ildg, \ildf), \exists\, (\ildg', \ildf') \in \canonicalset \,\text{ s.t. } \;
     (\ildg', \ildf') \cdequiv (\ildg, \ildf)\;
     \text{ and } |\mathcal{I}(\ildf)| = |\mathcal{I}(\ildf')| \,.
     $
\end{restatable}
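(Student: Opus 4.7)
The plan is to explicitly construct invertible $h_1,h_2$ meeting the hypotheses of \Cref{thm:equivalent-properties}, so that the theorem itself delivers counterfactual equivalence and the intervention-set-size clause together. Let $S\triangleq\intset(\ildf)$, $k=|S|$, let $\pi:[m]\to[m]$ be the order-preserving permutation that sends $S^c$ onto $\{1,\ldots,m-k\}$ and $S$ onto $\{m-k+1,\ldots,m\}$, and let $\psi_l\triangleq[f_d^{-1}]_l$ for $l\in S^c$ (well-defined since these inverse subfunctions are domain-invariant by the definition of the intervention set). Choose $h_2$ to be the coordinate permutation $[h_2(\epsvec')]_i = \epsvec'_{\pi(i)}$, which is orthogonal and hence preserves $\mathcal{N}(0,I_m)$. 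Define
\begin{equation*}
[h_1(\zvec)]_j \;=\; \begin{cases}\psi_{\pi^{-1}(j)}(\zvec_{\leq \pi^{-1}(j)}),& j\leq m-k,\\ \zvec_{\pi^{-1}(j)},& j>m-k,\end{cases}
\end{equation*}
so that $h_1$ places shared ``noise coordinates'' in its first $m-k$ outputs and simply reads off the intervened latents in its last $k$. Invertibility of $h_1$ follows by induction on the original index $i$: for $i\in S$, $\zvec_i$ appears verbatim among the last $k$ outputs; for $i\in S^c$, $\zvec_i$ is recovered by inverting the scalar map $\psi_i(\zvec_{<i},\cdot)$ at $[h_1(\zvec)]_{\pi(i)}$ using the already-recovered $\zvec_{<i}$.

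Let $g'=g\circ h_1^{-1}$ and $f'_d = h_1\circ f_d \circ h_2$. A direct expansion using $[f_d^{-1}]_l\circ f_d(\epsvec)=\epsvec_l$ collapses the first $m-k$ outputs of $f'_d$ to $[f'_d(\epsvec')]_j = \epsvec'_j$, and leaves the rest as $[f'_d(\epsvec')]_j = [f_d(\epsvec)]_{\pi^{-1}(j)}$ with $\epsvec_i=\epsvec'_{\pi(i)}$. Order-preservation of $\pi$ separately on $S^c$ and $S$ gives the combinatorial fact $\pi(i)\leq j$ for every $i\leq \pi^{-1}(j)$, which is exactly what is needed for $f'_d$ to be autoregressive. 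Moreover the first $m-k$ inverse subfunctions $[f'^{-1}_d]_j$ are the identity and hence domain-invariant, so $\intset(\ildf')\subseteq\{m-k+1,\ldots,m\}$. Plugging our $h_1,h_2$ into \Cref{thm:equivalent-properties} simultaneously yields $(\ildg',\ildf')\cequiv(\ildg,\ildf)$ and $|\intset(\ildf')|=|\intset(\ildf)|=k$, which forces the inclusion above to be an equality; hence $(\ildg',\ildf')\in\canonicalset$. Distribution equivalence is immediate because $h_2$ is a coordinate permutation, so $|J_{h_2^{-1}}|=1$ and $p_\mathcal{N}\circ h_2^{-1}=p_\mathcal{N}$ leave the change-of-variables density of each domain unchanged.

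The main obstacle is designing $h_1$ to resolve a tension between two requirements: (i) the first $m-k$ outputs of $h_1\circ f_d$ must lose all $d$-dependence, even though the forward maps $[f_d]_l$ typically do carry $d$-dependence at non-intervened indices through the autoregressive chain; and (ii) the composition $h_1\circ f_d\circ h_2$ must remain autoregressive. Using the shared \emph{inverse} mechanisms $\psi_l$ (shared precisely for $l\in S^c$) as the first $m-k$ outputs of $h_1$ threads this needle by algebraically cancelling $f_d$ on those outputs down to the identity on the noise, reducing the autoregressiveness question to the purely combinatorial fact that $\pi$ preserves order within $S^c$ and $S$ individually.
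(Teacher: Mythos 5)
Your construction is correct, and it takes a genuinely different route from the paper. The paper proceeds in three stages: it first composes with $f_1^{-1}$ (via \Cref{thm:equivalent-properties} with $h_1=f_1^{-1}$, $h_2=\Id$) to make the first domain the identity, then repeatedly applies a Swapping Lemma (\Cref{thm:swapping-lemma}) that exchanges an intervened index $j$ with a later non-intervened index $j'$ having no intervened indices in between—tracking at every swap that counterfactual and distributional equivalence hold and that $\intset$ becomes $(\intset\setminus\{j\})\cup\{j'\}$—and finally composes back with $f_1$. You instead build a single pair $(h_1,h_2)$: $h_2$ the order-preserving block permutation pushing $S^c$ to the front, and $h_1$ assembled from the shared inverse mechanisms $\psi_l=[f_d^{-1}]_l$ (domain-invariant for $l\in S^c$ by \Cref{prop:intervention_ia}) together with the same permutation. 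The telescoping $\psi_l\circ f_d=\mathrm{id}$ on the non-intervened outputs and the block order-preservation of $\pi$ (needed only for the outputs $j>m-k$; as literally stated your combinatorial claim fails for $j\le m-k$, but those outputs have already collapsed to $\epsvec'_j$) give autoregressiveness directly, avoiding the induction over swaps entirely. One small point in the invertibility of $h_1$: recovering $z_i$ requires that the scalar section $z_i\mapsto\psi_i(\zvec_{<i},z_i)$ be bijective, which is exactly what \Cref{thm:invertible-upper-subfunctions} supplies; you should cite it rather than assert it. Your one-shot construction is arguably cleaner and makes the distributional equivalence argument (orthogonality of $h_2$) immediate; the paper's iterative route is heavier but yields the Swapping Lemma as a reusable tool and an explicit Identity-Canonical intermediate form that it needs elsewhere.

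There is, however, one genuine issue in your endgame: you invoke the second clause of \Cref{thm:equivalent-properties} ($|\intset(\ildf')|=|\intset(\ildf)|$) to upgrade the inclusion $\intset(\ildf')\subseteq\{m-k+1,\dots,m\}$ to equality. In the paper that clause is itself proved \emph{using} \Cref{thm:canonical-model-exists} (via \Cref{prop:can_sameintervention} applied to Identity-Canonical forms), so within the paper's logical structure your argument is circular; this is also why the paper establishes size preservation constructively (Steps 1 and 3 preserve $\intset$, and each swap replaces $j$ by $j'$). Fortunately the fix is short and stays inside your construction: for $l\in S$ pick $d,d'$ and $\zvec$ with $[f_d^{-1}(\zvec)]_l\neq[f_{d'}^{-1}(\zvec)]_l$ (\Cref{prop:intervention_ia}), set $\zvec'=h_1(\zvec)$, and use ${f'_d}^{-1}=h_2^{-1}\circ f_d^{-1}\circ h_1^{-1}$ to conclude $[{f'_d}^{-1}(\zvec')]_{\pi(l)}\neq[{f'_{d'}}^{-1}(\zvec')]_{\pi(l)}$, hence $\pi(S)\subseteq\intset(\ildf')$. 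Combined with your inclusion this gives $\intset(\ildf')=\{m-k+1,\dots,m\}$, which yields both membership in $\canonicalset$ and $|\intset(\ildf')|=k$ without appealing to the second clause of \Cref{thm:equivalent-properties}.
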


See \Cref{Proof of equivalent canonical ILD exists} for full proof and \Cref{example} in the appendix for a toy example.
This result is helpful for theoretic analysis and, more importantly, it has great practical significance as now we can merely search over canonical ILD models.

\subsection{Proposed ILD Estimation Algorithm}
\label{sec:ild-estimation-algorithm}
Given the error bound in \Cref{thm:counterfactual-error-bound}, the natural approach is to minimize the divergence between the observed domain distributions (represented by the training data) and the model's induced distributions while constraining to $k$ interventions.
From \Cref{thm:canonical-model-exists}, we can simply optimize over canonical ILD models without loss of generality.
Therefore, we optimize the following constrained objective given a target intervention size $k$:
\begin{align}
    \min_{\ildg,\ildf} & \,\,\E_{p(\xvec,d)}[-\log q_{\ildg,\ildf}(\xvec,d)] \quad
    \text{s.t.} \quad [f_d]_{\leq m-k} = [f_{d'}]_{\leq m-k}, \forall d\neq d' \,.
\end{align}

Concretely, the practical algorithm means training a normalizing flow for each domain while sharing most (but not all) parameters and enforcing autoregressiveness for part of the model. 
The non-shared domain-specific parameters correspond to the intervened variable(s).
For higher dimensional data, we also relax the strict invertibility constraint and implement this design using VAEs.
\vspace{-0.5em}
\section{Related Work}
\vspace{-0.5em}

\paragraph{Causal Representation Learning}
Causal representation learning is a rapidly developing field that aims to discover the underlying causal mechanisms that drive observed patterns in data and learn representations of data that are causally informative \citep{scholkopf2021toward}.  
This is in contrast to traditional representation learning, which does not consider the causal relationships between variables. 
As this is a highly difficult task, most works make assumptions on the problem structure, such as access to atomic interventions, the graph structure (e.g., pure children assumptions), or model structure (e.g., linearity) \citep{kun2, kun3, kun7, DBLP:conf/icml/SquiresSBU23, zhang2023identifiability, sturma2023unpaired, jiang2023learning, liu2022identifying}.
Other works such as \citep{brehmer2022weakly, ahuja2022weakly, von2021self} assume a weakly-supervised setting where one can train on counterfactual pairs $(x, \tilde{x})$ during training.
In our work, we aim to maximize the practicality of our assumptions while still maintaining our theoretical goal of equivalent domain counterfactuals (as seen in \autoref{tab:related-works}).

\paragraph{Counterfactual Generation}
A line of works focus on the identifiability of counterfactual queries \citep{shpitser2008complete,shah2022counterfactual}.
For example, given knowledge of the ground-truth causal structure, \citet{nasr2023counterfactual} are able to recover the structural causal models up to equivalence. 
However, they do not consider the latent causal setting and assume some prior knowledge of underlying causal structures such as the backdoor criterion.
There is a weaker form of counterfactual generation without explicit causal reasoning but instead using generative models \cite{zhu2017unpaired,nemirovsky2022countergan}.
These typically involve training a generative model with a meaningful latent representation that can be intervened on to guide a counterfactual generation \citep{ilse2020diva}.
As these works do not directly incorporate causal learning in their frameworks, we consider them out of scope for this paper.
Another branch of works estimate causal effect without trying to learn the underlying causal structure, which typically assume all variables are observable\citep{DBLP:conf/nips/LouizosSMSZW17}.
An expanded related work section is in \Cref{app-sec:expanded-related-work}. 
\vspace{-0.5em}
\section{Experiments}
\label{sec:experiment}
\vspace{-0.5em}

We have shown theoretically the benefit of our canonical ILD characterization and restriction of intervention sparsity. 
In this section, we empirically test whether our theory could guide us to design better models for producing domain counterfactuals while only having access to observational data $\xvec$ and the corresponding domain label $d$.
In our simulated experiment, under the scenario where all of our modeling assumptions hold, we try to answer the following questions: (1) When we know the ground truth sparsity, does sparse canonical ILD lead to better domain counterfactual generation over na\"ive ML approaches (dense models)? 
(2) What would happen if there is a mismatch of sparsity between the dataset and modeling and what is a good model design strategy in practice?
After this simulated experiment, we perform experiments on image datasets to determine if sparse canonical models are still advantageous in this more realistic setting.
In this case, we assume the latent causal model lies in a lower dimensional space than the observed space and thus we use autoencoders to approximate an observation function that is invertible on a lower-dimensional manifold.

\subsection{Simulated Dataset}
\label{ssec:simulated-experiments}
\noindent{\bf Experiment Setup}
To extensively address our questions against diverse causal mechanism settings, for each experiment, we generate 10 distinct ground truth ILDs.
The ground truth latent SCM 
 $f_d^* \in \invautoregclass$ 
takes the form $f_d^* (\epsvec) =F^*_d \;\epsvec + b^*_d  \mathds{1}_{\intset} $ where $F^*_d = (I-L^*_d)^{-1}, L^*_d \in \R^{\ndim \times \ndim}$ is a domain-specific lower triangular matrix that satisfies the sparsity constraint, $b^*_d\in \R$ is a domain-specific bias, $\mathds{1}_{\intset}$ is an indicator vector where entries corresponding to the intervention set are 1, and $L_d^*$ and $b^*_d$ are randomly generated for each experiment. 
The observation function takes the form $g^*(\xvec) = G^* \; \text{LeakyReLU}\left( \xvec\right) $ where $G^* \in \R^{\ndim \times \ndim}$ and the slope of LeakyReLU is $0.5$. 
We use maximum likelihood estimation to train two ILDs (like training of a normalizing flow): \fspa as introduced in \Cref{sec:canonical-ild} and a baseline model,
\fdense, which has no sparsity restrictions on its latent SCM.
To evaluate the models, we compute the mean square error between the estimated counterfactual and ground truth counterfactual.
More details on datasets and models, and illustrating figures of the models can be found in \Cref{app-sec:simulated-exp-detail}.

\noindent{\bf Result}
To answer whether sparse canonical ILD provides any benefit in domain counterfactual generation, we first look at the simplest case where the latent causal structure of the dataset and our model exactly match.
In \Cref{fig:intpos_ndomain3}, we notice that when the grounth truth intervention set $\intset^*$ is $\{5,6\}$ (i.e. the last two nodes), \fspa significantly outperforms \fdense. 
Then we create a few harder and more practical tasks where the intervention set size is still 2 but not constrained to the last few nodes.
Again, in \Cref{fig:intpos_ndomain3}, we observe that no matter which two nodes are intervened on, \fspa performs much better than the na\"ive ML approach \fdense.
This first checks that restricting model structure to the specific canoncial form does not harm the optimization even though the ground truth structure is different.
Furthermore, it validates the benefit of our model design for domain counterfactual generation.
More results with different number of domains and latent dimensions can be found in \Cref{app-sec:simulated-results}, which all show that \fspa consistently perform better than \fdense.
We also include an illustrating figure visualizing how \fspa achieves lower counterfactual error.
We then transition to the more practical scenario where the true sparsity $|\intset^*|$ is unknown.
In \Cref{fig:intsizek_ndomain3}, at first glance, we observe a trend of the decrease in counterfactaul error as we decrease $|\intset|$.
\begin{figure*}[ht!]
    \vspace{-1.5 em}
    \centering
         \begin{subfigure}[t]{0.38\textwidth}
\includegraphics[width=0.8\textwidth]{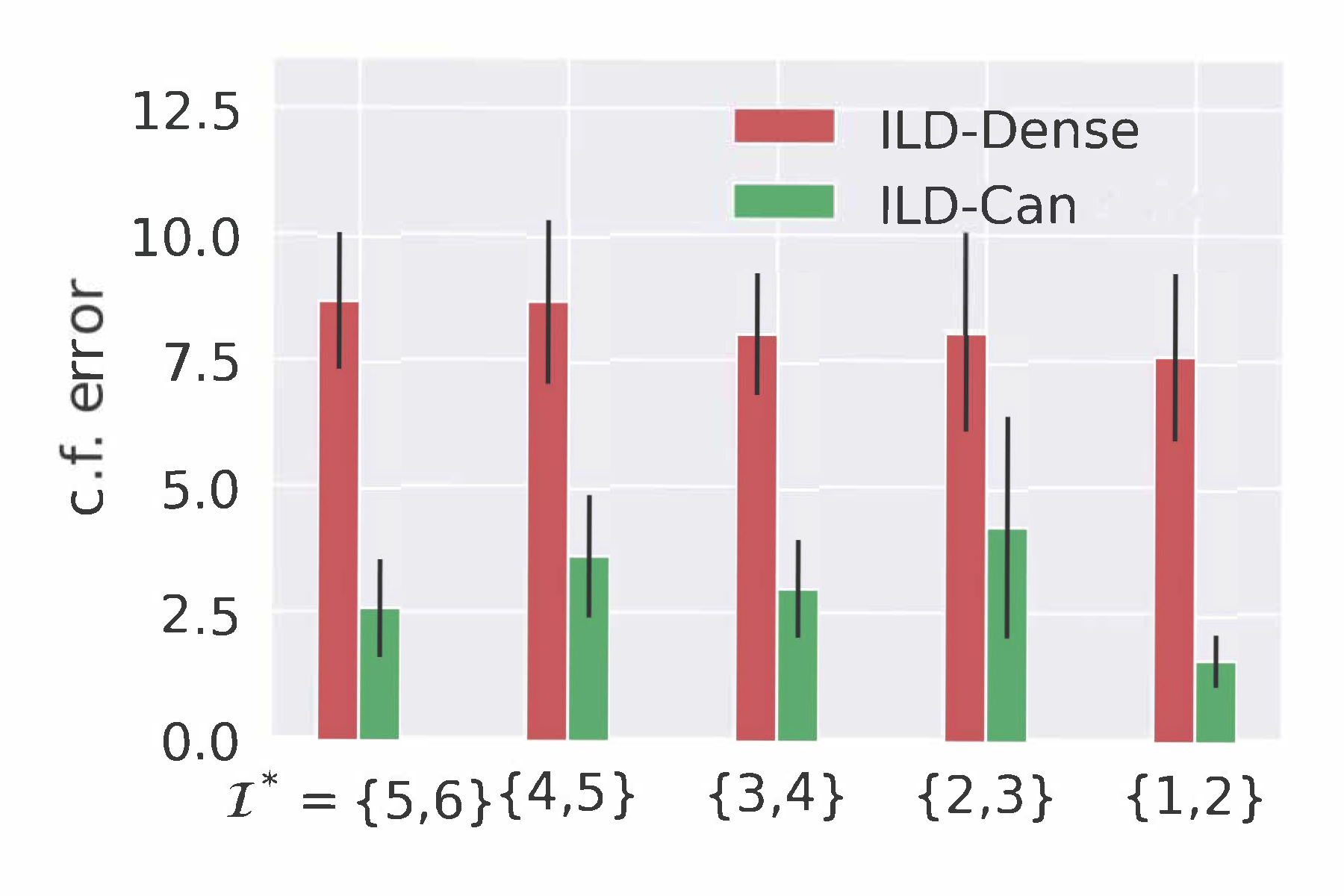}
         \caption{With knowledge of $|\intset^*|$ and $|\intset^*| =|\intset|=2$.}
         \label{fig:intpos_ndomain3}
     \end{subfigure}
     \hspace{2em}
     \begin{subfigure}[t]{0.38\textwidth}
         \centering
         \includegraphics[width=0.8\textwidth]{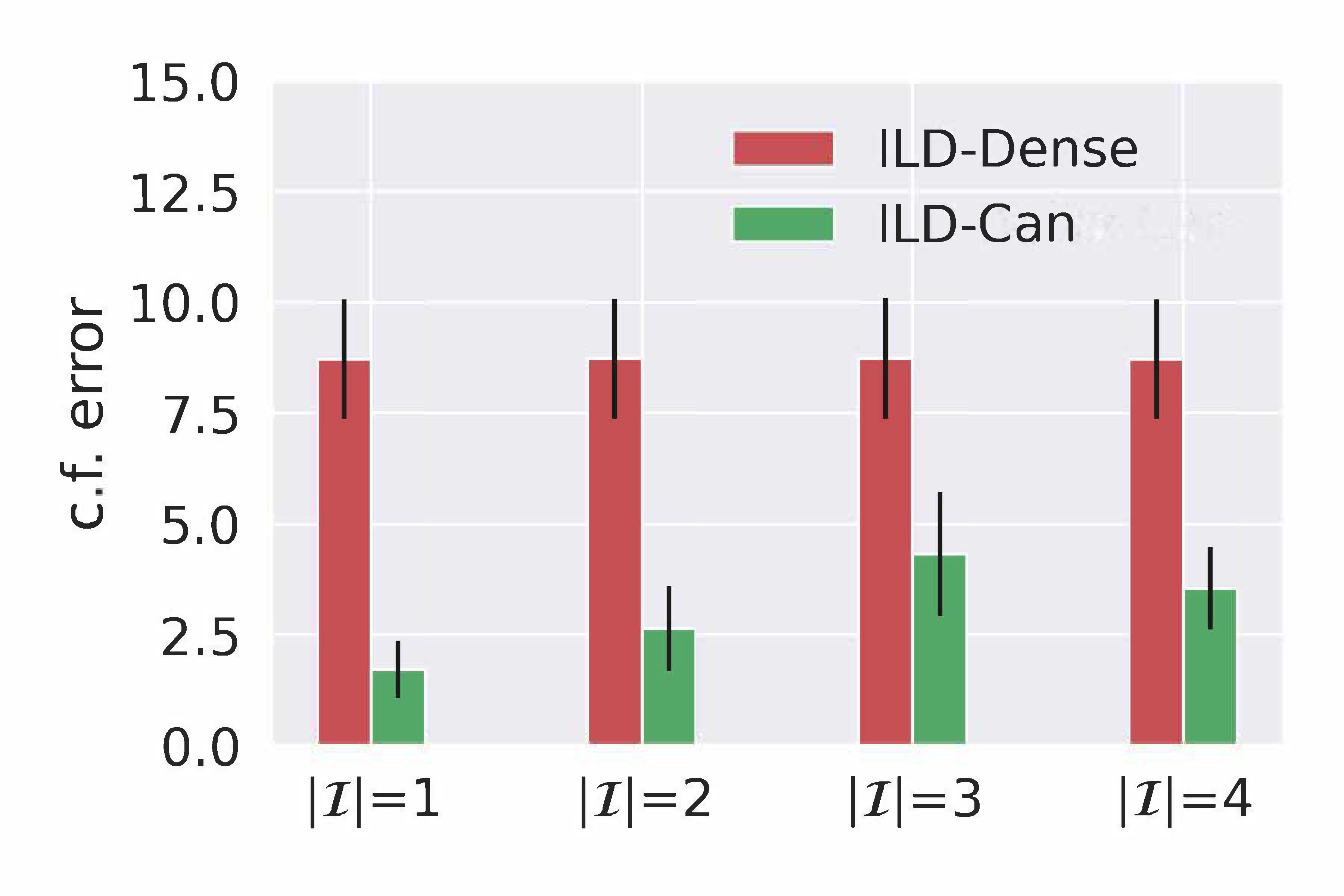}
         \caption{Without knowledge of $|\intset^*|$ and $\intset^* = \{5,6\}$}
         \label{fig:intsizek_ndomain3}
     \end{subfigure}
        \vspace{-0.5 em}
        \caption{Simulated experiment results ($\ndomains=3$) averaged over 10 runs with different ground truth SCMs and the error bar represents the standard error. 
        (a) This shows \fspa is consistently better than \fdense regardless of intervened nodes in the dataset.
        (b) Here we test varying $|\intset|$ while holding $\intset^*$ fixed.
        The performance of \fspa approaches to that of \fdense as we increase $|\intset|$.
        An unexpected result is that \fspa performs best when $|\intset|=1$ and that results from a worse data fitting which is more carefully investigated in \Cref{app-sec:simulated-results}.
        }
        \label{fig:simulated_main}
        \vspace{-1 em}
\end{figure*}
For the case where $|\intset|\geq|\intset^*|$ (i.e. when $|\intset|=2,3,4$), this aligns with our intuition that the smaller search space of \fspa leads to a higher chance of finding model with low counterfactual error.
For the case where $|\intset|=1$, we notice that it performs better than the canonical model that matches the true sparsity.
Though it cannot reach distribution equivalence, the reduction in worst-case error (see \Cref{thm:counterfactual-error-bound}) seems to be enough to enable comparable or better counterfactuals on average.
We further check the performance of the data fitting and see a significant decrease in the fit of \fspa once $|\intset| < |\intset^*|$, which supports that the performance in data fitting can be used as an indicator for whether we found the appropriate $|\intset|$.
Additional results on data fitting performance and experiments with different setups, including more complex $g$ based on normalizing flows and VAEs, can be found in \Cref{app-sec:experiment-details}, and they all lead to the conclusion that \fspa produces better counterfactuals than \fdense even though we do not know $|\intset^*|$.

\vspace{-0.5em}
\subsection{Image-based Counterfactual Experiments} 
\vspace{-0.5em}
\label{ssec:image-based-experiments}
Here we seek to learn domain counterfactuals in the more realistic image regime.
Following the manifold hypothesis \citep{gorban2018blessing, scholkopf2021toward}, we assume that the causal interactions in this regime happen through lower-dimensional semantic latent factors as opposed to high-dimensional pixel-level interactions.
To allow for learning of the lower dimensional latent space, we relax the invertibility constraint of our image-based ILD to only require pseudoinvertibility and test our models in this practical setting. 

\noindent{\bf High-dim ILD Modeling}
We modify the ILD models from \Cref{ssec:simulated-experiments} to fit a VAE \citep{kingma2013auto} structure where the variational encoder, $(\ildg^+,\ildf^+)$, first projects to the latent space via $g^+$ to produce the latent encoding $\zvec$, which is then passed to two domain-specific latent causal models $f^+_{d,\mu}, f^+_{d,\sigma}$ which produce the parameters of posterior noise distribution.
The decoder, $(\ildg,\ildf)$, follows the typical ILD structure: $g \circ f_d$, where, $g$ and $f_d$ can be viewed as pseudoinverse of $f_{d,\mu}^+$ and $g^+$.
A detailed description and diagram of the models can be found in \Cref{fig:rmnist-model-view}, but informally, these modified ILD models can be seen as training a VAE \emph{per} domain with the restriction that each VAE shares parameters for its initial encoder and final decoder layers (\ie $g$ is shared).
As an additional baseline, we compare against the na\"ive setup, which we call \findep, where each VAE has no shared parameters (\ie a separate $g$ is learned for each domain). 
These models were trained using the $\beta$-VAE framework \citep{higgins2017beta}.
Further details can be found in the \Cref{app-sec:image-exp-details}.
After training, we can perform domain counterfactuals as described in \Cref{sec:ild-domain-counterfactuals}.

\noindent{\bf Dataset} We apply our methods to five image-based datasets: Rotated MNIST (RMNIST), Rotated FashionMNIST (RFMNIST)\citep{xiao2017fashion}, Colored Rotated MNIST (CRMNIST), 3D Shapes \citep{3dshapes18} and Causal3DIdent \citep{von2021self}, which all have both domain information (\eg, the rotation of the MNIST digit) and class information (\eg, the digit number).
For each dataset, we split the data into disjoint domains (e.g., each rotation in CRMNIST constitutes a different domain) and define class variables which are generated independently of domains (e.g., digit class in CRMNIST), to evaluate our model’s capability of generating domain counterfactuals.
Specifically, for RMNIST, RFMNIST and 3D Shapes, all latent variables are independently generated, and for CRMNIST and Causal3DIdent, there is a more complicated causal graph containing the domain, class and other latent variables.
Further details on each dataset and (assumed) ground-truth latent causal graphs could be found in \Cref{app-sec:dataset-detail} and \Cref{app-sec:causal-interpret}.

\begin{table}[ht!]
    \centering
    \caption{Quantitative result for \textbf{Composition} (Comp.), \textbf{Reversibility} (Rev.), \textbf{Preservation} (Pre.), and \textbf{Effectiveness} (Eff.), where higher is better. CRMNIST, 3D Shapes, Causal3DIdent are averaged 20, 5, 10 runs respectively. Best models are bold (within 1 standard deviation) and due to space constraints, expanded tables with additional datasets and standard deviation are in \Cref{app-sec:image-addition-exp}.
    }
    \label{tab:image_quantitatvie}
    \resizebox{0.92\columnwidth}{!}{\begin{tabular}{l|llll|llll|llll}
\hline
                                                                               & \multicolumn{4}{c|}{CRMNIST}                           & \multicolumn{4}{c|}{3D Shapes} & \multicolumn{4}{c}{Causal3DIdent}\\ \hline
                & Comp. & Rev. & Eff. & Pre. & Comp. & Rev. & Eff. & Pre.  & Comp.& Rev.& Eff.&Pre.  \\ \hline
\findep  &  $\bm{87.24}$&   59.88          &    $\bm{94.65}$&      60.39          &     $\bm{99.79}$&     32.56          &     $\bm{94.97}$&        32.49          & $\mathbf{88.15}$& 51.43& $\mathbf{91.05}$&51.94\\
\fdense       & $\bm{88.18}$&       62.29      &     $\bm{92.72}$&       59.60        &          $\bm{99.76}$&     32.60          &       80.92        &       32.64         &$\mathbf{83.59}$& 49.17& $\mathbf{92.17}$&48.83\\ \hline \hline
\fspa &  $\bm{92.10}$&     $\bm{85.74}$&        $\bm{94.48}$&     $\bm{72.95}$&                     $\bm{99.85}$&        $\bm{79.84}$&       $\bm{96.72}$&       $\bm{64.99}$& $\mathbf{86.00}$& $\mathbf{79.73}$& $\mathbf{84.15}$&$\mathbf{79.73}$\end{tabular}}
\end{table}

\begin{figure}[ht!]
    \centering
    \begin{subfigure}{0.33\textwidth}
        \centering
        \includegraphics[width=\textwidth]{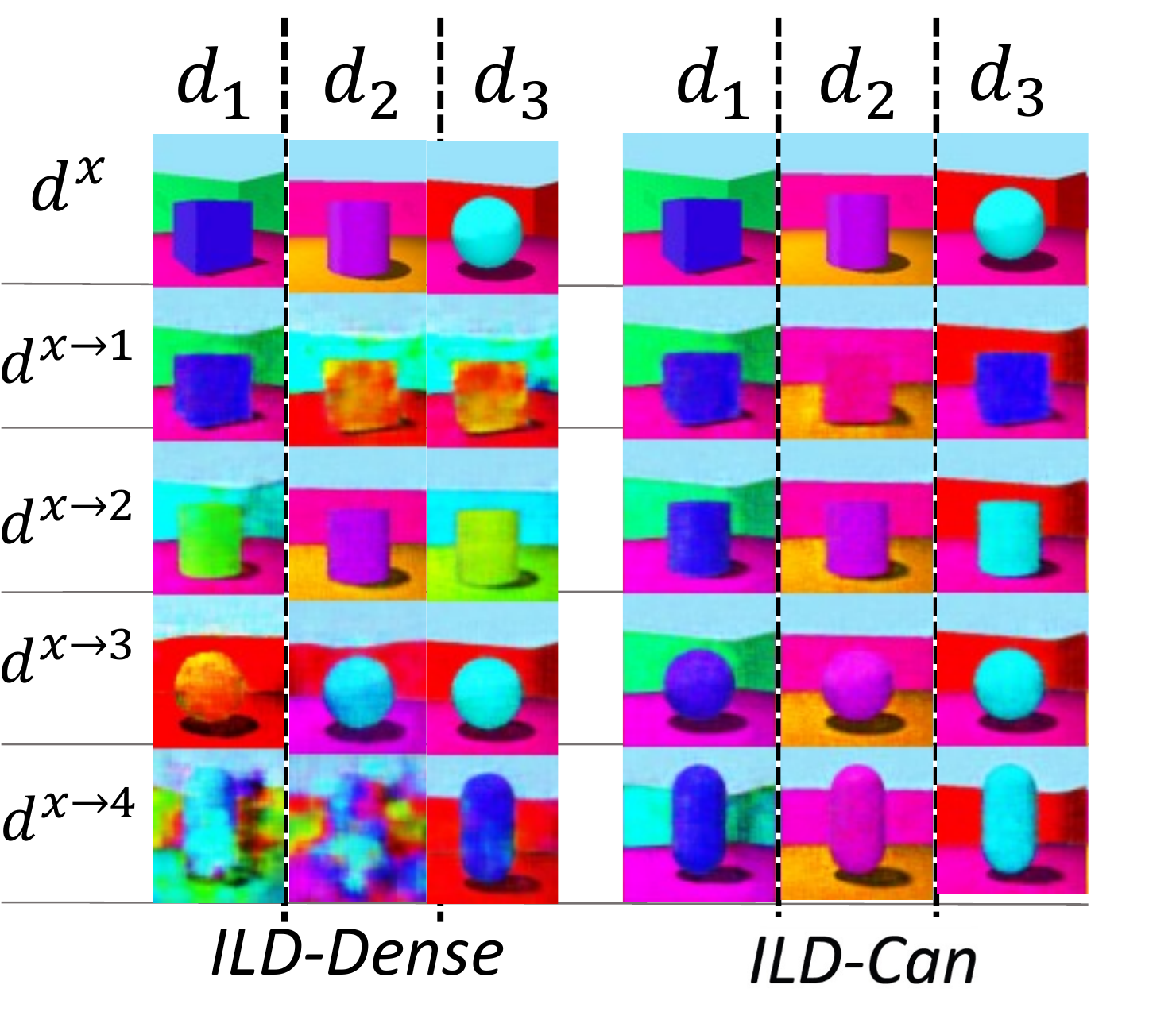}
        \caption{3D Shapes}
        \label{fig:3d-shapes-qual-no-indp}
    \end{subfigure}
\begin{subfigure}{0.55\textwidth}
        \centering
\raisebox{0.05\height}{\includegraphics[width=1.05\linewidth]{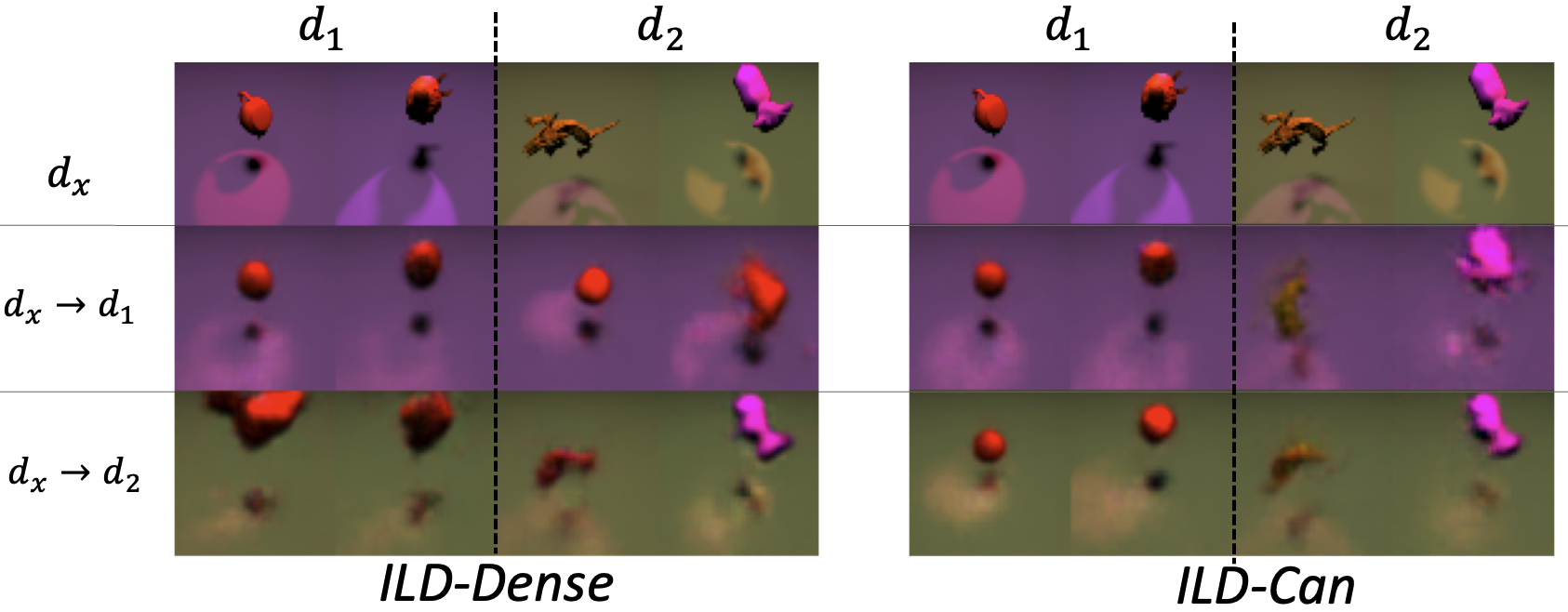}}
        \caption{CausalIdent}
        \label{fig:causalident-qual-no-indp}
    \end{subfigure}
    \caption{Domain counterfactuals with 3D Shapes and CausalIdent. Expanded figures can be found in \Cref{app-sec:image-addition-exp} (a) For 3D Shapes, only the object shape should change with domain counterfactuals -- the other latent factors such as the hue of object, floor, background, should not change. 
(b) For CausalIdent, as the domain changes, the color of the background should change while holding all else unchanged. 
    \fspa clearly performs better than the baseline \fdense in terms of preserving non-domain features while changing domains for all datasets.}
    \label{fig:image_qualitative}
    \vspace{-1.5em}
\end{figure}

\noindent{\bf Metrics}
Inspired by the work in \citet{Monteiro2023MeasuringAS}, we evaluate the image-based counterfactuals with latent SCMs via the following metrics, where $h_\textnormal{domain}$ and $h_\textnormal{class}$ represents pretrained domain classifier and class classifier respectively:
(1) \emph{Effectiveness} - whether the counterfactual truly changes the domain defined as $\mathbb{P}(h_{\textnormal{domain}}\left(\hat{x}_{d\rightarrow d^{\prime}}\right)=d^{\prime})$;
(2) \emph{Preservation} - whether the domain counterfactual \emph{only} changes domain-specific information defined as $\mathbb{P}(h_\textnormal{class}\left(\hat{x}_{d \rightarrow d^{\prime}}\right)=y)$;
(3) \emph{Composition} - whether the counterfactual model is invertible defined as $\mathbb{P}(h_\textnormal{class}\left(\hat{x}_{d \rightarrow d}\right)=y)$; and (4) \emph{Reversibility} - whether the counterfactual model is cycle-consistent defined as $\mathbb{P}(h_\textnormal{class}\left(\hat{x}_{d \rightarrow d^{\prime}\rightarrow d}\right)=y)$.
For example, in the case of CRMNIST, a model might be able to rotate the image but cannot preserve the digit class during rotation, which would be high in effectiveness but low in preservation score.
Details on the computation of these metrics and causal interpretations can be found in \Cref{app-sec:image-counterfactual-metric-details} and \Cref{app-sec:causal-interpret} respectively.

\noindent{\bf Result} 
Due to space constraints, we put all results with RMNIST and RFMNIST in \Cref{app-sec:image-addition-exp}.
In \Cref{fig:image_qualitative} we can see examples of domain counterfactuals for both $\fdense$ and $\frelaxed$.
We note that no latent information other than the domain label was seen during training, thus suggesting the intervention sparsity is what allowed the canonical models to preserve important non-domain-specific information such as class information when generating domain counterfactuals. 
In \Cref{tab:image_quantitatvie}, we include quantitative results using our metrics, which shows $\frelaxed$ having significantly better reversibility and preservation while maintaining similar levels of counterfactual effectiveness and composition than the non-sparse counterparts.
In \Cref{app-sec:image-addition-exp}, we further investigate our model's sensitivity to the choice of sparsity by tracking how each metric change w.r.t. $|\intset|$.
We observe that reversibility and preservation tends to decrease while effectiveness tends to increase as we increase $|\intset|$, which aligns with our findings here as \fdense is equivalent to making $\intset$ contain all latent nodes.
In summary, our results here indicate our theory-inspired model design leads to better domain counterfactual generation in the practical pseudo-invertible setting.

\vspace{-0.5em}
\section{Conclusion}
\vspace{-0.5em}
In this paper, we show that estimating domain counterfactuals given only i.i.d. data from each domain is feasible without recovering the latent causal structure.
We theoretically analyzed the DCF problem for a particular invertible causal model class and proved a bound on estimation error that depends on both a data fit term and an intervention sparsity term.
Inspired by these results, we implemented a practical likelihood-based algorithm under intervention sparsity constraints that demonstrated better DCF estimation than baselines across experimental conditions.
We discuss the limitations of our methods in \Cref{app-sec:limitation}.
We hope our findings can inspire simpler causal queries that are useful yet practically feasible to estimate and begin bridging the gap between causality and machine learning.

\section*{Acknowledgement}
Z.Z., R.B., S.K., and D.I. acknowledge support from NSF (IIS-2212097), ARL (W911NF-2020-221), and ONR (N00014-23-C-1016).
M.K. acknowledges support from NSF CAREER 2239375.

\bibliography{references}
\bibliographystyle{plainnat}

\appendix
\clearpage

\renewcommand \thepart{}
\renewcommand \partname{}

\doparttoc \faketableofcontents 

\addcontentsline{toc}{section}{Appendix} \part{Appendix} \parttoc 

\section{Discussion of Invertible Latent Domain Causal Model}\label{app-sec:expanded-preliminary-section}
This section gives further discussion and details about our ILD model and serves as an expanded version of \Cref{sec:Invertible Latent Domain Causal Model (ILD)}.
We first remind the reader of the definition of an SCM using our notation.
\begin{definition}[Structural Causal Model]
\label{def:scm}
A structural causal model (SCM) considers $\ndim$ endogenous (causal) variables $z_j$ and $\ndim$ exogenous noises $\epsilon_j,$ $j\in[m],$ where each variable is a deterministic function of its parents and independent exogenous noise. Formally, each endogenous variable has form $z_j \triangleq f(\epsilon_j, z_{\text{Pa}_j}),$ for all $j\in [m].$
\end{definition}
Note that the SCM is a set of equations for each endogenous variables, where the exogenous noises could be multivariate and even infinite dimensional. 

\subsection{Invertible SCM as a Global Invertible Autoregressive Function}
For theoretic analysis, our main SCM assumption is that the exogenous variables can be uniquely recovered from the endogenous variables, i.e., the SCM is invertible. 
This invertibility assumption will mean that domain counterfactuals are unique rather than being distributions over possible counterfactuals.
\begin{definition}[Invertible SCM]
    We say that an SCM is invertible if the exogenous noise values can be uniquely recovered from the endogenous random variables, i.e., there is a one-to-one mapping between exogenous variables and endogenous variables.
\end{definition}
This invertibility assumption implies that the exogenous noises must be scalars\footnote{
The proof is simple by contradiction. Suppose one mechanism had non-scalar exogenous noise. If the random variables are not perfectly dependent, then it would be impossible to recover more than one exogenous noise variable from a single endogenous noise variable and the parents. If the random variables were deterministic functions of each other (i.e., perfectly dependent), then the exogenous noise variables could be collapsed into a single exogenous noise without loss of generality.
} unlike standard SCMs, which can have multivariate exogenous noise variables.

We will now prove that all the SCM mechanisms can be represented by an vector to vector invertible autoregressive function (up to a relabeling), denoted as $f \in \invautoregclass$, and we call this the SCM \emph{global function}. 
We first define an autoregressive function below.
\begin{definition}[Autoregressive Function]
    \label{def:autoregressive-function}
    A function $f:\mathbb{R}^m \to \mathbb{R}^m$ is autoregressive, denoted by $f \in \autoregclass$, if for all $i$, the $i$-th output can be written as a function of its corresponding input predecessors, i.e., 
    \begin{align}\label{eqn:Autoregressive Function}
       f \in \autoregclass \Leftrightarrow \forall \,j, \exists \,f^{(j)} \;\text{ s.t. }\; [f(\epsilon)]_j \triangleq f^{(j)}(\epsilon_{\leq j}),\;\text{where}\;\epsilon\in\R^{m}.
    \end{align}
\end{definition}
Given this definition, we can now state our proposition that an invertible SCM can be represented by a single global invertible autoregressive function.
\begin{restatable}[SCM Global Function Representation]{proposition}{scmrepresentation}\label{thm:invertible-scm-equivalence}
    An invertible SCM $\{\widetilde{f}^{(j)}\}_{j=1}^{\ndim}$ that is topologically ordered, i.e., the parents have smaller index than the children, can be uniquely represented by an autoregressive invertible function $f \in \invautoregclass$ and vice versa.
\end{restatable}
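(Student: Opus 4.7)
The plan is to prove both directions constructively by induction along the topological ordering $j=1,\dots,m$, with the key technical lemma being that an autoregressive invertible function automatically has an autoregressive inverse.

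For the forward direction (invertible SCM $\Rightarrow f \in \invautoregclass$), I would start from the mechanisms $z_j = \widetilde{f}^{(j)}(\epsilon_j, z_{\mathrm{Pa}_j})$, noting that invertibility of the SCM forces each scalar map $\epsilon_j \mapsto \widetilde{f}^{(j)}(\epsilon_j, z_{\mathrm{Pa}_j})$ to be a bijection for every fixed value of the parents (otherwise two different noise vectors would produce the same endogenous vector). Because the indices are topologically ordered, $\mathrm{Pa}_j\subseteq\{1,\dots,j-1\}$, so I can unroll the recursion: set $f^{(1)}(\epsilon_1)\triangleq\widetilde{f}^{(1)}(\epsilon_1)$, and inductively substitute the previously constructed $f^{(i)}(\epsilon_{\leq i})$ for each $z_i$ with $i\in\mathrm{Pa}_j$ to obtain $f^{(j)}(\epsilon_{\leq j})\triangleq\widetilde{f}^{(j)}\bigl(\epsilon_j,(f^{(i)}(\epsilon_{\leq i}))_{i\in\mathrm{Pa}_j}\bigr)$. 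Stacking these components yields $f:\R^m\to\R^m$ that is autoregressive by construction. Invertibility then follows by inverting the same recursion in topological order: recover $\epsilon_1=(\widetilde{f}^{(1)})^{-1}(z_1)$, and then given $\epsilon_{<j}$ (hence $z_{\mathrm{Pa}_j}$) solve $\epsilon_j$ using the scalar invertibility of $\widetilde{f}^{(j)}(\cdot,z_{\mathrm{Pa}_j})$.

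For the reverse direction ($f\in\invautoregclass\Rightarrow$ invertible SCM), I would first establish the autoregressive-inverse lemma: since $z_1=f^{(1)}(\epsilon_1)$ depends only on $\epsilon_1$, global injectivity of $f$ forces $f^{(1)}$ to be a scalar bijection, so $\epsilon_1$ is recovered from $z_1$ alone; inductively, freezing $\epsilon_{<j}$, the fiber map $\epsilon_j\mapsto f^{(j)}(\epsilon_{<j},\epsilon_j)$ must be injective (else two pre-images of $f$ would coincide), giving $\epsilon_j=[f^{-1}]^{(j)}(z_{\leq j})$. With this in hand, define the SCM mechanism $\widetilde{f}^{(j)}(\epsilon_j,z_{<j})\triangleq f^{(j)}\bigl([f^{-1}]_{<j}(z_{<j}),\epsilon_j\bigr)$, using the complete-DAG convention $\mathrm{Pa}_j=\{1,\dots,j-1\}$ which the paper already adopts w.l.o.g. Scalar invertibility of each mechanism in its noise argument is exactly the fiber-injectivity just proved.

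Uniqueness of the correspondence is immediate from both constructions: the global $f$ is determined by the mechanisms via the recursive substitution, and conversely the mechanisms are determined by $f$ via the explicit composition above, so the two maps between representations are mutual inverses. The main obstacle I anticipate is making the autoregressive-inverse step fully rigorous---in particular justifying the passage from global invertibility of $f$ to scalar-wise invertibility on each fiber $\{\epsilon_{<j}=\text{const}\}$---but this reduces to a clean one-line induction using injectivity of $f$, after which everything else is bookkeeping along the topological order.
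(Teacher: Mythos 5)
Your overall route is the same as the paper's: the forward direction unrolls the mechanisms along the topological order exactly as the paper does, and your reverse-direction construction $\widetilde{f}^{(j)}(\epsilon_j, z_{<j}) \triangleq f^{(j)}\bigl([f^{-1}]_{<j}(z_{<j}), \epsilon_j\bigr)$ is literally the paper's definition, with your ``autoregressive-inverse lemma'' playing the role of the paper's Lemma on invertible upper subfunctions (that $\bar{f}_j(\epsilon_{\leq j}) \triangleq [f(\epsilon_{\leq j}, \cdot)]_{\leq j}$ is invertible for every $j$). So the structure and the bookkeeping match; the difference lies in how the key lemma is justified, and that is where your argument has a genuine gap.

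Your justification --- ``freezing $\epsilon_{<j}$, the fiber map $\epsilon_j \mapsto f^{(j)}(\epsilon_{<j}, \epsilon_j)$ must be injective, else two pre-images of $f$ would coincide'' --- is only valid for $j = m$. For $j < m$, the two candidate pre-images $(\epsilon_{<j}, a, \epsilon_{>j})$ and $(\epsilon_{<j}, b, \epsilon_{>j})$ agree in coordinates $\leq j$ but can be separated by coordinates $> j$, which depend on $\epsilon_j$ directly, so injectivity of $f$ is not contradicted and no ``one-line induction using injectivity of $f$'' closes the step. (Contrast this with your forward direction, where the analogous one-liner is sound precisely because in an SCM the later mechanisms see earlier noise only through the endogenous parents $z_{<j}$, which coincide.) Indeed, injectivity alone is genuinely insufficient: the map $f(\epsilon_1,\epsilon_2) = \bigl(\lvert \epsilon_1 \rvert,\ \arctan(\epsilon_2) + \pi\,\mathrm{sign}(\epsilon_1)\bigr)$ is continuous, autoregressive, and injective, yet $[f]_1$ is not injective in $\epsilon_1$; the second coordinate compensates by sending the two branches to disjoint ranges. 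What you actually need is invertibility (injectivity and surjectivity) of the whole upper block $\epsilon_{\leq j} \mapsto [f(\epsilon_{\leq j}, \cdot)]_{\leq j}$, which the paper establishes by a separate backward induction from $j = m$ down, using both the injectivity and the surjectivity of $f$ to rule out failure of either property at level $j$. Once that lemma is in place, your definition of the mechanisms, the fiber-wise bijectivity in the noise argument, and the uniqueness of the correspondence all go through as you describe.
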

See \Cref{sec:proof-of-invertible-scm-equivalence} for proof. From here on, we will simply use $f \in \invautoregclass$ to represent a invertible SCM.

While invertible SCMs do not subsume generic SCMs, we note that for any observed distribution of endogenous variables, there exist an invertible SCM that matches the observed distribution formalized as follows.
\begin{restatable}[Existence of Invertible SCM for Any Distribution]{proposition}{expressiveinvscm}\label{thm:expressivity-invertible-scm}
    Given any observed continuous distribution, there exists an invertible SCM with continuous exogenous noise whose observed distribution matches the given observed distribution.
\end{restatable}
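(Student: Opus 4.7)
The plan is to construct an $f\in\invautoregclass$ directly via the Knothe--Rosenblatt rearrangement and then invoke \Cref{thm:invertible-scm-equivalence} to re-express $f$ as an invertible SCM. Fix the target continuous distribution $p$ on $\R^m$ and, for definiteness, take the exogenous noise to be $\epsvec\sim\normal(0,\I_m)$; since Gaussian noise is continuous, this yields the strongest version of the statement and matches the ILD definition used elsewhere in the paper.

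First, I would write down the Rosenblatt transform $R:\R^m\to(0,1)^m$ defined coordinatewise by $[R(\zvec)]_j \triangleq F_j(z_j\mid \zvec_{<j})$, where $F_j(\cdot\mid \zvec_{<j})$ is the conditional CDF of the $j$-th coordinate given the preceding ones under $p$. Since $p$ is absolutely continuous, the regular conditional distributions exist and each $F_j(\cdot\mid \zvec_{<j})$ is strictly increasing on its conditional support; consequently $R$ is a bijection (up to a $p$-null set) onto $(0,1)^m$, both $R$ and $R^{-1}$ are autoregressive because $[R(\zvec)]_j$ depends only on $\zvec_{\leq j}$ and dually $[R^{-1}(\uvec)]_j$ depends only on $\uvec_{\leq j}$, and a standard probability computation shows that if $\zvec\sim p$ then $R(\zvec)$ has i.i.d.\ $\mathrm{Uniform}(0,1)$ components.

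Second, I would convert Gaussian noise to uniform noise elementwise and compose:
\[
    f(\epsvec) \triangleq R^{-1}\bigl(\Phi(\epsilon_1),\ldots,\Phi(\epsilon_m)\bigr),
\]
where $\Phi$ is the standard normal CDF. The elementwise map $\epsvec\mapsto(\Phi(\epsilon_1),\ldots,\Phi(\epsilon_m))$ is trivially autoregressive and invertible, and composing with the autoregressive invertible $R^{-1}$ preserves both properties, so $f\in\invautoregclass$. When $\epsvec\sim\normal(0,\I_m)$, each $\Phi(\epsilon_j)$ is i.i.d.\ uniform on $(0,1)$, and hence $f(\epsvec)\sim p$. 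Applying \Cref{thm:invertible-scm-equivalence} then converts $f$ into a valid invertible SCM with continuous (in fact Gaussian) exogenous noise whose induced endogenous distribution equals $p$.

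The main potential obstacle is measure-theoretic regularity: the Rosenblatt construction requires the conditional CDFs to be well-defined and strictly monotone on the support, and defines $R$ only up to a set of $p$-measure zero. Under the natural reading of ``continuous distribution'' as absolute continuity with respect to Lebesgue measure, this is handled by standard disintegration arguments, and the null set does not affect the pushforward identity. If the paper only assumes continuity of the CDF rather than absolute continuity, one can instead substitute right-continuous generalized inverses for $R^{-1}$ while preserving autoregressiveness, making the construction a routine strengthening rather than a genuinely new difficulty.
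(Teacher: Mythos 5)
Your proposal is correct and takes essentially the same route as the paper: transform the exogenous noise to uniform via its (elementwise) CDF, apply the inverse Rosenblatt transform of the target distribution, observe that both maps are invertible and autoregressive so the composition lies in the required class, and then invoke \Cref{thm:invertible-scm-equivalence} to read the composite as an invertible SCM. The only difference is that you fix the noise to be standard Gaussian (so the first step is elementwise $\Phi$), whereas the paper keeps a generic continuous noise distribution $q$ and writes the same composition as $F_p \circ F_q^{-1}$ in its notation.
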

The full proof is in \Cref{sec:proof-of-expressivity-invertible-scm}. 
This means that invertible SCMs can model any continuous distribution, but they are not as general as generic SCMs.
In practice, invertibility can be relaxed using pseudo-invertible or approximately invertible functions, as seen with a VAE in \Cref{ssec:image-based-experiments}.
We assume the exogenous noise distribution is standard Gaussian, which is made mostly for convenience and can be made without loss of generality due to the invertible Rosenblatt transformation \citep[][Chapter B]{rosenblatt1952remarks,melchers2018structural}.

\subsection{Interventions for Invertible SCMs}
If two SCMs are defined on the same space, then they could be regarded as soft interventions of each other, i.e., one SCM can be viewed as the observational SCM and the other as the intervened SCM, or vice versa. Thus, using the standard notion of intervention in which the causal mechanism is different, we can define the intervention set between two invertible SCMs in a symmetric way. 
\begin{definition}[Intervention Set]
\label{def:intervention-set}
The intervention set between $f, f' \in \invautoregclass$ defined on the same sample space is the indices of the intervened variables of the corresponding unique SCMs derived from \Cref{thm:invertible-scm-equivalence} represented by the equivalent individual SCM mechanisms $\widetilde{f}^{(j)}$ and $\widetilde{f}'^{(j)}$ respectively, i.e.,
\begin{align}
    \mathcal{I}(f, f') \triangleq \mathcal{I}\big(\{\widetilde{f}^{(j)}\}_{j=1}^\ndim, \{\widetilde{f}'^{(j)}\}_{j=1}^\ndim\big)\triangleq\big\{j: \widetilde{f}^{(j)}\neq\widetilde{f}'^{(j)}\big\} \,.
\end{align}

\end{definition}

In the following, we show how to determine the intervention set using the SCM global functions $f$ and $f'$ directly instead of having to convert to the corresponding individual SCM mechanisms as in the definition above. This will aid in the theoretic analysis and simplifies the analysis of intervention sparsity.
See \Cref{proof of prop:intervention_ia} for proof.

\begin{restatable}[Intervention set characterized by SCM global function]{proposition}{interventionia}\label{prop:intervention_ia}
The intervention set between two SCM $f, f'\in \invautoregclass$ is equivalent to the set of variables where the inverse sub functions are different, i.e., $\mathcal{I}(f, f') = \big\{j: \left[f^{-1}\right]_j \neq \left[{f}^{\prime-1}\right]_j \big\}$.
\end{restatable}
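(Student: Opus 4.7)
The plan is to reduce the two set-equalities (in Definition~\ref{def:intervention-set} and in the proposition) to the uniqueness assertion of Proposition~\ref{thm:invertible-scm-equivalence}. Concretely, I will show that for each coordinate $j$, the scalar function $[f^{-1}]_j$ is, up to a trivial change of variables, the same object as the single-mechanism inverse of $\widetilde{f}^{(j)}$, so that differences in the coordinates of $f^{-1}$ exactly track differences in the individual causal mechanisms.

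First I would unpack the autoregressive structure. Since $f\in\invautoregclass$ is autoregressive, so is $f^{-1}$ (this is the standard fact about triangular invertible maps: the inverse of a lower-triangular invertible map is lower-triangular), hence $[f^{-1}]_j$ is a function of $z_{\leq j}$ only. By the construction in Proposition~\ref{thm:invertible-scm-equivalence}, $[f]_j(\epsilon_{\leq j})$ encodes $\widetilde{f}^{(j)}(\epsilon_j, z_{\mathrm{Pa}_j})$ after substituting $z_{\mathrm{Pa}_k}$ recursively by $[f]_k(\epsilon_{\leq k})$ for $k<j$. Inverting the lower-triangular system coordinate by coordinate, $[f^{-1}]_j(z_{\leq j})$ is obtained by inverting $\widetilde{f}^{(j)}$ in its first argument, i.e.
\begin{equation*}
  [f^{-1}]_j(z_{\leq j}) \;=\; (\widetilde{f}^{(j)})^{-1}(\,z_j\,;\,z_{\mathrm{Pa}_j}\,),
\end{equation*}
where $(\widetilde{f}^{(j)})^{-1}(\cdot\,;z_{\mathrm{Pa}_j})$ denotes the inverse of $\epsilon_j\mapsto \widetilde{f}^{(j)}(\epsilon_j,z_{\mathrm{Pa}_j})$.

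Next I would prove the two containments. For the forward direction, suppose $j\notin\mathcal{I}(f,f')$, so $\widetilde{f}^{(j)}=\widetilde{f}'^{(j)}$. Then their coordinate-wise inverses agree, hence $[f^{-1}]_j=[f'^{-1}]_j$, so $j$ is not in the right-hand set. For the reverse, suppose $[f^{-1}]_j=[f'^{-1}]_j$ as functions of $z_{\leq j}$. Inverting back in the first argument yields $\widetilde{f}^{(j)}(\cdot,z_{\mathrm{Pa}_j}) = \widetilde{f}'^{(j)}(\cdot,z_{\mathrm{Pa}_j})$ for every value of the parents, i.e.\ $\widetilde{f}^{(j)}=\widetilde{f}'^{(j)}$, so $j\notin\mathcal{I}(f,f')$. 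Contrapositives give the set equality.

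The main obstacle I anticipate is being careful about what ``$[f^{-1}]_j$'' depends on and about the uniqueness: one must invoke Proposition~\ref{thm:invertible-scm-equivalence} to guarantee that the mechanism $\widetilde{f}^{(j)}$ recovered from $f$ is the same as the one recovered from $f'$ whenever the global inverses agree at coordinate $j$, and this uses the fact that $f,f'$ are defined on the same sample space and share a common topological order. The rest is a routine triangular-inversion argument and does not require any analytic assumptions beyond invertibility.
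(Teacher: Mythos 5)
Your proposal is correct and takes essentially the same route as the paper's proof: both rest on the observation that, by autoregressiveness and the invertibility of the upper subfunctions, $[f^{-1}]_j(z_{\leq j})$ is precisely the inverse of the map $\epsilon_j \mapsto \widehat{f}^{(j)}(\epsilon_j, z_{<j})$ for each fixed $z_{<j}$, so coordinatewise disagreement of the global inverses is the same as disagreement of the individual mechanisms. The paper packages this as a two-inclusion pointwise contradiction argument rather than your explicit identity, but the content coincides, including the implicit appeal (which you should state) to the invertible-upper-subfunctions lemma to justify that each mechanism is invertible in $\epsilon_j$ for every fixed value of its predecessors.
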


\subsection{Invertible Latent Domain Causal Model (ILD)}

In this section, we propose the invertible latent domain causal model to capture multiple latent SCMs that are emerged through intervention. The data generated by one SCM forms a \emph{domain}.

\ilddefinition*
ILD induces the following data generating process: for the $d$-th domain, $\zvec=f_d(\epsilon)$ and $\xvec=g(\zvec).$ Because $f_d$ and $g$ are invertible, we can write the observed distribution using the change of variables formula as: $p_d(\xvec) = p_\mathcal{N}\left(f_d^{-1} \circ g^{-1}(\xvec)\right) |J_{f_d^{-1} \circ g^{-1}}(\xvec)|$.
We now note that assuming a topological ordering of the latent variables does not restrict the ILD model class.
\begin{remark} The latent SCMs in ILD can be assumed to be topologically ordered without loss of generality.
\end{remark}
Because the latent variables are all unobserved, the labeling is arbitrary, thus we could relabel them in a way that preserves topological order and add a permutation to the observation function $g$.
Essentially, given a non-autoregressive ILD, we could convert to an equivalent autoregressive ILD.

We now remember the distribution equivalence between two ILDs.
The distributional equivalence defines a true equivalence relation because the equation in \eqref{def:distribution-equivalence} has the properties of reflexivity, symmetry, and transitivity by the properties of the equality of measure.

\deqdef*

\section{Proofs and Auxiliary Results}
In this section, we prove propositions in \Cref{app-sec:expanded-preliminary-section} about the ILD model and the other results in the main paper. Before proving \Cref{thm:expressivity-invertible-scm}, we first introduce another lemma that is useful later in proving \Cref{thm:invertible-scm-equivalence}.
\begin{lemma}[Invertible Upper Subfunctions]
\label{thm:invertible-upper-subfunctions}
The upper subfunctions of $f \in \invclass \cap \autoregclass$ are also invertible, i.e., $\bar{f}_j(\epsilon_{\leq j}) \triangleq [f(\epsilon_{\leq j}, \cdot)]_{\leq j}$ is an invertible function of $\epsilon_{\leq j}$.
\end{lemma}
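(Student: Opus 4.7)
The plan is to show that $\bar{f}_j$ is a bijection of $\R^j$ by constructing its inverse from $f^{-1}$. I would first dispatch well-definedness: by \Cref{eqn:Autoregressive Function} each output $[f(\epsilon)]_i$ for $i\leq j$ depends only on $\epsilon_{\leq i}\subseteq\epsilon_{\leq j}$, so $[f(\epsilon_{\leq j}, \cdot)]_{\leq j}$ is independent of the completion past index $j$, and $\bar{f}_j:\R^j\to\R^j$ is a legitimate function.

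The core step is to prove that $f^{-1}$ is also autoregressive, i.e., $[f^{-1}(z)]_j$ depends only on $z_{\leq j}$. Once this is established, I can define $\overline{f^{-1}}_j(z_{\leq j})\triangleq [f^{-1}(z_{\leq j},\cdot)]_{\leq j}$ and check, using $f\circ f^{-1}=f^{-1}\circ f=\Id$ together with the autoregressiveness of both maps, that $\overline{f^{-1}}_j$ and $\bar{f}_j$ are two-sided inverses — making $\bar{f}_j$ bijective. My plan for proving $f^{-1}\in\autoregclass$ is strong induction on $j$. For the base case, $f^{(1)}:\R\to\R$ is surjective by surjectivity of $f$; for injectivity, if $f^{(1)}(a)=f^{(1)}(b)=y_1$ with $a\neq b$, then $f(\{a\}\times\R^{m-1})$ and $f(\{b\}\times\R^{m-1})$ are disjoint (by injectivity of $f$) open subsets (by invariance of domain applied to the continuous injection $f$) of the slice $\{y_1\}\times\R^{m-1}\cong\R^{m-1}$, and by surjectivity of $f$ they together with the images of any other roots cover that slice — contradicting connectedness. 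The inductive step reduces to the base case: fix $c\in\R^{j-1}$, restrict $f$ to the affine slice $\{c\}\times\R^{m-j+1}$, and use the inductive hypothesis (injectivity of $\bar{f}_{j-1}$) to show the restricted map is a continuous bijection onto $\{\bar{f}_{j-1}(c)\}\times\R^{m-j+1}$. The first coordinate of this restriction is $f^{(j)}(c,\cdot)$, which depends only on the first argument, so the base-case argument (disjoint open sets inside a connected slice) now yields injectivity of $f^{(j)}(c,\cdot)$ for every $c$, completing the induction.

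The main obstacle is that the statement we want — invertibility of $\bar{f}_j$ — is tautologically equivalent to $f^{-1}$ being autoregressive at level $j$, so a purely set-theoretic argument relying only on bijectivity plus autoregressiveness of $f$ is insufficient (one can construct pathological set-theoretic bijections violating the conclusion). The induction-plus-invariance-of-domain route sidesteps this by exploiting the continuity of $f$ and connectedness of the Euclidean slices, both of which are implicit in the ILD setting since the change-of-variables formula in \Cref{def:distribution-equivalence} already presupposes a continuously differentiable invertible mixing and SCM. Once the inductive argument delivers autoregressiveness of $f^{-1}$, verifying $\overline{f^{-1}}_j\circ\bar{f}_j=\Id$ and $\bar{f}_j\circ\overline{f^{-1}}_j=\Id$ is a one-line consequence of $f\circ f^{-1}=\Id$ restricted to the first $j$ coordinates.
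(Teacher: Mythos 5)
Your proposal is correct, but it is a genuinely different argument from the paper's. The paper inducts \emph{downward} from $j=m$ (where the claim is trivial since $\bar{f}_m\equiv f$) and argues by contradiction, purely set-theoretically, that non-injectivity or non-surjectivity of $\bar{f}_j$ would contradict invertibility of $\bar{f}_{j+1}$; you induct \emph{upward} from $j=1$, proving bijectivity of each conditional mechanism $f^{(j)}(c,\cdot)$ directly via topology: the inductive hypothesis (injectivity of $\bar{f}_{j-1}$) plus surjectivity of $f$ makes the restriction of $f$ to the slice $\{c\}\times\R^{m-j+1}$ a continuous bijection onto $\{\bar{f}_{j-1}(c)\}\times\R^{m-j+1}$, and invariance of domain plus connectedness of the image slice forbids two roots of $f^{(j)}(c,\cdot)=y$, giving bijectivity of $\bar{f}_j=(\bar{f}_{j-1},f^{(j)})$ and hence autoregressiveness of $f^{-1}$, to which you correctly observe the lemma is equivalent. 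What each route buys: the paper's argument uses no continuity and its surjectivity half is airtight as stated, but your warning that bijectivity plus autoregressiveness alone does not suffice is in fact right --- e.g.\ on $\R^2$ take $f(x,y)=(g(x),h(x,y))$ with $g(0)=g(1)=0$ and $g$ otherwise a bijection of $\R\setminus\{0,1\}$ onto $\R\setminus\{0\}$, $h(x,\cdot)=\Id$ for $x\notin\{0,1\}$, $h(0,\cdot)$ a (necessarily discontinuous) bijection onto $[0,\infty)$ and $h(1,\cdot)$ one onto $(-\infty,0)$; this $f$ is an autoregressive bijection whose $\bar{f}_1=g$ is not invertible --- and this is exactly where the paper's injectivity step is loose, since its chain of equalities silently uses $f^{(j+1)}(\epsilon'_{\leq j},\cdot)=f^{(j+1)}(\epsilon_{\leq j},\cdot)$ when swapping $\epsilon_{\leq j}$ for $\epsilon'_{\leq j}$ in the first $j$ outputs only. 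Your topological argument closes that hole, at the price of a continuity hypothesis that the lemma does not state but that the ILD setting (change of variables, Jacobians, flow/VAE parametrizations) implicitly supplies; if your proof is adopted, that extra hypothesis on $\invclass\cap\autoregclass$ should be made explicit.
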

\begin{proof}
We will prove this by induction on $k$ where $j = \ndim - k$.
For $k=0$, it is trivial because $\bar{f}_{\leq \ndim} \equiv f \in \invclass$.
We will prove the inductive step by contradiction.
Suppose $\bar{f}_{\leq \ndim - k}$ is not invertible.
This would mean it is not injective and/or not surjective.

If $\bar{f}_j$ is not injective, then $\exists\,\epsilon_{\leq j} \neq \epsilon'_{\leq j}$ such that $\bar{f}_{\leq j}(\epsilon_{\leq j}) = \bar{f}_{\leq j}(\epsilon'_{\leq j})$.
We would then have for some $\epsilon_{>j}$ (e.g., all zeros):
\begin{align}
    &\bar{f}_{\leq j + 1}(\epsilon_{\leq j}, \epsilon_{j+1})\notag\\
    &= [\bar{f}_{\leq j}(\epsilon_{\leq j}), [f(\epsilon_{\leq j}, \epsilon_{>j})]_{j+1}]^\top\notag\\
    &= [\bar{f}_{\leq j}(\epsilon'_{\leq j}), [f(\epsilon_{\leq j}, \epsilon_{>j})]_{j+1}]^\top\notag\\
    &= \bar{f}_{\leq j + 1}(\epsilon'_{\leq j}, \epsilon_{j+1}) \,,
\end{align}
but this would contradict the fact that $\bar{f}_{\leq j+1}$ is invertible by the inductive hypothesis.

If $\bar{f}_{\leq j}$ is not surjective, then $\exists\,\, \xvec_{\leq j}$ such that $\forall \epsilon_{\leq j}, \bar{f}_{\leq j}(\epsilon_{\leq j}) \neq \xvec_{\leq j}$.
We would then have that $\forall \epsilon_{\leq j}, \epsilon_{>j}$
\begin{align}
    \bar{f}_{j+1}(\epsilon_{\leq j}, \epsilon_{j+1})
    = [\bar{f}_{j}(\epsilon_{\leq j}), [f(\epsilon_{>j})]_{j+1} ]^\top
    \neq [\xvec_{\leq j}, x_{j+1} ]^\top \,.
\end{align}
but this would contradict the fact inductive hypothesis that $\bar{f}_{j+1}$ is surjective.
Therefore, $\bar{f}_j$ must be invertible for all $j \in [\ndim]$.
\end{proof}

\subsection{Proof of \autoref{thm:invertible-scm-equivalence}}
\label{sec:proof-of-invertible-scm-equivalence}
\scmrepresentation*
We first note that because of topologically ordering, we can write the causal mechanisms $\{\widetilde{f}^{(j)}\}_{j=1}^{\ndim}$ using different notation w.l.o.g. as: $z_j = {\widehat{f}}^{(j)}(\epsilon_j, \zvec_{<j}) \triangleq \widetilde{f}^{(j)}(\epsilon_j, \zvec_{z_{\text{Pa}_j}})$ with ${\widehat{f}}^{(j)}(\cdot,\cdot): \R\times\R^{j-1}\rightarrow \R$.
The topological ordering ensures that the parents are earlier indices, i.e., $\text{Pa}_j \subseteq \{1,2,\cdots,j-1\}$, so that this rewriting is possible w.l.o.g.
Given this new notation, the unique representation is given by:
\begin{equation}
   f(\epsilon) =\bigg[
        \widehat{f}^{(1)}(\epsilon_1),\;
        \widehat{f}^{(2)}(\epsilon_2, \underbrace{\widehat{f}^{(1)}(\epsilon_1)}_{\text{recover $z_1$}}) ,\;
        \widehat{f}^{(3)}(\epsilon_3, \underbrace{\widehat{f}^{(1)}(\epsilon_1), \tilde{f}^{(2)}(\epsilon_2, \widehat{f}^{(1)}(\epsilon_1))}_{\text{recover $\zvec_{<3}$}}) ,\;
        \cdots
    \bigg]^\top, \\
\end{equation}
where for all $j$, 
\begin{equation}
\widehat{f}^{(j)}(\epsilon_j, \zvec_{<j}) = [f(\underbrace{[f^{-1}(\zvec_{<j}, \cdot)]_{<j}}_{\text{recover $\epsilon_{<j}$ from $\zvec_{<j}$}  }, \epsilon_j, \cdot)]_j \,.\label{eqn:ftilde-from-f}
\end{equation} 
\begin{proof}

We first prove one direction. Given an invertible SCM defined by it's causal mechanisms $\{   \widehat{f}^{(j)}(\epsilon_j, \zvec_{<j})\}_{j=1}^m$, the observed variables are given recursively as:
\begin{align}
    z_j =    \widehat{f}^{(j)}(\epsilon_j, \zvec_{<j}) \,.
\end{align}
We now define the corresponding $f$ as in the lemma:
\begin{align}
   f(\epsilon) \triangleq\bigg[
        \widehat{f}^{(1)}(\epsilon_1),\;
        \widehat{f}^{(2)}(\epsilon_2, \underbrace{\widehat{f}^{(1)}(\epsilon_1)}_{\text{recover $z_1$}}) ,\;
        \widehat{f}^{(3)}(\epsilon_3, \underbrace{\widehat{f}^{(1)}(\epsilon_1), \tilde{f}^{(2)}(\epsilon_2, \widehat{f}^{(1)}(\epsilon_1))}_{\text{recover $\zvec_{<3}$}}) ,\;
        \cdots
    \bigg]^\top.
    \label{eqn:f-from-ftilde}
\end{align}
We need to prove that the observed variables are equivalent to the given SCM.
Formally, we will prove by induction on $j \in [\ndim]$ the hypothesis that $[f(\epsilon)]_j = \widehat{f}^{(j)}(\epsilon_j, \zvec_{<j})= z_j, \,\,\forall \epsilon \in \R^{\ndim}$.
The base case is trivial from the definition in \eqref{eqn:f-from-ftilde}, i.e., $\forall \epsilon \in \R^{\ndim}$, $[f(\epsilon)]_j = \widehat{f}^{(1)}(\epsilon_1) = z_j$.
For the inductive step, we have the following:
\begin{align}
    [f(\epsilon)]_{j+1} 
    = \widehat{f}^{(j+1)}(\epsilon_{j+1}, \underbrace{\widehat{f}^{(1)}(\epsilon_1)}_{z_1}, \underbrace{\widehat{f}^{(2)}(\epsilon_2, \widehat{f}^{(1)}(\epsilon_1))}_{z_2}, \cdots)  
    = \widehat{f}^{(j+1)}(\epsilon_{j+1}, \zvec_{<j+1})
    = z_{j+1} 
\end{align}
where the first equals is by \eqref{eqn:f-from-ftilde}, the second is by the inductive hypothesis, and the last is by definition of the SCM.

Now we prove the other direction. Given an invertible autoregressive function $f \in \invclass \cap \autoregclass$, we define the following recursive set of mechanism functions:
\begin{align}
   \forall j, z_{j} \equiv \widehat{f}^{(j)}(\epsilon_j, \zvec_{<j}) &\triangleq [f([f^{-1}(\zvec_{<j}, \cdot)]_{<j}, \epsilon_j, \cdot)]_j \,.
   \label{eqn:ftilde-to-f}
\end{align}
Again, we will prove that these functional forms are equivalent via induction on $j$ for the hypothesis that
$\widehat{f}^{(j)}(\epsilon_j, \zvec_{<j}) = [f(\epsilon)]_j = z_{j}$.
The base case is trivial based on \eqref{eqn:ftilde-to-f}:
\begin{align}
    \widehat{f}^{(1)}(\epsilon_1) 
    = [f([f^{-1}(\zvec_{<1}, \cdot)]_{<1}, \epsilon_1, \cdot)]_1
    = [f(\epsilon_1, \cdot)]_1 
    = z_1
\end{align}

For the inductive step, we use the definition of $\bar{f}_{<j}$ and its inverse from \Cref{thm:invertible-upper-subfunctions} and derive the final result:
\begin{align}
    \widehat{f}^{(j+1)}(\epsilon_{j+1}, \zvec_{<j+1}) 
    =[f([f^{-1}(\zvec_{<j}, \cdot)]_{<j}, \epsilon_j, \cdot)]_j
    =[f(\bar{f}_{<j}^{-1}(\zvec_{<j}), \epsilon_j, \cdot)]_j
    =[f(\epsilon_{<j}, \epsilon_j, \cdot)]_j
    =z_j \,.
\end{align}

\end{proof}

\subsection{Proof of \autoref{thm:expressivity-invertible-scm}}
\label{sec:proof-of-expressivity-invertible-scm}
\expressiveinvscm*
The proof leverages the invertible Rosenblatt transformation \citep[][Chapter B]{rosenblatt1952remarks,melchers2018structural} that can transform any distribution to the uniform distribution or vice versa using its inverse.
Given an ordering of a set of random variables, i.e., $\mathbf{X} = [X_1, X_2, \cdots, X_\ndim]^\top$, the Rosenblatt transformation is defined as follows:

\begin{align}
\begin{aligned}
    u_1 &:= F_1(x_1) \\
    u_2 &:= F_2(x_2 | x_1) \\
    u_3 &:= F_3(x_3 | x_1, x_2) \\
    \vdots &  \\
    u_\ndim &:= F_\ndim(x_\ndim | x_1, x_2, \cdots, x_{\ndim-1}) \,,
\end{aligned}
\end{align}
where $F_j(x_j|\xvec_{<j})$ is the conditional CDF of $X_j$ given $\mathbf{X}_{<j}=\xvec_{<j}$, i.e., the CDF corresponding to the distribution $p(X_j=x_j | \mathbf{X}_{<j}=\xvec_{<j})$.
It's inverse can be written as follows:
\begin{align}
\begin{aligned}
    x_1 &= F_1^{-1}(u_1) \\
    x_2 &= F_2^{-1}(u_2 |  F_1^{-1}(u_1)) \\
    x_3 &= F_3^{-1}(u_3 | F_1^{-1}(u_1),  F_2^{-1}(u_2 | x_1=F_1^{-1}(u_1)) \\
    \vdots &  \\
    x_\ndim &= F_\ndim^{-1}(u_\ndim | x_1=F_1^{-1}(u_1)), x_2=F_2^{-1}(u_2 | x_1=F_1^{-1}(u_1)), \cdots, x_{\ndim-1}=\dots) \,,
\end{aligned}
\end{align}
where $F_j^{-1}(u_j|\xvec_{<j})$ is the conditional inverse CDF corresponding to the conditional CDF $F_j(x_j|\xvec_{<j})$.

Let $F_p(\xvec)$ denote the Rosenblatt transformation for distribution $p$, and let $F^{-1}_p(\uvec)$ denote its inverse as defined above.
Assuming the random variables are continuous, the Rosenblatt transformation transforms the samples from any distribution to samples from the Uniform distribution (i.e., the push-forward of the Rosenblatt transformation is the uniform distribution and the push-forward of a uniform distribution through the inverse Rosenblatt is the distribution $p$).

\begin{proof}
Given any continuous target distribution $p$, we can construct an invertible SCM whose observed distribution is $p$.
Specifically, if we let $q$ denote the exogenous noise distribution, then the following invertible and autoregressive function $f$---which defines an invertible SCM via \Cref{thm:invertible-scm-equivalence}---can be used to match the SCM distribution to $p$:
\begin{align}
    f(\epsilon) &= F_p \circ F_q^{-1}(\epsilon) \,,
\end{align}
where $F_q^{-1}$ maps to the uniform distribution and then $F_p$ maps to the target distribution per the properties of the Rosenblatt transformation.
The function is invertible since both functions are invertible. Additionally, both functions are autoregressive and thus the composition is autoregressive.
Therefore, $f$ represents a valid invertible SCM whose observed distribution is $p$.
\end{proof}

\subsection{Proof of \autoref{prop:intervention_ia}}\label{proof of prop:intervention_ia}
\interventionia*
\begin{proof}
{\bf Step 1:} Prove $ \left\{j: \left[f^{-1}\right]_j \neq \left[{f}^{\prime-1}\right]_j \right\} \subseteq \mathcal{I}\left(\tilde{f}, \tilde{f}'\right).$ 

For all $j\in\left\{j: \left[f^{-1}\right]_j \neq \left[{f}^{\prime-1}\right]_j \right\},$ there exists some $\zvec,$ such that
\begin{align}
    \left[f^{-1}(\zvec)\right]_j\neq  \left[{f}^{\prime-1}(\zvec)\right]_j,
\end{align}
given that $f,f'$ are auto-regressive function, we conclude there exists some $(\zvec_{< j}, z_j)$ such that
\begin{equation}
     \epsilon_{j}=  [f^{-1}(\zvec_{< j},z_j, \cdot)]_{j}\neq   [{f}^{\prime-1}(\zvec_{<j},z_j, \cdot)]_{j}= \epsilon'_{j}.
     \label{eqn:inverse_inequality}
\end{equation}

we have, for $\epsilon_{j},\epsilon'_{j}$ and such $\zvec_{<j}$ there holds
\begin{align}\label{eqn:apply_inverse_inequality}
   \widehat{f}^{(j)}(\epsilon_{j}, \zvec_{<j})\overset{\eqref{eqn:inverse_inequality}}{=}z_j&\overset{\eqref{eqn:inverse_inequality}}{=}  \widehat{f}^{\prime(j)}(\epsilon'_{j}, \zvec_{<j})\notag\\
   &=[f'({[{f}^{\prime-1}(\zvec_{<j}, \cdot)]_{<j}}, \epsilon'_{j}, \cdot)]_j\notag\\
   &\overset{\text{(a)}}{\neq}[f'({[{f}^{\prime-1}(\zvec_{<j}, \cdot)]_{<j}}, \epsilon_{j}, \cdot)]_j\notag\\
   &= \widehat{f}^{\prime(j)}(\epsilon_{j}, \zvec_{<j}).
\end{align}
where (a) comes from the $f'\in\invclass.$
Thus it implies $j\in \mathcal{I}\left(\tilde{f}, \tilde{f}'\right).$

{\bf Step 2:} Prove $ \mathcal{I}\left(\tilde{f}, \tilde{f}'\right)\subseteq\left\{j: \left[f^{-1}\right]_j \neq \left[{f}^{\prime-1}\right]_j \right\}.$ 

 For all $j\in \mathcal{I}\left(\tilde{f}, \tilde{f}'\right),$ there exists some $(\epsilon_j, \zvec_{<j}),$ such that
\begin{equation}
    z_{j}\triangleq \widehat{f}^{(j)}(\epsilon_j, \zvec_{<j})\neq   \widehat{f}^{\prime(j)}(\epsilon_j, \zvec_{<j})\triangleq z'_{j},
\end{equation}
Define
\begin{equation}
    \zvec_{\leq j}\triangleq [\zvec_{<j},z_{j}]\quad\text{and}\quad \zvec'_{\leq j}\triangleq [\zvec_{<j},z'_{j}],
\end{equation}
then we have
\begin{align}
 [f^{-1}( \zvec_{\leq j}, \cdot)]_j =\epsilon_j= [{f}^{\prime-1}( \zvec'_{\leq j}, \cdot)]_j,
\end{align}
given that $f,f'\in\invclass,$ we conclude,
\begin{align}
 [f^{-1}( \zvec_{\leq j}, \cdot)]_j \neq [{f}^{\prime-1}( \zvec_{\leq j}, \cdot)]_j,
\end{align}
which implies $j\in\left\{j: \left[f^{-1}\right]_j \neq \left[{f}^{\prime-1}\right]_j \right\}.$
\end{proof}

\subsection{Proof of \autoref{lemma:Domain counterfactually equivalent}}\label{proof of lemma:Domain counterfactually equivalent}
{\begin{lemma}[Equivalence relation of counterfactual equivalence]\label{lemma:Domain counterfactually equivalent}
{Domain counterfactually equivalent}, denoted by $(\ildg, \ildf) \cequiv (\ildg', \ildf')$ is an equivalence relation, i.e., the relation 
satisfies reflexivity, symmetry, and transitivity.
\end{lemma}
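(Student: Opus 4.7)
The plan is to verify each of the three defining properties of an equivalence relation directly from \Cref{def:domain-counterfactual-equivalence} by exploiting the fact that function equality (i.e., pointwise equality on the common domain) is itself an equivalence relation. Concretely, for every ordered pair $(d,d')$ the counterfactual map of an ILD $(\ildg,\ildf)$ is a fixed function $T_{d\to d'}^{(\ildg,\ildf)} \triangleq g \circ f_{d'} \circ f_d^{-1} \circ g^{-1}$, and $(\ildg,\ildf)\cequiv(\ildg',\ildf')$ is exactly the statement $T_{d\to d'}^{(\ildg,\ildf)} = T_{d\to d'}^{(\ildg',\ildf')}$ for every pair $(d,d')$. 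Hence it suffices to push the standard equivalence properties of $=$ through the universal quantifier over $(d,d')$.

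For reflexivity, I would observe that for every $(d,d')$ the map $T_{d\to d'}^{(\ildg,\ildf)}$ equals itself pointwise, so $(\ildg,\ildf)\cequiv(\ildg,\ildf)$ follows trivially. For symmetry, assuming $(\ildg,\ildf)\cequiv(\ildg',\ildf')$, I would fix an arbitrary pair $(d,d')$ and apply the symmetry of equality to the identity $T_{d\to d'}^{(\ildg,\ildf)} = T_{d\to d'}^{(\ildg',\ildf')}$ to get the reversed identity; since $(d,d')$ was arbitrary, this yields $(\ildg',\ildf')\cequiv(\ildg,\ildf)$. For transitivity, I would chain the two hypothesized identities
\begin{equation*}
T_{d\to d'}^{(\ildg,\ildf)} \;=\; T_{d\to d'}^{(\ildg',\ildf')} \;=\; T_{d\to d'}^{(\ildg'',\ildf'')}
\end{equation*}
for each $(d,d')$, and conclude $(\ildg,\ildf)\cequiv(\ildg'',\ildf'')$.

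I do not anticipate a substantive obstacle here: the lemma is essentially the observation that pointwise equality of a family of functions indexed by $(d,d')$ inherits reflexivity, symmetry and transitivity componentwise from equality of values in $\mathcal{X}$. The only thing to take care of is to make the quantifier over $(d,d')$ explicit in each step so that the universal statement defining $\cequiv$ is preserved; no properties of $g$, $f_d$, invertibility, or autoregressiveness are actually needed for this lemma.
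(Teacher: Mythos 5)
Your proposal is correct and matches the paper's own argument: both verify reflexivity, symmetry, and transitivity directly by noting that the defining condition of $\cequiv$ is a family of function equalities $g \circ f_{d'} \circ f_d^{-1} \circ g^{-1} = g' \circ f'_{d'} \circ {f'_d}^{-1} \circ {g'}^{-1}$ over all pairs $(d,d')$, and pointwise equality inherits the three properties componentwise. No further comment is needed.
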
}
\begin{proof}
We only need to prove that it satisfies reflexivity, symmetry, and transitivity.
    \begin{enumerate}
        \item Reflexivity -  Letting $g' = g$ and $\ildf' = \ildf$ in the definition, it is trivial to see that $\forall d, d'$
        \begin{align*}
            g \circ f_{d'} \circ f_d^{-1} \circ g^{-1} = g \circ f_{d'} \circ f_d^{-1} \circ g^{-1} \,,
        \end{align*}
        and thus $(\ildg, \ildf) \cequiv (\ildg', \ildf')$.
        \item Symmetry - Similarly, it is trivial to see that $\forall d, d'$,
        \begin{align*}
            g \circ f_{d'} \circ f_d^{-1} \circ g^{-1} &= g' \circ f'_{d'} \circ {f'_d}^{-1} \circ {g'}^{-1}\notag\\
            \quad \Longleftrightarrow \quad g' \circ f'_{d'} \circ {f'_d}^{-1} \circ {g'}^{-1} &= g \circ f_{d'} \circ f_d^{-1} \circ g^{-1} \,,
        \end{align*}
        and thus $(\ildg, \ildf) \cequiv (\ildg', \ildf') \Leftrightarrow (\ildg', \ildf') \cequiv (\ildg, \ildf)$.
        \item Transitivity - For $(\ildg, \ildf)$, $(\ildg', \ildf')$ and $(\ildg'', \ildf'')$, we can derive the transitive property by applying the property twice to the first two and the last two pairs $\forall d, d'$:
        \begin{align*}
            g \circ f_{d'} \circ f_d^{-1} \circ g^{-1} 
            &= g' \circ f'_{d'} \circ {f'_d}^{-1} \circ {g'}^{-1} 
            = g'' \circ f''_{d'} \circ {f''_d}^{-1} \circ {g''}^{-1} \,,
        \end{align*}
        which means that $(\ildg, \ildf) \cequiv (\ildg'', \ildf'')$.
    \end{enumerate}
    
\end{proof}
\subsection{Proof of \autoref{thm:equivalent-properties}}\label{Proof of {thm:equivalent-properties}}
\cfeq*
\Cref{thm:equivalent-properties} contains two parts. The first part characterizes the domain counterfactual equivalence model with two invertible functions. The second part proves that all counterfactual equivalent models share the same intervention set size.
\subsubsection{Proof of the Representation of DCF Equivalence}
The proof of the domain counterfactual equivalence representation. relies heavily on one the following two lemmas that provides a necessary and sufficient condition for the composition of two invertible functions to be equal.
\begin{lemma}[Invertible Composition Equivalence]
    \label{thm:invertible-composition-equivalence}
    For two pairs of invertible functions $(f_1, f_2)$ and $(f'_1, f'_2)$, the following two conditions are equivalent:
    \begin{enumerate}
        \item The compositions are equal:
        \begin{align*}
            f_1 \circ f_2 = f'_1 \circ f'_2 \,.
        \end{align*}
        \item There exists an intermediate invertible function $h$ s.t.
        \begin{align}\label{eqn:Existence of intermediate invertible function}
f'_1 = f_1 \circ h^{-1}, f'_2 = h \circ f_2 \,.
        \end{align}
    \end{enumerate}
\end{lemma}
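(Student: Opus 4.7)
The plan is to prove the two directions separately, with the reverse implication being a direct calculation and the forward implication hinging on an explicit construction of the intermediate invertible function $h$.

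For the easy direction (condition 2 implies condition 1), I would simply substitute the given forms into the composition: if $f'_1 = f_1 \circ h^{-1}$ and $f'_2 = h \circ f_2$ for some invertible $h$, then
\begin{equation*}
f'_1 \circ f'_2 = (f_1 \circ h^{-1}) \circ (h \circ f_2) = f_1 \circ (h^{-1} \circ h) \circ f_2 = f_1 \circ f_2 \,,
\end{equation*}
using associativity of composition and the fact that $h^{-1} \circ h = \mathrm{Id}$. This step requires no nontrivial ideas.

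For the forward direction (condition 1 implies condition 2), the plan is to construct $h$ explicitly. The natural candidate is $h \triangleq f'_2 \circ f_2^{-1}$, which is invertible as the composition of two invertible functions. I will then verify the two required identities. The identity $f'_2 = h \circ f_2$ follows immediately by substituting the definition and canceling $f_2^{-1} \circ f_2$. For the identity $f'_1 = f_1 \circ h^{-1}$, I note that $h^{-1} = f_2 \circ (f'_2)^{-1}$, so
\begin{equation*}
f_1 \circ h^{-1} = f_1 \circ f_2 \circ (f'_2)^{-1} = f'_1 \circ f'_2 \circ (f'_2)^{-1} = f'_1 \,,
\end{equation*}
where the middle equality invokes the hypothesis $f_1 \circ f_2 = f'_1 \circ f'_2$. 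An equivalent alternative construction, $h \triangleq (f'_1)^{-1} \circ f_1$, would work symmetrically; I expect the proof to mention this only if it clarifies exposition.

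There is no substantive obstacle here: the lemma is essentially the statement that the fiber of composition over a fixed function $f_1 \circ f_2$ is parametrized by an arbitrary invertible $h$ that gets inserted between $f_1$ and $f_2$. The only care needed is to ensure invertibility is preserved at each step (which follows since the class of invertible functions is closed under composition and inversion) and to keep the order of compositions straight. No domain or codomain assumptions beyond matching types for the compositions are required, and in particular the result does not need the autoregressive structure used elsewhere in the paper.
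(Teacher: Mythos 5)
Your proof is correct, but for the nontrivial direction it takes a genuinely more direct route than the paper. The paper first invokes an auxiliary rewrite lemma to express $f'_1 = f_1 \circ h_1$ and $f'_2 = h_2 \circ f_2$ for two a priori unrelated invertible functions $h_1, h_2$, and then argues \emph{by contradiction} that $h_2 = h_1^{-1}$ (assuming no suitable $h$ exists, deriving $h_1^{-1} \circ f_2 = h_2 \circ f_2$ from the composition equality, and exhibiting a point where this fails). You instead simply define $h \triangleq f'_2 \circ f_2^{-1}$, note it is invertible, and verify both identities directly, with $f_1 \circ h^{-1} = f_1 \circ f_2 \circ (f'_2)^{-1} = f'_1 \circ f'_2 \circ (f'_2)^{-1} = f'_1$ following in one line from the hypothesis. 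The two arguments carry the same mathematical content — the paper's $h_1$ and $h_2$ are exactly your two candidate constructions, and its contradiction step is just a roundabout way of showing they agree — but your version avoids the detour through the rewrite lemma and the negation bookkeeping, and is the cleaner exposition. Your closing remarks about closure of invertible functions under composition and inversion, and about the irrelevance of the autoregressive structure, are accurate; the only care needed (which you flag) is that the intermediate codomains of $f_2$ and $f'_2$ are the spaces between which $h$ acts, which is harmless here since the lemma is applied with a common latent space.
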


\begin{proof}[Proof of \autoref{thm:invertible-composition-equivalence}]
    For notational simplicity in this proof, we will let $g\triangleq f_1$, $f \triangleq f_2$, $g' \triangleq f'_1$ and $f' \triangleq f_2'$---note that $g$ and $f$ are just arbitrary invertible functions in this proof.
    Furthermore, without loss of generality, we will prove for the property $\exists\, h: g' = g \circ h, f' = h^{-1} \circ f$ which is equivalent to $\exists\, h: g' = g \circ h^{-1}, f' = h \circ f$.
    Thus, in the new notation, we are seeking to prove:
    \begin{align}
        g \circ f = g' \circ f' \Leftrightarrow \exists\, h: g' = g \circ h, f' = h^{-1} \circ f
    \end{align}
    
    If $\exists\, h: g' = g \circ h, f' = h^{-1} \circ f$, then it is easy to show that $g \circ f = g' \circ f'$:
    \begin{align}
        g' \circ f' = g \circ h \circ h^{-1} \circ f = g \circ f \,.
    \end{align}
    For the other direction, we will prove by contradiction.
    First, using \Cref{thm:invertible-rewrite}, we can first rewrite $g'$ and $f'$ using the two uniquely determined invertible functions $h_1$ and $h_2$:
\begin{align}
g' &= g \circ h_1\label{eq:two_invertiable_expression1}\\
        f' &= h_2 \circ f \label{eq:two_invertiable_expression2}.
    \end{align}
    Now, suppose that $g \circ f = g' \circ f'$ but $\nexists\, h$ such that $ g' = g \circ h, f' = h^{-1} \circ f$.
    By the first assumption and the facts above, we can derive the following:
    \begin{align}
        g \circ f &= g' \circ f' = g \circ h_1 \circ h_2 \circ f     \\
        \Leftrightarrow f &= h_1 \circ h_2 \circ f \\
        \Leftrightarrow h_1^{-1} \circ f &= h_2 \circ f \label{eqn:hf-equal}
    \end{align}
    From the second assumption, i.e., $\nexists\, h: g' = g \circ h, f' = h^{-1} \circ f$, we have the following:
    \begin{align}
        \forall\, h \text{ s.t. } g' = g \circ h, \text{ it holds that } \, f' &\neq h^{-1} \circ f \label{eq:contradic1}\\
        \Rightarrow  f' &\neq h_1^{-1} \circ f \label{eq:contradic2}\\
        \Leftrightarrow h_2 \circ f &\neq h_1^{-1} \circ f \label{eq:contradic3}\\
        \Leftrightarrow h_2 &\neq h_1^{-1} \label{eq:contradic4}\\
        \Leftrightarrow h_2^{-1} &\neq h_1 \label{eq:contradic5}\,,
    \end{align}
    where \eqref{eq:contradic1} is by assumption,  \eqref{eq:contradic2} follows from 
    \eqref{eq:two_invertiable_expression1} because $h_1$ is one particular $h$,  \eqref{eq:contradic3}  is by our rewrite of $f'$ in \eqref{eq:two_invertiable_expression2},  \eqref{eq:contradic4} is by the invertibility of $f$, and  \eqref{eq:contradic5} is by invertibility of $h_1$ and $h_2$.
    Thus, there exists $\tilde{\yvec},$ such that $h_1^{-1}(\tilde{\yvec}) \neq h_2(\tilde{\yvec})$.
    Let us choose $\tilde{\xvec} \triangleq f^{-1}(\tilde{\yvec})$ for the $\tilde{\yvec}$ that satisfies the condition.
    For this $\tilde{\xvec}$, we then know that:
    \begin{align}
        h_1^{-1} \circ f(\tilde{\xvec}) = h_1^{-1}(\tilde{\yvec}) &\neq h_2(\tilde{\yvec}) = h_2 \circ f(\tilde{\xvec}) \\
        \Leftrightarrow h_1^{-1} \circ f \neq h_2 \circ f \,.
    \end{align}
    But this leads to a direct contradiction of \eqref{eqn:hf-equal}.
    Therefore, if $g \circ f = g' \circ f'$, then $\exists\, h: g' = g \circ h, f' = h^{-1} \circ f$.
\end{proof}

\begin{lemma}[Invertible function rewrite]
\label{thm:invertible-rewrite}
    Given any two invertible functions $f:\xset \to \yset$ and $f':\xset \to \yset$, $f'$ can be decomposed into the composition of $f$ and another invertible function. Specifically, $f'$ can be decomposed in the following two ways:
    \begin{align}
        f' &\equiv f \circ h_{\xset} \\
        f' &\equiv h_{\yset} \circ f \,,
    \end{align}
    where $h_{\xset}\triangleq f^{-1} \circ f': \xset \to \xset$ and $h_{\yset} \triangleq  f' \circ f^{-1}: \yset \to \yset$ are both invertible functions.
\end{lemma}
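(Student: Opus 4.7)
The plan is to prove this by direct verification, since both claimed decompositions are essentially definitional once we observe that the composition of invertible functions is invertible. The lemma asserts two facts about each decomposition: (i) the auxiliary function $h_{\xset}$ (resp. $h_{\yset}$) is itself invertible on the stated domain, and (ii) the decomposition $f' = f \circ h_{\xset}$ (resp. $f' = h_{\yset} \circ f$) holds. Both reduce to routine function-composition algebra.

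First I would handle the $h_{\xset}$ case. I would note that $f^{-1}: \yset \to \xset$ exists and is invertible by assumption on $f$, and $f': \xset \to \yset$ is invertible by assumption. Hence $h_{\xset} \triangleq f^{-1} \circ f'$ is a composition of two invertible maps with matching domains/codomains ($\xset \to \yset \to \xset$), so it is invertible as a map $\xset \to \xset$, with inverse $(f')^{-1} \circ f$. Then I would verify the decomposition by direct substitution:
\begin{align*}
f \circ h_{\xset} = f \circ (f^{-1} \circ f') = (f \circ f^{-1}) \circ f' = \mathrm{Id}_{\yset} \circ f' = f' \,,
\end{align*}
using associativity of composition and the defining property $f \circ f^{-1} = \mathrm{Id}_{\yset}$.

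The second decomposition is entirely symmetric: $h_{\yset} \triangleq f' \circ f^{-1}$ is a composition $\yset \to \xset \to \yset$ of invertible maps, hence invertible on $\yset$ with inverse $f \circ (f')^{-1}$; and
\begin{align*}
h_{\yset} \circ f = (f' \circ f^{-1}) \circ f = f' \circ (f^{-1} \circ f) = f' \circ \mathrm{Id}_{\xset} = f' \,.
\end{align*}

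I do not anticipate any real obstacle here: the lemma is a bookkeeping statement whose content is exactly that one can always ``cancel'' an invertible function by composing with its inverse on the appropriate side. The only thing worth being careful about is tracking the domains and codomains so that the composed maps are well-defined and the identities land in the right space ($\mathrm{Id}_{\xset}$ vs.\ $\mathrm{Id}_{\yset}$); once that is explicit, the proof is three lines per case.
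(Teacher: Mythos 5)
Your proposal is correct and follows essentially the same route as the paper's proof: observe that $h_{\xset}$ and $h_{\yset}$ are invertible as compositions of invertible maps, then verify $f \circ h_{\xset} = f'$ and $h_{\yset} \circ f = f'$ by cancelling $f$ with $f^{-1}$. Your additional bookkeeping of domains, codomains, and explicit inverses is fine but adds nothing beyond the paper's argument.
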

\begin{proof}[Proof of \Cref{thm:invertible-rewrite}]
    The proof is straightforward.  We first note that $h_{\xset}$ and $h_{\yset}$ are invertible because they are compositions of invertible functions. Then, we have that:
    \begin{align}
        f \circ h_{\xset} &= f \circ f^{-1} \circ f' = f'\\
        h_{\yset} \circ f &= f' \circ f^{-1} \circ f  = f' \,.
    \end{align}
\end{proof}

\begin{proof}[Proof of \autoref{thm:equivalent-properties}: Part 1]
    The basic idea is to use repeated application of \Cref{thm:invertible-composition-equivalence} under the constraint that $h_1$ and $h_2$ must be shared across for all $d$ and $g$ and $g^{-1}$ must be inverses of each other.
    
    For one direction as in \Cref{thm:invertible-composition-equivalence}, if \eqref{eqn:invertible-equivalence-condition} holds,
    it is nearly trivial to prove the equation in \eqref{def:domain-counterfactual-equivalence} holds, i.e., for all $d, d'$:
    \begin{align*}
        g' \circ f'_{d'} \circ {f'_d}^{-1} \circ {g'}^{-1} 
        &=(g \circ h_1^{-1}) \circ (h_1 \circ f_{d'} \circ h_2) \circ (h_2^{-1} \circ f_d^{-1} \circ h_1^{-1}) \circ (h_1 \circ g^{-1}) \\
        &=g \circ f_{d'} \circ f_d^{-1} \circ g^{-1} \,.
    \end{align*}
    To prove the other direction, let us define the following functions for a specific $(d,d')$ (we will treat the case of all $(d, d')$ afterwards): $f_1 \triangleq g^{-1}, f_2 \triangleq f_d^{-1}, f_3 \triangleq f_{d'}$, and $f_4 \triangleq g$ and similarly $f'_1, f'_2, f'_3,$ and $f'_4$ for the other side.
    Given these definitions, we can write the property as:
    \begin{align*}
        f_4 \circ f_3 \circ f_2 \circ f_1 = f'_4 \circ f'_3 \circ f'_2 \circ f'_1 \,.
    \end{align*}
    By recursively applying \Cref{thm:invertible-composition-equivalence} for each of the three function compositions, we arrive at the following fact:
    \begin{align*}
        \exists\, h_1, h_2, h_3, \,\,\textnormal{ s.t. } \quad &
        \left \{
        \begin{aligned}\label{eqn:inductive_use_lemma}
          f'_1 &= h_1 \circ f_1 \,\text{and}\,f'_4 \circ f'_3 \circ f'_2=  f_4 \circ f_3 \circ f_2\circ h_1^{-1}\\
           f'_2 &= h_2 \circ f_2 \circ h_1^{-1}\,\text{and}\, f'_4 \circ f'_3 =f_4 \circ f_3\circ h_2^{-1}\\
            f'_3 &= h_3 \circ f_3 \circ h_2^{-1}\,\text{and}\,f'_4= f_4 \circ h_3^{-1}\\
        \end{aligned}
        \right.
    \end{align*}
    By using the definitions of $f_1$, $f_2$, etc., we can now derive the following:
    \begin{align*}
        g' &= g \circ h_3^{-1}\\
        f'_{d'} &= h_3 \circ f_{d'} \circ h_2^{-1}\\
        {f'_d}^{-1} &= h_2 \circ f^{-1}_d \circ h_1^{-1}\\
        {g'}^{-1} &= h_1 \circ g^{-1} \,.
    \end{align*}
    We can connect the first and the last equality to derive that $h_3 = h_1$:
    \begin{align*}
        {g'}^{-1} &= h_1 \circ g^{-1} \\
        \Leftrightarrow g' &= g \circ h_1^{-1} = g \circ h_3^{-1} \\
        \Leftrightarrow h_1^{-1} &= h_3^{-1} \\
        \Leftrightarrow h_1 &= h_3 \,.
    \end{align*}
    Thus, there are only two free functions. Specifically, for any fixed pair of $(d,d')$ there exist  $h_{1,d,d'} (\equiv h_{3,d,d'})$ and $h_{2,d,d'}$ such that
    \begin{equation*}
        g' = g \circ h_{1,d,d'}^{-1}, f'_d = h_{1,d,d'} \circ f_d \circ h_{2,d,d'}^{-1}, \textnormal{ and }\,  f'_{d'} = h_{1,d,d'} \circ f_{d'} \circ h_{2,d,d'}^{-1}.
    \end{equation*}
    
    Finally, we tackle the case of all $(d, d')$ by assuming that there could be unique functions $h_{1,d,d'}$ and $h_{2,d,d'}$ for all pairs of 
 $(d,d')$ and show that they are in fact equal.
    Because the condition holds for \emph{all} $(d, d')$, we know that for any particular $(d, d')$ and $(d'',d)$, we have the following two things based on the proof above:
    \begin{align*}
        & g' \circ f'_{d'} \circ {f'_d}^{-1} \circ {g'}^{-1} = g \circ f_{d'} \circ {f}_d^{-1} \circ {g}^{-1} \\
          &\Leftrightarrow \exists\, h_{1,d,d'},h_{2,d,d'} \text{ s.t. } \left\{
        \begin{aligned}
        g' &= g \circ h_{1,d,d'}^{-1} \\
            f'_{d} &= h_{1,d,d'} \circ f_{d} \circ h_{2,d,d'}^{-1} \\
            f'_{d'} &= h_{1,d,d'} \circ f_{d'} \circ h_{2,d,d'}^{-1} 
        \end{aligned}
        \right. \\
    \end{align*}
    \begin{align*}
     & g' \circ f'_{d} \circ {f'_{d''}}^{-1} \circ {g'}^{-1} = g \circ f_{d} \circ {f}_{d''}^{-1} \circ {g}^{-1} \\
           &\Leftrightarrow \exists\, h_{1,d'',d},h_{2,d'',d} \text{ s.t. }  \left\{
        \begin{aligned}
            g' &= g \circ h_{1,d'',d}^{-1}\\
            f'_{{d''}} &= h_{1,{d''},d} \circ f_{{d''}} \circ h_{2,{d''},d}^{-1} \\
            f'_{d} &= h_{1,{d''},d} \circ f_{d} \circ h_{2,{d''},d}^{-1} 
        \end{aligned}
        \right. \,.
    \end{align*}
    By equating the RHS for the $g'$ equations, we can thus derive that:
    \begin{align*}
        g \circ h^{-1}_{1,d,d'} &= g \circ h^{-1}_{1,d'',d} \\
        \Leftrightarrow \quad h_{1,d,d'} &= h_{1,d'',d} \,.
    \end{align*}
    Using this fact and similarly by equating the RHS for the $f'_d$ equations, we can derive:
    \begin{align*}
        f'_{d} &= h_{1,d,d'} \circ f_{d} \circ h_{2,d,d'}^{-1} = h_{1,d'',d} \circ f_{d} \circ h_{2,d'',d}^{-1} = h_{1,d,d'} \circ f_{d} \circ h_{2,d'',d}^{-1} \\
        \Leftrightarrow \quad h_{2,d,d'}^{-1} &= h_{2,d'',d}^{-1} \\
        \Leftrightarrow \quad h_{2,d,d'} &= h_{2,d'',d} \,.
    \end{align*}
    By applying these facts to all possible triples of $(d, d', d'')$, we can conclude that $\forall d,d',$ $h_{1,d,d'} = h_1, $ $h_{2,d,d'} = h_2$, i.e., these intermediate functions must be independent of $d$ and $d'$.
    Finally, we can adjust notation so that $\forall d, f'_d = \tilde{h}_1 \circ f_d \circ \tilde{h}_2$ and $g' = g \circ \tilde{h}_1^{-1}$, where $\tilde{h}_1 \triangleq h_1$ and $\tilde{h}_2 \triangleq h_2^{-1}$, which matches the result in the theorem.
\end{proof}
\subsubsection{Proof of the Sharing Intervention Set Size between DCF Equivalence Models}
Now we aim to prove the second part of \Cref{thm:equivalent-properties}, which states that all DCF equivalence models share the same intervention set size. The proof requires the concept of canonical form, please refer \Cref{def:canonical-counterfactual-model} for the definition of canonical form and \Cref{thm:canonical-model-exists} for the existence of a special kind of canonical form we refer as Idendity Canonical, where all the un-internvend nodes are independent standard Gaussian.

For two ILDS $(\ildg, \ildf)\cdequiv (\ildg', \ildf')$, we apply \Cref{thm:canonical-model-exists} to get Identity Canonical form $(\ildg_\mathcal{C},\ildf_\mathcal{C})\cdequiv (\ildg, \ildf)$ and similarly $(\ildg'_\mathcal{C},\ildf'_\mathcal{C})\cdequiv (\ildg', \ildf')$. Then we use the following \Cref{prop:can_sameintervention}, we show they must have the same intervention set size. Lastly, notice that the DCF equivalence is a equivalence relation \Cref{lemma:Domain counterfactually equivalent}, we can show all the DCF equivalence models share the same intervention set size. 

Before proving \Cref{prop:can_sameintervention}, we first introduce a lemma that will be used in the main proof.
\begin{lemma}\label{lemma:inv-auto-diagonal}
If $f:\R^m\rightarrow\R^m \in\invautoregclass$, then $[f(\xvec)]_k$ must be a non-constant function of $x_k$.
\end{lemma}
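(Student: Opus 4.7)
The plan is to derive a contradiction from the invertibility of the upper subfunction $\bar{f}_k$, which was already established in the appendix as the \emph{Invertible Upper Subfunctions} lemma. That lemma says that if $f \in \invclass \cap \autoregclass$, then $\bar{f}_k(\xvec_{\leq k}) \triangleq [f(\xvec_{\leq k}, \cdot)]_{\leq k}$ is an invertible (hence injective) map $\R^k \to \R^k$. I will show that if $[f(\xvec)]_k$ failed to depend on $x_k$ at some fixed value of $\xvec_{<k}$, this injectivity would be violated.

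Concretely, the proof proceeds by contradiction. First, fix any $\xvec_{<k}^\star \in \R^{k-1}$ and assume toward contradiction that the scalar map $x_k \mapsto [f(\xvec_{<k}^\star, x_k, \cdot)]_k$ is constant. Next, for any pair $x_k \neq x'_k$, I would compare the two inputs $(\xvec_{<k}^\star, x_k)$ and $(\xvec_{<k}^\star, x'_k)$ under $\bar{f}_k$. By the autoregressive property (\Cref{eqn:Autoregressive Function}), each of the first $k-1$ coordinates $[f]_j$ for $j < k$ depends only on $\xvec_{\leq j}$, which is a subvector of $\xvec_{<k}^\star$, so these $k-1$ outputs agree on the two inputs. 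By the contradiction hypothesis, the $k$-th coordinate also agrees. Hence $\bar{f}_k$ sends two distinct points of $\R^k$ to the same image, contradicting the injectivity granted by the Invertible Upper Subfunctions lemma.

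I expect this to be essentially the whole argument — there is no real obstacle, because the autoregressive structure confines $[f]_{\leq k}$ to depend only on $\xvec_{\leq k}$, so I never have to reason about the tail coordinates $\xvec_{>k}$ or about how they might interact through $f^{-1}$. The only care needed is in the interpretation of ``non-constant function of $x_k$'': read as the pointwise statement ``for every fixed $\xvec_{<k}$, the univariate map $x_k \mapsto [f(\xvec)]_k$ is non-constant,'' the contrapositive above establishes precisely this stronger conclusion, which is presumably the form in which the lemma will be invoked downstream in the canonical-ILD constructions.
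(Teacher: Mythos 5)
Your proof is correct, but it takes a genuinely different route from the paper's. The paper's own argument is a one-line counting contradiction: taking $k$ to be the smallest index at which $[f(\xvec)]_k$ ignores $x_k$, it notes that the first $k$ outputs are then determined by the first $k-1$ inputs alone, so the remaining $m-k$ outputs would have to distinguish the remaining $m-k+1$ inputs, which it declares impossible for a bijection. You instead funnel everything through \Cref{thm:invertible-upper-subfunctions}: freezing the prefix $\xvec_{<k}$ and varying $x_k$ produces two distinct points of $\R^k$ with the same image under $\bar{f}_k$, contradicting its injectivity. The two arguments rest on the same structural fact, but yours buys two things. First, it is the more rigorous of the two as written: the paper's ``$m-k$ outputs cannot be bijective to $m-k+1$ inputs'' is an informal dimension count (purely set-theoretic injections $\R^{m-k+1}\to\R^{m-k}$ do exist, so making this airtight needs the continuity/regularity that is assumed explicitly in \Cref{prop:can_sameintervention} and is implicitly packaged into \Cref{thm:invertible-upper-subfunctions}); by citing that lemma you localize the regularity burden there rather than re-arguing it. Second, your contrapositive yields the stronger, pointwise conclusion---for \emph{every} fixed $\xvec_{<k}$ the map $x_k\mapsto[f(\xvec)]_k$ is non-constant---which is the form that is actually leaned on downstream in the canonical-ILD argument, whereas the paper's phrasing only rules out global independence of $x_k$. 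The trade-off is that your proof is only as strong as \Cref{thm:invertible-upper-subfunctions} itself, while the paper's sketch is self-contained (if informal); in the continuous, invertible setting the paper works in, both routes are sound.
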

\begin{proof}
We prove this by contradiction. Suppose $k$ is the first index that $[f(\xvec)]_k=\widetilde{f}(x_1,\dots,x_{k-1})$. Since $k$ is the smallest index, $[f(\xvec)]_{<k}$ is uniquely determined by $[\xvec]_{\leq k}$, The remaining $\ndim-k$ dimension outputs could not be bijective to $\ndim-k+1$ inputs.
\end{proof}

\begin{proposition}[Identity Canonical ILD Shares Intervention Sparsity]\label{prop:can_sameintervention}
Given an ILD $(\ildg, \ildf),$ and $\ildg, f_d\in\ildf,$ for all $d\in \ndim$ are continuous, then  all Identity Canonical ILDs that are distributionally and counterfactually equivalent to $(\ildg, \ildf)$ have the same intervention set, i.e.,
\begin{align}
  \mathcal{I}(\ildf) = \mathcal{I}(\ildf'), \,\quad  \forall \,(\ildg',\ildf') \in \left\{(\widetilde{\ildg}, \widetilde{\ildf}) \in \mathcal{C}: (\widetilde{\ildg},\widetilde{\ildf}) \dequiv (\ildg, \ildf), (\widetilde{\ildg},\widetilde{\ildf}) \cequiv (\ildg, \ildf)\right\}.
\end{align}
\end{proposition}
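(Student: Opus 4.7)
The plan is to reduce the proposition to a coordinate-level contradiction by extracting bridging invertible functions $h_1, h_2$ between the two ILDs via \Cref{thm:equivalent-properties}, refining them using distribution equivalence and autoregressive structure, and then exploiting the Identity Canonical form to force equality of the intervention set sizes.

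First, I would apply \Cref{thm:equivalent-properties} to $(\ildg, \ildf) \cequiv (\ildg', \ildf')$ to obtain $h_1, h_2 \in \invclass$ with $g' = g \circ h_1^{-1}$ and $f'_d = h_1 \circ f_d \circ h_2$ for every $d$. Distribution equivalence then gives that $g \circ f_d \circ h_2(\epsvec)$ and $g \circ f_d(\epsvec)$ have the same law for $\epsvec \sim \mathcal{N}(0, I_m)$, which combined with invertibility of $g$ and $f_d$ forces $h_2$ to preserve the standard Gaussian measure. Next, I would upgrade $h_1, h_2$ to lie in $\invautoregclass$: since $f'_d = h_1 \circ f_d \circ h_2$ must be autoregressive simultaneously for every $d$ while $f_d$ is already autoregressive with a $d$-varying top block, any dependence of $h_1^{(i)}$ or $h_2^{(i)}$ on coordinates strictly greater than $i$ would leak through the composition and destroy autoregressiveness for some $d$. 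With both $h_1, h_2 \in \invautoregclass$, applying Identity Canonical of $\ildf$ and $\ildf'$ to indices $i \leq m-k$ yields $\epsilon_i = h_1^{(i)}([h_2(\epsvec)]_{\leq i}) = [h_1 \circ h_2]_i(\epsvec)$, so $h_1 \circ h_2$ is the identity on the first $m-k$ outputs.

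Now suppose for contradiction that $k' < k$ (the other direction is symmetric), and take $i^* \triangleq m-k+1 \leq m-k'$, so that $[f'_d(\epsvec)]_{i^*} = \epsilon_{i^*}$ for every $d$ by Identity Canonical of $\ildf'$. Unpacking the composition and using Identity Canonical of $\ildf$ (so the first $m-k$ coordinates of $f_d \circ h_2(\epsvec)$ are exactly $[h_2(\epsvec)]_{\leq m-k}$) yields
\[
    \epsilon_{i^*} = h_1^{(i^*)}\!\Bigl([h_2(\epsvec)]_{\leq m-k},\ \widetilde{f}^{(i^*)}_d\bigl([h_2(\epsvec)]_{i^*},\ [h_2(\epsvec)]_{<i^*}\bigr)\Bigr)
\]
for all $d$ and all $\epsvec$. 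By \Cref{thm:invertible-upper-subfunctions} the map $h_1^{(i^*)}$ is invertible in its last argument, so the inner quantity $\widetilde{f}^{(i^*)}_d(\cdot,\cdot)$ is pinned down as a single $d$-independent function of its inputs; since $[h_2(\epsvec)]_{\leq i^*}$ ranges over all of $\R^{i^*}$ as $\epsvec$ varies (via upper-subfunction invertibility of the autoregressive $h_2$), this forces $\widetilde{f}^{(i^*)}_d = \widetilde{f}^{(i^*)}_{d'}$ for all $d, d'$. Hence $i^* \notin \mathcal{I}(\ildf)$, contradicting $i^* = m-k+1 \in \mathcal{I}(\ildf)$. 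Therefore $k = k'$, and since both ILDs are canonical, $\mathcal{I}(\ildf) = \{m-k+1, \dots, m\} = \mathcal{I}(\ildf')$.

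The main obstacle I anticipate is the autoregressive upgrade of $h_1, h_2$: \Cref{thm:equivalent-properties} only guarantees $h_1, h_2 \in \invclass$, and a priori non-autoregressive $h_i$'s could conspire with varying $f_d$'s to yield autoregressive compositions through cancellations. I expect this step needs to exploit both the uniformity of the autoregressive constraint across every $d$ and the Gaussian-preservation of $h_2$ established above---together, these drastically limit how $h_2$ can mix coordinates. An alternative route is to bypass \Cref{thm:equivalent-properties} at this step and instead construct an autoregressive $(h_1, h_2)$ pair explicitly between the two Identity Canonical forms (in the spirit of the constructive proof used for \Cref{thm:canonical-model-exists}), which would sidestep the general-invertible extraction and guarantee autoregressiveness by design.
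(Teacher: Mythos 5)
Your overall plan mirrors the paper's (extract bridging maps from \Cref{thm:equivalent-properties}, exploit the Identity Canonical structure, then derive a coordinate-level contradiction), but it rests on a step that is both unproven and, as stated, false: the ``autoregressive upgrade'' of $h_1,h_2$. The hypotheses simply do not force $h_1,h_2\in\autoregclass$. For instance, if all $f_d=\Id$ (empty intervention set), \emph{any} invertible, non-autoregressive $h_2=h_1^{-1}$ satisfies $f'_d=h_1\circ f_d\circ h_2\in\autoregclass$ for every $d$, preserves the Gaussian, and both models are Identity Canonical; more generally, conjugation by a non-autoregressive map (e.g.\ a coordinate swap or a linear shear) can carry one autoregressive family to another, so the ``leakage'' heuristic cannot be repaired by uniformity over $d$ or Gaussian preservation. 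Every subsequent step of your argument---injectivity of $h_1^{(i^*)}$ in its last argument via \Cref{thm:invertible-upper-subfunctions}, surjectivity of $[h_2(\cdot)]_{\leq i^*}$---presupposes this structure, so the proof does not go through. Your proposed fallback of ``constructing'' an autoregressive pair $(h_1,h_2)$ between the two canonical forms is also unavailable: given the two ILDs, $h_1$ and $h_2$ are uniquely pinned down (e.g.\ $h_1={g'}^{-1}\circ g$ and then $h_2$ by $f'_1=h_1\circ f_1\circ h_2$), and the proposition quantifies over all equivalent Identity Canonical models, so you must handle whatever bridging maps the equivalence hands you.

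You also under-use the Identity Canonical hypothesis at the point where it does the most work. Since $f_1=\Id$ and $f'_1=\Id$, plugging $d=1$ into $f'_d=h_1\circ f_d\circ h_2$ gives $h_1\circ h_2=\Id$ globally (not merely on the first $m-k$ outputs), so everything collapses to conjugation $f'_d=h^{-1}\circ f_d\circ h$ by a single merely invertible map $h$---this is the paper's Step 1 and requires no structure on $h$ beyond invertibility. The paper then finishes not by making $h$ autoregressive but by a functional-dependence count on $f'^{-1}_d=h^{-1}\circ f_d^{-1}\circ h$: for each non-intervened index $i$ of $\ildf'$ (where $[f'^{-1}_d(\xvec)]_i=x_i$), it argues that $[h^{-1}(\cdot)]_i$ can depend on only one coordinate $j$ (using \Cref{lemma:inv-auto-diagonal} and invertibility of $h$), and that this $j$ must be non-intervened in $\ildf$; this yields an injection between the non-intervened index sets, hence equal intervention-set sizes in both directions, and equal sets by canonicity. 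If you want to salvage your route, you would need to replace the autoregressive upgrade with a dependence argument of this kind on the single conjugating $h$; as written, the gap is essential, not cosmetic.
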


\begin{proof}[Proof of \autoref{prop:can_sameintervention}]
In the proof, we denote $F$ as a non constant function without specifying the expression.

\noindent{\bf Step 1: Characterization of counterfactual equivalence for Identity Canonical forms.} 
\Cref{thm:equivalent-properties} states that there exists $h_1, h_2\in \invclass,$ such that for all $d$,
\begin{equation}\label{eqn:sym'}
f'_d=h_1\circ f_d \circ h_2.
\end{equation}
Furthermore, by the definition of Identity Canonical form, we have 
\begin{equation}\label{eqn:cf-property}
    f'_1=\Id, f_1=\Id.
\end{equation}
Plugging this into \eqref{eqn:sym'}, we have 
\begin{equation*}
    \Id=h_1\circ \Id\circ h_2.
\end{equation*}
Thus,
\begin{equation*}
    h^{-1}_1=h_2\triangleq h.
\end{equation*}
Plugging this into \eqref{eqn:sym'}, for all $d$, we have
\begin{equation}\label{eqn:sym2'}
    f^{\prime-1}_d=h^{-1}\circ f^{-1}_d \circ h.
\end{equation}
\noindent{\bf Step 2: Counterfactual equivalence between Identity Canonical forms maintain the intervention set.}
The goal of this step is to prove that $h$ is a bridge satisfying the following property: for any $i\notin\mathcal{I}(f'_1,f'_d)$, for all $x,$ there exists an unique $j,$ such that $[h^{-1}(\xvec)]_i$ only depends on $x_j.$
In addition, we can prove such $j$ satisfies $j\notin\mathcal{I}(f_1,f_d).$ 

We start with writing the $i$-th output of $f^{\prime-1}_d(\xvec)$ as the following
\begin{align}
    [f^{\prime-1}_d(\xvec)]_i&\overset{\eqref{eqn:sym2'}}{=}[h^{-1}(f^{-1}_d(h(\xvec)))]_i \\
    &=[h^{-1}\left( [f^{-1}_{d}(h(\xvec))]_1, [f^{-1}_{d}(h(\xvec))]_2,\dots,[f^{-1}_{d}(h(\xvec))]_\ndim \right)]_i \\
    &\overset{\text{(a)}}{=}\left[h^{-1}\left( \widetilde{f}^{-1}_{d,1}([h(\xvec)]_1), \widetilde{f}^{-1}_{d,2}([h(\xvec)]_1, [h(\xvec)]_2),\dots,\widetilde{f}^{-1}_{d,\ndim}([h(\xvec)]_1,\dots [h(\xvec)]_\ndim) \right)\right]_i,
\end{align}
where in step (a), we used autoregresiveness of $f_d^{-1},$ and $\widetilde{f}^{-1}_{d,k}$ is defined as a function from $\R^k$ to $\R.$ 
According to \Cref{lemma:inv-auto-diagonal},
$\widetilde{f}^{-1}_{d,k}(\xvec)$ is a non-constant function of $x_k.$

\noindent{\bf Step 2.1: We show $i$ and $j$ must be one-to-one mapping of $h$.} We proof this by contradiction. 

Suppose $h^{-1}$ maps more than one index to $i$-th index, w.l.o.g, we could assume $j_1$ and $j_2$. 
That is, $[h^{-1}(\uvec)]_i$ depends on $u_{j_1}$ and $u_{j_2}.$
Take $\uvec=f^{-1}_d(h(\xvec))$, then we have
\begin{equation}\label{eqn:contro_1}
    [f^{\prime-1}_d(\xvec)]_i = F\left(\widetilde{f}^{-1}_{d,j_1}([h(\xvec)]_1,\dots [h(\xvec)]_{j_1}),\widetilde{f}^{-1}_{d,j_2}([h(\xvec)]_1,\dots [h(\xvec)]_{j_2})\right)
\end{equation}

Due to that $f^{-1}_d\in\invautoregclass$, from \Cref{lemma:inv-auto-diagonal}, we have
\begin{align}
    \widetilde{f}^{-1}_{d,j_1}([h(\xvec)]_1,\dots [h(\xvec)]_{j_1}) &= F([h(\xvec)]_{j_1},\cdot)\label{eqn:contro_2}\\
    \widetilde{f}^{-1}_{d,j_2}([h(\xvec)]_1,\dots [h(\xvec)]_{j_2}) &= F([h(\xvec)]_{j_2},\cdot)\label{eqn:contro_3}.
\end{align}

Plug  \eqref{eqn:contro_2}, \eqref{eqn:contro_3} into \eqref{eqn:contro_1}, we have
\begin{equation}
   [f^{\prime-1}_d(\xvec)]_i=F([h(\xvec)]_{j_1},[h(\xvec)]_{j_2},\cdot)
\end{equation}

Given that $h\in\invautoregclass,$ we conclude $\left([h(\xvec)]_{j_1}, [h(\xvec)]_{j_2}\right)$ depend at least two distinct indices. That is, there exists
$i_1, i_2$ such that
\begin{equation}\label{eqn: contro}
    \left([h(\xvec)]_{j_1}, [h(\xvec)]_{j_2}\right) = F (x_{i_1},x_{i_2}).
\end{equation}
That implies $ [f^{\prime-1}_d(\xvec)]_i$ is a nontrivial function of $(x_{i_1},x_{i_2}).$ This leads to the contradiction that $i\notin\mathcal{I}(f'_1,f'_d),$  where for all $\xvec,$ there holds
\begin{equation}\label{eqn:auto_f'}
    [f^{\prime-1}_d(\xvec)]_i =x_i
\end{equation}
\noindent{\bf Step 2.2: We show such $j$ is not in the intervention set between $f_1$ and $f_d$}. We prove this by contradiction as well.

Step 2.1 implies
    \begin{equation}
    [f^{\prime-1}_d(\xvec)]_i = F\left(\widetilde{f}^{-1}_{d,j}([h(\xvec)]_1,\dots [h(\xvec)]_{j})\right)=F'([h(\xvec)]_j,\cdot),
\end{equation}
Suppose $j\in\mathcal{I}(f_1,f_d)$, then $f^{-1}_d([h(\xvec)]_j)$ Recall that $f^{-1}_d\in\autoregclass$,

then $[f^{-1}_d(h(\xvec))]_j$ must be a non-constant function of $[h(\xvec)]_j$ and $[h(\xvec)]_{j'}$ for some $j'<j,$ i.e.,

\begin{equation}\label{eqn:contro3}
    [f_d^{-1}(h(\xvec))]_j=\widetilde{f}^{-1}_{d,j}([h(\xvec)]_1,\dots, [h(\xvec)]_j) = F([h(\xvec)]_{j'}, [h(\xvec)]_j,\cdot).
\end{equation}
Similarly, we know that $[h(\xvec)]_{j'}$ and $[h(\xvec)]_j$ must be nontrivial functions of $x_{i_3}$ and $x_{i_4}$, which $i_3\neq i_4.$ However, we know $[f^{\prime-1}_d(\xvec)]_i$ is a function of $x_i$ exclusively, which leads to contradiction. This shows that the number of non-intervened node in $f'_d$ must not be greater than that in $f_d$, i.e.,
\begin{equation}
    \mathcal{I}(f'_1,f'_d) \geq \mathcal{I}(f_1,f_d),\,\forall d.
\end{equation}
We further notice that the symmetric relationship between $f_d$ and $f'_d$, we could also have 
\begin{equation}
    \mathcal{I}(f'_1,f'_d) \leq \mathcal{I}(f_1,f_d),\,\forall d.
\end{equation}
Union among on $d$, we have
\begin{equation}
    \mathcal{I}(f')= \mathcal{I}(f).
\end{equation}
\end{proof}

\subsection{Proof of \autoref{thm:counterfactual-error-bound}}
\label{proof:cf_bound}
\begin{lemma}[Counterfactual Pseudo-Metric for ILD Model is a pseudo-metric]\label{lemma:pseudo-metric}
\end{lemma}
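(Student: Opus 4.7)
The plan is to verify the three axioms of a pseudo-metric: non-negativity (with $d_\mathrm{C}(A,A)=0$), symmetry, and the triangle inequality; identity of indiscernibles is \emph{not} required (and indeed fails here, since two counterfactually equivalent but non-identical ILDs will have $d_\mathrm{C} = 0$). The key idea is to view $d_\mathrm{C}$ as the $L^2$ norm (on the probability space induced by $p(\xvec,d)p(d')$) of a vector-valued function associated with each ILD, at which point the axioms reduce to standard properties of $L^2$ norms.

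Concretely, for an ILD $(\ildg,\ildf)$ define the counterfactual map
\[
T^{(\ildg,\ildf)}(\xvec,d,d') \triangleq g \circ f_{d'} \circ f_d^{-1} \circ g^{-1}(\xvec),
\]
so that $d_\mathrm{C}((\ildg,\ildf),(\ildg',\ildf')) = \bigl\|T^{(\ildg,\ildf)} - T^{(\ildg',\ildf')}\bigr\|_{L^2(p(\xvec,d)p(d'))}$, where the $L^2$ norm of a vector-valued function $\phi:\Omega \to \R^m$ is $\|\phi\|_{L^2} \triangleq (\E \|\phi\|_2^2)^{1/2}$. Non-negativity is immediate as the expression is a square root of an expected squared Euclidean norm. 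Reflexivity, i.e., $d_\mathrm{C}((\ildg,\ildf),(\ildg,\ildf))=0$, follows because $T^{(\ildg,\ildf)} - T^{(\ildg,\ildf)} \equiv 0$. Symmetry follows from $\|a-b\|_2 = \|b-a\|_2$ applied pointwise and then taking the expectation.

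The triangle inequality is the only step that requires a small argument: given three ILDs $A, B, C$, we apply Minkowski's inequality (the triangle inequality for $L^2$) to $T^A - T^C = (T^A - T^B) + (T^B - T^C)$, yielding
\[
\|T^A - T^C\|_{L^2} \le \|T^A - T^B\|_{L^2} + \|T^B - T^C\|_{L^2},
\]
which is exactly $d_\mathrm{C}(A,C) \le d_\mathrm{C}(A,B) + d_\mathrm{C}(B,C)$. I would justify the use of Minkowski by first applying the pointwise Euclidean triangle inequality $\|u - w\|_2 \le \|u - v\|_2 + \|v - w\|_2$, squaring is not monotone-preserving directly, so instead I would invoke Minkowski's inequality for the $L^2$ norm of $\R^m$-valued functions, treating the product measure $p(\xvec,d)p(d')$ as a well-defined probability measure (the factorization is valid since $d'$ is drawn independently from $d$ per \Cref{def:counterfactual-pseudo-metric}).

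There is no substantive obstacle: the only subtlety is making sure we interpret $\|\cdot\|_{L^2}$ as the norm on $\R^m$-valued functions (so Minkowski applies in its usual form) rather than treating coordinates separately, and noting explicitly that we only claim a pseudo-metric because ILDs related by \Cref{thm:equivalent-properties} can yield $d_\mathrm{C} = 0$ despite being distinct objects. I would conclude by remarking that, by \Cref{def:domain-counterfactual-equivalence}, $d_\mathrm{C}((\ildg,\ildf),(\ildg',\ildf'))=0$ is equivalent to $(\ildg,\ildf) \cequiv (\ildg',\ildf')$, so $d_\mathrm{C}$ descends to a genuine metric on the quotient by $\cequiv$.
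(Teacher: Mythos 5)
Your proof is correct and follows essentially the same route as the paper: verify non-negativity, symmetry, and $d_\mathrm{C}(A,A)=0$ directly, and obtain the triangle inequality from the RMSE/$L^2$ triangle inequality (the paper cites this for RMSE; you invoke Minkowski explicitly, which is the same fact spelled out). The only caveat is your closing remark: $d_\mathrm{C}=0$ only gives equality of the counterfactual maps almost everywhere under $p(\xvec,d)p(d')$, which need not coincide with the everywhere-equality required by \Cref{def:domain-counterfactual-equivalence} unless one adds support/continuity assumptions, but this aside is not needed for the pseudo-metric claim itself.
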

\begin{proof}
    It is trivial to check that it is always positive, symmetric, and equal to 0 if $(\ildg, \ildf) = (\bar{\ildg}, \bar{\ildf})$.
    Finally, because RMSE satisfies the triangle inequality \citep{chai2014root}, \Cref{def:counterfactual-pseudo-metric} also satisfies the triangle inequality.
\end{proof}

\cfbound*
\begin{proof}[Proof of \Cref{thm:counterfactual-error-bound}]
We will prove this theorem when both $(\widehat{g},\widehat{f})$ and $(\ildg^*,\ildf^*)$ are both canonical forms (See \Cref{def:canonical-counterfactual-model}). According \Cref{thm:canonical-model-exists}, any two ILD's counterfactual error are equivalent two their equivalent canonical models, thus this bound holds for all pairs of ILD models.

\eqref{eqn:bound-worst-case} is by the triangle inequality for \emph{any} intervening $(\tilde{\ildg}, \tilde{\ildf})$ and in this case we choose to minimize the bound over all possible distributionally equivalent models with a bounded sparsity---we know that at least the true ILD satisfies this, and thus there is at least one feasible solution.
\eqref{eqn:bound-worst-case} is by the fact that choosing the ILD model with the worst counterfactual error is larger than the error incurred by $(\tilde{\ildg}, \tilde{\ildf})$, under the same constraints---again, by construction, $(\tilde{\ildg}, \tilde{\ildf})$ satisfies the constraints in the maximization problem and thus at least one ILD model is feasible.

Now we prove the worst-case counterfactual misspecification error bound. 

\begin{align*}
    & \max_{(\widetilde{\ildg},\widetilde{\ildf})\in\mathcal{M}(k)}  d^2_C((\widetilde{\ildg},\widetilde{\ildf}), (\ildg^*,\ildf^*)) \\
    &= \max_{(\widetilde{\ildg},\widetilde{\ildf})\in\mathcal{M}(k)} \E_{p(d')}\E_{p(\xvec,d)}\left[\left\|\widetilde{g}\circ \widetilde{f}_{d'}\circ \widetilde{f}^{-1}_{d} \circ \widetilde{g}^{-1}(\xvec)-g^*\circ f^*_{d'}\circ f^{*-1}_{d} \circ g^{*-1}(\xvec)\right\|_2^2\right] \tag{Definition}\\
    &= \max_{(\widetilde{\ildg},\widetilde{\ildf})\in\mathcal{M}(k)} \E_{p(d')}\E_{p(\xvec,d)}\left[\left\|\widetilde{g}\circ \widetilde{f}_{d'}\circ \widetilde{f}^{-1}_{d} \circ \widetilde{g}^{-1}(\xvec) -\xvec + \xvec- g^*\circ f^*_{d'}\circ f^{*-1}_{d} \circ g^{*-1}(\xvec)\right\|_2^2\right] \tag{Inflation}\\
    &= \max_{(\widetilde{\ildg},\widetilde{\ildf})\in\mathcal{M}(k)} \E_{p(d')}\E_{p(\xvec,d)}\Big[\left\|\widetilde{g}\circ \widetilde{f}_{d'}\circ \widetilde{f}^{-1}_{d} \circ \widetilde{g}^{-1}(\xvec) - \widetilde{g}\circ \widetilde{f}_{d}\circ \widetilde{f}^{-1}_{d} \circ \widetilde{g}^{-1}(\xvec) \right.\\
    &+ \left.\left.g^*\circ f^*_{d}\circ f^{*-1}_{d} \circ g^{*-1}(\xvec)- g^*\circ f^*_{d'}\circ f^{*-1}_{d} \circ g^{*-1}(\xvec)\right\|_2^2\right]\tag{Invertibility of ILD}\\
    &\leq \max_{(\widetilde{\ildg},\widetilde{\ildf})\in\mathcal{M}(k)} 2\E_{p(d')}\E_{p(\xvec,d)}\left[\left\|\widetilde{g}\circ \widetilde{f}_{d'}\circ \widetilde{f}^{-1}_{d} \circ \widetilde{g}^{-1}(\xvec) - \widetilde{g}\circ \widetilde{f}_{d}\circ \widetilde{f}^{-1}_{d} \circ \widetilde{g}^{-1}(\xvec)\right\|^2_2\right] \\
    &+ 2\E_{p(d')}\E_{p(\xvec,d)}\left[\left\|g^*\circ f^*_{d}\circ f^{*-1}_{d} \circ g^{*-1}(\xvec)- g^*\circ f^*_{d'}\circ f^{*-1}_{d} \circ g^{*-1}(\xvec)\right\|_2^2\right] \tag{AM-QM Inequality}\\
    &\triangleq \max_{(\widetilde{\ildg},\widetilde{\ildf})\in\mathcal{M}(k)} 2\E_{p(d)}\E_{p(d')}\E_{p(\epsvec)}\left[\left\|\widetilde{g}\circ \widetilde{f}_{d'}(\epsvec) - \widetilde{g}\circ \widetilde{f}_{d}(\epsvec)\right\|_2^2\right] \\
    &+ 2\E_{p(d)}\E_{p(d')}\E_{p(\epsvec')}\left[\left\|g^*\circ f^*_{d'}(\epsvec') - g^*\circ f^*_{d}(\epsvec')\right\|_2^2\right]
\end{align*}
\end{proof}
then we aim to bound the both term related to worse-case ILD term and ground-truth ILD term.
\begin{lemma}
    If $g$ is Lipschitz continuous with constant $L_g$ and $k=|\intset(\ildf)|$, then the following holds:
    \begin{align}
        \E_{p(d)p(d')p(\epsvec)}[\|g\circ f_{d'}(\epsvec)-g\circ f_d(\epsvec)\|^2_2]
        \leq L_g^2\cdot k \cdot \max_{i\in[m]}\E_{p(d)p(d')p(\epsvec)}\big[[f_d(\epsvec)-f_{d'}(\epsvec)]_{i}^2\big]
    \end{align}
\end{lemma}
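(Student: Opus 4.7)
The plan is to reduce the claim to a coordinate-wise sum after peeling off the Lipschitz constant, and then exploit the fact that only coordinates in the intervention set can contribute. Since the enclosing proof of \Cref{thm:counterfactual-error-bound} explicitly assumes that both $(\widehat{g},\widehat{f})$ and $(g^*,f^*)$ are in canonical form (via \Cref{thm:canonical-model-exists}), I will likewise assume $\ildf$ is canonical, so $\intset(\ildf)=\{m-k+1,\dots,m\}$ and $[f_d]_{\leq m-k}=[f_{d'}]_{\leq m-k}$ for all $d,d'$.

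\paragraph{Step 1: Lipschitz reduction.} Because $g$ is $L_g$-Lipschitz, for every $\epsvec,d,d'$,
\begin{equation*}
\|g\circ f_{d'}(\epsvec)-g\circ f_d(\epsvec)\|_2^2
\;\leq\; L_g^2\,\|f_{d'}(\epsvec)-f_d(\epsvec)\|_2^2.
\end{equation*}
Taking expectations under $p(d)p(d')p(\epsvec)$ and expanding the squared norm coordinate-wise gives
\begin{equation*}
\E\!\left[\|g\circ f_{d'}(\epsvec)-g\circ f_d(\epsvec)\|_2^2\right]
\;\leq\; L_g^2\sum_{i=1}^{m}\E\!\left[[f_d(\epsvec)-f_{d'}(\epsvec)]_i^2\right].
\end{equation*}

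\paragraph{Step 2: Dropping non-intervened coordinates.} For any $i\notin\intset(\ildf)$, canonicity gives $i\leq m-k$, and because the latent SCMs are topologically ordered all ancestors of $i$ also lie in $\{1,\dots,m-k\}$. A short induction on $i$ using the autoregressive form of $f_d$ then shows $[f_d(\epsvec)]_i=[f_{d'}(\epsvec)]_i$ pointwise, so $[f_d(\epsvec)-f_{d'}(\epsvec)]_i=0$. Hence only the $k$ indices in $\intset(\ildf)$ contribute to the sum.

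\paragraph{Step 3: Max bound and conclusion.} Bounding each of the $k$ surviving terms by the coordinate-wise maximum yields
\begin{equation*}
\sum_{i\in\intset(\ildf)}\E\!\left[[f_d(\epsvec)-f_{d'}(\epsvec)]_i^2\right]
\;\leq\; k\cdot\max_{i\in[m]}\E\!\left[[f_d(\epsvec)-f_{d'}(\epsvec)]_i^2\right],
\end{equation*}
which combined with Step 1 gives the claim. The main (only real) obstacle is the bookkeeping in Step 2: one must be careful that the canonical-form argument justifies the pointwise equality of the non-intervened outputs, not merely the equality of the local mechanisms; topological ordering together with canonicity is exactly what makes this go through, and without that the lemma as stated would fail.
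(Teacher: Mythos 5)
Your proof is correct and follows essentially the same route as the paper's: Lipschitz reduction, coordinate-wise expansion, using canonicity (all non-intervened indices and hence all their ancestors lie in the first $m-k$ coordinates) to kill all but the $k$ intervened coordinates, and then bounding the surviving sum by $k$ times the coordinate-wise maximum. The only cosmetic difference is that the paper first changes variables to $\zvec=f_d(\epsvec)$ so the middle quantity reads as a latent counterfactual error $f_{d'}\circ f_d^{-1}(\zvec)-\zvec$ before invoking the canonical form, whereas you argue the pointwise equality $[f_d(\epsvec)]_i=[f_{d'}(\epsvec)]_i$ directly in $\epsvec$-space by induction on the autoregressive structure.
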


\begin{proof}
\begin{align}
    &\E_{p(d)p(d')p(\epsvec)}[\|g\circ f_{d'}(\epsvec)-g\circ f_d(\epsvec)\|^2_2] \\
    &\leq L_g^2 \E_{p(d)p(d')p(\epsvec)}[\|f_{d'}(\epsvec)-f_d(\epsvec)\|^2_2] \tag{Lipschitz} \\
    &= L_g^2 \E_{p(\zvec,d)p(d')}[\|f_{d'}(f_d^{-1}(\zvec))-\zvec\|^2_2] \\
    &= L_g^2 d_C^2((\Id, \mathcal{F}), (\Id,\Id)) \tag{Interpretation as counterfactual error} \\
    &= L_g^2 \E_{p(\zvec,d)p(d')}[\sum_{i=1}^m[f_{d'}(f_d^{-1}(\zvec))-\zvec]_i^2] \\
    &= L_g^2 \E_{p(\zvec,d)p(d')}[\sum_{i=0}^{k-1}[f_{d'}(f_d^{-1}(\zvec))-\zvec]_{m-i}^2] \tag{Canonical form} \\
    &= L_g^2 \sum_{i=0}^{k-1}[\E_{p(\zvec,d)p(d')}[f_{d'}(f_d^{-1}(\zvec))-\zvec]_{m-i}^2]  \\
    &\leq L_g^2\cdot k \cdot \max_{i:i<k}\E_{p(\zvec,d)p(d')}[f_{d'}(f_d^{-1}(\zvec))-\zvec]_{m-i}^2  \\
    &= L_g^2\cdot k \cdot \max_{i\in[m]}\E_{p(\zvec,d)p(d')}[f_{d'}(f_d^{-1}(\zvec))-\zvec]_{i}^2 \\ &= L_g^2\cdot k \cdot \max_{i\in[m]}\E_{p(\zvec,d)p(d')}[\zvec-f_{d'}(f_d^{-1}(\zvec))]_{i}^2 \tag{rearrange} \\
    &= L_g^2\cdot k \cdot \max_{i\in[m]}\E_{p(d)p(d')p(\epsvec)}\big[[f_d(\epsvec)-f_{d'}(\epsvec)]_{i}^2\big] \tag{change back to $\epsvec$}
\end{align}
where the distribution for $d_C$ in this case is the one induced by $\ildf$.
\end{proof}

\subsection{Proof of \autoref{thm:canonical-model-exists}}\label{Proof of equivalent canonical ILD exists}
The proof of \Cref{thm:canonical-model-exists} relies on the Swapping Lemma (\Cref{thm:swapping-lemma}), and before proving the swapping lemma, we first introduce a lemma which will be used in the swapping lemma to show that if one domain in an ILD is identity, then we could check intervention set using $f_d$ instead of $f_d^{-1}.$

\begin{lemma}\label{lemma:noneedinverse}
For an ILD with $f_1=\Id$, $\mathcal{I}(f_d, f_{1})=\left\{j: \left[f_d\right]_j \neq \left[f_{1}\right]_j\right\}$.
\end{lemma}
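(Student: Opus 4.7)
The plan is to reduce the statement to a property about a single function and then exploit the invertibility and autoregressive structure via Proposition~\ref{prop:intervention_ia}. By Proposition~\ref{prop:intervention_ia}, $\mathcal{I}(f_d,f_1)=\{j:[f_d^{-1}]_j\neq [f_1^{-1}]_j\}$, and since $f_1=\Id$ implies $f_1^{-1}=\Id$, the right-hand side equals $\{j:[f_d^{-1}]_j\neq[\Id]_j\}$. The goal displayed in the lemma is $\{j:[f_d]_j\neq[\Id]_j\}$, so it suffices to prove the pointwise equivalence $[f_d^{-1}]_j=[\Id]_j \iff [f_d]_j=[\Id]_j$, which I would show via its contrapositive.

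For the direction $[f_d]_j=[\Id]_j \Rightarrow [f_d^{-1}]_j=[\Id]_j$, the key observation is very short. Fix any $z\in\R^m$ and let $x\triangleq f_d^{-1}(z)$, so that $z=f_d(x)$. By the hypothesis, $z_j=[f_d(x)]_j=x_j$. Hence $[f_d^{-1}(z)]_j=x_j=z_j$. Since $z$ was arbitrary and $f_d$ is invertible (so ranges over all of $\R^m$), this gives $[f_d^{-1}]_j=[\Id]_j$ as functions. The reverse direction follows by symmetry: $f_d^{-1}$ is itself invertible, and autoregressive by Lemma~\ref{thm:invertible-upper-subfunctions}, so the same argument applied with $f_d^{-1}$ in place of $f_d$ yields $[f_d^{-1}]_j=[\Id]_j \Rightarrow [(f_d^{-1})^{-1}]_j=[f_d]_j=[\Id]_j$.

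Combining the two implications proves the contrapositive equivalence, which together with the reduction in the first paragraph gives the lemma. I do not anticipate any real obstacle: the only subtle ingredient is the implicit use of autoregressivity of $f_d^{-1}$ for the symmetric direction (guaranteed by Lemma~\ref{thm:invertible-upper-subfunctions}), and that is already established earlier in the paper. No Lipschitz, measurability, or smoothness assumptions are required beyond what is already baked into the ILD definition, and the argument is entirely pointwise.
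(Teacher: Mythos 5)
Your proof is correct and follows essentially the same route as the paper's: both reduce to the inverse characterization of $\mathcal{I}(f_d,f_1)$ from \autoref{prop:intervention_ia} (with $f_1^{-1}=\Id$) and then use the pointwise bijection correspondence $z = f_d(x)$ to transfer the identity of the $j$-th coordinate between $f_d$ and $f_d^{-1}$. One minor remark: your appeal to autoregressivity of $f_d^{-1}$ (via \autoref{thm:invertible-upper-subfunctions}) is not actually needed for the symmetric direction, since your argument uses only bijectivity, so it is harmless but superfluous.
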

\begin{proof}[Proof of \autoref{lemma:noneedinverse}]
Suppose $f_d^{-1}(\xvec)=\xvec'$ where $x'_j\neq x_j$, then $f_d(\xvec')=\xvec$ becasue that $f_d$ is bijective. Then $$[f_d(\xvec')]_j=x_j\neq x'_j=f_1(\xvec').$$

For any $j\notin\mathcal{I}(\ildf)$, for any $\xvec=f_d(\xvec')$, we have $x'_j=[f_d^{-1}(\xvec)]_j=[f_1^{-1}(\xvec)]_j=x_j\Rightarrow x_j=x'_j$, thus $$ x_j=[f_d(\xvec')]_j=[f_d(\xvec')]_j=x_j'.$$
\end{proof}

\begin{lemma}[Swapping Lemma]
    \label{thm:swapping-lemma}
    Given that the first canonical counterfactual property is satisfied, i.e., $f_1 = \Id$, denote $f'$ as SCM constructed by $f'=h_1 \circ f \circ h_2 (x),$ where $h_1=h_2$ denote swapping the $j$-th feature with $j'$-th feature. Then there exists $g'$ such that
     \begin{equation*}
         (\ildg, \ildf) \cequiv (\ildg', \ildf'),\,f'_1 = \Id,\,\mathcal{I}(\ildf') = \left(\mathcal{I}(\ildf) \setminus \{j\}\right) \cup \{ j'\}.
     \end{equation*}
    if the following conditions hold
    \begin{align*}
         j \in \mathcal{I}(\ildf)\,\,\text{and}\,\,\forall\, \tilde{j} : j < \tilde{j} \leq j',\,\, \tilde{j} \not\in \mathcal{I}(\ildf).
    \end{align*}
\end{lemma}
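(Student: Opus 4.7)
The plan is to leverage \Cref{thm:equivalent-properties} by defining $g' \triangleq g \circ \sigma^{-1}$, where $\sigma$ denotes the coordinate swap of $j$ and $j'$ (so $\sigma = \sigma^{-1}$). Since $h_1 = h_2 = \sigma$, once I verify that $f'_d = \sigma \circ f_d \circ \sigma$ is autoregressive for every $d$, \Cref{thm:equivalent-properties} immediately yields $(\ildg, \ildf) \cequiv (\ildg', \ildf')$. The equality $f'_1 = \sigma \circ \Id \circ \sigma = \Id$ is immediate. The crucial structural observation driving everything else is that the hypothesis $\tilde{j} \notin \mathcal{I}(\ildf)$ for $j < \tilde{j} \leq j'$ combined with $f_1 = \Id$ forces $[f_d(\epsvec)]_{\tilde{j}} = \epsilon_{\tilde{j}}$ for every $d$ and every such $\tilde{j}$: by \Cref{thm:invertible-scm-equivalence} the mechanism $\widehat{f}_1^{(\tilde{j})}$ is a projection onto $\epsilon_{\tilde{j}}$, and since $\tilde{j}$ is unintervened, the same mechanism applies in every domain.

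Next, I verify autoregressiveness of $f'_d$ by a case split on the output index $i$. Writing $\uvec = \sigma(\epsvec)$ and $\vvec = f_d(\uvec)$, one has $[f'_d(\epsvec)]_i = v_i$ for $i \notin \{j,j'\}$, with $[f'_d(\epsvec)]_j = v_{j'}$ and $[f'_d(\epsvec)]_{j'} = v_j$. For $i < j$ the swap fixes $\uvec_{\leq i} = \epsvec_{\leq i}$, so $v_i$ depends only on $\epsvec_{\leq i}$; for $j < i < j'$ and for $i = j$ (where the output is $v_{j'}$), the structural observation gives $v_i = u_i = \epsilon_i$ and $v_{j'} = u_{j'} = \epsilon_j$, both trivially autoregressive; for $i = j'$ the output is $v_j$, which depends on $\uvec_{\leq j} = (\epsilon_1, \ldots, \epsilon_{j-1}, \epsilon_{j'}) \subseteq \epsvec_{\leq j'}$; and for $i > j'$, $\uvec_{\leq i}$ is a permutation of $\epsvec_{\leq i}$, so $v_i$ depends only on $\epsvec_{\leq i}$. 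Thus $f'_d \in \autoregclass$ for every $d$.

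Finally, to identify $\mathcal{I}(\ildf')$, I invoke \Cref{lemma:noneedinverse} (since $f'_1 = \Id$) and ask, for each $i$ and each $d$, whether $[f'_d(\epsvec)]_i = \epsilon_i$ identically. The case analysis shows identity at $i = j$ and at every $j < i < j'$ for all $d$. For $i < j$ or $i > j'$, $[f'_d(\epsvec)]_i$ is obtained from $[f_d]_i$ by a bijective input relabeling that fixes the $i$-th coordinate, so the identity-versus-non-identity status is preserved, giving $i \in \mathcal{I}(\ildf') \Leftrightarrow i \in \mathcal{I}(\ildf)$. At $i = j'$, $[f'_d(\epsvec)]_{j'} = [f_d]_j(\epsilon_1, \ldots, \epsilon_{j-1}, \epsilon_{j'})$, which equals $\epsilon_{j'}$ identically iff $[f_d]_j$ equals $\epsilon_j$ identically; the hypothesis $j \in \mathcal{I}(\ildf)$ supplies at least one domain where this fails, so $j' \in \mathcal{I}(\ildf')$. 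Unioning over $d$ yields $\mathcal{I}(\ildf') = (\mathcal{I}(\ildf) \setminus \{j\}) \cup \{j'\}$.

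The main obstacle is really the bookkeeping at the boundary indices $i = j$ and $i = j'$, where both the autoregressive property and the intervention-status transfer must be checked carefully. The decisive leverage is that the identity chain $[f_d]_{\tilde{j}}(\epsvec) = \epsilon_{\tilde{j}}$ for $j < \tilde{j} \leq j'$ lets the swap effectively teleport the intervention from position $j$ to position $j'$ without disturbing any other coordinate.
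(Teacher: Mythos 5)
Your proposal is correct and follows essentially the same route as the paper's proof: the same swap construction $g' = g\circ\sigma^{-1}$, $f'_d = \sigma\circ f_d\circ\sigma$ justified via \Cref{thm:equivalent-properties}, the same index-by-index autoregressiveness check, and the same forward-function (\Cref{lemma:noneedinverse}-style) bookkeeping of the intervention set. The only cosmetic difference is that you establish the key fact $[f_d(\epsvec)]_{\tilde{j}}=\epsilon_{\tilde{j}}$ for unintervened $\tilde{j}$ through the mechanism-level definition via \Cref{thm:invertible-scm-equivalence} rather than the paper's explicit appeal to \Cref{lemma:noneedinverse}, which is an equivalent step.
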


\begin{proof}[Proof of \autoref{thm:swapping-lemma}]
    First, note that because $j'$ is not intervened, then we can derive that it's corresponding conditional function is independent of all but the $j'$-th value:
    \begin{align}
        [f_d]_{j'} &= [f_1]_{j'}  \\
        \Leftrightarrow \quad f_{d,j'}(\xvec_{\leq j'}) &= f_{1,j'}(\xvec_{\leq j'}) = x_{j'} \,.
    \end{align}
    For the new model, we choose the invertible functions as swapping the $j$-th and $j'$-th feature values, i.e., 
    \begin{align}\label{eqn:permutation}
        h_1(\xvec) &\triangleq [x_1,x_2,\cdots, x_{j-1}, x_{j'}, x_{j+1}, \cdots, x_{j'-1}, x_j, x_{j'+1}, \cdots, x_\ndim]^T
    \end{align}
    and similarly for $h_2$, i.e., $h_2 \triangleq h_1$. 
    Because $h_1$ and $h_2$ are invertible, 
    we know that the new model will be in the same counterfactual equivalence class by \Cref{thm:equivalent-properties}.
    Construct $g'\triangleq g \circ h_1^{-1},$ and then for all $d,$
    \begin{align*}
        f'_d (x) = &h_1 \circ f_d \circ h_2(x)\\
        =&h_1 \circ f_d([x_1,x_2,\cdots, x_{j-1}, x_{j'}, x_{j+1}, \cdots, x_{j'-1}, x_j, x_{j'+1}, \cdots, x_\ndim]^T])\\
        =&h_1 \circ f_d([y_1,y_2,\cdots, y_{j-1}, y_{j}, y_{j+1}, \cdots, y_{j'-1}, y_{j'}, y_{j'+1}, \cdots, y_\ndim]^T])\\
        =&h_1\circ \left[f_{d,i}(\yvec_{\leq i})\right]_{i=1}^{m}\\
        =&\bigg[f_{d,1}(\yvec_1),\cdots, f_{d, j-1}(\yvec_{\leq j-1}),f_{d,j'}(\yvec_{\leq j'}), f_{d, j+1}(\yvec_{\leq j+1}),\cdots, \notag\\
        &\,\,f_{d, j'-1}(\yvec_{\leq j'-1}),f_{d,j}(\yvec_{\leq j}), f_{d, j'+1}(\yvec_{\leq j'+1}),\cdots, f_{d,m}(\yvec_{\leq\ndim})  \bigg],\notag
    \end{align*}
    where we define $\yvec\triangleq h_2^{-1}(\xvec)$.
    
    We now need to check that the first canonical counterfactual property still holds.
    \begin{align}
        f'_1 = h_1 \circ f_1 \circ h_2 = h_1 \circ \Id \circ h_2 = h_1 \circ h_2 = \Id \,,
    \end{align}
    where the last equals is because swap operations are self-invertible.
    
    We move to check that the autoregressive property still holds for other domain SCMs.
    
     {\noindent \bf 1)} For the $j$-th feature, we have that:
    \begin{equation*}
        [f'_d(\xvec)]_{j} = f_{d,j'}(\yvec_{\leq j'}) = f_{d,j'}(x_1, \cdots, x_{j-1}, x_{j'}, x_{j+1}, \cdots, x_{j'-1}, x_{j})= x_j
    \end{equation*}
    where the last equals is because the $f_{d,j'}(\yvec_{\leq j'}) = y_{j'} = x_j$.
    This clearly satisfies the autoregressive property as $[f'_d]_j$ only depends on $x_j$.
    
    {\noindent \bf 2)} For the $j'$-th feature, we have that:
    \begin{equation*}
        [f'_d(\xvec)]_{j'} = f_{d,j}(\yvec_{\leq j}) = f_{d,j'}(x_1, \cdots,  x_{j-1}, x_{j'})
    \end{equation*}
    where again this satisfies the autoregressive property because all input indices are less than $j'$ because $j < j'$. 
    Now we handle the cases for other variables.
    If $\tilde{j} < j$, then we have the following:
    \begin{align}
        [f'_d]_{\tilde{j}} 
        &= [h_1 \circ f_d \circ h_2]_{\tilde{j}}
        = [f_d \circ h_2]_{\tilde{j}}
        = f_{d,\tilde{j}}([h_2(\xvec)]_{\leq \tilde{j}})
        = f_{d,\tilde{j}}(x_1,\dots, x_{\tilde{j}})
    \end{align}
     {\noindent \bf 3)} Similarly if $j < \tilde{j} < j'$:
    \begin{equation}
        [f'_d]_{\tilde{j}} = f_{d,\tilde{j}}(x_1,\dots, x_{j-1}, x_{j'}, x_{j+1}, \cdots, x_{\tilde{j}}) = x_{\tilde{j}}\,,
    \end{equation}
    where we use the fact that there are no intervening nodes in between $j$ and $j'$.
    
     {\noindent \bf 4)} Finally, for $\tilde{j} > j'$, we have:
    \begin{equation}
        [f'_d]_{\tilde{j}} = f_{d,\tilde{j}}(x_1,\cdots, x_{j-1}, x_{j'}, x_{j+1}, \cdots, x_{j'-1}, x_{j}, x_{j'+1}, \cdots, x_{\tilde{j}}) \,,\label{eqn:swap_greater_than_j'}
    \end{equation}
    which is still autoregressive because $\tilde{j} > j'$ and $\tilde{j} > j$.
    Thus, the new $f'_d$ is autoregressive and is thus a valid model.
    
   It remains to prove that $\mathcal{I}(\ildf') = \left(\mathcal{I}(\ildf) \setminus \{j\}\right) \cup \{ j'\}$.
    
    {\noindent \bf 1)} {When $k<j$}, we have for all $d$,
    \begin{equation*}
        [f'_d]_{k}=f_{d, k}(\yvec_{\leq k})=f_{d, k}(\xvec_{\leq k})=[f_d]_{k}\,, 
    \end{equation*}
    then for all $k\in\mathcal{I}(\ildf)$, there exists $d_0$, such that
    \begin{equation*}
    [{f'_{d_0}}^{-1}]_k = [f'_{d_0}]_{k}\neq [f_{1}]_{k} = [{f'_{1}}^{-1}]_{k}.
    \end{equation*}
    Thus, $k\in\mathcal{I}(\ildf')$.
    
    If $k\notin\mathcal{I}(\ildf)\,,$ we have for all $d$, 
    \begin{equation*}
    [{f'_{d}}^{-1}]_k = [f'_{d}]_{k} =  [f_{1}]_{k} = [{f'_{1}}^{-1}]_{k}.
    \end{equation*}
   Thus $k\notin\mathcal{I}(\ildf').$
    
     {\noindent \bf 2)} {When $j\leq k<j'$}, we have $\forall d, [f'_{d}(\xvec)]_{k}=x_k\Rightarrow[{f'_d}^{-1}(\xvec)]_{k}=x_k$. Thus we have $\forall d$, $[{f'_d}^{-1}]_k=[{f'_1}^{-1}]_k$, which means for all $j\leq k < j'$, $k\notin\mathcal{I}(\ildf')$.

     {\noindent \bf 3)} {When $k=j'$}, we have $\forall d, [f'_d]_{j'}=f_{d,j}(x_1, \cdots, x_{j-1}, x_{j'}) $. Furthermore, since, $j\in\mathcal{I}(\ildf)$, we have $\exists\, d_0$, $[f_{d_0}]_j\neq [f_1]_j$ by \Cref{lemma:noneedinverse}. Thus $[f'_{d_0}]_{j'}=[f_{d_0}]_j \neq [f_{1}]_j = [f'_1]_{j'}\Rightarrow j'\in\mathcal{I}(\ildf')$ also by \Cref{lemma:noneedinverse}.

    {\noindent \bf 4)}  {When $k>j'$,} if $k\in\mathcal{I}(\ildf), \exists\, d_1, d_2, [f_{d_1}]_k\neq [f_{d_2}]_k,$ 
    Chaining with \eqref{eqn:swap_greater_than_j'}, we have $[f'_{d_1}]_k\neq [f'_{d_2}]_k.$ Thus, $k\in\mathcal{I}(\ildf')$ by \Cref{lemma:noneedinverse}. Similarly, if $k\notin\mathcal{I}(\ildf),$ then $k\notin\mathcal{I}(\ildf')$

To summarize, $\mathcal{I}(\ildf') = (\mathcal{I}(\ildf) \setminus \{j\}) \cup \{ j'\}.$
\end{proof}
Built upon swapping Lemma, we move to our main result on the existence of equivalent Canonical ILD. 
\canexist*
\begin{proof}[Proof of \autoref{thm:canonical-model-exists}]
  
At high level the proof is organized in the following three steps. 

\noindent\textbf{(Step 1)} we use \Cref{thm:equivalent-properties} to construct an equivalent counterfactual $(\ildg^{(0)}, \ildf^{(0)})\cequiv (\ildg, \ildf)$ by choosing two invertible
functions $h_1=f_1^{-1}$ and $h_2=\Id.$ In this way, \Cref{thm:equivalent-properties} implies
\begin{align*}
        f^{(0)}_1 &= h_1 \circ f_1 \circ h_2= f_1^{-1} \circ f_1 \circ \Id = \Id \\
       \forall d > 1,\quad f^{(0)}_d &= h_1 \circ f_d \circ h_2 = f_1^{-1} \circ f_d \circ \Id = f_1^{-1} \circ f_d, \quad \text{and}\quad
        g^{(0)} = g \circ h_1^{-1} = g \circ f_1  \,.
\end{align*}
Equipped with $(\ildg^{(0)}, \ildf^{(0)}),$ we can show that part I of \Cref{def:canonical-counterfactual-model} is satisfied, i.e., $f_1^{(0)} = \textnormal{\Id}$. {Choosing $h_2=\textnormal{\Id}$, we could prove this operation could guarantee the distribution equivalence.} 

\noindent\textbf{(Step 2)} we can further construct a series of equivalent counterfactuals iteratively applying \Cref{thm:swapping-lemma} to gradually satisfy part II of \Cref{def:canonical-counterfactual-model}. 
Specifically, in this step, we recursively construct, for all iteration $k\in\{1,2,\dots,k^{\text{last}}\},$
\begin{align*}
        \ildf^{(k)}&\triangleq h_{j(k) \leftrightarrow j'(k)} \circ \ildf^{(k-1)} \circ h_{j(k) \leftrightarrow j'(k)},
\end{align*}
and 
\begin{align*}
      g^{(k)} \triangleq  g^{(k-1)}\circ h_{j(k) \leftrightarrow j'(k)}^{-1}= g^{(k-1)}\circ h_{j(k) \leftrightarrow j'(k)}\,,
\end{align*}
where $h_{j(k) \leftrightarrow j'(k)}$ denotes swapping the $j(k)$-th and $j'(k)$-th feature values, i.e., 
    \begin{align}\label{eqn:thm4-swap-h}
       h_{j\leftrightarrow j'}(\xvec) &\triangleq [x_1,x_2,\cdots, x_{j-1}, x_{j'}, x_{j+1},\cdots, x_{j'-1}, x_{j}, x_{j'+1}, \cdots, x_{\ndim}]^T,
    \end{align}
    and further define
    \begin{align}
        j'(k) \triangleq \max \, \left\{j, j\notin\mathcal{I}\left(\ildf^{(k)}\right)\right\}, \,\,\text{and}\,\,
        j(k)  \triangleq \max \, \left\{j<j'(k), j\in\mathcal{I}\left(\ildf^{(k)}\right)\right\}\,.\label{eq:thm4-pick-k}
    \end{align}
    In high level, at each iteration, we seek the largest index $j'(k)$  which does not lies in the previous intervention set $\mathcal{I}\left(\ildf^{(k)}\right),$ and swap it with the largest index $j(k)$ which is smaller than $j'(k).$
    We terminate at $k$ when $\left\{j<j'(k), j\in\mathcal{I}\left(\ildf^{(k)}\right)\right\}=\emptyset.$
    
    By the definition of $j'(k), j(k)$ in \eqref{eq:thm4-pick-k}, we can show that 
    
    \noindent\textbf{1)} for each swap step $k,$ there holds 
 \begin{align}
         j(k) \in \mathcal{I}\left(\ildf^{(k)}\right),\,\,\text{and}\,\,
        \forall\, \tilde{j} : j(k) < \tilde{j} \leq j'(k),\,\, \tilde{j} \not\in \mathcal{I}\left(\ildf^{(k)}\right),\label{eq:thm4-lemma5-2}
    \end{align}
    which implies \Cref{thm:swapping-lemma} can be applied to ensure the counterfactual equivalence at each step.
    
    \noindent\textbf{2)} When meeting the stopping criterion at step $k^{\text{last}},$ i.e., \begin{equation}\label{eqn:swap-stop-criteria}
        \left\{j<j'(k^{\text{last}}), j\in\mathcal{I}\left(\ildf^{(k^{\text{last}}-1)}\right)\right\}=\emptyset,
    \end{equation}
    there holds  $$ \forall j \in \mathcal{I}\left(\ildf^{(k^{\text{last}}-1)}\right), \quad j > \ndim - \left|\mathcal{I}\left(\ildf^{(k^{\text{last}}-1)}\right)\right| \,,$$
    i.e., {$\left(g^{(k^{\text{last}}-1)}, \ildf^{(k^{\text{last}}-1)}\right)$ }  is in canonical form.
    Chaining \textbf{1)} and \textbf{2)}, we conclude
    \begin{equation*}
       \exists\, (\ildg', \ildf')\triangleq {\left(g^{k^{\text{last}}-1},\ildf^{k^{\text{last}}-1}\right)}\in \canonicalset 
        \text{ s.t. } (\ildg', \ildf') \cequiv (\ildg, \ildf). 
    \end{equation*}
      Note that $    g^{(k)}\circ f^{(k)}_d = g^{(k-1)}\circ f_d^{(k-1)} \circ h_{j(k) \leftrightarrow j'(k)},$
    and linear operator $ h_{j(k) \leftrightarrow j'(k)}$ is orthogonal, then iteratively, we conclude $(\ildg', \ildf') \dequiv (\ildg, \ildf).$

     To prove \textbf{1)}, observe in \eqref{eq:thm4-pick-k}, $j(k)$ is the largest index in the intervention set which is smaller than $j'(k)$. This simply implies \eqref{eq:thm4-lemma5-2}.\\
    To prove \textbf{2)}, suppose when meeting the stopping criterion at step $k^{\text{last}}$, there holds
    \begin{equation}
        \exists\, j\in\mathcal{I}\left(\ildf^{(k^{\text{last}}-1)}\right)\;\;\text{such
 that}\;\; j\leq \ndim - \left|\mathcal{I}\left(\ildf^{(k^{\text{last}}-1)}\right)\right|.
    \end{equation}
    It implies that 
    \begin{equation*}
        \exists\,\hat{j}\notin\mathcal{I}   \left(\ildf^{(k^{\text{last}}-1)}\right)\;\;\text{and}\;\; \hat{j}\in\left\{\ndim - \left|\mathcal{I}\left(\ildf^{(k^{\text{last}}-1)}\right)\right| + 1,\dots, \ndim\right\}.
    \end{equation*}
        Then we can choose $j'(k)=\hat{j}$, implying $j\in \left\{j<j'(k), j\in\mathcal{I}\left(\ildf^{(k^{\text{last}}-1)}\right)\right\}\neq\emptyset$, contradict to \eqref{eqn:swap-stop-criteria}.

\noindent\textbf{(Step 3)} We use the same techinique as in \textbf{Step 1}, where instead we choose $h_1=f_1$ and $h_2=\Id.$
This concludes the proof of part I in \Cref{thm:canonical-model-exists}.

It remains to prove that the construction of $f^{(0)}$ in the \textbf{step 1} does not change the intervention set.

{\noindent\bf 1)} For any $j \notin \mathcal{I}(\ildf)$, for any pairs $d, d'$, we have $\left[f_{d}^{-1}\right]_j=\left[f_{d'}^{-1}\right]_j$, based on the construction of $f^{(0)}$, we have
\begin{equation}
    \left[{f^{(0)}_{d}}^{-1}\right]_j = [f_{d}^{-1} \circ f_1]_j = [f_{d'}^{-1} \circ f_1]_j = \left[{f^{(0)}_{d'}}^{-1}\right]_j
\end{equation}
thus, $\mathcal{I}(f^{(0)}_d, f^{(0)}_{d'})\subseteq \mathcal{I}(f_d, f_d')$. 

{\noindent\bf 2)} For any $j \in \mathcal{I}(\ildf)$, there exists $d,d'$ and $z$, such that $[{f_{d}}^{-1}(z)]_j\neq [{f_{d'}}^{-1}(z)]_j$. Note that $f_1$ is a bijective function, there exists $z'$ such that $z=f_1(z')$, we have
\begin{align}
    [f_d^{-1}(z)]_j &\neq [f_{d'}^{-1}(z)]_j \notag\\
    \Leftrightarrow \left[f_d^{-1}(f_1(z'))\right]_j &\neq \left[f_{d'}^{-1}(f_1(z')\right]_j \notag\\
    \Leftrightarrow \left[{f^{(0)}_d}^{-1}(z')\right]_j &\neq \left[{f^{(0)}_{d'}}^{-1}(z')\right]_j \notag\\
    \Leftrightarrow j&\in\mathcal{I}\left(f^{(0)}_d, f^{(0)}_{d'}\right) \notag
\end{align}
thus $\mathcal{I}\left(f^{(0)}_d, f^{(0)}_{d'}\right)\supset \mathcal{I}(f_d, f_{d'})$. 
Combining {\bf 1)} and {\bf 2)}, we have $\mathcal{I}\left(f^{(0)}_d, f^{(0)}_{d'}\right) = \mathcal{I}(f_d, f_{d'})$.  
This show that the construction of \textbf{step 1} and \textbf{step 3} do not change the intervention set, combining the fact in \textbf{step 1}, we iteratively used swapping \Cref{thm:swapping-lemma}, and swapping \Cref{thm:swapping-lemma} does not change the intervention set size,
i.e., $\mathcal{I}(\ildf') = \left(\mathcal{I}(\ildf) \setminus \{j\}\right) \cup \{ j'\},$ we conclude that 
$\left\lvert\mathcal{I}\left(f^{(0)}_d, f^{(0)}_{d'}\right)\rvert = \lvert\mathcal{I}(f_d, f_{d'})\right\rvert$
This completes the proof. $\hfill $
\end{proof}

To help understanding, we design a simple linear ILD model to demonstrate the theorem construction procedure.
\begin{example} \label{example}
Suppose we have a $4$-dimensional ILD model $(\ildg, \ildf)$ containing $2$ domains, where
\begin{equation*}
f_1\triangleq\begin{bmatrix}
1 & 0 & 0 & 0\\
1 & 1 & 0 & 0\\
1 & 1 & 1 & 0\\
1 & 1 & 1 & 1\\
\end{bmatrix},
f_2\triangleq
\begin{bmatrix}
1 & 0 & 0 & 0\\
2 & 2 & 0 & 0\\
1 & 1 & 1 & 0\\
1 & 1 & 1 & 1\\
\end{bmatrix},
g \text{ invertible.}
\end{equation*}
\end{example}

Following the proof of \Cref{thm:canonical-model-exists}, we have
Following \textbf{Step 1} in the proof of \Cref{thm:canonical-model-exists}, we have $h_1=f^{-1}_1$,
\begin{align*}
f^{(0)}_1 = f_1^{-1}\circ f_1 \,, f^{(0)}_2 = f_1^{-1}\circ f_2\,,g^{(0)}=g\circ f_1\,,
\end{align*}
\begin{equation*}
f^{(0)}_1=\begin{bmatrix}
1 & 0 & 0 & 0\\
0 & 1 & 0 & 0\\
0 & 0 & 1 & 0\\
0 & 0 & 0 & 1\\
\end{bmatrix},
f^{(0)}_2=
\begin{bmatrix}
1 & 0 & 0 & 0\\
1 & 2 & 0 & 0\\
-1 & -1 & 1 & 0\\
0 & 0 & 0 & 1\\
\end{bmatrix},
\end{equation*}
\begin{equation*}
    g^{(0)}=g\circ
\begin{bmatrix}
1 & 0 & 0 & 0\\
1 & 1 & 0 & 0\\
1 & 1 & 1 & 0\\
1 & 1 & 1 & 1\\
\end{bmatrix}
\end{equation*}

 Notice that $\mathcal{I}(f^{(0)})=\{2,3\}$. Following \textbf{Step 2} in the proof of \Cref{thm:canonical-model-exists}, we first swap $j=3$ and $j'=4$,
\begin{equation*}
    h_{3 \leftrightarrow 4}=
\begin{bmatrix}
1 & 0 & 0 & 0\\
0 & 1 & 0 & 0\\
0 & 0 & 0 & 1\\
0 & 0 & 1 & 0\\
\end{bmatrix},
g^{(1)}=g\circ
\begin{bmatrix}
1 & 0 & 0 & 0\\
1 & 1 & 0 & 0\\
1 & 1 & 1 & 1\\
1 & 1 & 1 & 0\\
\end{bmatrix}
\end{equation*}
We have $f^{(2)} \triangleq h_{3 \leftrightarrow 4} \circ f^{(1)} \circ h_{3 \leftrightarrow 4}$
\begin{equation*}
f^{(2)}_1=\begin{bmatrix}
1 & 0 & 0 & 0\\
0 & 1 & 0 & 0\\
0 & 0 & 1 & 0\\
0 & 0 & 0 & 1\\
\end{bmatrix},
f^{(2)}_2=
\begin{bmatrix}
1 & 0 & 0 & 0\\
1 & 2 & 0 & 0\\
0 & 0 & 1 & 0\\
-1 & -1 & 0 & 1\\
\end{bmatrix}.
\end{equation*}
Notice that $\mathcal{I}(f^{(1)})=\{2,4\}$. Following \textbf{Step 2} in the proof of \Cref{thm:canonical-model-exists}, we first swap $j=2$ and $j'=3$,
\begin{equation*}
    h_{2 \leftrightarrow 3}=
\begin{bmatrix}
1 & 0 & 0 & 0\\
0 & 0 & 1 & 0\\
0 & 1 & 0 & 0\\
0 & 0 & 0 & 1\\
\end{bmatrix},
g^{(2)}=g\circ
\begin{bmatrix}
1 & 0 & 0 & 0\\
1 & 1 & 1 & 1\\
1 & 1 & 0 & 0\\
1 & 1 & 1 & 0\\
\end{bmatrix}
\end{equation*}

We have $f^{(3)} \triangleq h_{2 \leftrightarrow 3} \circ f^{(2)} \circ h_{2 \leftrightarrow 3}$
\begin{equation*}
f^{(2)}_1=\begin{bmatrix}
1 & 0 & 0 & 0\\
0 & 1 & 0 & 0\\
0 & 0 & 1 & 0\\
0 & 0 & 0 & 1\\
\end{bmatrix},
f^{(2)}_2=
\begin{bmatrix}
1 & 0 & 0 & 0\\
0 & 1 & 0 & 0\\
1 & 0 & 2 & 0\\
-1 & 0 & -1 & 1\\
\end{bmatrix}.
\end{equation*}
\begin{equation*}
    g^{(2)}=g\circ
\begin{bmatrix}
1 & 0 & 0 & 0\\
1 & 1 & 1 & 1\\
1 & 1 & 0 & 0\\
1 & 1 & 1 & 0\\
\end{bmatrix}
\end{equation*}
Then we follow \textbf{Step 3}, i.e., 
\begin{align*}
    f^{(3)}_1 = f_1\circ f^{(2)}_1 \,, f^{(3)}_2 = f_1\circ f^{(2)}_2\,,g^{(3)}=g^{2}\circ f_1^{-1}\,,
\end{align*}
\begin{equation*}
f^{(3)}_1=\begin{bmatrix}
1 & 0 & 0 & 0\\
1 & 1 & 0 & 0\\
1 & 1 & 1 & 0\\
1 & 1 & 1 & 1\\
\end{bmatrix},
f^{(3)}_2=
\begin{bmatrix}
1 & 0 & 0 & 0\\
1 & 1 & 0 & 0\\
2 & 1 & 2 & 0\\
1 & 1 & 1 & 1\\
\end{bmatrix}.
\end{equation*}

Notice that $(g^{(2)}, f^{(2)})$ is the Identity Canonical form, and $(g^{(3)}, f^{(3)})$ is in general canonical form. They are counterfactually equivalent to each other by checking definition.

\section{Simulated Experiments}\label{app-sec:experiment-details}

\subsection{Experiment Details}\label{app-sec:simulated-exp-detail}

\paragraph{Dataset} The ground truth latent SCM 
 $f_d^* \in \invautoregclass$ 
takes the form $f_d^* (\epsvec) =F^*_d \;\epsvec + b^*_d  \mathds{1}_{\intset} $ where $F^*_d = (I-L^*_d)^{-1}, L^*_d \in \R^{\ndim \times \ndim}$ is domain-specific lower triangular matrix that satisfies sparsity constraint, $b^*_d\in \R$ is a domain-specific bias and $\mathds{1}_{\intset}$ is an indicator vector where any entries corresponding to the intervention set are 1.
To be specific, $[L^*_{d}]_{i,j}\sim \mathcal{N}(0,1)$ and $b^*_d \sim \text{Uniform}(-2\sqrt{\ndim/|\intset|},2\sqrt{\ndim/|\intset|})$.
The observation function takes the form $g^*(\xvec) = G^* \; \text{LeakyReLU}\left( \xvec\right) $ where $G^* \in \R^{\ndim \times \ndim}$ and the slope of LeakyReLU is $0.5$. To allow for similar scaling across problem settings, we set the determinant of $G^*$ to be 1 and standardize the intermediate output of the LeakyReLU.
The generated $F^*_d, b^*_d, G^*$ all vary with random seeds and all experiments are repeated for 10 different seeds.
We generate 100,000 samples from each domain for the training set and 1,000 samples from each domain in the validation and test set.

\paragraph{Model} 
We test with two ILD models: \fspa as introduced in \Cref{sec:ild-estimation-algorithm} and a baseline model,
\fdense which has no sparsity restrictions on its latent SCM.
To be specific, the latent SCM of \fdense could be any model in $\invautoregclass$.
We use $\intset$ and $\intset^*$ to represent the intervention set of the model and dataset, respectively.
We note that for \fdense, $\intset$ contains all nodes and for \fspa, $\intset$ contains only the last few nodes.
Both models follow a similar structure as the ground truth. 
To be specific, the latent SCM takes the form $f_d (\epsvec) =F_d \;\epsvec + \bvec_d  $ where $F_d = (I-L_d)^{-1} S_d, L_d \in \R^{\ndim \times \ndim}$, $S_d \in \R^{\ndim \times \ndim}$, and $\bvec_d\in \R^{\ndim}$.
The observation takes the form $g(\xvec)=G \; \text{LeakyReLU}\left( \xvec\right) + \bvec$ where $G \in \R^{\ndim \times \ndim}$, $\bvec \in \R^{\ndim} $, and the slope of LeakyReLU is $0.5$.

\begin{figure*}[!h]
     \centering
         \begin{subfigure}[t]{0.5\textwidth}
\includegraphics[width=\textwidth]{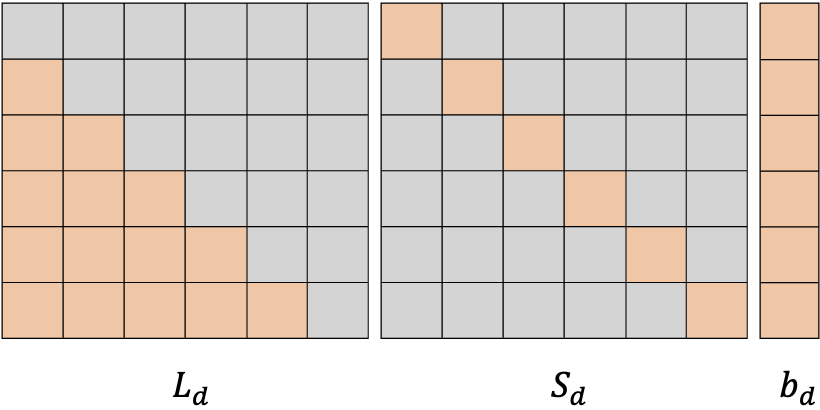}
         \caption{\fdense}\label{fig-app:illustration-dense}
     \end{subfigure}
              \begin{subfigure}[t]{0.5\textwidth}
\includegraphics[width=\textwidth]{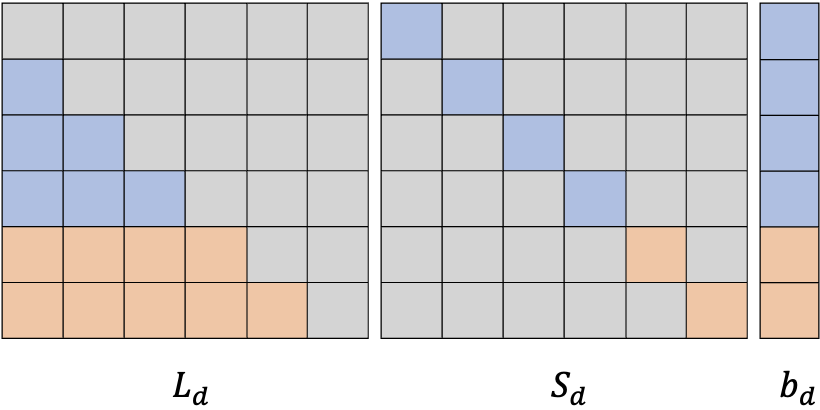}
         \caption{\fspa}\label{fig-app:illustration-spa}
     \end{subfigure}
                  \begin{subfigure}[t]{0.5\textwidth}
\includegraphics[width=\textwidth]{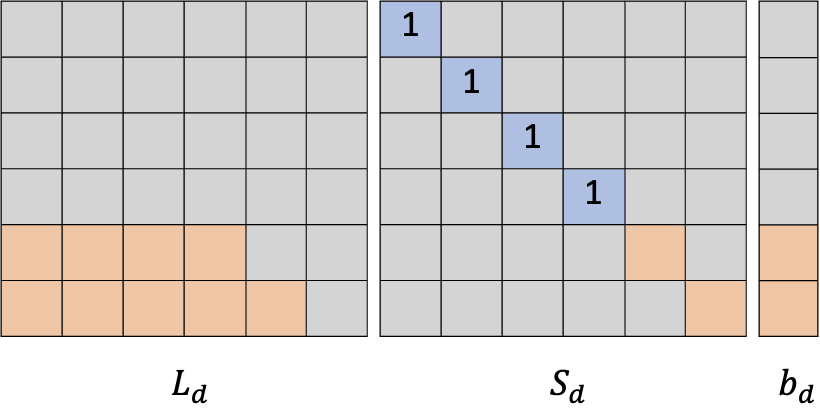}
         \caption{\fcan}\label{fig-app:illustration-can}
     \end{subfigure}
        \caption{An illustration of the matrices/vector used to create $f_d$ across the three ILD models when $\ndim=6$ and $|\intset|=2$.
        These are used such that $f_d (\epsvec) =F_d \;\epsvec + \bvec_d  $ where $F_d = (I - L_d)^{-1} S_d$. 
        The grey elements are 0, the orange elements are parameters that are different for different domains, and  the blue elements are parameters shared across domains.
        We specify the value if it is a fixed number other than 0.
        Note that we don't implement \fcan in our experiments. We include it here only for illustration of our theory.
}
        \label{fig-app:illustration-model}
\end{figure*}

In \Cref{fig-app:illustration-dense} and \Cref{fig-app:illustration-spa}, we add an illustration of the latent SCM for \fdense and \fspa respectively.
We emphasize a few main differences between the dataset and models here: (1) For \fspa, $\intset$ only contains the last few nodes while for the dataset while $\intset^*$ could contain any node we specify.
We note that \fdense is equivalent to a \fspa with all nodes in its intervention set.
(2) There is no constraint on the determinant of $G$ and standardization in $g(\xvec)$. 
(3) The bias added to all dimensions in the ground truth model is the same scalar value, but the bias in the model is allowed to vary for each axis.
(4) In the model, $g$ is allowed a learnable bias.

\paragraph{Metric} To evaluate the models, we compute the mean square error between the estimated counterfactual and ground truth counterfactual, i.e. $\text{Error} = \frac{2
}{\ndomains (\ndomains-1)}\sum_{d'\neq d} \sum_d \| g^* \circ f^*_{d'} \circ (f_d^{*})^{-1}\circ (g^{*})^{-1}(\xvec_d) - g \circ f_{d'} \circ f_d^{-1}\circ g^{-1}(\xvec_d) \|^2$.
As in practice, we can only check data fitting instead of counterfactual estimation, and we report the counterfactual error computed with the test dataset when the likelihood computed with the validation set is highest.

\paragraph{Training details}
We use Adam optimizer for both $f$ and $g$ with a $\text{lr} =0.001, \beta_1=0.5, \beta_2=0.999$, and a batch size of is 500.
We run all experiments for 50,000 iterations and compute validation likelihood and test counterfactual error every 100 steps.
$f$ is randomly initialized.
Regarding $g$, $G$ is initialized as an identity matrix and $\bvec$ is initialized as $\mathbf{0}$.

\subsection{Additional Simulated Experiment Results}
\label{app-sec:simulated-results}

For better organization here, we split our experiment into three cases as introduced below.
The first two cases point to the question: given the fact that we use the correct sparsity, does sparse canonical form model designing provide benefits in generating domain counterfactuals?
The third case investigates the more practical scenario where we don't have any knowledge of the ground truth sparsity and we explore what would be a better model design practice in this case.

\paragraph{Case 0: Exact match between dataset and models}\label{app-sec-para:simulateed-case1}
In this section, we investigate the performance of \fdense and \fspa while assuming that the ground truth intervention set only contains the last few nodes and we choose the correct size of the intervention set.

To understand how the true intervention set affects the gap between \fdense and \fspa, we varied the size of the ground truth intervention.
In \Cref{fig-app:case1-spa}, we observe that the performance gap tends to be largest when the true intervention set is the most sparse and the performance of \fspa approaches to the performance of \ffull as we increase the size.
This makes sense as \fspa is a subset of \ffull and they are equivalent when $\intset=\{1,2,3,4,5,6\}$.
Additionally, even when the ground truth model is relatively dense (when $|\intset^*|$ is close to $m$), \fspa is still better than \ffull.
Then we test how our algorithm scales with dimension when the number of domains is different.
In \Cref{fig-app:case1-ndim}, we notice that \fspa is significantly better than \fdense in 9 out of 12 cases. In the next paragraphs, we further investigate the 3 cases that do not outperform \fdense to understand if it seems to be a theoretic or algorithmic/optimization problem.

We take a further investigation on the three cases where \fspa is close to or worse than \fdense.
As shown in \Cref{fig-app:case1-ndim_ndomain2-like}, when the latent dimension is 10 and the number of domains is 2, i.e. $\ndim=10$ and $\ndomains=2$, the validation likelihood of \fspa is much lower than \fdense especially in comparison to that with $\ndim=4,6$. 
We conjecture that the performance drop in terms of counterfactual error could be a result of the worse data fitting, i.e., the model does not fit the data well in terms of log-likelihood. 
As further evidence, we show the counterfactual error and corresponding validation log-likelihood in \Cref{tab-app:case1-ndim10-ndomain2}.
We observe that the log-likelihood of \fdense tends to be much lower when it has a larger counterfactual error than that of \fdense.
As for the relatively worse performance of \fspa when $\ndim=4,\ndomains=2$ and $\ndim=4,\ndomains=3$, we report the counterfactual error corresponding to each seed in \Cref{tab-app:case1-ndim4-ndomain2} and \Cref{tab-app:case1-ndim4-ndomain3} respectively. 
When the latent dimension is 4 and the number of domains is 2, i.e., $\ndim=4,\ndomains=2$, \fspa is better than \fdense with 9 out of 10 seeds. 
However, it fails significantly with seed 0 and thus leads to a larger average of counterfactual error.
When $\ndim=4,\ndomains=3$, \fspa is better than \fdense with 7 out of 10 seeds but \fspa is not significantly better than \fdense in terms of average error.
We think this is more likely an optimization issue with lower dimensions, which is not explored by our theory. 
We conjecture that larger models with smoother optimization landscapes will perform better as we see in the imaged-based experiments. 
We also note that these models are not significantly overparametrized and thus may not benefit from the traditional overparameterization that aids the performance of deep learning in many cases. 
Further investigation into overparameterized models may alleviate this algorithmic issue.

Despite some corner cases in which the optimization landscape may be difficult for these simple models, all the results point to the same trend that the sparse constraint and canonical form motivated by our theoretic derivation indeed aids in counterfactual performance---despite not explicitly training for counterfactual performance.

\paragraph{Case 1: Correct $|\intset|$ but mismatched intervention indices}
\label{app-sec-para:simulateed-case2}

In this section, we include more results in the more practical scenario where we choose the correct number of the intervened nodes but they are not necessarily the last few nodes in the latent SCM.
This experiment is related to our canonical ILD theory, i.e., that there exists a canonical counterfactual model (where the intervened nodes are the last ones) corresponding to any true non-canonical ILD that has the same sparsity.
As a starting point, we first illustrate the existence of a canonical model we try to find in \Cref{fig-app:ills-scm-cano}.

To investigate the effect of different indices of the intervened nodes, in \Cref{fig-app:case2-indices}, we change the true intervention set $\intset^*$ while keeping the number of intervened nodes $|\intset^*|$ the same.
We observe that \fspa is consistently better than \fdense regardless of which nodes are intervened except for one case.
When the number of domains is 2 and $\intset^*=\{4,5\}$, we find the gap is much smaller mainly because \fspa fails to fit the observed distribution in one case as shown in \Cref{tab-app:case2-int45-ndomain2}.
We then test the effect of the number of domains with different latent dimensions in \Cref{fig-app:case2-ndomains}.
We observe that our model performs consistently well with different numbers of domains and latent dimensions.
In \Cref{fig:vis_spa_full}, we visualize how \fspa leads to a lower counterfactual error in comparison to \fdense.
As shown in \Cref{fig:vis_cf_relax_0_to_1} and \Cref{fig:vis_cf_dense_0_to_1}, \fspa clearly does better in counterfactual estimation.
In \Cref{fig:vis_cf_relax_2_to_0} and \Cref{fig:vis_cf_dense_2_to_0}, both of them have a relatively larger error. 
However, \fspa tends to find a closer solution while \fdense matches distribution more randomly. 
This could result from the large search space of \fdense and it can easily encodes a transformation such as rotation which will not hurt distribution fitting but will lead to a significant counterfactual error.

Even though we do not know the specific nodes being intervened on, similar to Case 1, we show that sparse constraint leads to better counterfactual estimation.

\paragraph{Case 2: Intervention set size mismatch}

In this section, we include more results in the most difficult cases where we have no knowledge of the dataset.
To investigate what will happen if there is a mismatch of the number of intervened nodes between the true model and the approximation, i.e., $|\intset| \neq |\intset^*|$, we first change $\intset^*$ while keeping the model unchanged, i.e., $\intset$ is fixed.
As shown in \Cref{fig-app:case3-change-data}, the performance gap between \fspa and \fdense become smaller as the dataset becomes less sparse while \fspa outperforms \fdense in all cases.
We then change $\intset$ while keeping $\intset^*$ unchanged.
As shown in \Cref{fig-app:case3-change-model}, the performance of \fspa approaches to that of \fdense as we increase $|\intset|$.
A somewhat surprising result is that \fspa has the lowest counterfactual error when $|\intset|=1$.
However, as we check data fitting in \Cref{fig-app:case3-change-model-like}, we can tell \fspa fails to fit the observed distribution in this case.
We conjecture the main reason for this is that our theory does not guarantee the existence of a distributionally and counterfactually equivalent canonical model in those cases as we are using a model that is more sparse than the ground truth dataset.
Hence, we cannot rely on the counterfactual estimation when the observed distribution is not fitted. 

In summary, we observe that \fspa always tends to get a lower counterfactual error even though we choose a wrong size of intervention set, i.e. $|\intset| \neq |\intset^*|$. 
However, we also observe that in the cases where our model is more sparse than ground truth, the data fitting performance of \fspa would drop more significantly.
We believe this could also be a good indicator of whether we find a reasonable $|\intset|$.
 
\begin{figure*}[!h]
     \centering
         \begin{subfigure}[t]{0.32\textwidth}
\includegraphics[width=\textwidth]{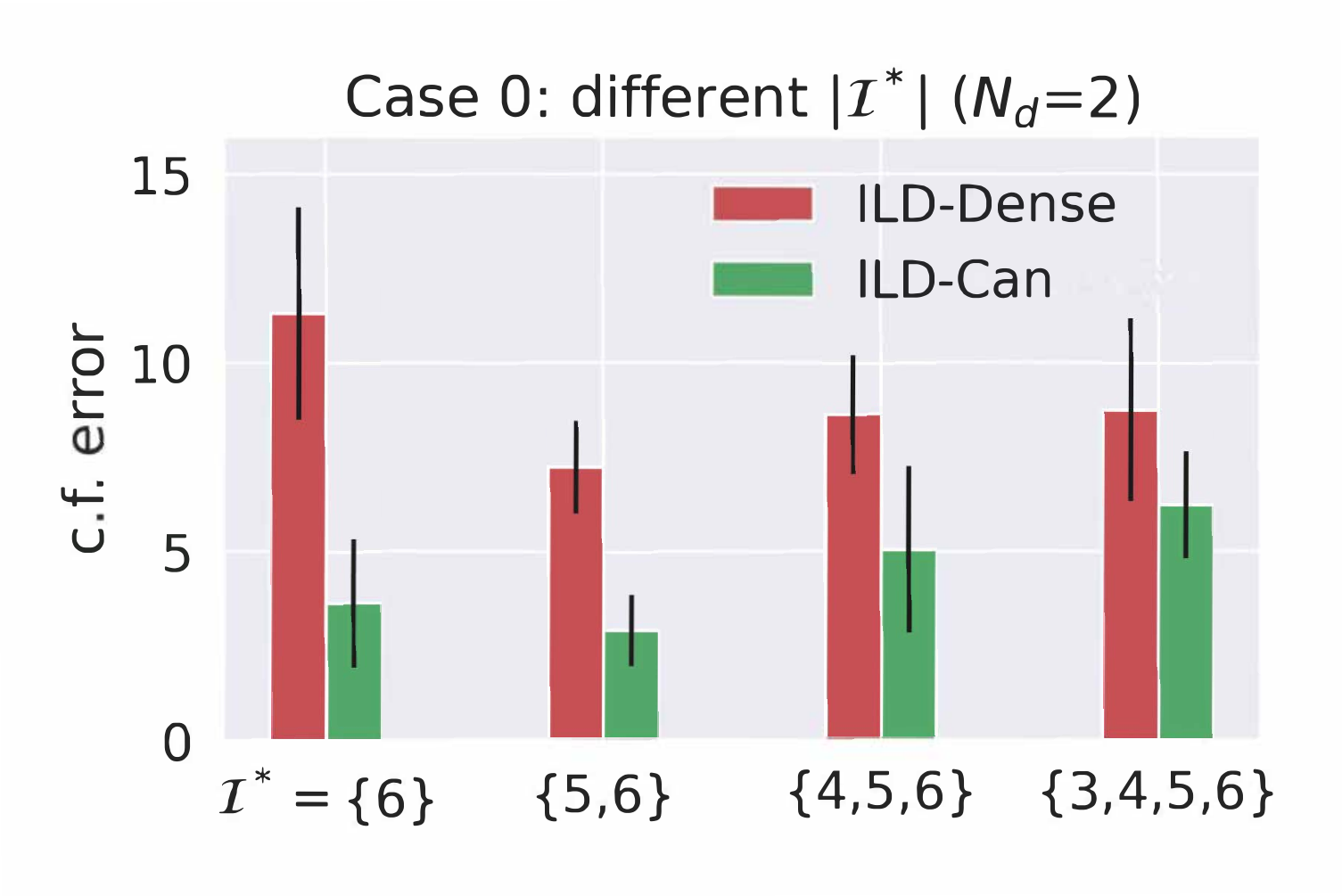}
         \caption{$\ndomains = 2$}
     \end{subfigure}
              \begin{subfigure}[t]{0.32\textwidth}
\includegraphics[width=\textwidth]{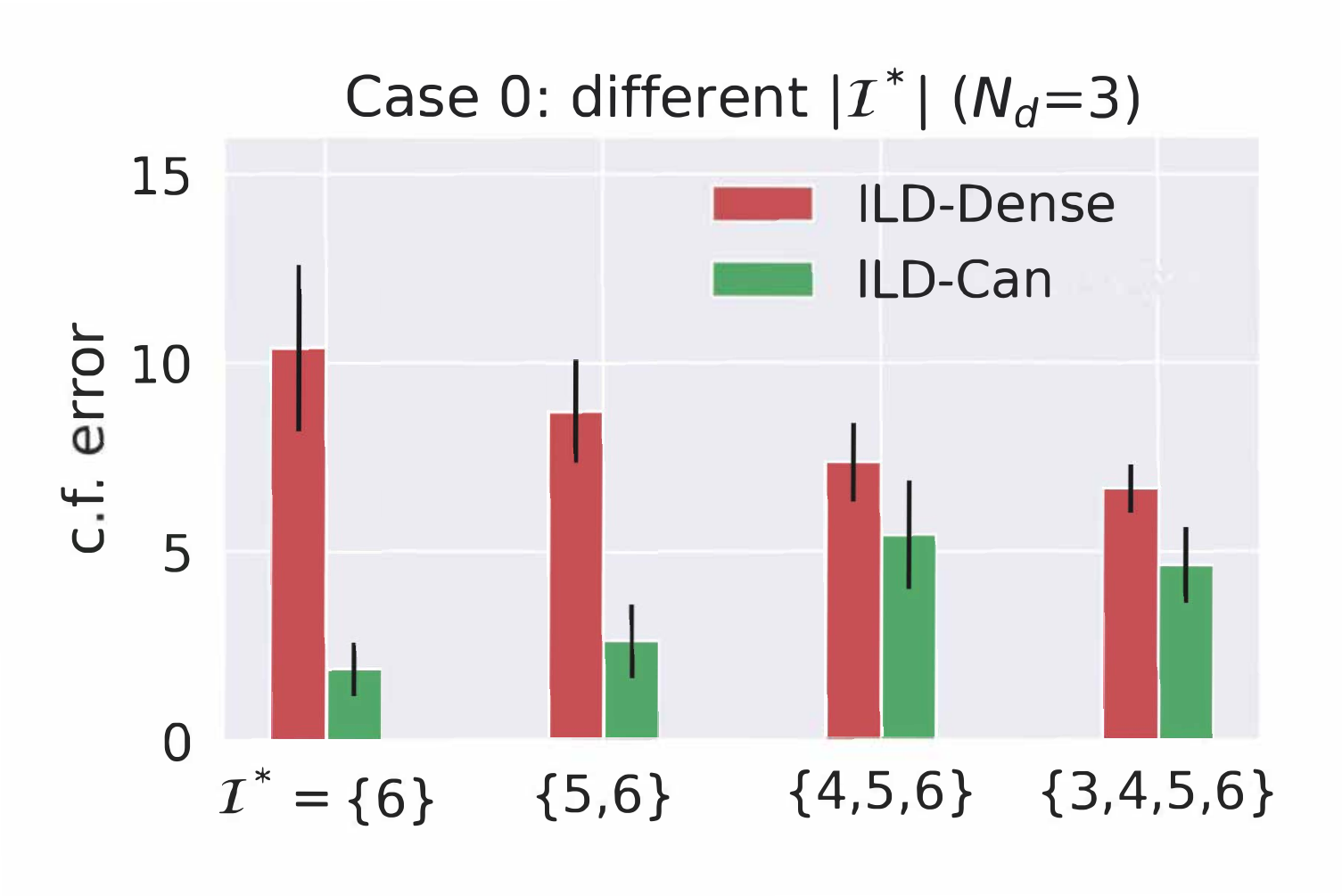}
         \caption{$\ndomains = 3$}
     \end{subfigure}
                  \begin{subfigure}[t]{0.32\textwidth}
\includegraphics[width=\textwidth]{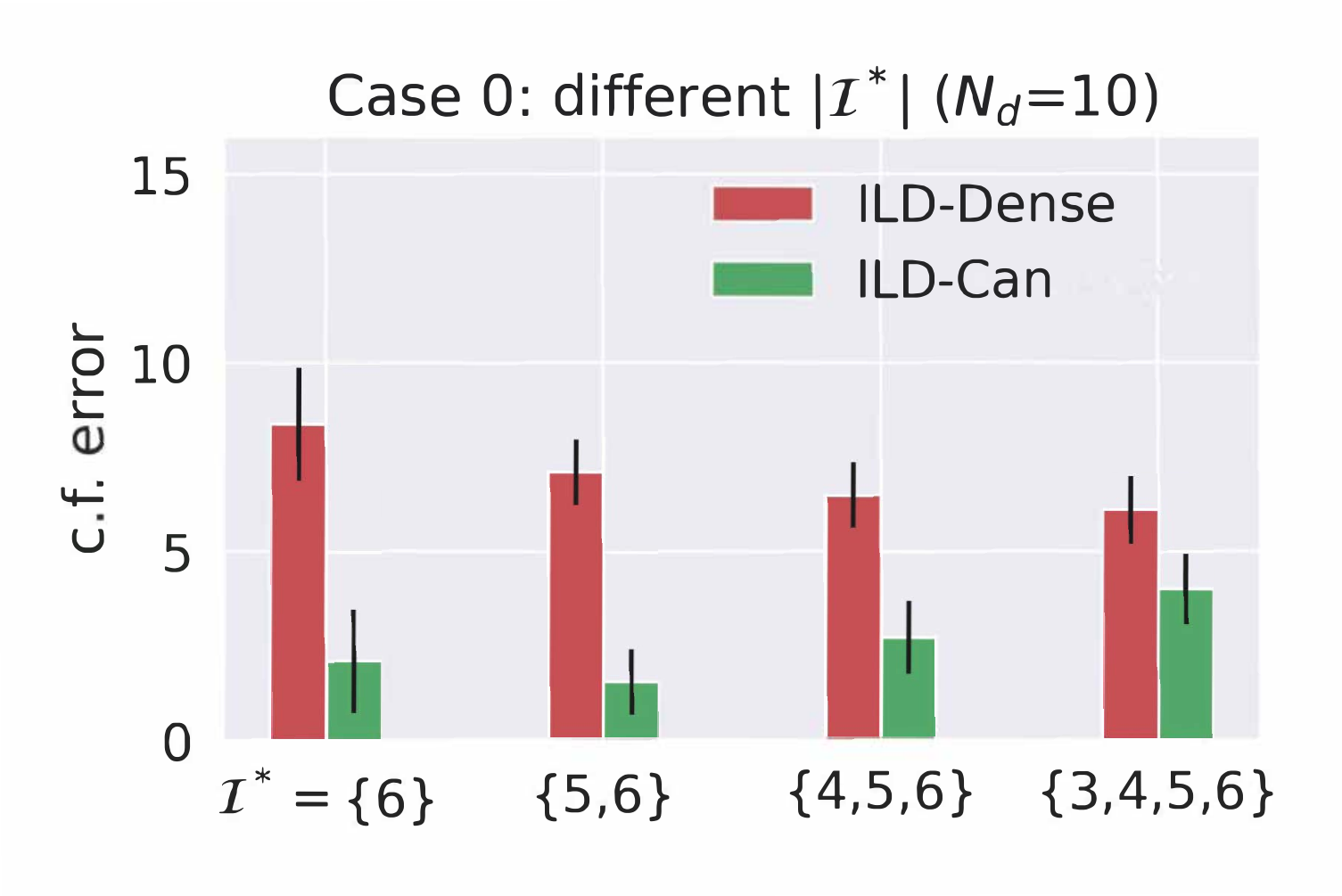}
         \caption{$\ndomains = 10$}
     \end{subfigure}
        \caption{Case 0: Test counterfactual error with different $\intset^*$.
        To understand how the true intervention set affects the gap between \fdense and \fspa, we varied the size of the ground truth intervention.
        It can be observed that the performance gap tends to be largest when the true intervention set is the sparsest and the performance of \fspa approaches to the performance of \ffull as we increase the size.
        }
        \label{fig-app:case1-spa}
\end{figure*}

\begin{figure*}[!h]
     \centering
         \begin{subfigure}[t]{0.32\textwidth}
\includegraphics[width=\textwidth]{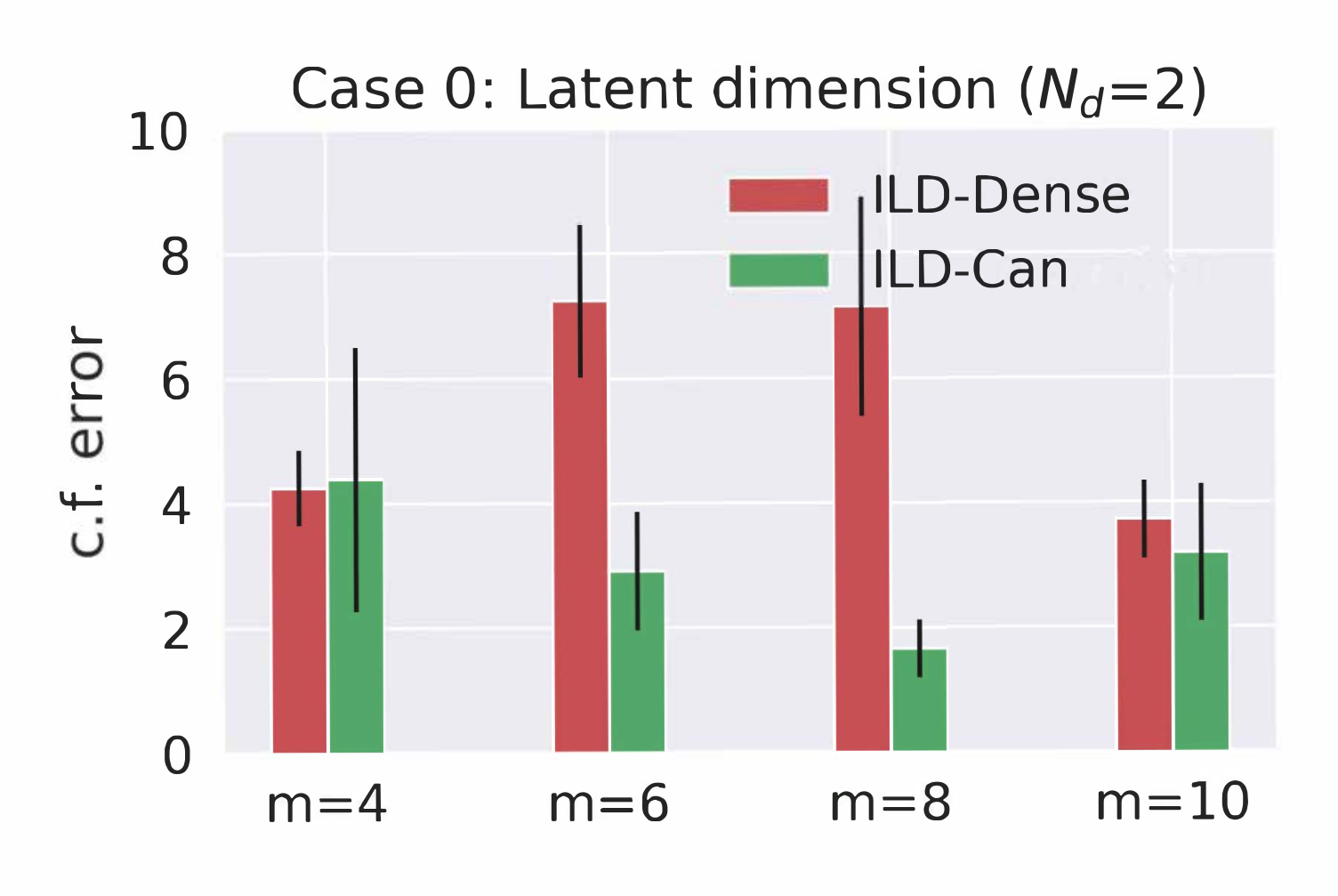}
         \caption{$\ndomains = 2$}
     \end{subfigure}
              \begin{subfigure}[t]{0.32\textwidth}
\includegraphics[width=\textwidth]{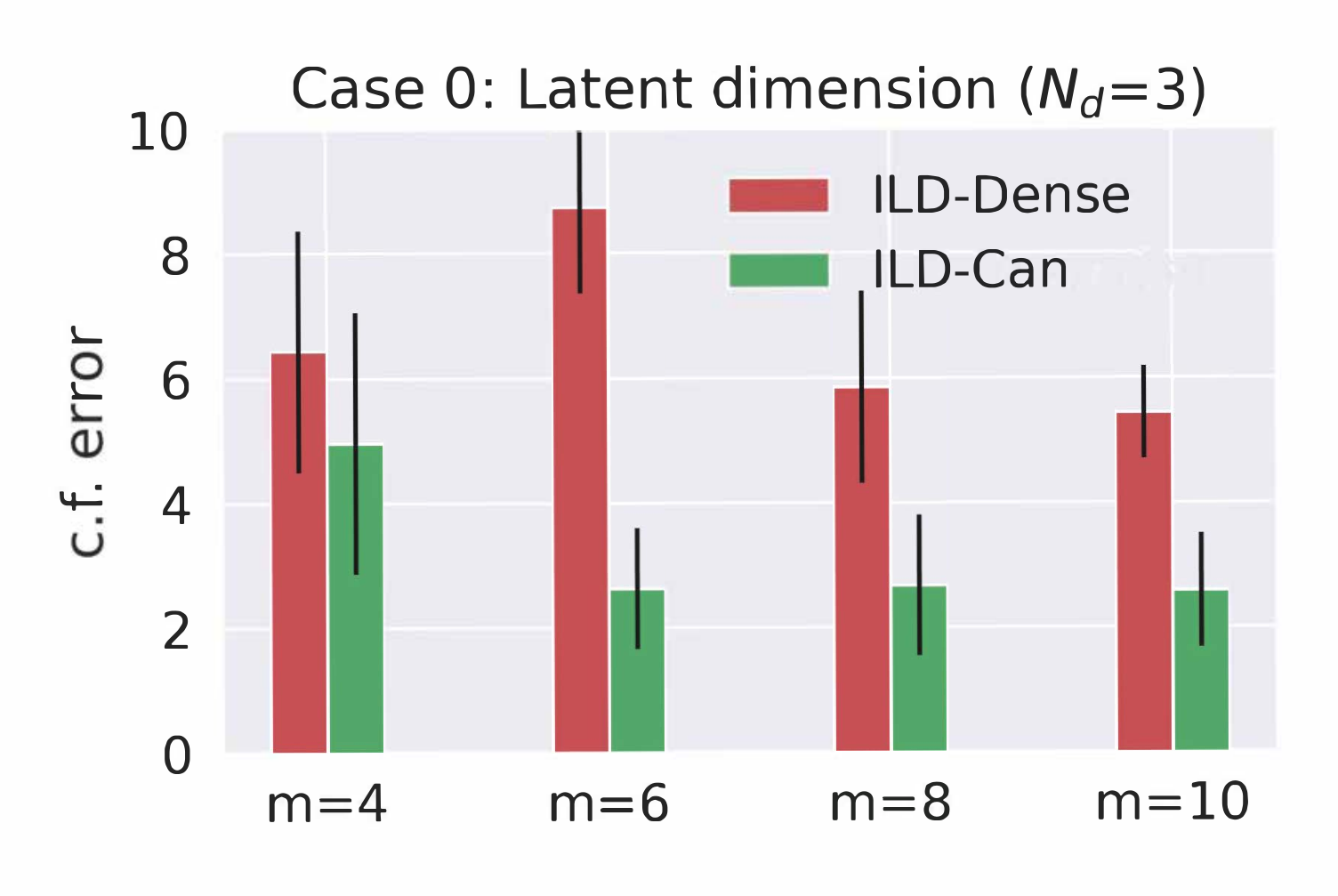}
         \caption{$\ndomains = 3$}
     \end{subfigure}
              \begin{subfigure}[t]{0.32\textwidth}
\includegraphics[width=\textwidth]{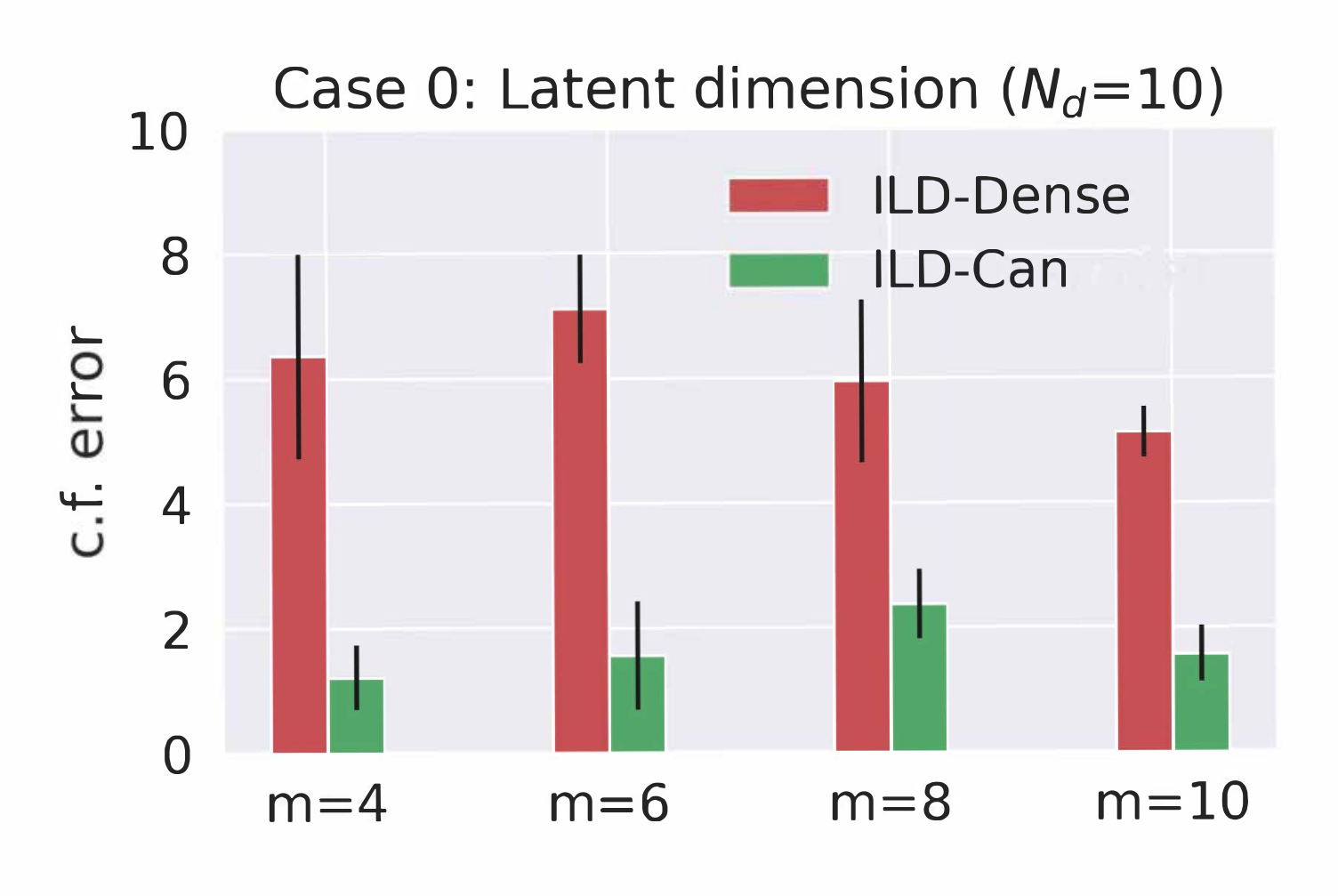}
         \caption{$\ndomains = 10$}
     \end{subfigure}
        \caption{Case 0: Test counterfactual error with different dimension. 
        We investigate how our algorithm scales with dimension.
        We observe that \fspa is significantly better than \fdense in 9 out of 12 cases, and we also notice that there 3 cases where their performance is close to that of each other.
        Here the intervention set contains the last two nodes. For example, when $\ndim=4$, $\intset=\{3,4\}$, and when $\ndim=10$, $\intset=\{9,10\}$.
        }
        \label{fig-app:case1-ndim}
\end{figure*}

\begin{figure*}[!h]
     \centering
         \begin{subfigure}[t]{0.32\textwidth}
\includegraphics[width=\textwidth]{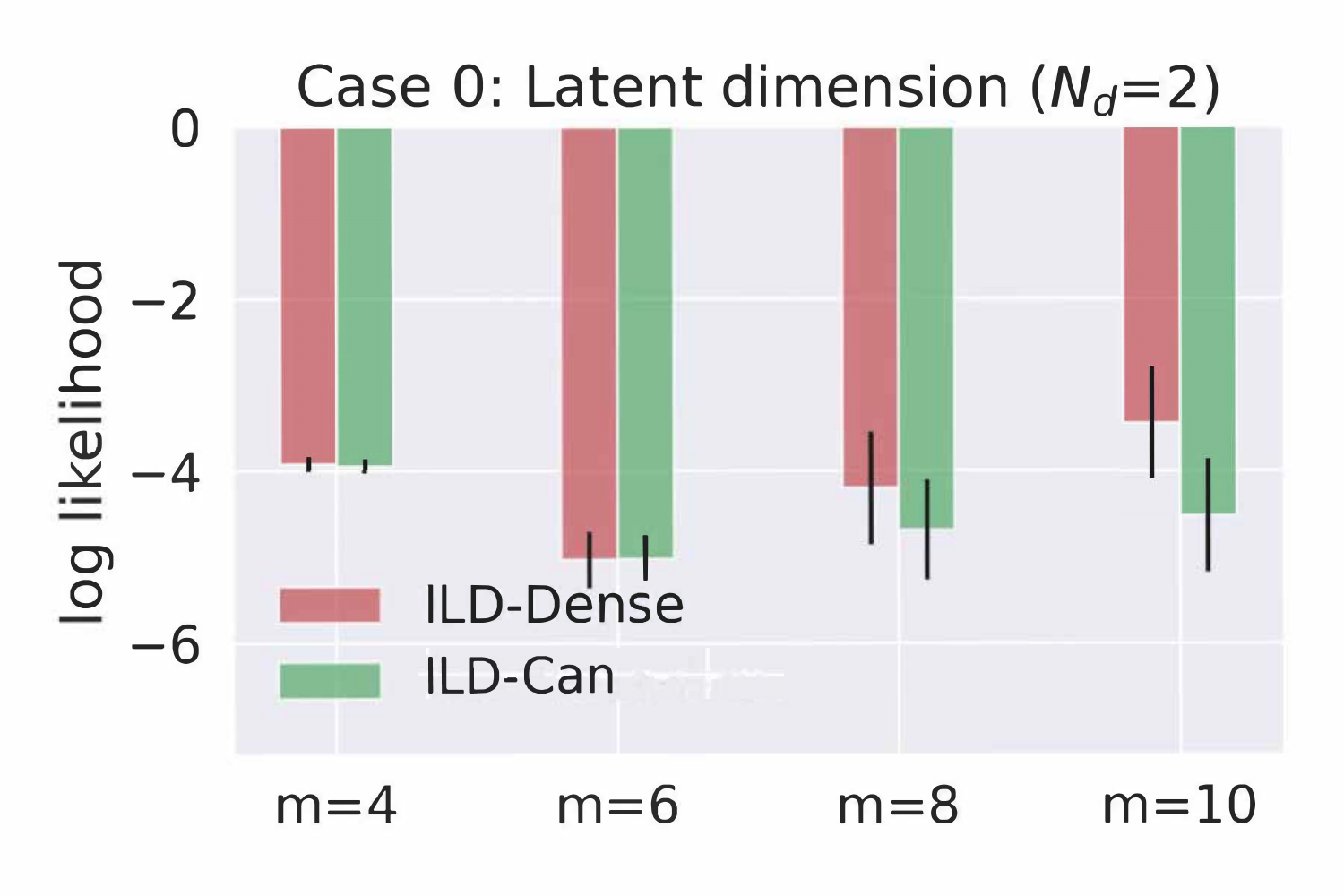}
     \end{subfigure}
        \caption{Case 0: Lowest validation log likelihood (same as when we report the test counterfactual error) when testing different dimension with $\ndomains=2$.
        We observe that the likelihood gap between \fspa and \fdense is largest when $\ndim=10$.
        }
        \label{fig-app:case1-ndim_ndomain2-like}
\end{figure*}

\begin{table}[!h]
\caption{Case 0: Test counterfactual error and validation log likelihood for each seed when $\ndim=10,\ndomains=2$.
We observe that the log likelihood of \fdense tends to be much lower when it has a larger counterfactual error than that of \fdense.
}\label{tab-app:case1-ndim10-ndomain2}
\resizebox{\columnwidth}{!}{\begin{tabular}{|l|r|r|r|r|r|r|r|r|r|r|r|}
\hline
                                      & \multicolumn{1}{c|}{Seed} & \multicolumn{1}{l|}{0} & \multicolumn{1}{l|}{1} & \multicolumn{1}{l|}{2} & \multicolumn{1}{l|}{3} & \multicolumn{1}{l|}{4} & \multicolumn{1}{l|}{5} & \multicolumn{1}{l|}{6} & \multicolumn{1}{l|}{7} & \multicolumn{1}{l|}{8} & \multicolumn{1}{l|}{9} \\ \hline
\multirow{2}{*}{Counterfactual error} & \fspa                & 4.625                  & 0.111                  & 0.120                  & 0.072                  & 4.572                  & 10.617                 & 4.360                  & 6.809                  & 0.099                  & 0.479                  \\ \cline{2-12} 
                                      & \fdense               & 23.821                 & 0.611                  & 2.178                  & 5.823                  & 4.779                  & 0.694                  & 0.487                  & 1.653                  & 3.170                  & 6.365                  \\ \hline
\multirow{2}{*}{Log likelihood}       & \fspa                & -6.873                 & -7.066                 & -5.672                 & -4.637                 & -0.572                 & -3.261                 & -6.062                 & -4.552                 & -1.367                 & -5.170                 \\ \cline{2-12} 
                                      & \fdense              & -4.034                 & -6.434                 & -5.679                 & -4.197                 & 0.711                  & -1.908                 & -4.180                 & -2.413                 & -1.483                 & -4.796                 \\ \hline
\end{tabular}}
\end{table}

\begin{table}[!h]
\caption{Case 0: Test counterfactual error for each seed when $\ndim=4,\ndomains=2$.
\fspa is better than \fdense except when seed is 0.
However, there is a significant failure for \fspa with seed 0.}\label{tab-app:case1-ndim4-ndomain2}
\resizebox{\columnwidth}{!}{\begin{tabular}{|r|r|r|r|r|r|r|r|r|r|r|}
\hline
\multicolumn{1}{|c|}{Seed} & \multicolumn{1}{l|}{0} & \multicolumn{1}{l|}{1} & \multicolumn{1}{l|}{2} & \multicolumn{1}{l|}{3} & \multicolumn{1}{l|}{4} & \multicolumn{1}{l|}{5} & \multicolumn{1}{l|}{6} & \multicolumn{1}{l|}{7} & \multicolumn{1}{l|}{8} & \multicolumn{1}{l|}{9} \\ \hline
\fspa   & 23.790                 & 2.309                  & 1.747                  & 3.180                  & 1.265                  & 0.864                  & 0.779                  & 0.227                  & 3.325                  & 6.362                   \\ \hline
\fdense       & 3.321                  & 3.435                  & 2.838                  & 4.209                  & 5.356                  & 6.456                  & 1.615                  & 2.165                  & 5.195                  & 7.937                  \\ \hline
\end{tabular}}
\end{table}

\begin{table}[!h]
\caption{Case 0: Test counterfactual error for each seed when $\ndim=4,\ndomains=3$.
\fspa is better than \fdense with seed $1,2,3,5,6,7,8$.}\label{tab-app:case1-ndim4-ndomain3}
\resizebox{\columnwidth}{!}{\begin{tabular}{|r|r|r|r|r|r|r|r|r|r|r|}
\hline
\multicolumn{1}{|c|}{Seed} & \multicolumn{1}{l|}{0} & \multicolumn{1}{l|}{1} & \multicolumn{1}{l|}{2} & \multicolumn{1}{l|}{3} & \multicolumn{1}{l|}{4} & \multicolumn{1}{l|}{5} & \multicolumn{1}{l|}{6} & \multicolumn{1}{l|}{7} & \multicolumn{1}{l|}{8} & \multicolumn{1}{l|}{9} \\ \hline
\fspa    & 23.821                 & 0.611                  & 2.178                  & 5.823                  & 4.779                  & 0.694                  & 0.487                  & 1.653                  & 3.170                  & 6.365                 \\ \hline
\fdense        & 24.472                 & 3.658                  & 2.925                  & 5.785                  & 3.260                  & 5.795                  & 3.878                  & 4.560                  & 4.009                  & 5.965                     \\ \hline
\end{tabular}}
\end{table}
 
\begin{figure*}[!t]
     \centering
         \begin{subfigure}[t]{0.32\textwidth}
\includegraphics[width=\textwidth]{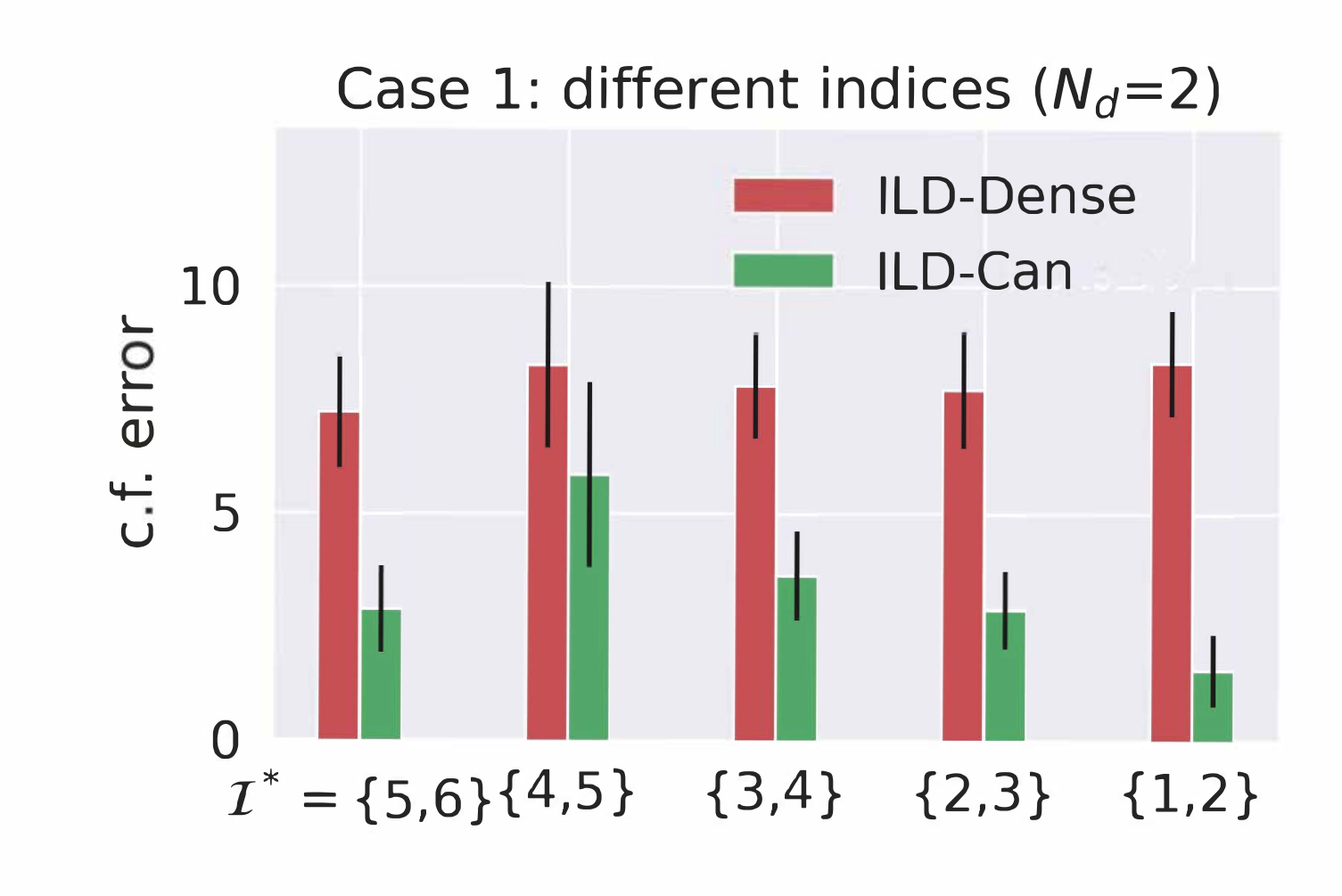}
         \caption{$\ndomains = 2$}
     \end{subfigure}
              \begin{subfigure}[t]{0.32\textwidth}
\includegraphics[width=\textwidth]{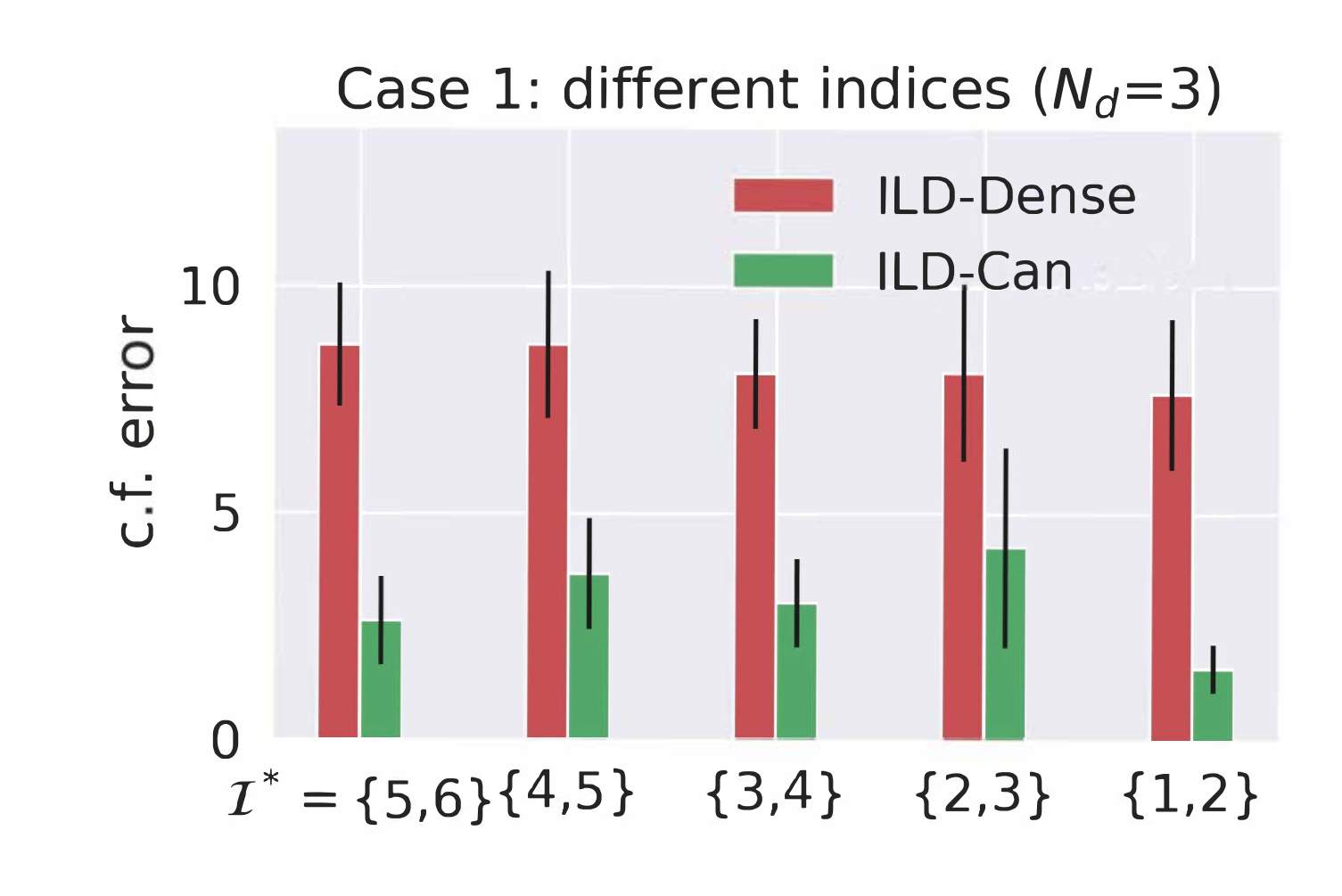}
         \caption{$\ndomains = 3$}
     \end{subfigure}
                  \begin{subfigure}[t]{0.32\textwidth}
\includegraphics[width=\textwidth]{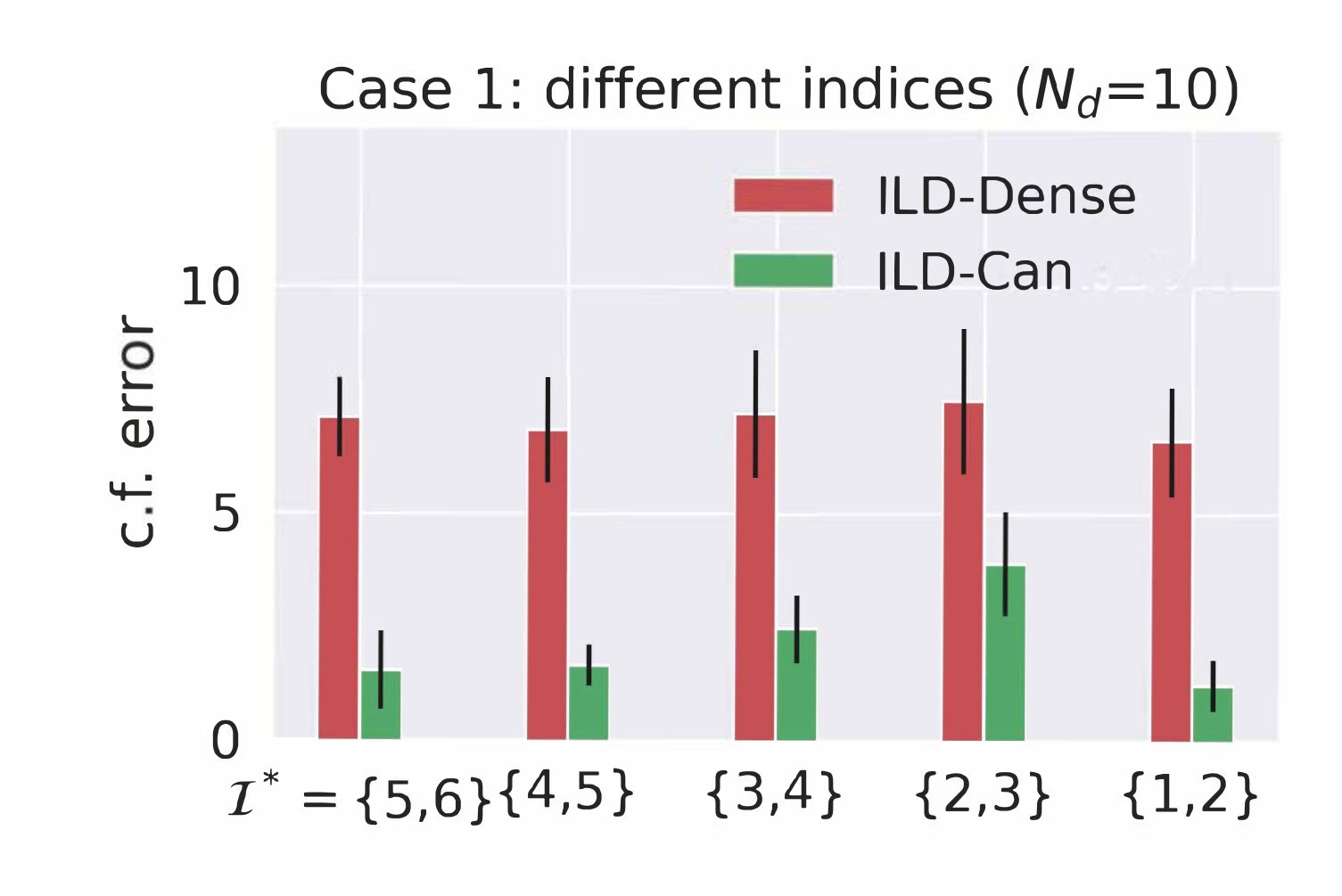}
         \caption{$\ndomains = 10$}
     \end{subfigure}
        \caption{Case 1: Test counterfactual error with different indices.
        Here we observe that \fspa performs consistently better than \fdense.
        When $\ndomains=2$ and $\intset = \{4,5\}$, the performance of \fspa gets relatively higher because it fails significantly in one case as shown in \Cref{tab-app:case2-int45-ndomain2}.
        }
        \label{fig-app:case2-indices}
\end{figure*}

\begin{figure*}[!t]
     \centering
              \begin{subfigure}[t]{0.32\textwidth}
\includegraphics[width=\textwidth]{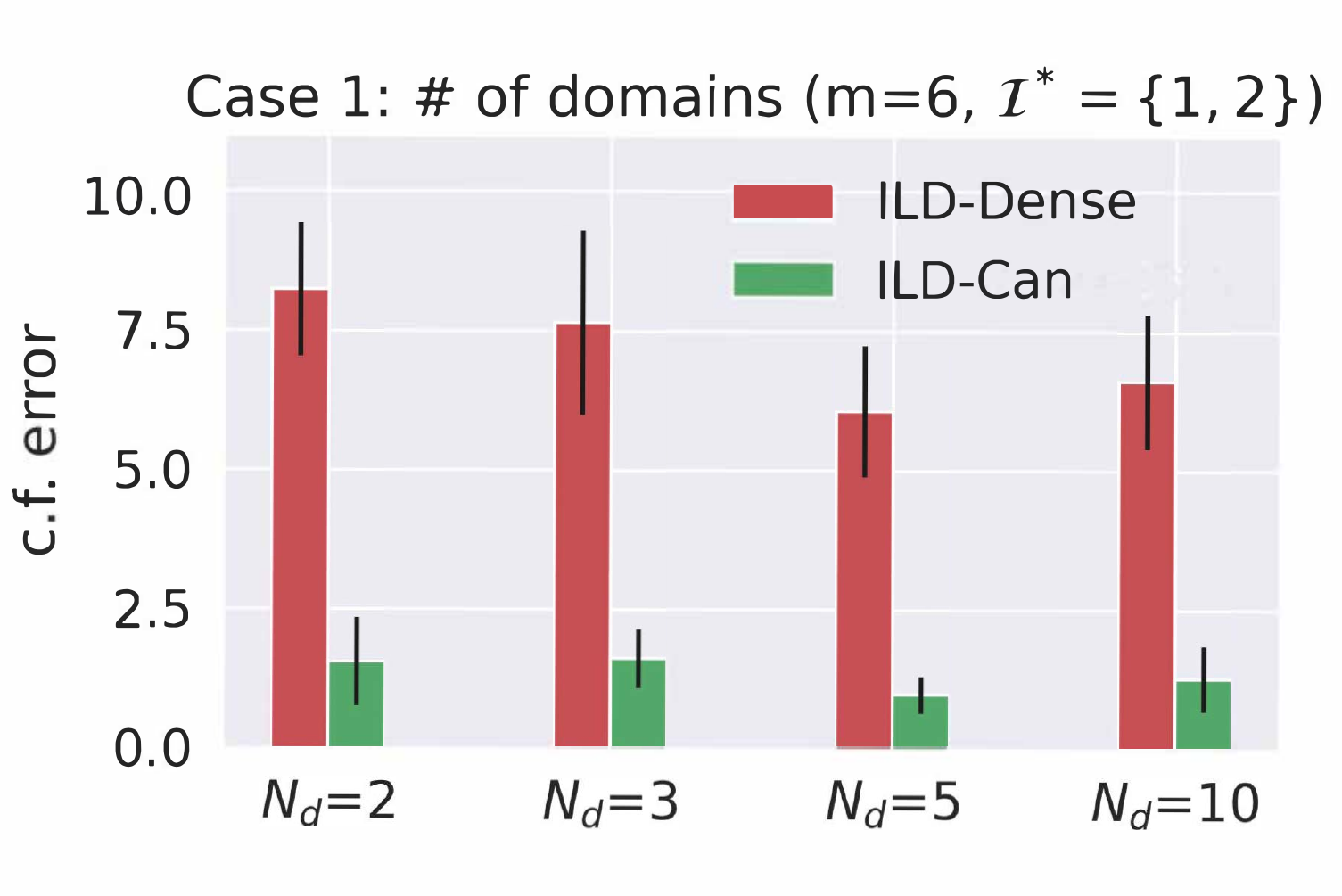}
         \caption{$\ndim=6 $}
     \end{subfigure}
         \begin{subfigure}[t]{0.32\textwidth}
\includegraphics[width=\textwidth]{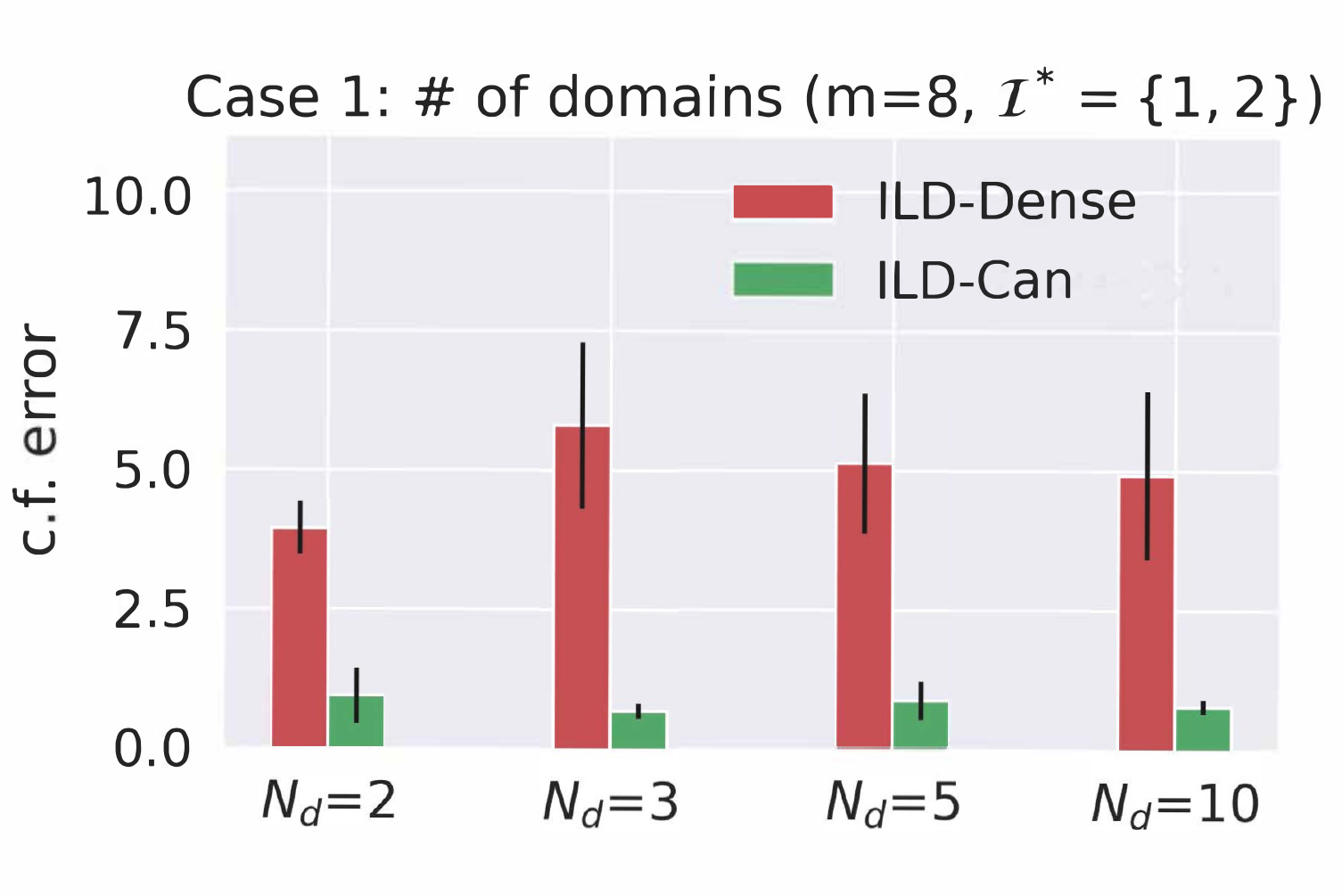}
         \caption{$\ndim=8 $}
     \end{subfigure}
              \begin{subfigure}[t]{0.32\textwidth}
\includegraphics[width=\textwidth]{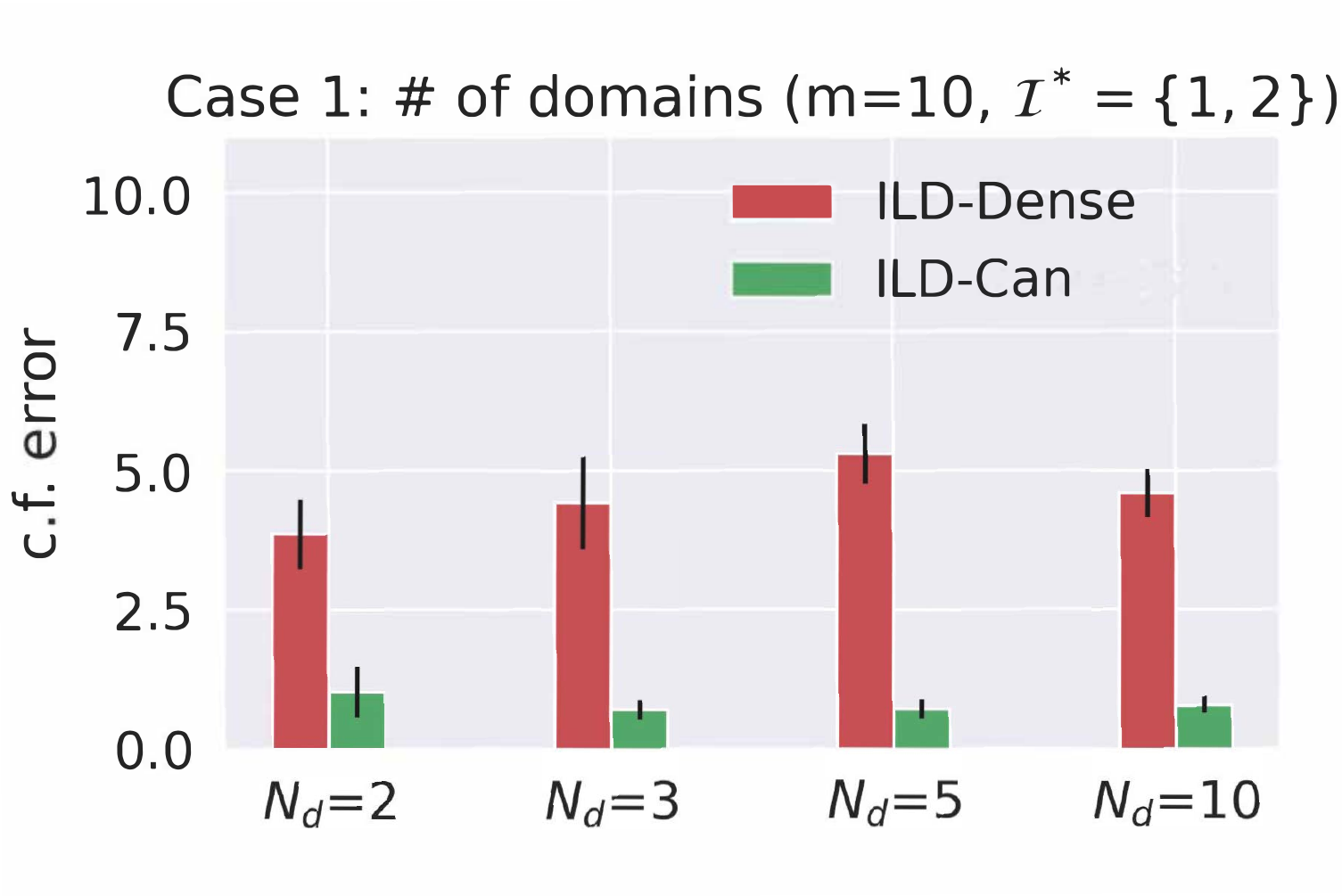}
         \caption{$\ndim=10 $}
     \end{subfigure}
        \caption{Case 1: Test counterfactual error with different number of domains when $\intset = \{1,2\}$.
        \fspa performs consistently well with different number of domains and latent dimension.
        }
        \label{fig-app:case2-ndomains}
\end{figure*}

\begin{figure*}[h!]
     \centering
     \begin{subfigure}[t]{0.35\textwidth}
\includegraphics[width=\textwidth]{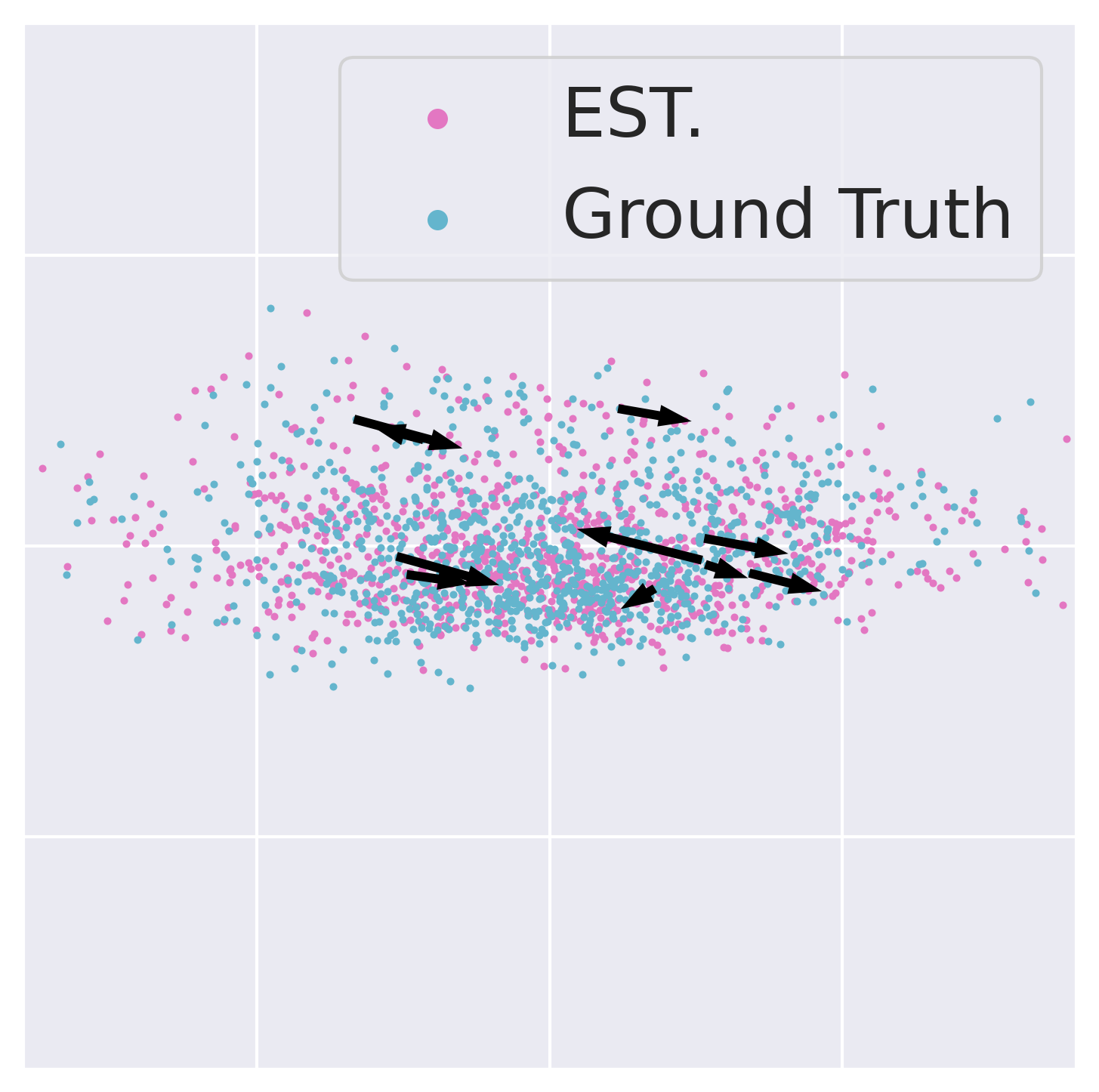}
         \caption{\fspa: Domain $1 \to 2$}
         \label{fig:vis_cf_relax_0_to_1}
     \end{subfigure}
     \begin{subfigure}[t]{0.35\textwidth}
\includegraphics[width=\textwidth]{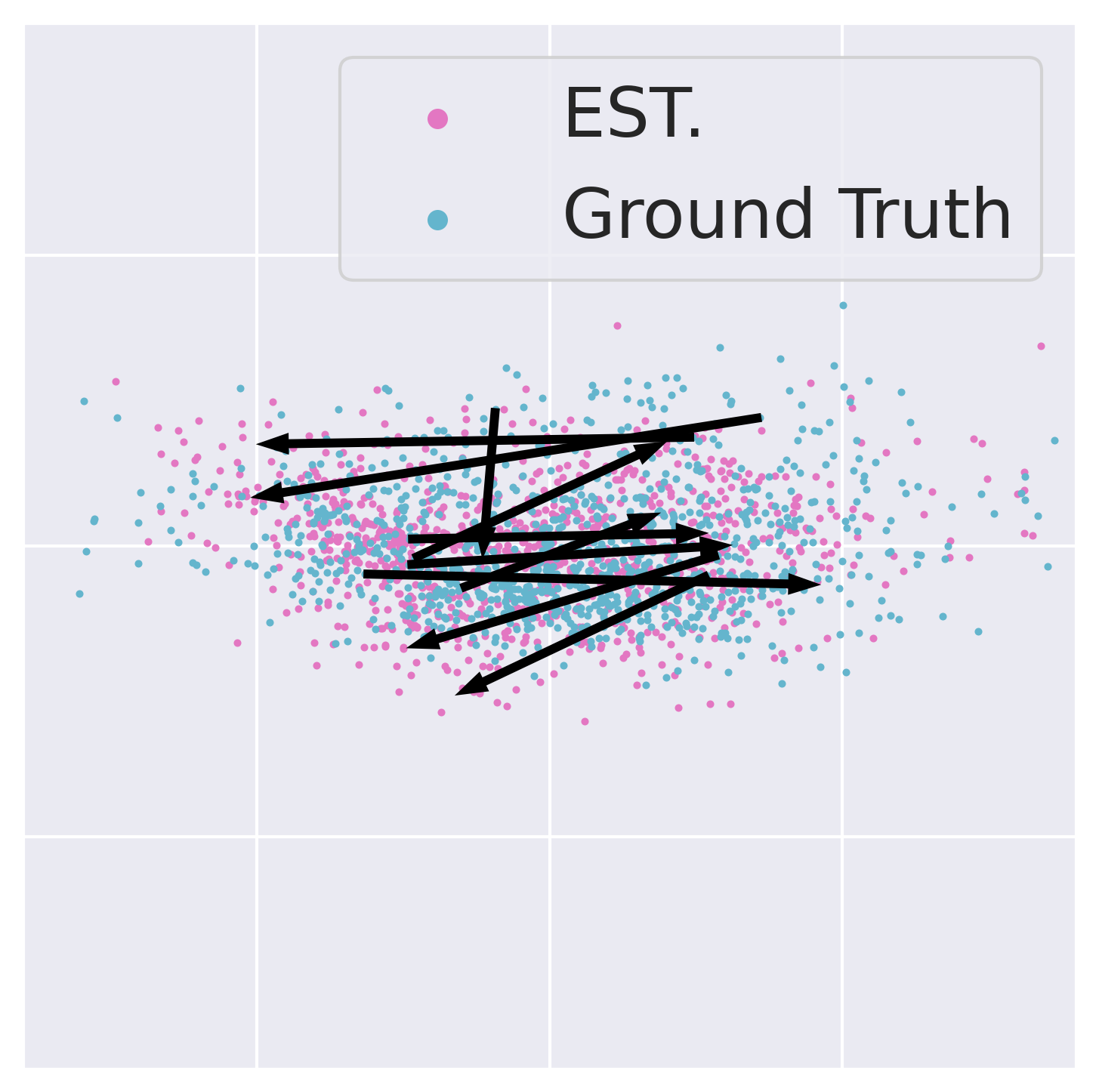}
         \caption{\fdense: Domain $1 \to 2$}
         \label{fig:vis_cf_dense_0_to_1}
     \end{subfigure}
     \begin{subfigure}[t]{0.35\textwidth}
\includegraphics[width=\textwidth]{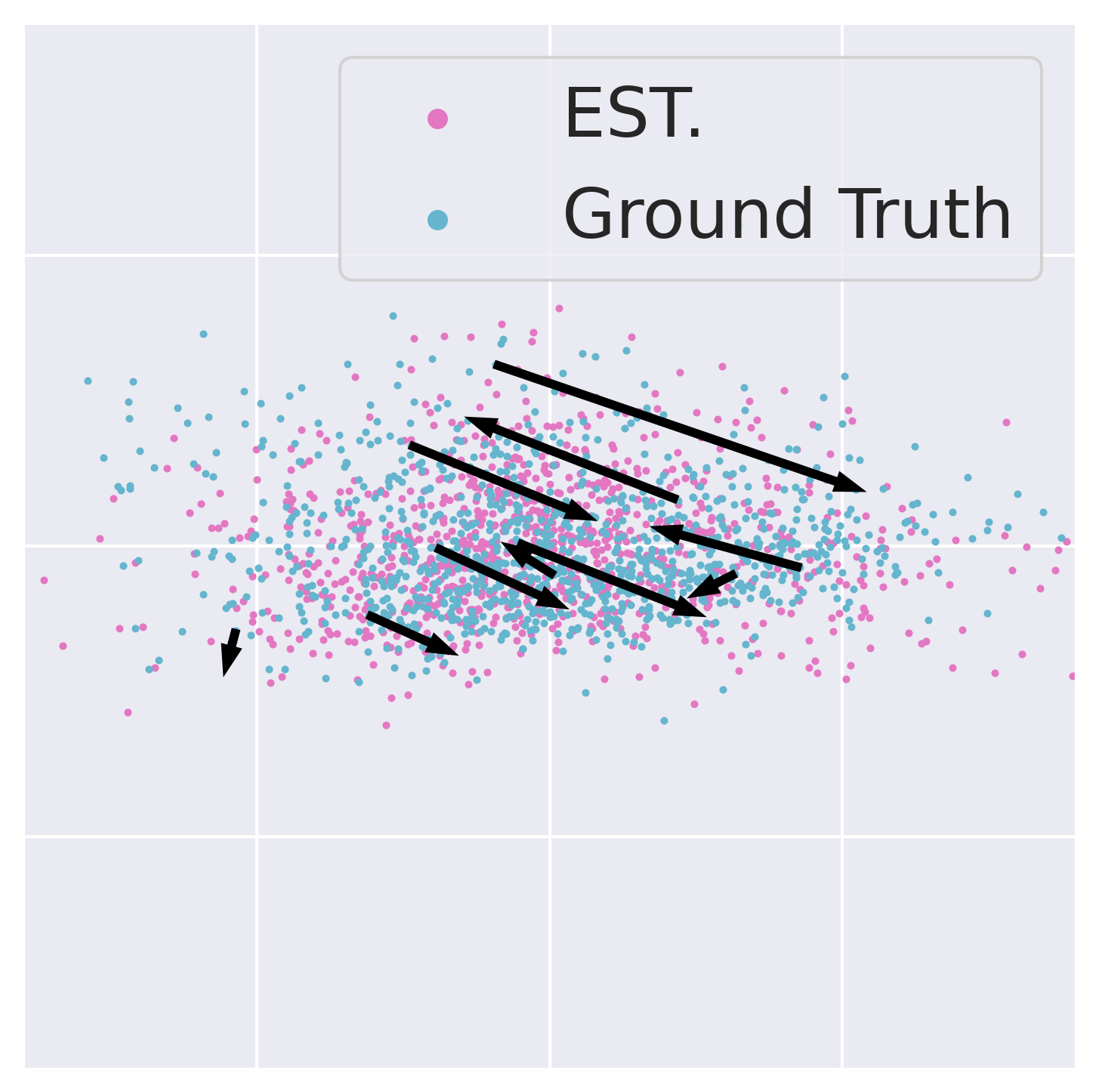}
         \caption{\fspa: Domain $3 \to 1$}
         \label{fig:vis_cf_relax_2_to_0}
     \end{subfigure}
          \begin{subfigure}[t]{0.35\textwidth}
\includegraphics[width=\textwidth]{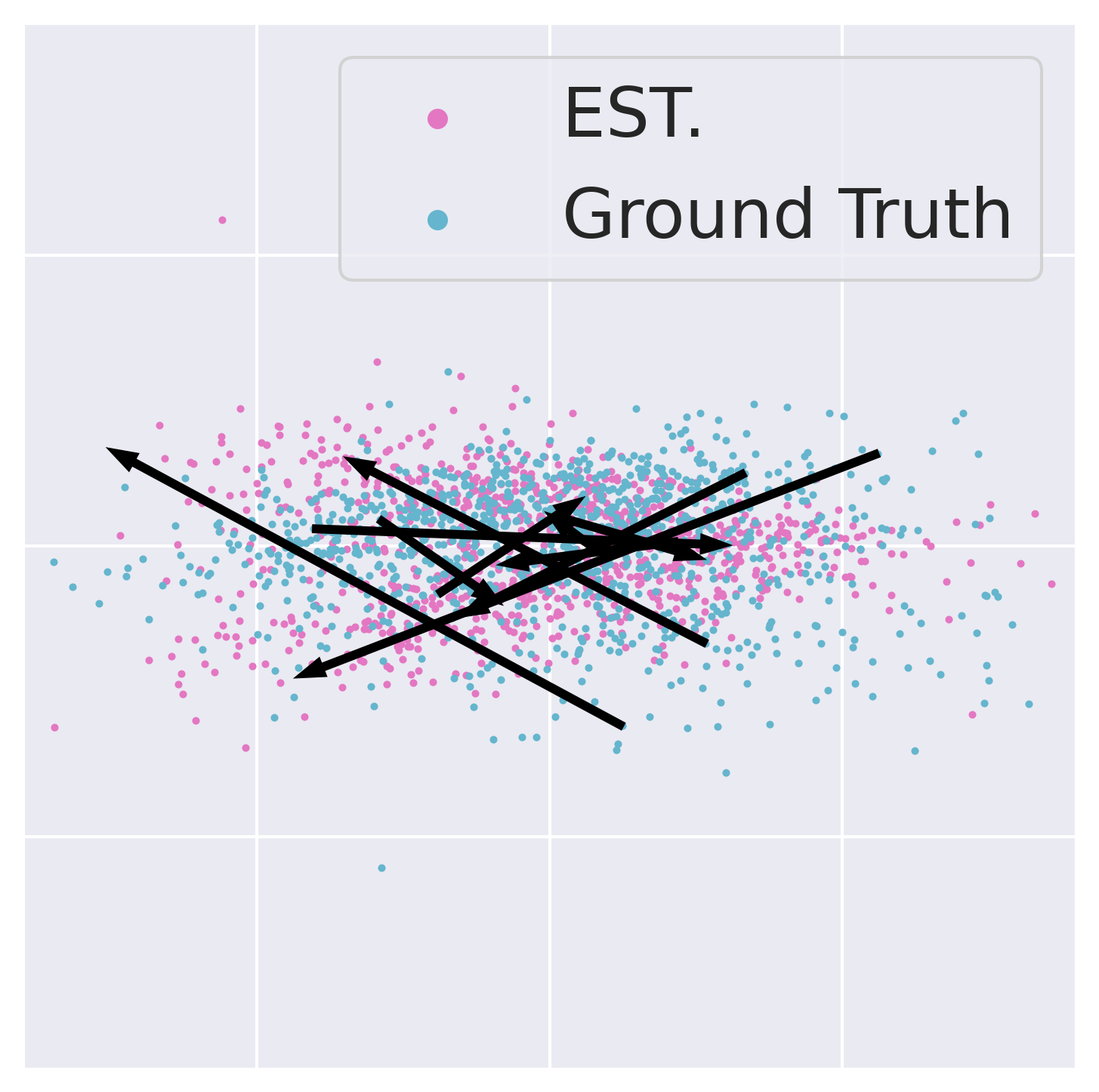}
         \caption{\fdense: Domain $3 \to 1$}
         \label{fig:vis_cf_dense_2_to_0}
     \end{subfigure}
        \caption{Visualization of counterfactual error when $\ndim=6, \ndomains = 3, |\intset|=2, \intset^*=\{1,2\}$.
        In each plot, we find the first two principle components and project the data along that direction.
        We select 10 points, then find the corresponding ground truth counterfactual and estimated counterfactual.
        The black arrow points from ground truth to estimated counterfactual.
}
        \label{fig:vis_spa_full}
\end{figure*}

\begin{table}[!t]
\caption{Case 1: Test counterfactual error and validation log likelihood for each seed when $\ndomains=2$ and $\intset=\{4,5\}$.
When seed is 5, the error of \fspa is much larger than that of \fdense.
In the meanwhile, we notice that the log likelihood of \fspa is much lower than that of \fdense which indicates \fspa fails to fit the observed distribution well.
When seed is 6, there is also a gap in log likelihood. 
But both models perform very badly in terms of counterfactual error in this case, and we conjecture this results from a very hard dataset.
}\label{tab-app:case2-int45-ndomain2}
\resizebox{\columnwidth}{!}{\begin{tabular}{|l|r|r|r|r|r|r|r|r|r|r|r|}
\hline
                                      & \multicolumn{1}{c|}{Seed} & \multicolumn{1}{l|}{0} & \multicolumn{1}{l|}{1} & \multicolumn{1}{l|}{2} & \multicolumn{1}{l|}{3} & \multicolumn{1}{l|}{4} & \multicolumn{1}{l|}{5} & \multicolumn{1}{l|}{6} & \multicolumn{1}{l|}{7} & \multicolumn{1}{l|}{8} & \multicolumn{1}{l|}{9} \\ \hline
\multirow{2}{*}{Counterfactual error} & \fspa              & 1.395                  & 0.862                  & 1.338                  & 0.193                  & 7.557                  & 12.422                 & 21.762                 & 3.879                  & 2.352                  & 0.479                  \\ \cline{2-12} 
                                      & \fdense                & 8.610                  & 5.979                  & 4.134                  & 2.983                  & 9.795                  & 4.719                  & 24.232                 & 5.327                  & 8.497                  & 8.500                  \\ \hline
\multirow{2}{*}{Log likelihood}       & \fspa                 & -4.441                 & -5.737                 & -4.448                 & -5.504                 & -4.393                 & -3.376                 & -5.187                 & -5.073                 & -4.033                 & -4.102                 \\ \cline{2-12} 
                                      & \fdense               & -4.170                 & -5.632                 & -4.316                 & -5.458                 & -4.174                 & -2.181                 & -4.052                 & -5.010                 & -5.270                 & -4.302                 \\ \hline
\end{tabular}}
\end{table}

\begin{figure*}[!t]
     \centering
         \begin{subfigure}[t]{0.45\textwidth}
\includegraphics[width=\textwidth]{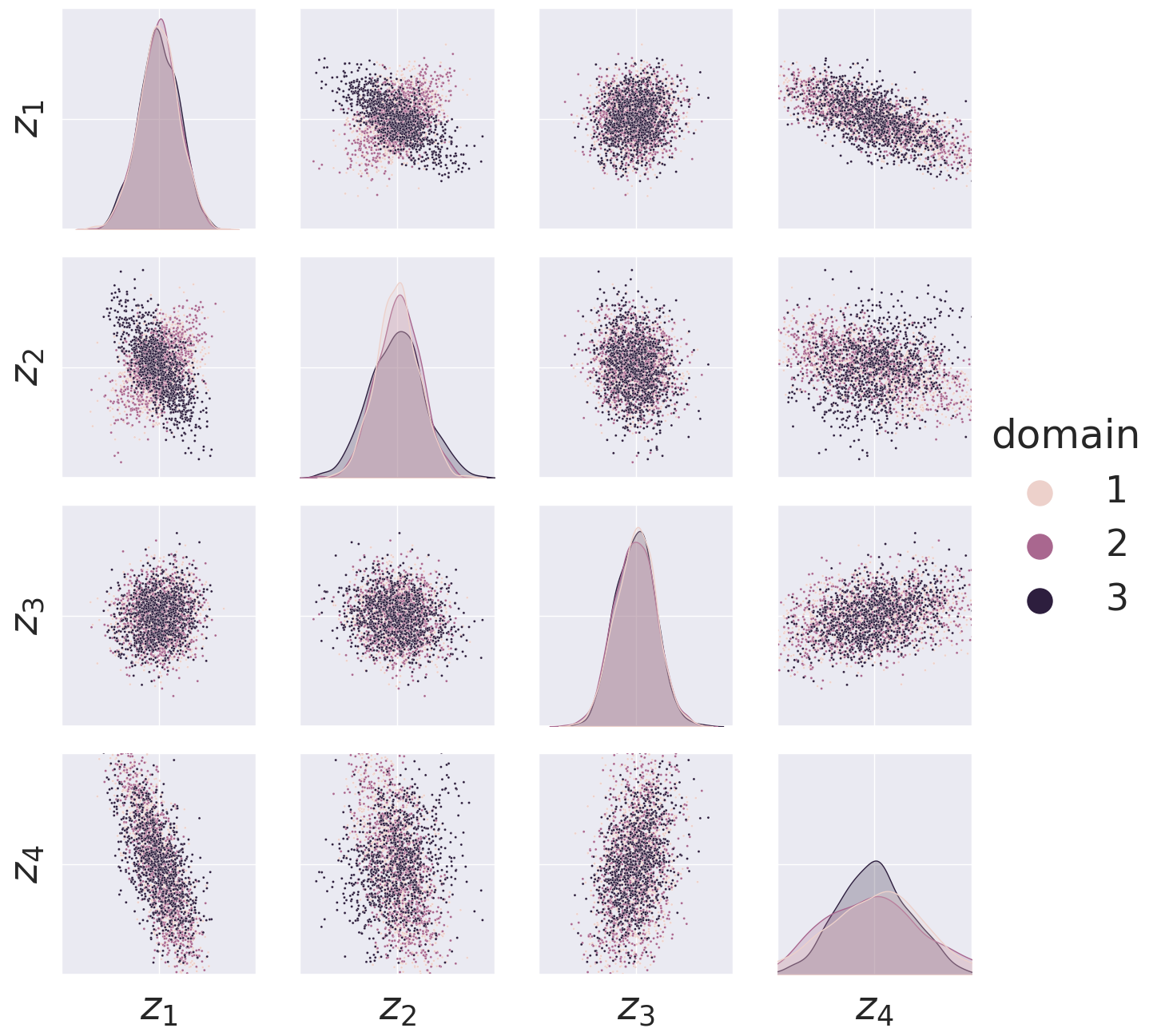}
         \caption{Step 0: $f^{(0)}=f_d$}
     \end{subfigure}
              \begin{subfigure}[t]{0.45\textwidth}
\includegraphics[width=\textwidth]{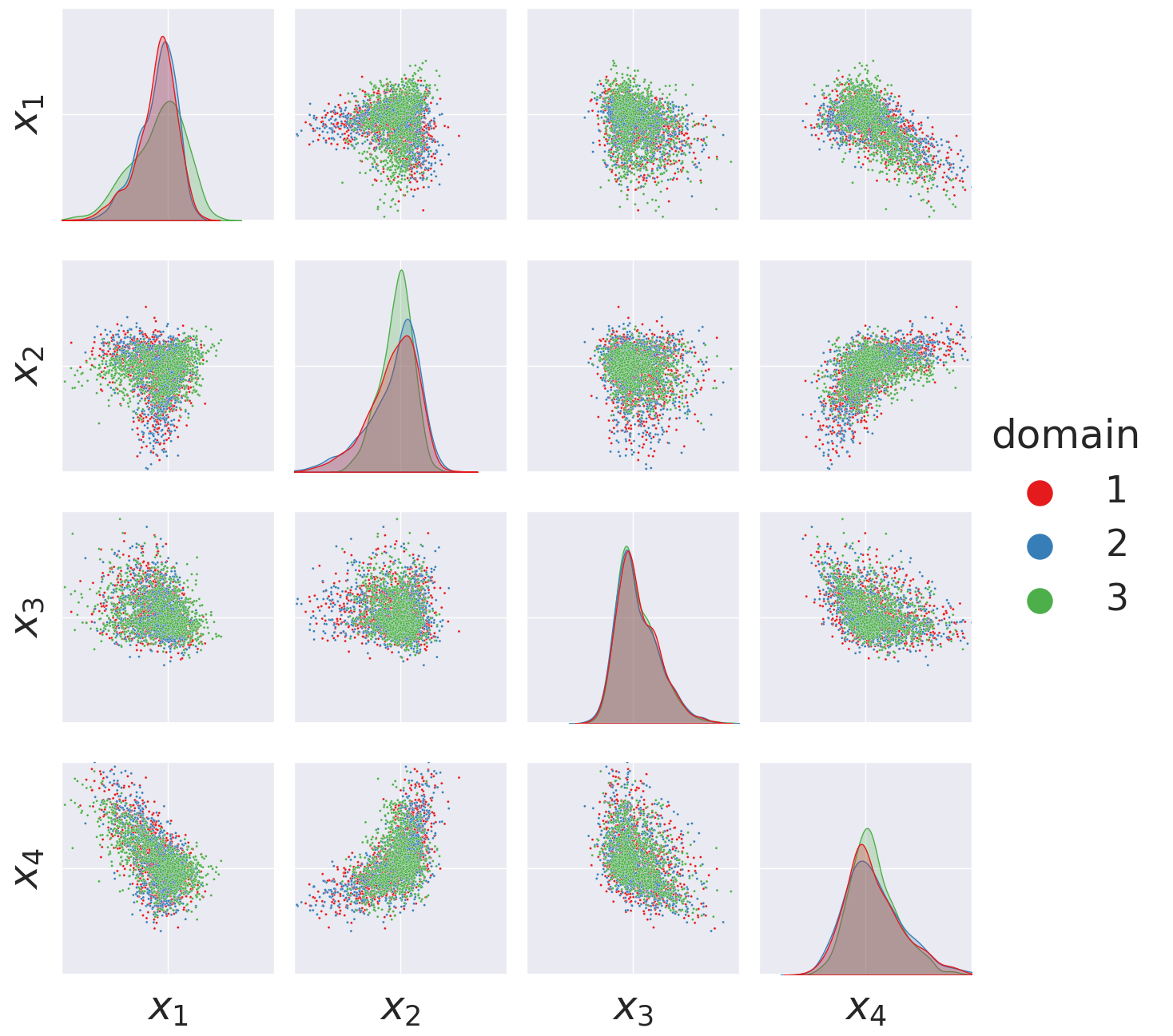}
         \caption{Step 0: $g^{(0)}=g$}
     \end{subfigure}
              \begin{subfigure}[t]{0.45\textwidth}
\includegraphics[width=\textwidth]{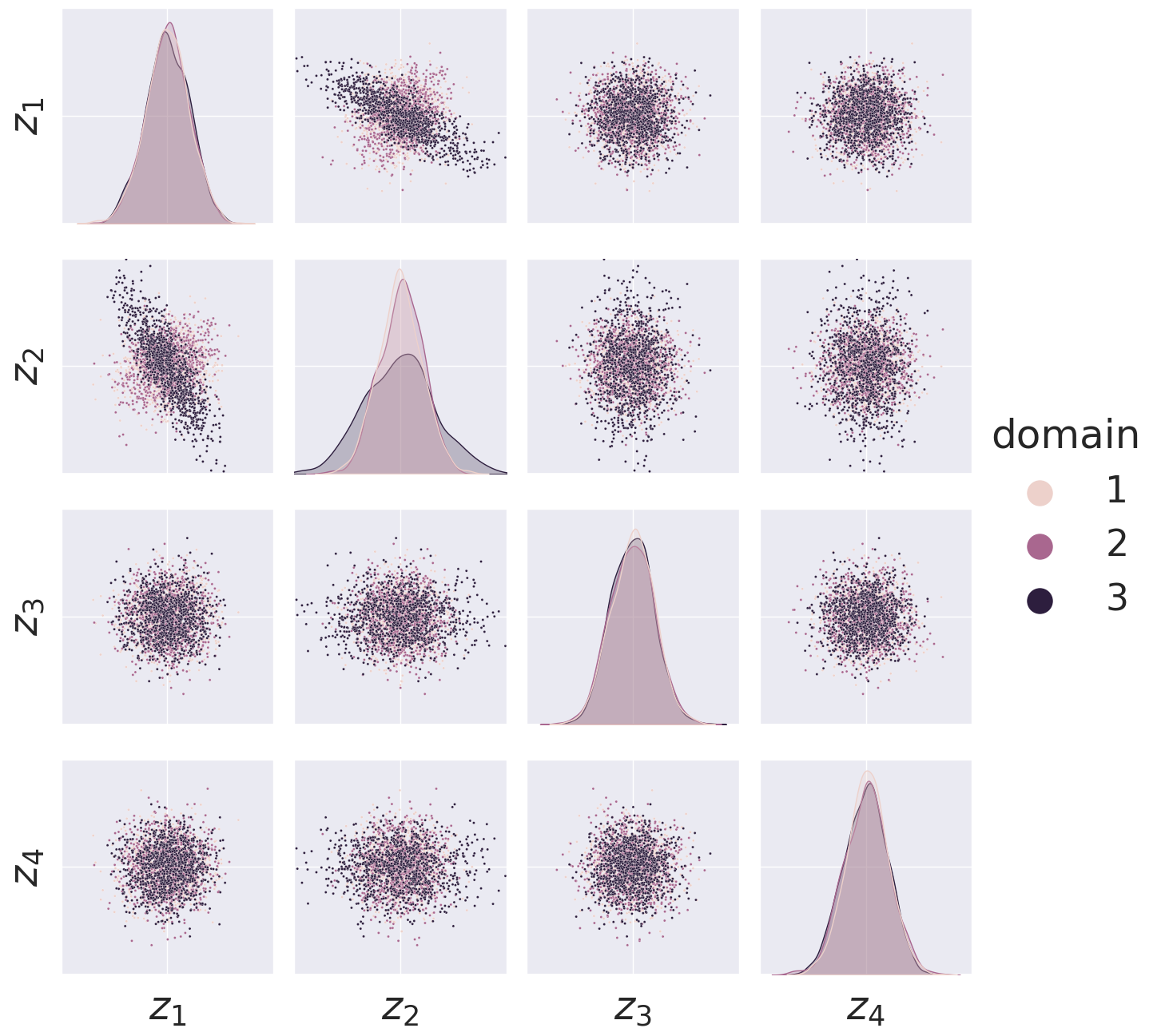}
         \caption{Step 1: $f^{(1)}= f^{-1}_1 \circ f_d$}
     \end{subfigure}
              \begin{subfigure}[t]{0.45\textwidth}
\includegraphics[width=\textwidth]{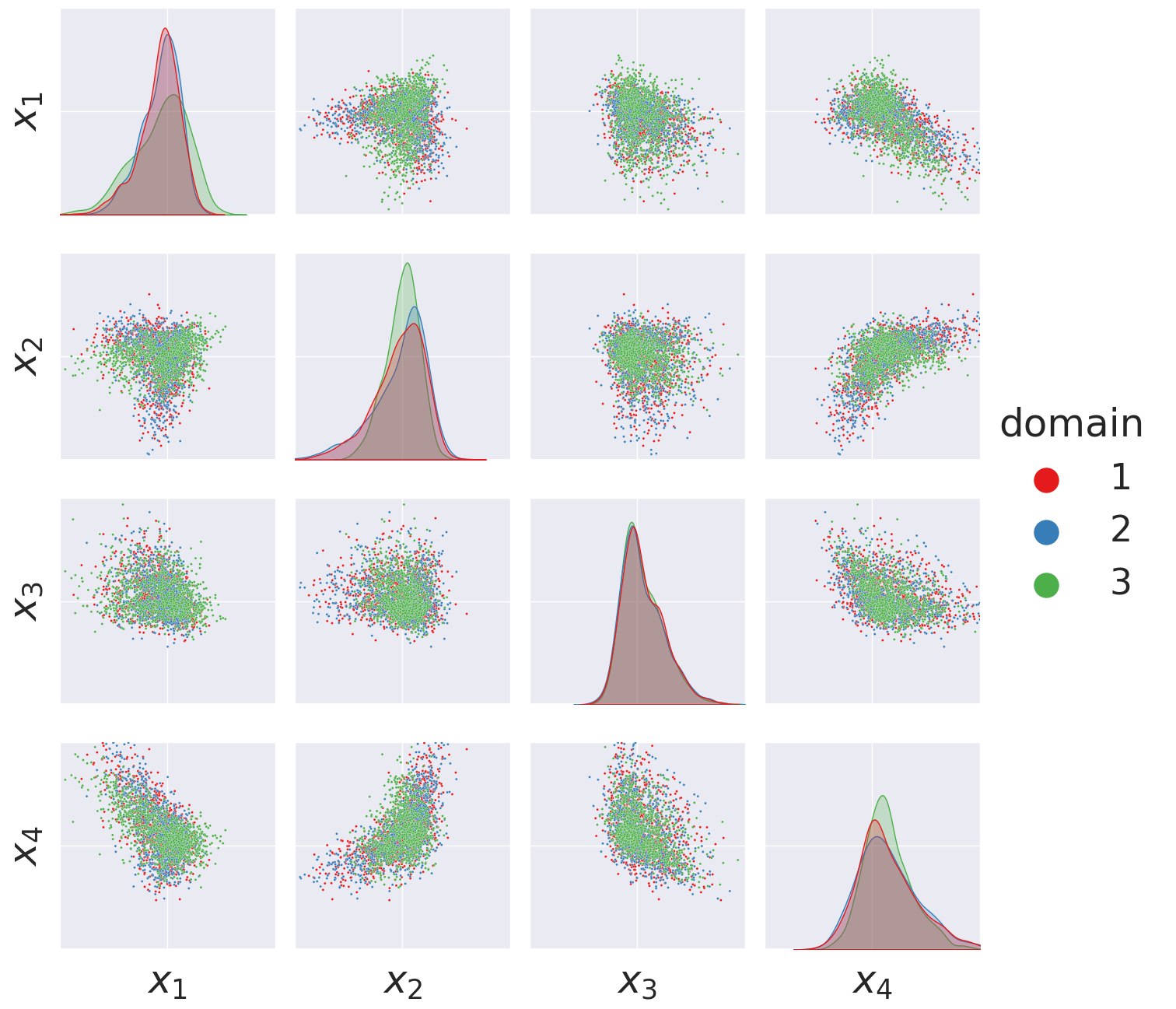}
        \caption{Step 1: $g^{(1)}=g \circ f^{-1}$}
     \end{subfigure}
              \begin{subfigure}[t]{0.45\textwidth}
\includegraphics[width=\textwidth]{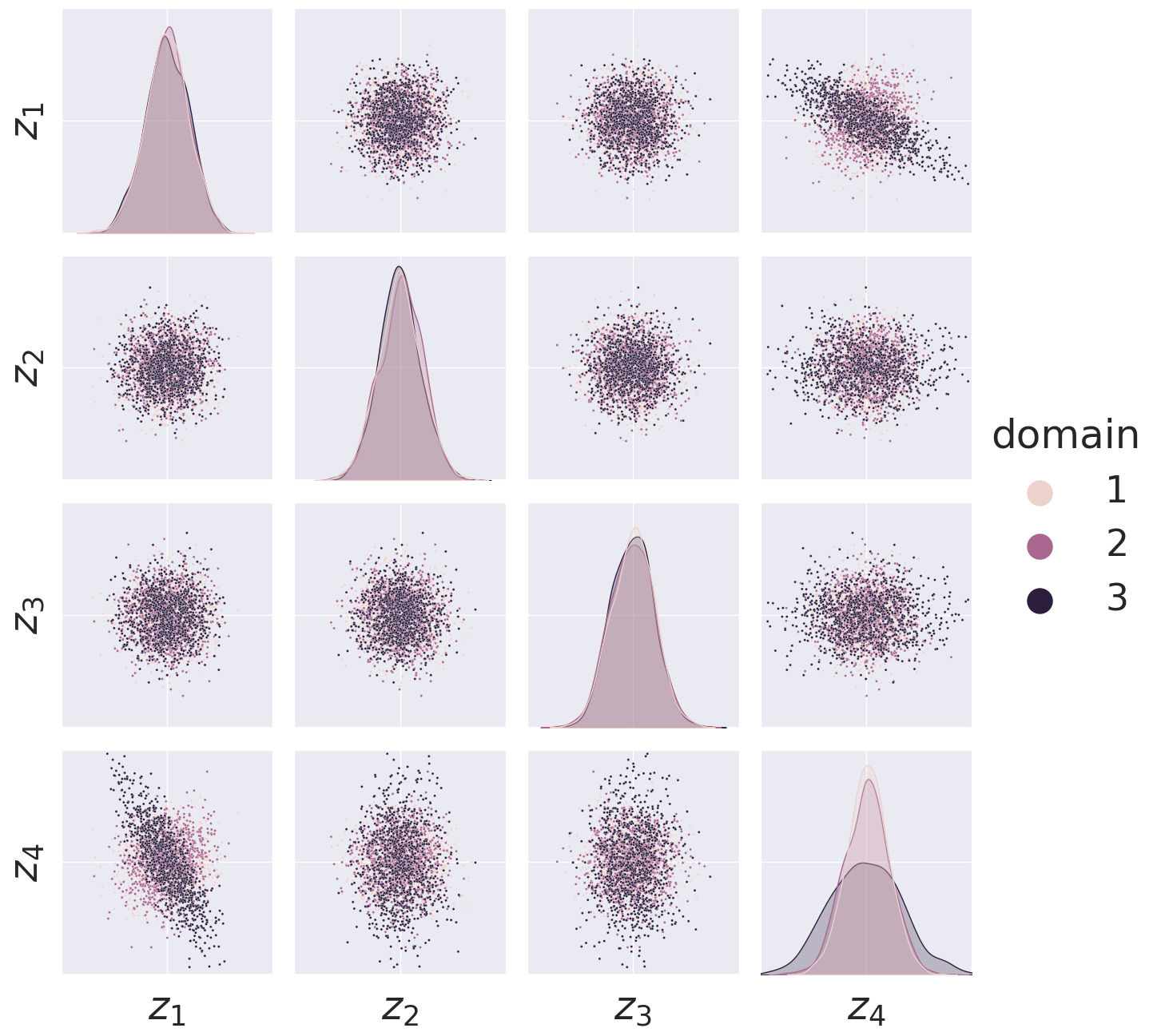}
         \caption{Step 2: $f^{(2)}= h_{1\leftrightarrow 3}\circ f^{-1}_1 \circ f_d \circ h_{1\leftrightarrow 3} $ }
     \end{subfigure}
              \begin{subfigure}[t]{0.45\textwidth}
\includegraphics[width=\textwidth]{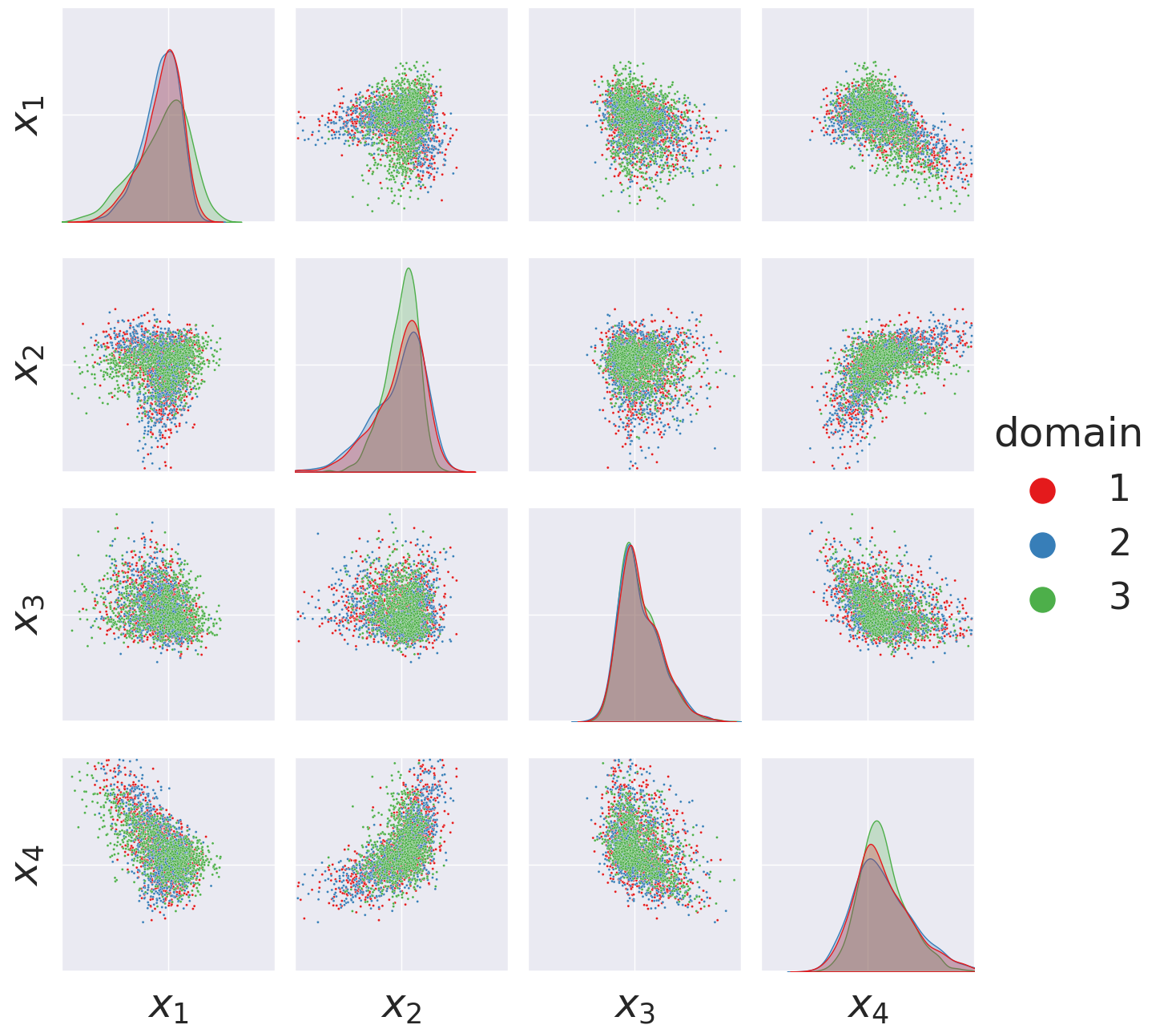}
         \caption{Step 2: $g^{(2)}= g \circ f^{-1} \circ h_{1\leftrightarrow 3}$}
     \end{subfigure}
        \caption{An illustration of the existence of a distributionally and counterfactually equivalent model in canonical form when $\ndim=4$ and $\intset=\{2\}$.
        $h_{1\leftrightarrow 3}$ represents a swapping matrix.
        $g^{(2)} \circ f^{(2)} $ is one of the caonical model we try to find.
        Note that the observed distributions in the right column are always the same while the latent distributions on the left change.
        In particular, the canonical ILD model on the bottom left has independent distributions for the first three variables and is only the non-identity on the last node.
        }
        \label{fig-app:ills-scm-cano}
\end{figure*} 
\begin{figure*}[!h]
     \centering
         \begin{subfigure}[t]{0.32\textwidth}
\includegraphics[width=\textwidth]{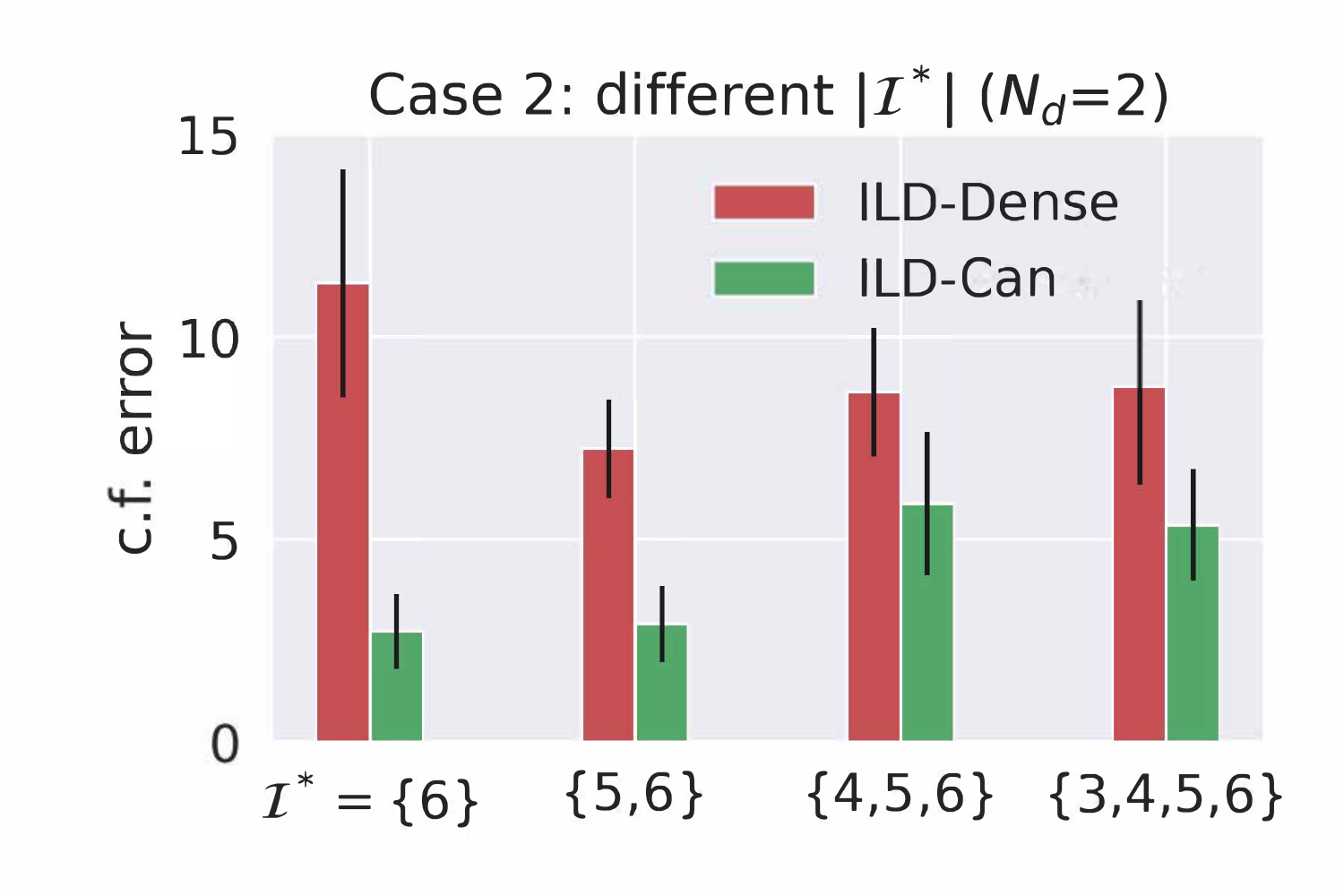}
         \caption{$\ndomains = 2$}
     \end{subfigure}
              \begin{subfigure}[t]{0.32\textwidth}
\includegraphics[width=\textwidth]{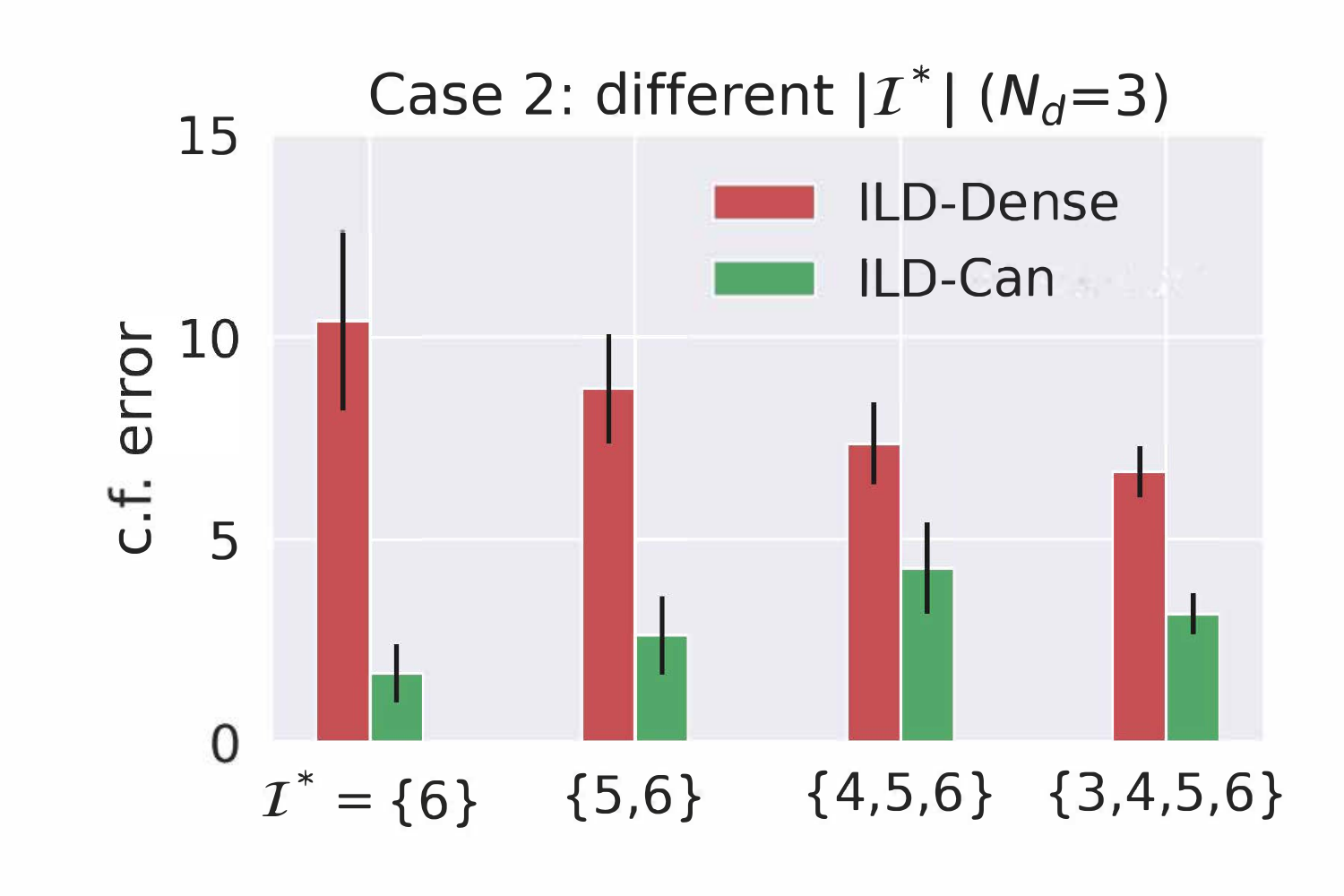}
         \caption{$\ndomains = 3$}
     \end{subfigure}
                  \begin{subfigure}[t]{0.32\textwidth}
\includegraphics[width=\textwidth]{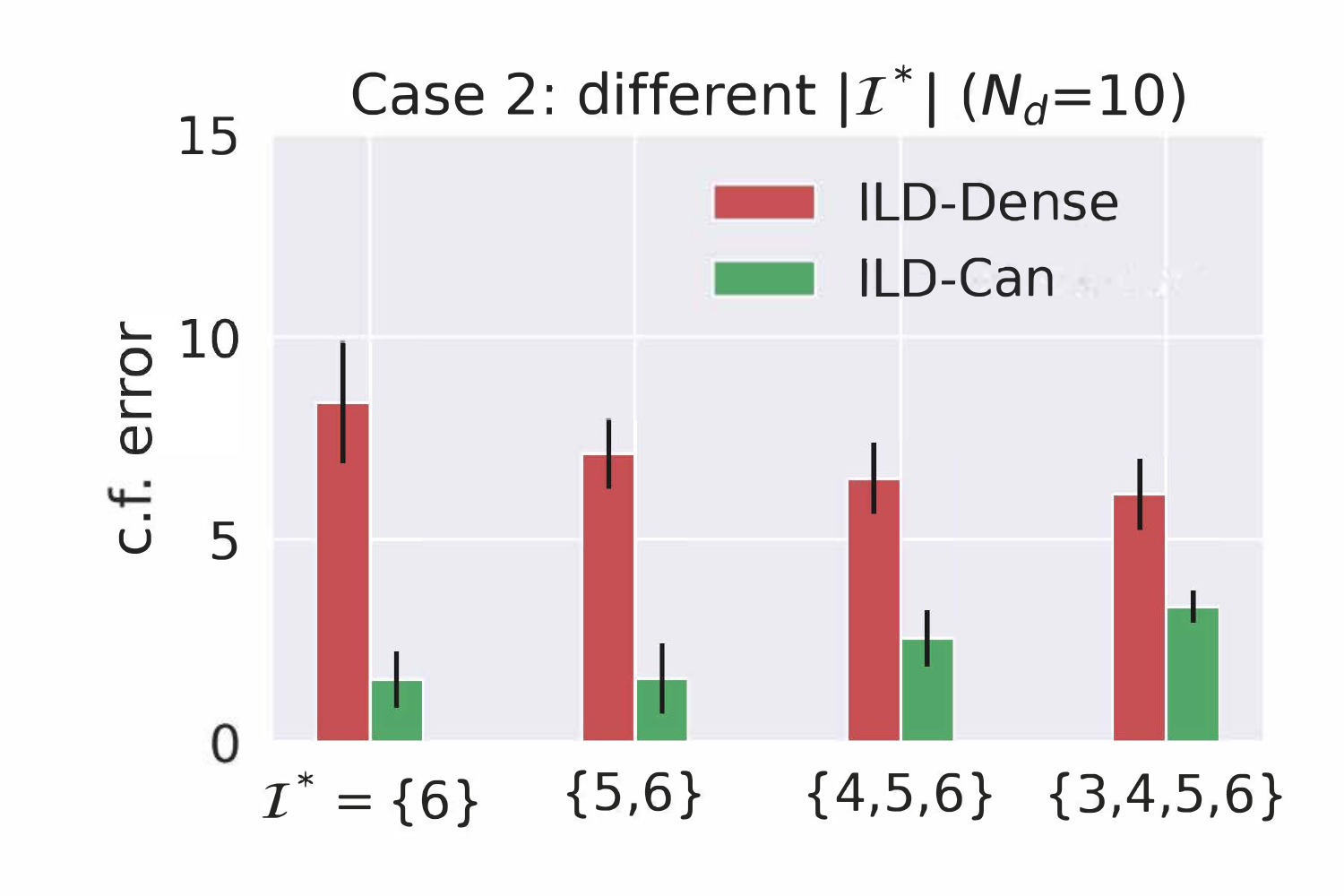}
         \caption{$\ndomains = 10$}
     \end{subfigure}
        \caption{Case 2: Test counterfactual error with different $|\intset^*|$ and fixed $|\intset|=2$.
        The performance of \fspa gets worse as the dataset becomes less sparse.
        But it is still better than \fdense.
        Note that when $|\intset|=2$ and $\intset^*=\{6\}$, the ground truth canonical model is still a subset of the models we search over.
        }
        \label{fig-app:case3-change-data}
\end{figure*}

\begin{figure*}[!h]
     \centering
         \begin{subfigure}[t]{0.32\textwidth}
\includegraphics[width=\textwidth]{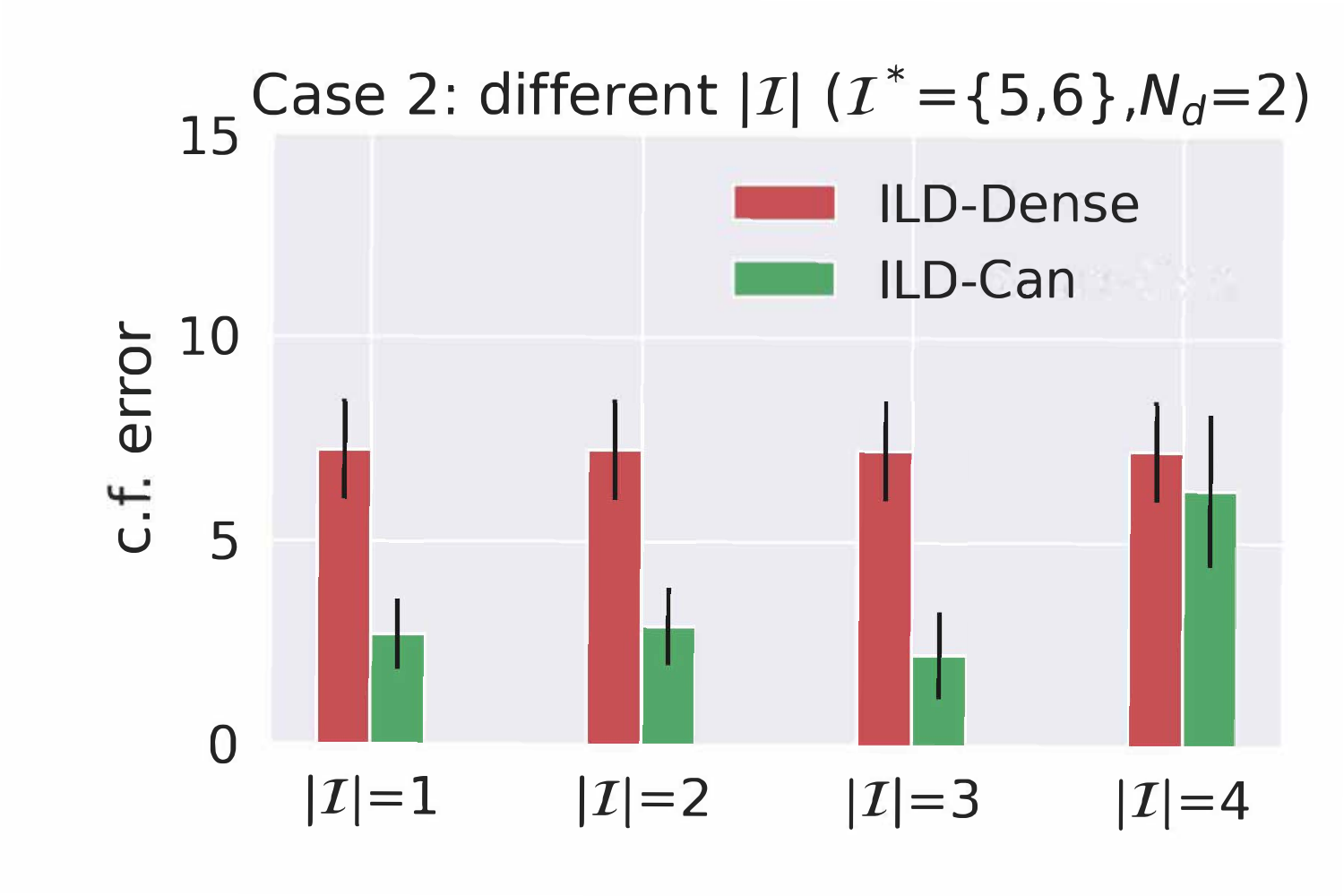}
         \caption{$\ndomains = 2$}
     \end{subfigure}
              \begin{subfigure}[t]{0.32\textwidth}
\includegraphics[width=\textwidth]{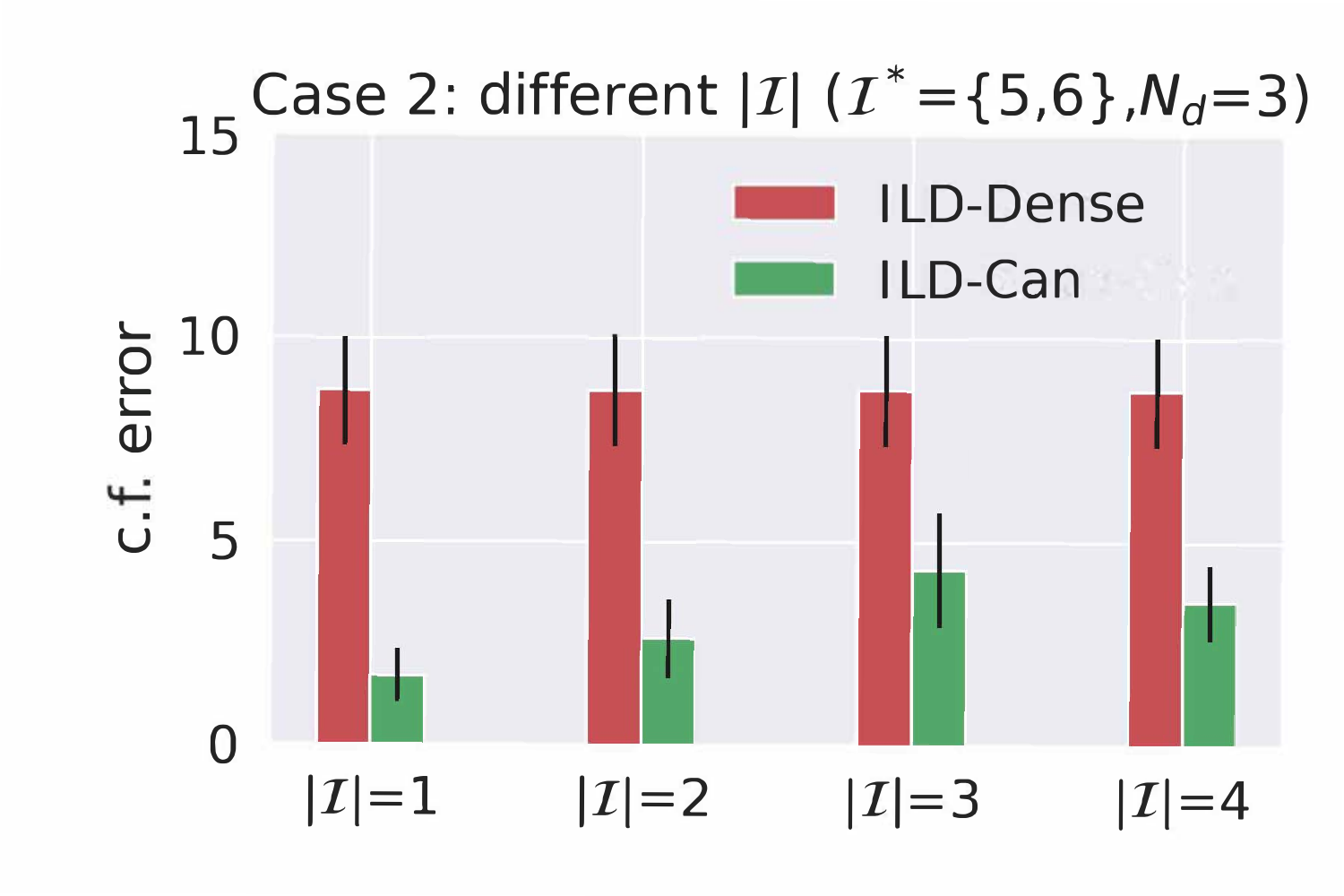}
         \caption{$\ndomains = 3$}
     \end{subfigure}
                  \begin{subfigure}[t]{0.32\textwidth}
\includegraphics[width=\textwidth]{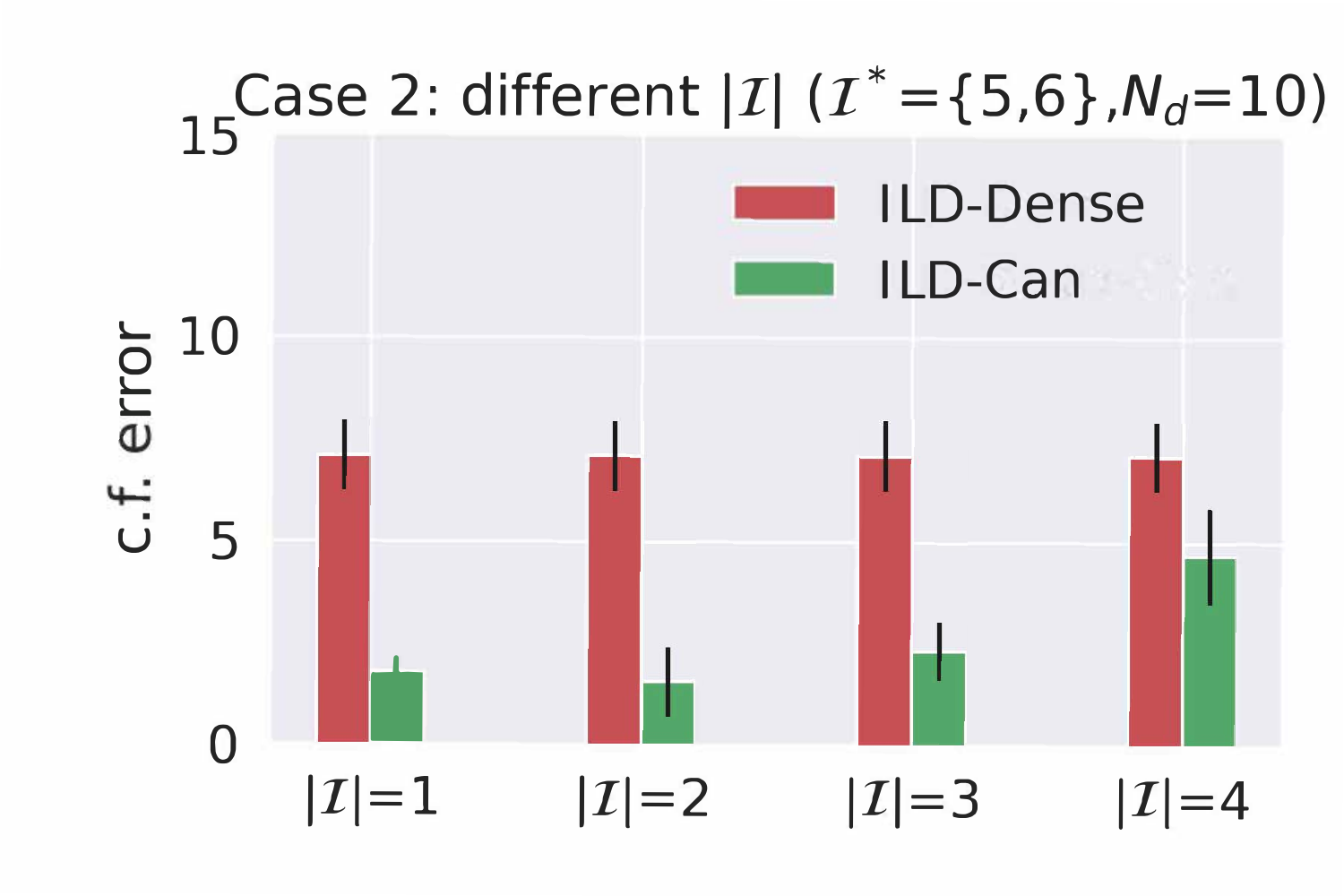}
         \caption{$\ndomains = 10$}
     \end{subfigure}
        \caption{Case 2: Test counterfactual error with different $|\intset|$ and fixed $\intset^*$.
        The performance of $\fspa$ approaches to that of \fdense as we increase $|\intset|$.
        }
        \label{fig-app:case3-change-model}
\end{figure*}

\begin{figure*}[!h]
     \centering
         \begin{subfigure}[t]{0.32\textwidth}
\includegraphics[width=\textwidth]{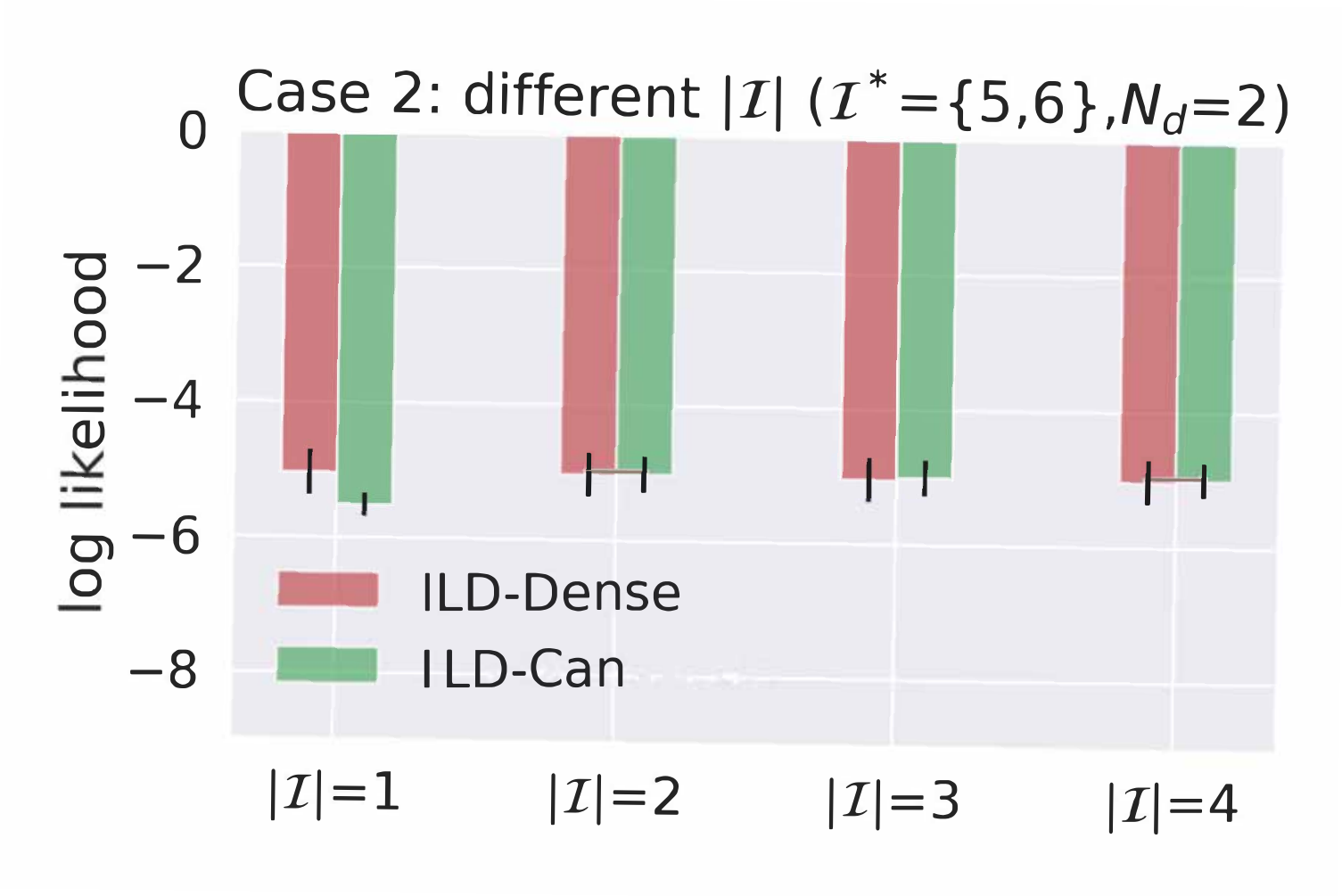}
         \caption{$\ndomains = 2$}
     \end{subfigure}
              \begin{subfigure}[t]{0.32\textwidth}
\includegraphics[width=\textwidth]{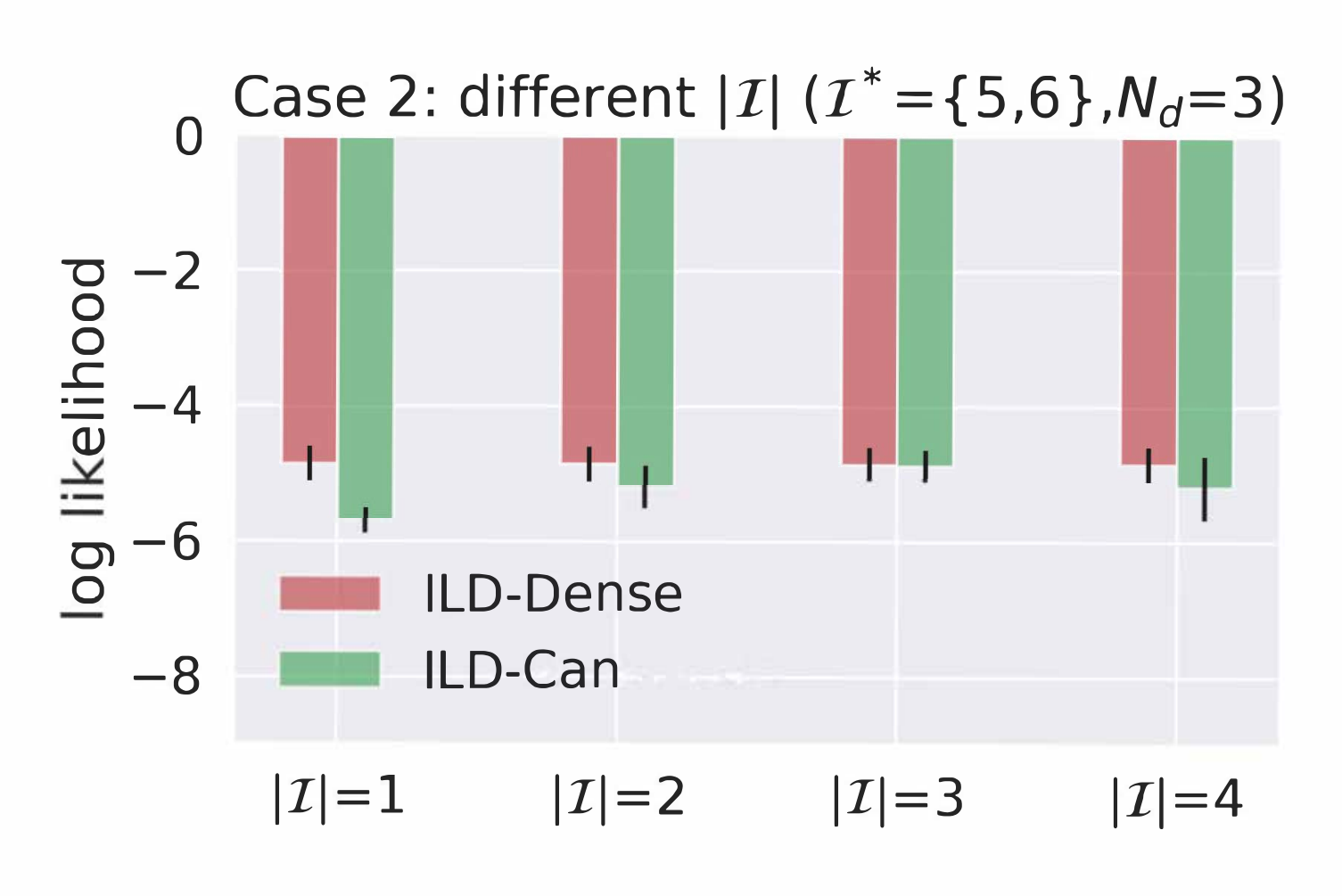}
         \caption{$\ndomains = 3$}
     \end{subfigure}
                  \begin{subfigure}[t]{0.32\textwidth}
\includegraphics[width=\textwidth]{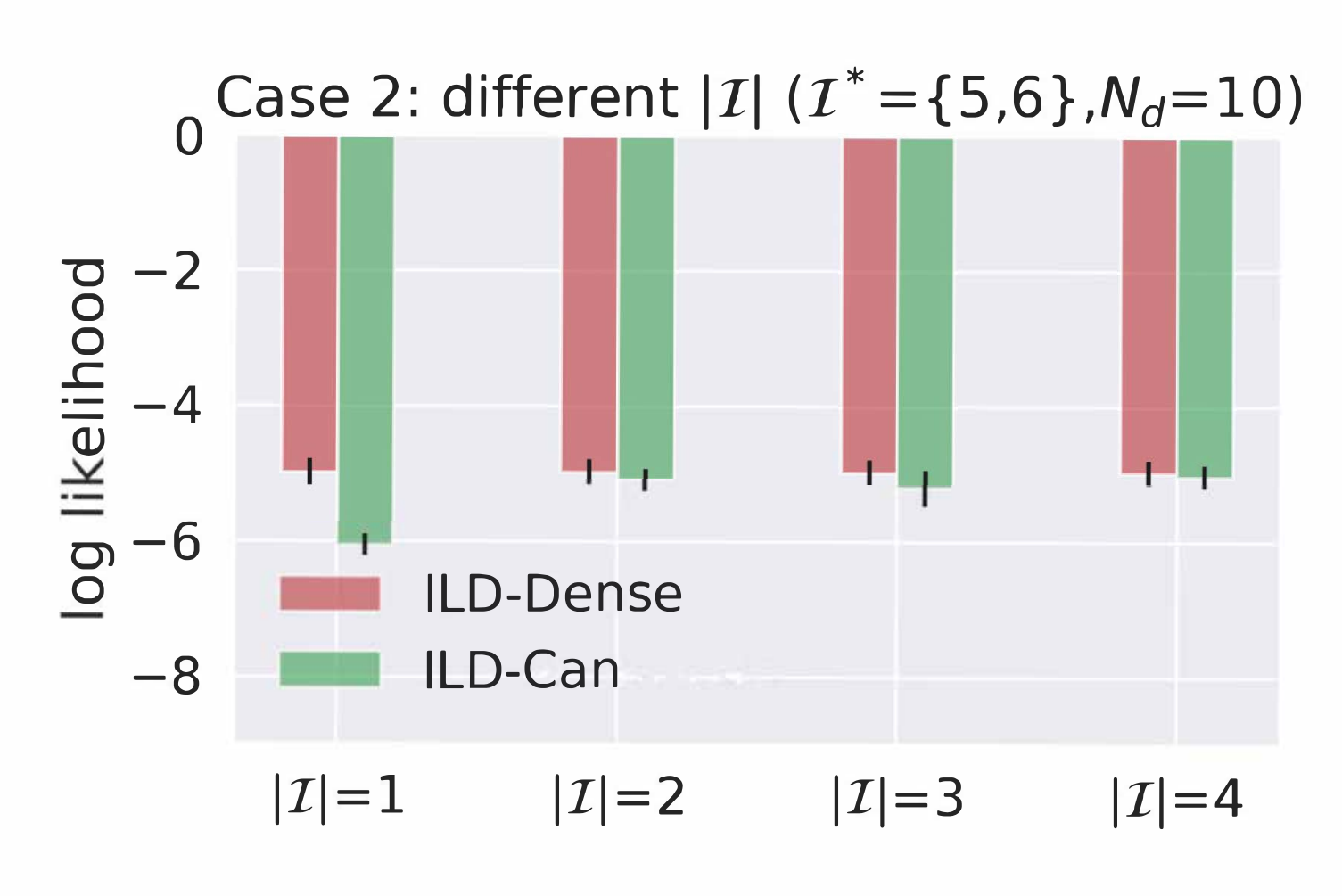}
         \caption{$\ndomains = 10$}
     \end{subfigure}
        \caption{Case 2: Lowest validation log likelihood with different $|\intset|$ and fixed $\intset^*$.
        When $|\intset|=1$, there is a more significant gap between \fspa and \fdense with all $\ndomains$ which indicates \fspa might fail to fit the observed distribution.
        }
        \label{fig-app:case3-change-model-like}
\end{figure*} 
\clearpage
\subsection{Simulated Experiment with more complicated structures}\label{app-sec:flow-simulated-results}
As an extended study of our simulated experiment, we implement a more complicated observation function for the ground truth $g^*$.
We build $g^*$ based on RealNVP \citep{dinh2016density}, where $g^*$ is composed of four sequences of affine coupling layers followed by permutation. 
Following the notation used in \Cref{app-sec:simulated-exp-detail}, the ground truth ILD now takes the form $g\circ f(\epsvec) = \operatorname{Permutation}(\operatorname{AffineCoupling}( \dots \operatorname{Permutation}(\operatorname{AffineCoupling}( \operatorname{LeakyReLU}\left( \xvec\right) + \bvec)) \dots ))$.

We first investigate the setting where our model $g$ exactly matches the ground truth ILD, i.e. we use the same number of affine coupling layers for the ground truth $g^*$ and our model $g$.
In \Cref{fig-reb:nvp_nvp_n3_d4_c1}, we observe that, similar to \Cref{fig:intpos_ndomain3}, \frelaxed outperforms \fdense no matter which nodes are intervened when $|\intset| = |\intset^*|$.
We then test our model in the setting where $|\intset| \neq |\intset^*|$, and as shown in \Cref{fig-reb:nvp_nvp_n3_d4_c2}, the performance of \frelaxed drops as we $|\intset|$ grows larger than $|\intset^*|$, but it is always better than \fdense.
This trend is similar to what we observed in \Cref{fig:intsizek_ndomain3}.
We then test when the \frelaxed architecture does not match \fdense.
To do this, we increased the number of affine coupling layers in the model $g$ to be 8, and in \Cref{fig-reb:nvp_nvp_n3_d8_c1} and \Cref{fig-reb:nvp_nvp_n3_d8_c2}, we observe similar trends when comparing \fdense and \frelaxed in all cases.
This gives evidence that \frelaxed helps with domain counterfactual estimation even when the structures of the model and ground truth do not exactly match.

To further our investigation with mismatching models, we set $g$ to be a VAE-based model composed of three fully connected layers while keeping the ground truth the same as above.
To keep the latent dimension relatively comparable, we set the ground truth dimension to be 10 and the latent dimension of VAE to be 6.
In \Cref{fig-reb:nvp_vae_n3_c1} and \Cref{fig-reb:nvp_vae_n3_c2}, we observe a similar trend as that with flow model.
In \Cref{fig-reb:nvp_vae_dist_c2}, only when $|\intset|<|\intset^*|$, there is a significant difference in the pursuit of distribution equivalence.
This again supports our finding that the distribution equivalence performance could be a good indicator of choosing $|\intset|$ in practice.

\begin{figure*}[!t]
     \centering
         \begin{subfigure}[t]{0.4\textwidth}
\includegraphics[width=\textwidth]{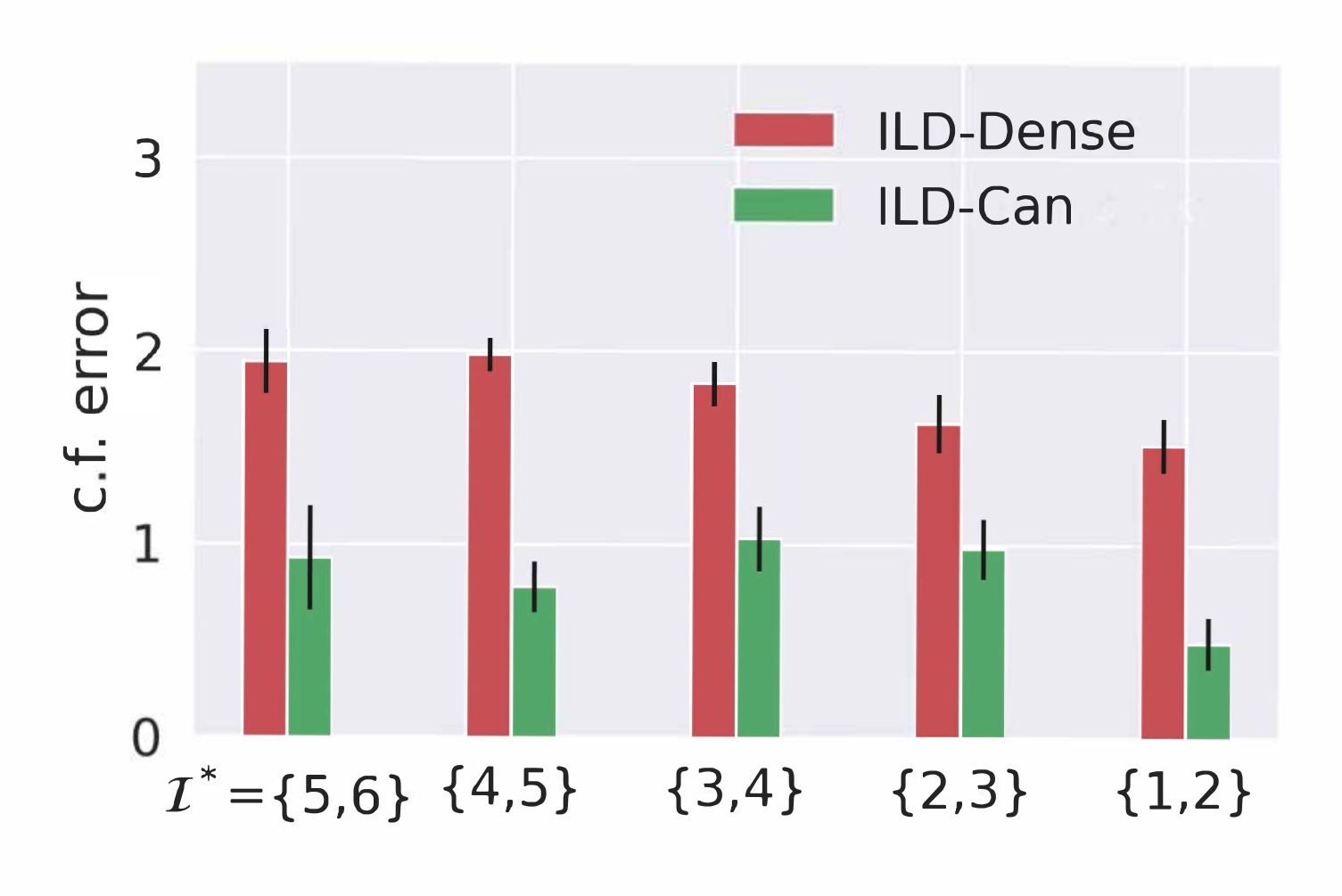}
         \caption{With knowledge of $|\intset^*|$ and $|\intset^*| =|\intset|=2$}
                \label{fig-reb:nvp_nvp_n3_d4_c1}
     \end{subfigure}
     \hspace{1em}
              \begin{subfigure}[t]{0.4\textwidth}
\includegraphics[width=\textwidth]{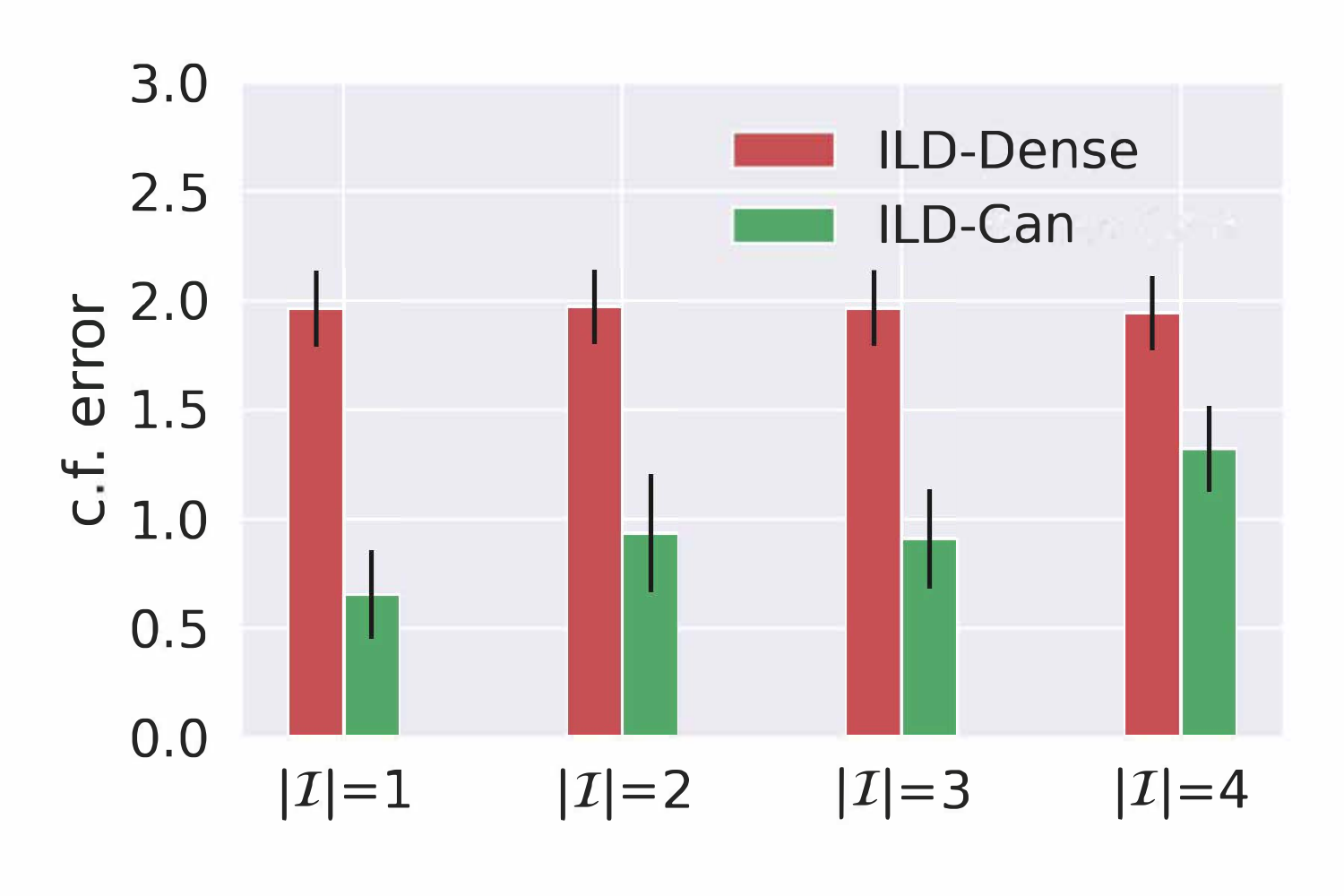}
         \caption{Without knowledge of $|\intset^*|$ and $\intset^* = \{5,6\}$}
          \label{fig-reb:nvp_nvp_n3_d4_c2}
     \end{subfigure}
        \caption{Simulated experiment results (latent dimension $\ndim=6$, number of domains $\ndomains=3$) where observation function of both the model and ground truth $g$ are implemented based on RealNVP.
        The model $g$ consists of 4 affine coupling layers and the ground truth $g$ consists of 4 affine coupling layers.
        Results are averaged over 10 runs with different ground truth SCMs and the error bar represents the standard error.  
        }
        \label{fig-reb:nvp_nvp_d4_c1}
\end{figure*}

\begin{figure*}[!t]
     \centering
         \begin{subfigure}[t]{0.4\textwidth}
\includegraphics[width=\textwidth]{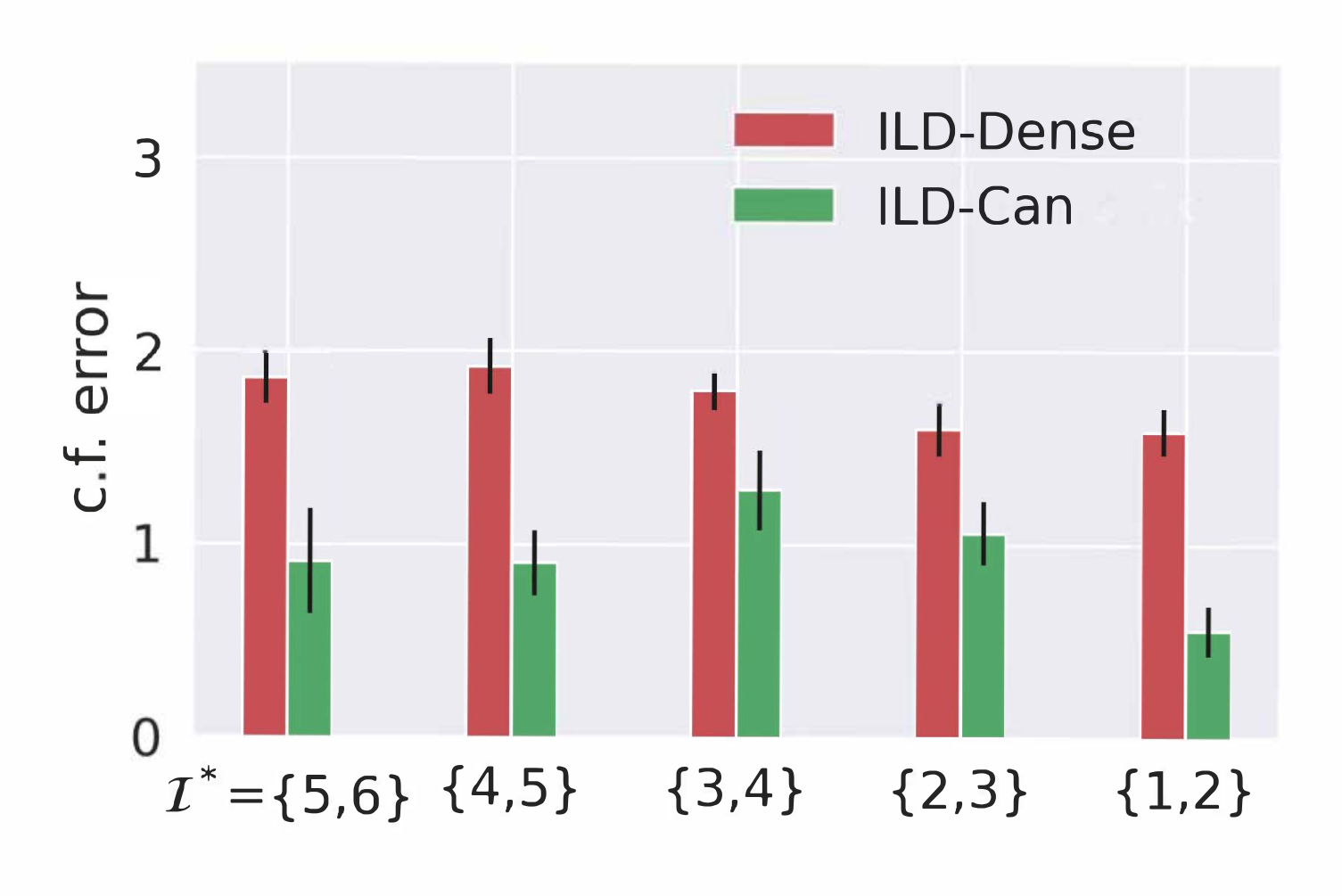}
         \caption{With knowledge of $|\intset^*|$ and $|\intset^*| =|\intset|=2$}
                \label{fig-reb:nvp_nvp_n3_d8_c1}
     \end{subfigure}
     \hspace{1em}
              \begin{subfigure}[t]{0.4\textwidth}
\includegraphics[width=\textwidth]{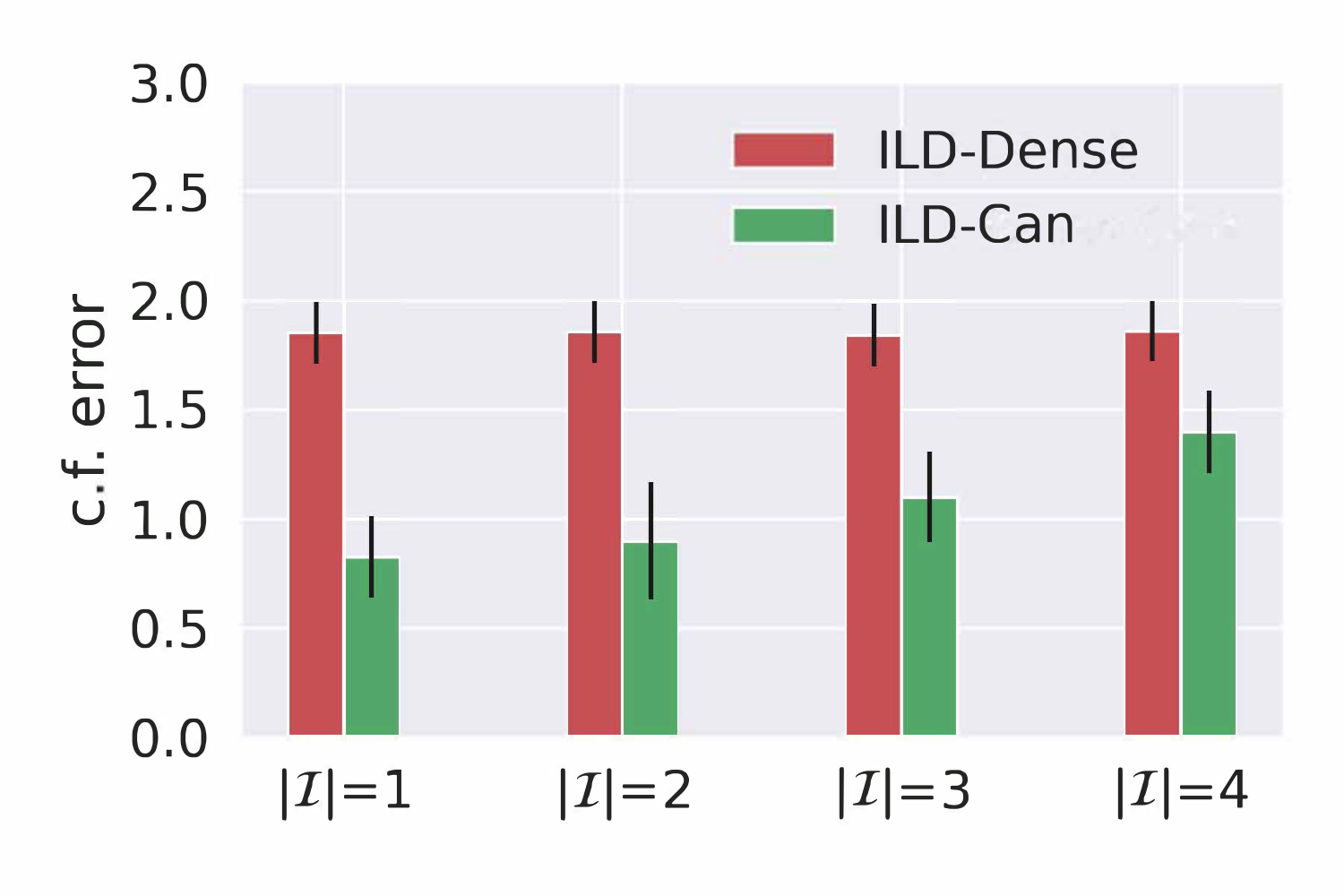}
         \caption{Without knowledge of $|\intset^*|$ and $\intset^* = \{5,6\}$}
          \label{fig-reb:nvp_nvp_n3_d8_c2}
     \end{subfigure}
        \caption{Simulated experiment results (latent dimension $\ndim=6$, number of domains $\ndomains=3$) where the observation function of both the model and ground truth $g$ are implemented based on RealNVP. The model $g$ consists of 8 affine coupling layers and the ground truth $g$ consists of 4 affine coupling layers.
        Results are averaged over 10 runs with different ground truth SCMs and the error bar represents the standard error.  
        }
        \label{fig-reb:nvp_nvp_d4_c1}
\end{figure*}

\begin{figure*}[!t]
     \centering
         \begin{subfigure}[t]{0.4\textwidth}
\includegraphics[width=\textwidth]{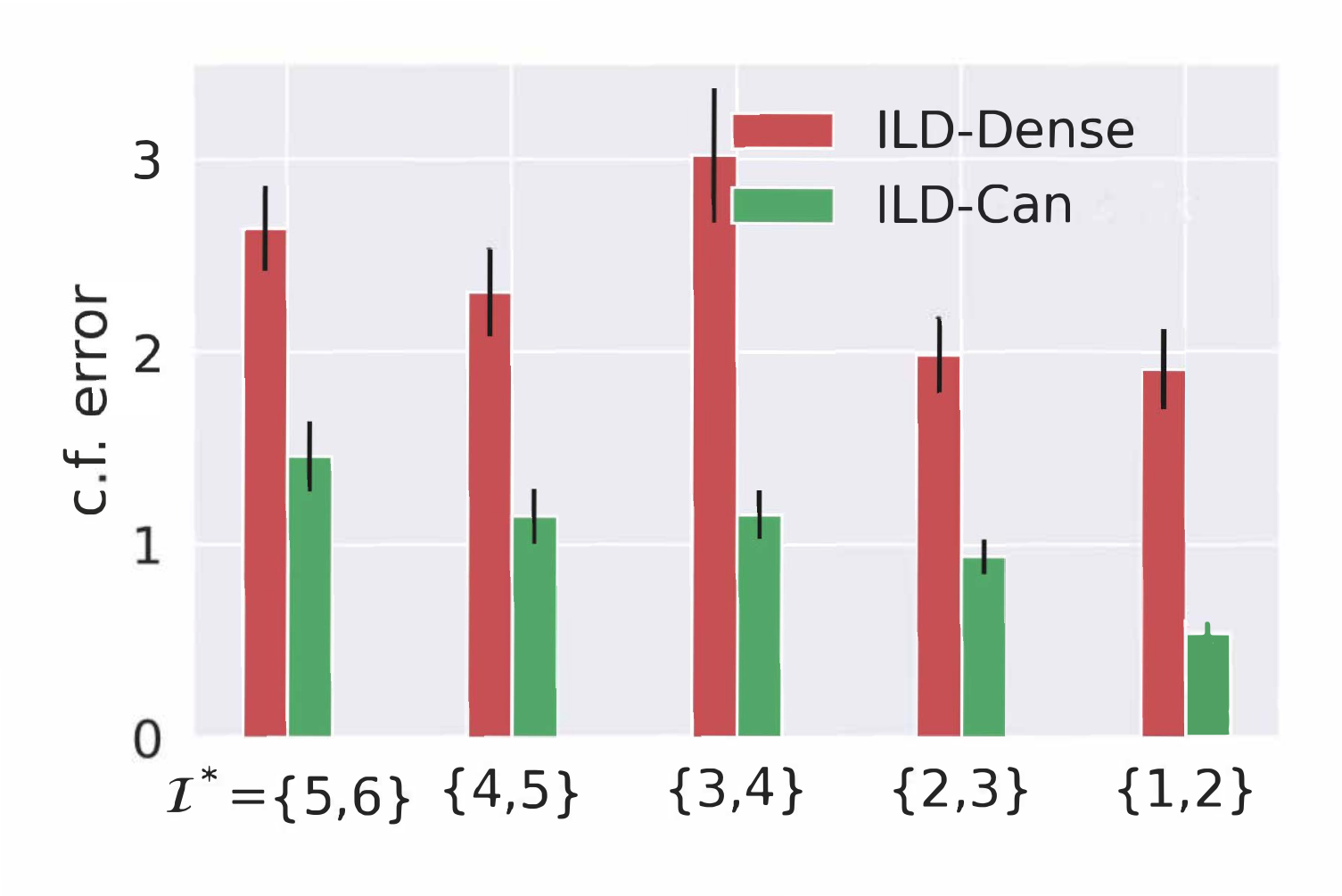}
         \caption{With knowledge of $|\intset^*|$ and $|\intset^*| =|\intset|=2$}
                \label{fig-reb:nvp_vae_n3_c1}
     \end{subfigure}
     \hspace{1em}
              \begin{subfigure}[t]{0.4\textwidth}
\includegraphics[width=\textwidth]{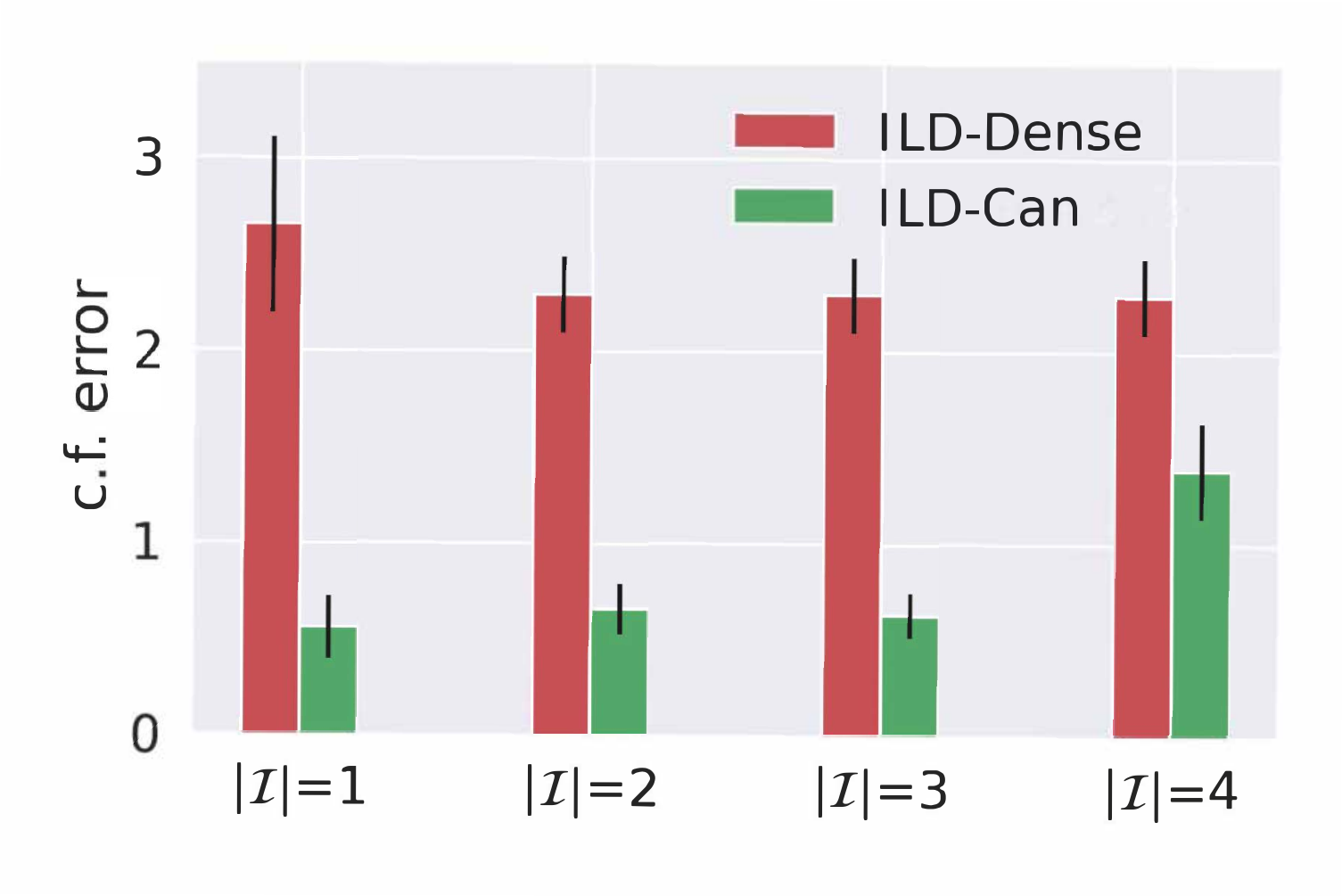}
         \caption{Without knowledge of $|\intset^*|$ and $\intset^* = \{5,6\}$}
         \label{fig-reb:nvp_vae_n3_c2}
     \end{subfigure}
        \caption{Simulated experiment results (in the ground truth latent dimension $\ndim=10$ and in the VAE model $\ndim=6$, number of domains $\ndomains=3$) where
the observation function of the ground truth $g$ consists of 4 affine coupling layers and 
        the observation function of the model is a VAE composed of three fully connected layers.
        Results are averaged over 10 runs with different ground truth SCMs and the error bar represents the standard error.  
        }
        \label{fig-reb:nvp_nvp_d4_c1}
\end{figure*}

\begin{figure*}[!t]
     \centering
              \begin{subfigure}[t]{0.4\textwidth}
\includegraphics[width=\textwidth]{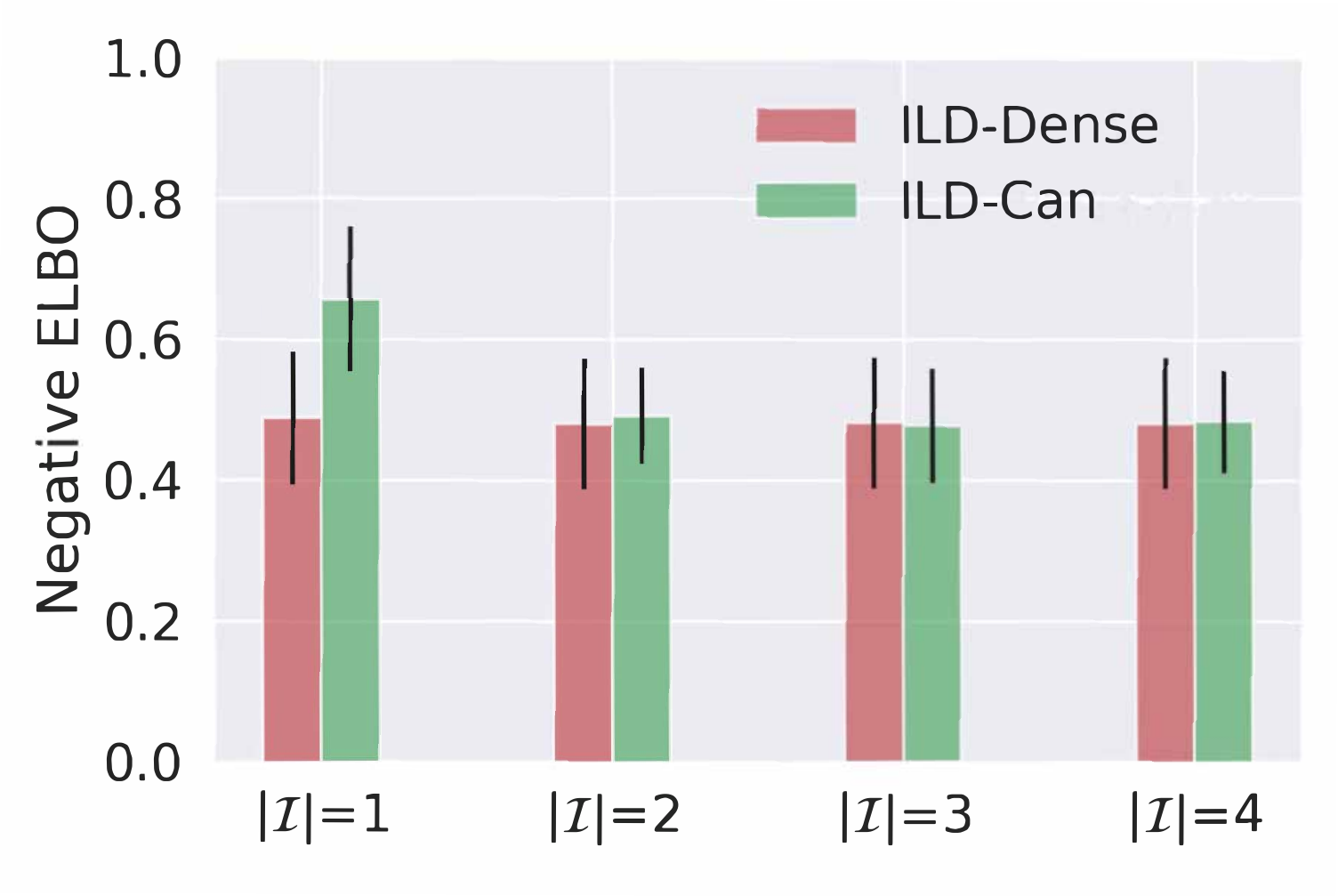}
         \label{fig-reb:nvp_vae_dist_n3_c2}
     \end{subfigure}
        \caption{Lowest validation negative ELBO (in the ground truth latent dimension $\ndim=10$ and in the VAE model $\ndim=6$, number of domains $\ndomains=3$) without knowledge of $|\intset^*|$ and $\intset^* = \{8,9\}$.
        Results are averaged over 10 runs with different ground truth SCMs and the error bar represents the standard error.  
        The number here corresponds to the validation negative ELBO of the checkpoints we use to report test counterfactual error.
        }
        \label{fig-reb:nvp_vae_dist_c2}
\end{figure*}

\FloatBarrier

\section{Image Counterfactual Experiments}

\subsection{Dataset Descriptions}\label{app-sec:dataset-detail}
\paragraph{Rotated MNIST and FashionMNIST}
We split the MNIST trainset into 90\% training data, 10\% validation, and for testing we use the MNIST test set.
Within each dataset, we create the domain-specific data by replicating all samples and applying a fixed $\theta_d$ counterclockwise rotation to  within that domain.
Specifically we generate data from 5 domains by applying rotation of $0\degree,15\degree,30\degree,45\degree,60\degree$.
For Rotated FashionMNIST, we use the same setup as the RMNIST setup, except we used the Fashion MNIST \citep{xiao2017fashion} dataset.
This dataset is structured similar to the MNIST dataset but is designed to require more complex modeling \citep{xiao2017fashion}.

\paragraph{3D Shapes} This is a dataset of 3D shapes that are procedurally generated from 6 independent latent factors: floor hue, wall hue, object hue, scale, shape, and orientation \citep{3dshapes18}.
In our experiment, we only choose samples with one fixed scale.
We then split the four object shapes into four separate domains and set the 10 object colors as the class label.
The causal graph for this dataset can be seen in \autoref{fig-app:mechanism-shape}, and following this, we should expect to see only the object shape change when the domain is changed.
Similar to the RMNIST experiment, we use 90\% of the samples for training and 10\% of the samples for validation.

\paragraph{Color Rotated MNIST (CRMNIST)} 
This is an extension of the RMNIST dataset which introduces a latent color variable whose parents are the latent domain-specific variable and latent class variable. 
Similar to RMNIST, the latent domain variable corresponds to the rotation of the given digit, except here $d_1 = 0\degree$ rotation, $d_2 = 90\degree$ rotation, and the class labels are restricted to digits $y \in \{0, 1, 2\}$.
For each sample, there is a 50\% chance the color is determined by the combination of class and digit label and a 50\% chance the color is randomly chosen.
For example if $\epsilon \sim \mathcal{N}(0,1)$, 
  \begin{equation*}
    f_{z_c}(y,d,\epsilon)=
    \begin{cases}
      \text{red}, & \text{if}\ y=0, d=1, \epsilon>0 \\
      \text{green}, & \text{if}\ y=0, d=2, \epsilon>0 \\
      \text{blue}, & \text{if}\ y=1, d=1, \epsilon>0\\
      \dots \ & \\
      \text{Random(red, green, blue, yellow, cyan, pink)}, & \text{if}\ \epsilon<0
    \end{cases}
  \end{equation*}
The causal graph for this dataset can be seen in \autoref{fig-app:mechanism-colorrmnist}.
Similar to the RMNIST experiment, we use 90\% of the samples for training and 10\% of the samples for validation.

\paragraph{Causal3DIdent Dataset}
This is a benchmark dataset from \citep{von2021self} which contains rendered images of realistic 3d objects on a colored background that contain hallmarks of natural environments (e.g. shadows, different lighting conditions, etc.) which are generated via a causal graph imposed over latent variables (the causal graph can be seen in \Cref{fig-app:mechanism-causal-ident}).
Similar to \citep{von2021self}, we chose the shape of the 3D object to be the class label, and we defined the background color as the domain label.
In the original dataset, the range of the background hue was $[\frac{-\pi}{2}, \frac{\pi}{2}]$ and to convert it to a binary domain variable, we binned the background hue values into bins $[\frac{-\pi}{2}, -0.8]$ and $[\frac{\pi}{2}, 0.8]$.
These ranges were chosen to be distinct enough that we can easily distinguish between the domains yet large enough to keep the majority of original samples in this altered dataset.
We split the 18k binned samples into 90\% training data and 10\% validation data for our experiment.

\subsection{Metrics}
\label{app-sec:image-counterfactual-metric-details}

Inspired by the work in \citet{Monteiro2023MeasuringAS}, we define four metrics (Effectiveness, Preservation, Composition, and Preservation) specifically for the image-based counterfactuals with latent SCMs. 
The key idea is to check if the correct latent information is changed when generating domain counterfactuals (\eg domain-specific information is changed, while all else is preserved). 
Since we don't have direct access to the ground truth value of latent variables (nor their counterfactual values), we use a domain classifier $h_{\textnormal{domain}}$ and class classifier $h_{\textnormal{\class}}$ to measure if the intended change has taken place. 

\textbf{Effectiveness}: The idea is to check if the domain-specific variables change as wish in the counterfactual samples.
$$
\mathbb{P}\left(h_{\textnormal{domain}}\left(\hat{x}_{d\rightarrow d^{\prime}}\right)=d^{\prime}\right)
$$

\textbf{Preservation}: This checks if the semantically meaningful content (i.e. the class information) that is independent of the domain is left unchanged while the domain is changed.
$$
\mathbb{P}\left(h_\textnormal{class}\left(\hat{x}_{d \rightarrow d^{\prime}}\right)=y\right)
$$

\textbf{Composition}: We check if our model is invertible on the image manifold, thus satisfying the pseudoinvertibility criteria.
$$
\mathbb{P}\left(h_\textnormal{class}\left(\hat{x}_{d \rightarrow d}\right)=y\right)
$$

\textbf{Reversibility}: This metric checks if our model is cycle-consistent, or in other words, checking if the mapping between the observation and the counterfactual is deterministic.
$$
\mathbb{P}\left(h_\textnormal{class}\left(\hat{x}_{d \rightarrow d^{\prime}\rightarrow d}\right)=y\right)
$$
For the domain classifier $h_{\textnormal{domain}}$ and \class classifier $h_{\textnormal{\class}}$, we used pretrained ResNet18 models \citep{he2016deep} that were fine-tuned by classifying \emph{clean} samples (i.e. not counterfactuals) for 25 epochs with the Adam optimizer, a learning rate of 1e-3, and a random data augmentation with probabilities: $50\%$: no augmentation, $17\%$: sharpness adjustment (factor=2), $17\%$: gaussian blur (kernel size=3), $17\%$: gaussian blur (kernel size=5). 
A reminder that for MNIST/FMNIST/ColorRotated MNIST, the domain is rotation and the label is the original label of images (digits/type of clothes), for 3D shapes, the domain is object shape and the label is object color, and for Causal3DIdent, the domain is hue of the background and the label is the object shape.

\subsection{Causal Interpretation of our experiments}\label{app-sec:causal-interpret}
\begin{figure}[ht!]
     \centering
    \begin{subfigure}[t]{0.35\textwidth}
\begin{tikzpicture}
        \centering
        \node[state] (Zd) {$Z_{\textnormal{rot}}$};
        \node[state, right of=Zd] (Zy) {$Z_y$};
        \node[state, right of=Zy] (Zres) {$Z_{\text{res}}$};
        \node[state, below of=Zy] (X) {X};
        \node[state, below of=Zy] (X) {X};
        \node[state, above of=Zd, shape=rectangle] (D) {d};
        
        \path (D) edge [] node [above] {} (Zd) ;
        \path (Zy) edge [] node [above] {} (Zres) ;
        \path (Zd) edge [] node [above] {} (X) ;
        \path (Zy) edge [] node [above] {} (X) ;
        \path (Zres) edge [] node [above] {} (X) ;
        \end{tikzpicture}
        \caption{RMNIST/RFMNIST}
        \label{fig-app:mechanism-rmnist}
    \end{subfigure}
    \begin{subfigure}[t]{0.35\textwidth}
\begin{tikzpicture}
        \centering
        \node[state] (Zd) {$Z_\textnormal{rot}$};
        \node[state, right of=Zd] (Zy) {$Z_y$};
        \node[state, below of=Zd] (Zc) {$Z_c$};
        \node[state, right of=Zy] (Zres) {$Z_{\text{res}}$};
        \node[state, below of=Zy] (X) {X};
        \node[state, below of=Zy] (X) {X};
        \node[state, above of=Zd, shape=rectangle] (D) {d};
        
        \path (D) edge [] node [above] {} (Zd) ;
        \path (Zy) edge [] node [above] {} (Zres) ;
        \path (Zd) edge [] node [above] {} (Zc) ;
        \path (Zy) edge [] node [above] {} (Zc) ;
        \path (Zc) edge [] node [above] {} (X) ;
        \path (Zres) edge [] node [above] {} (X) ;
        \end{tikzpicture}
        \caption{ColorRMNIST}
        \label{fig-app:mechanism-colorrmnist}
    \end{subfigure}
    \begin{subfigure}[t]{0.6\textwidth}
        \centering
        \begin{tikzpicture}
        \centering
        \node[state,minimum size=1.5cm] (Zshape) {$Z_{\textnormal{shape}}$};
        \node[state, right=0.5cm of Zshape,minimum size=1.5cm] (Zho){$Z_{\textnormal{hue\_obj}}$};
        \node[state,right=0.5cm of Zho,minimum size=1.5cm] (Zhf){$Z_{\textnormal{hue\_floor}}$};
        \node[state,right=0.5cm of Zhf,minimum size=1.5cm] (Zhw){$Z_{\textnormal{hue\_wall}}$};
        \node[state,right=0.5cm of Zhw,minimum size=1.5cm] (Zori){$Z_{\textnormal{orient}}$};
        \node[state,minimum size=1.5cm,above=0.5cm of Zshape, shape=rectangle] (D) {$d$};
        \node[state,minimum size=1.5cm,below=0.5cm of Zhf] (X) {$X$};
        \path (D) edge [] node [above] {} (Zshape) ;
        \path (Zshape) edge [] node [above] {} (X) ;
        \path (Zhf) edge [] node [above] {} (X) ;
        \path (Zho) edge [] node [above] {} (X) ;
        \path (Zhw) edge [] node [above] {} (X) ;
        \path (Zori) edge [] node [above] {} (X) ;
\end{tikzpicture}
        \caption{3D Shapes}
        \label{fig-app:mechanism-shape}
    \end{subfigure}
    \begin{subfigure}[t]{0.6\textwidth}
        \centering
        \begin{tikzpicture}
        \centering
        \node[state,minimum size=1.5cm] (Zshape) {$Z_{\textnormal{pos\_spl}}$};
        \node[state, right=0.5cm of Zshape,minimum size=1.5cm] (Zy){$Z_y$};
        \node[state,right=0.5cm of Zy,minimum size=1.5cm] (Zhbg){$Z_{\textnormal{hue\_bg}}$};
        \node[state,right=0.5cm of Zhbg,minimum size=1.5cm] (Zhspl){$Z_{\textnormal{hue\_spl}}$};
        \node[state,minimum size=1.5cm,above=0.5cm of Zhbg, shape=rectangle] (D) {$d$};
        \node[state,minimum size=1.5cm,below=0.5cm of Zshape] (Zpobj) {$Z_{\textnormal{pos\_obj}}$};
        \node[state,minimum size=1.5cm,below=0.5cm of Zy] (Zrobj) {$Z_{\textnormal{rot\_obj}}$};
        \node[state,minimum size=1.5cm,below=0.5cm of Zhbg] (Zhobj) {$Z_{\textnormal{hue\_obj}}$};
        
        \path (D) edge [] node [above] {} (Zhbg) ;
        \path (Zshape) edge [] node [above] {} (Zpobj) ;
        \path (Zy) edge [] node [above] {} (Zpobj) ;
        \path (Zy) edge [] node [above] {} (Zrobj) ;
        \path (Zhspl) edge [] node [above] {} (Zhobj) ;
        \path (Zhbg) edge [] node [above] {} (Zhobj) ;
        \path (Zy) edge [] node [above] {} (Zhobj) ;
\end{tikzpicture}
        \caption{Causal3DIdent}
        \label{fig-app:mechanism-causal-ident}
    \end{subfigure}
    \caption{(a) RMNIST/RFMNIST. Here $Z_\textnormal{rot}$ represents the rotation of the image, $Z_y$ represents the original RMNIST/RFMNIST class, $Z_\textnormal{res}$ contains other detail information such as writing style, which is controlled by how MNIST dataset was originally created. (b) $Z_c$ represents the color of the digit while others are the same as (a). (c) 3D Shapes. $Z_{\textnormal{shape}}$ represents the object shape. $Z_{\textnormal{hue\_obj}}, Z_{\textnormal{hue\_floor}},Z_{\textnormal{hue\_wall}}$ represent the hue of the object, floor and wall respectively. $Z_{\textnormal{orient}}$ represents the orientation of the object. (d) Causal3DIdent. $Z_y$ represents the object class. $Z_{\textnormal{hue\_obj}}, Z_{\textnormal{hue\_bg}}, Z_{\textnormal{hue\_spl}}$ represent the hue of the object, background and spotlight respectively. $Z_{\textnormal{pos\_obj}},Z_{\textnormal{pos\_spl}}$ represent the position of the object and spotlight respectively. $Z_{\textnormal{rot\_obj}}$ represents the rotation of the object. $X$ is not shown in the graph but all nodes should point to it.} 
    \label{fig-app:mechanism-image-dataset}
\end{figure}
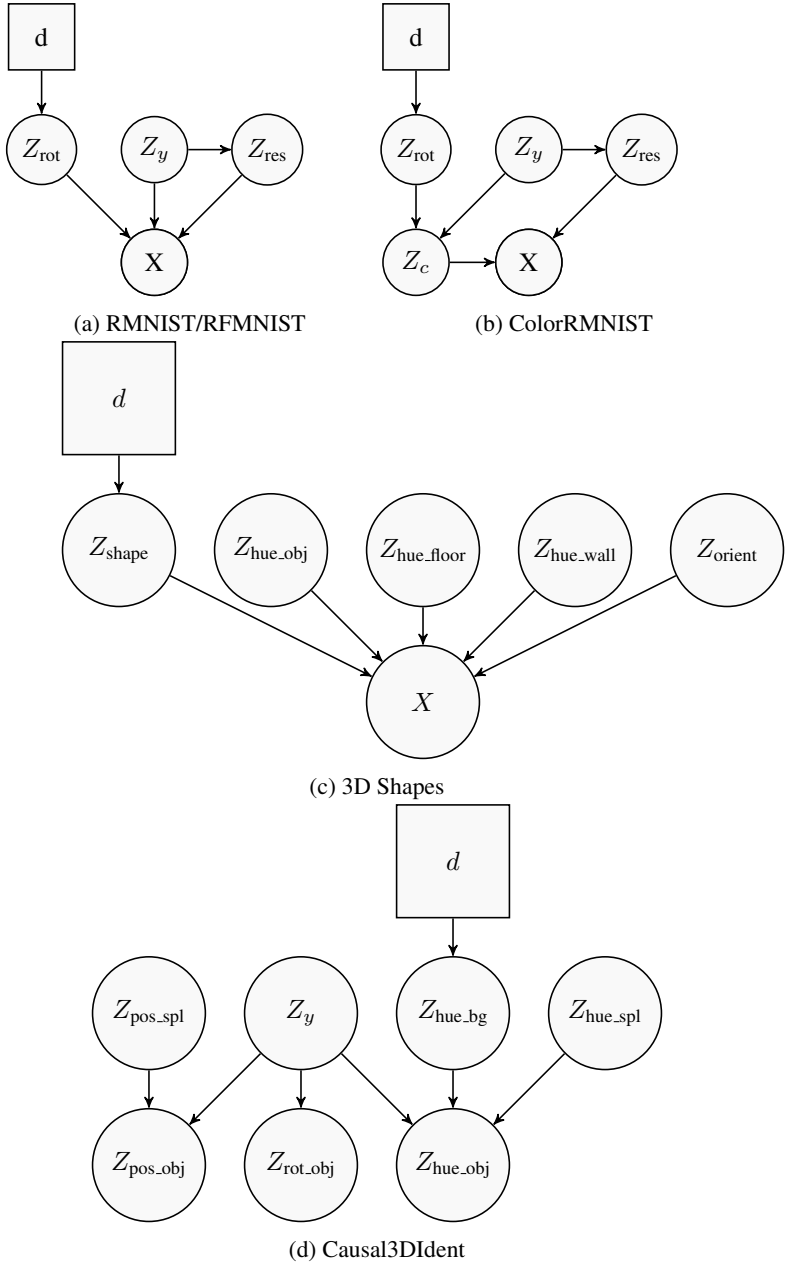 In this section, we introduce the causal interpretation of our experiments.
To evaluate the model's capability of generating good domain counterfactuals, for each dataset, we have one domain latent variable and choose one \class latent variable that are generated independently of the domain latent variable.
As an example, for RMNIST, we choose rotation as the domain latent variable and digit class as the class latent variable.
As indicated in \Cref{fig-app:mechanism-rmnist}, for the counterfactual query ``Given we observed an image in this domain, what would have happened if it was generated by a different domain?", we should expect the image to be rotated while the class remain unchanged. 
Specifically, we want to check 
\begin{align*}
   \mathbb{P}(Z_{\textnormal{rot}_{D=d'}}|X=x,D=d) &\neq \mathbb{P}(Z_{\textnormal{rot}_{D=d''}}|X=x,D=d)  \quad \forall d,d',d'' \in \mathcal{D}, \, d' \neq d'' \\
   \mathbb{P}(Z_{y_{D=d'}}|X=x,D=d) &= \mathbb{P}(Z_{y_{D=d''}}|X=x,D=d)  \quad \forall \, d, d', d'' \in \mathcal{D}
\end{align*}
 where $\mathcal{D}$ is the set of all domains.
However, in practice we cannot directly get the value of those latent variables.
This motivates our choice of evaluation metric of training a domain classifier and class classifier to detect if the domain latent variable is changed and \class latent variable (which we call class) is preserved in the counterfactuals.

For RMNIST/RFMNIST, we choose rotation as the domain variable and digit/clothes class as the \class variable.
For 3D Shapes, we choose object shape as the domain variable and hue of objects as the \class variable. 
For CRMNIST, we choose rotation as the domain variable and digit class as the \class variable. 
For CausalIdent, we choose the hue of the background as the domain variable and object class as the \class variable.
In the case of 3D Shapes, we can technically choose anything other than object shape as the \class variable. However, for simplicity, we choose one of them. 
In the case of CRMNIST, we cannot choose $Z_{\textnormal{color}}$ because it will change after we change the domain.
In the case of Causal3DIdent, we can choose anything but the hue of the object, though we figure $Z_y$ is easier to check and can reduce error caused by classifier proxies.

We also want to note that other than observational image, access to domain information is also important for answering this query. 
For example, in the case of RMNSIT, given an image that looks like digit ``9", for the question ``what would have happened if it is in domain $90\degree$", the fact that the current digit is in domain $0\degree$ (which means it is indeed digit ``9") or the current digit is in domain $180\degree$ domain $0\degree$ (which means it is digit ``6") would lead to different answer.
\subsection{Experiment Details}\label{app-sec:image-exp-details}

\paragraph{Model setup}
The relaxation to pseudo invertibility allows us to modify the ILD models to fit a VAE \citep{kingma2013auto} structure.
The overall VAE structure can be seen in \autoref{fig:rmnist-model-view}, where the variational encoder first projects to the latent space via $g^+$ to produce the latent encoding $\zvec$, which is then passed to two domain-specific autoregressive models $f^+_{d,\mu}, f^+_{d,\sigma}$ which produce the mean and variance parameters (respectively) of the Gaussian posterior distribution.
The decoder of the VAE follows the structure typical ILD structure: $g \circ f_d$.
Here, $g^+$ can be viewed as the pseudoinverse of the observation function $g$ and $f_d$ can be viewed as a pseudoinverse of $f_{d,\mu}^+$
During training, the exogenous noise variable $\epsilon$ is then found via sampling from the posterior distribution ($\epsilon \sim \mathcal{N}(\mu_d, \sigma_d)$) which can be viewed as a stochastic SCM, however, to reduce noise when producing counterfactuals, when performing inference the exogenous variable is set to the mean of the latent posterior distribution (\ie $\epsilon = \mu_d$).
In all experiments, $g$ and $g^+$ follow the $\beta$-VAE architecture seen in \citet{higgins2017beta} (with the exception that in the Causal3DIdent experiment, $g$ and $g^+$ follow the base VQ-VAE architecture \citep{van2017neural} without the quantizer), and the structure of the $f$ models is determined by the type of ILD model used (\eg independent, dense, or relaxed canonical) and matches that seen in the simulated experiments and visualized in \autoref{fig-app:illustration-model}.
For the $f$ models which enforce sparsity (\ie $\frelaxed$), we use a sparsity level, $|\intset|$, of 5.
We also introduce an additional baseline, $\findep$, which has an architecture similar to the $\fdense$ baseline, with the exception that the $g$ and $g^+$ functions are no longer shared across domains.
The $\findep$ baseline can be seen as training an independent $\beta$-VAE for each domain, where each $\beta$-VAE an autoregressive $f_{dense}$ model as its last (first) layer for the encoder (decoder), respectively. 
For experiment with RMNSIT, RFMNIST, 3D Shapes and Causal3DIdent, we choose $m=20$ and for CRMNIST, we choose $m=10$.

\begin{figure}
    \centering
    \includegraphics[width=\textwidth]{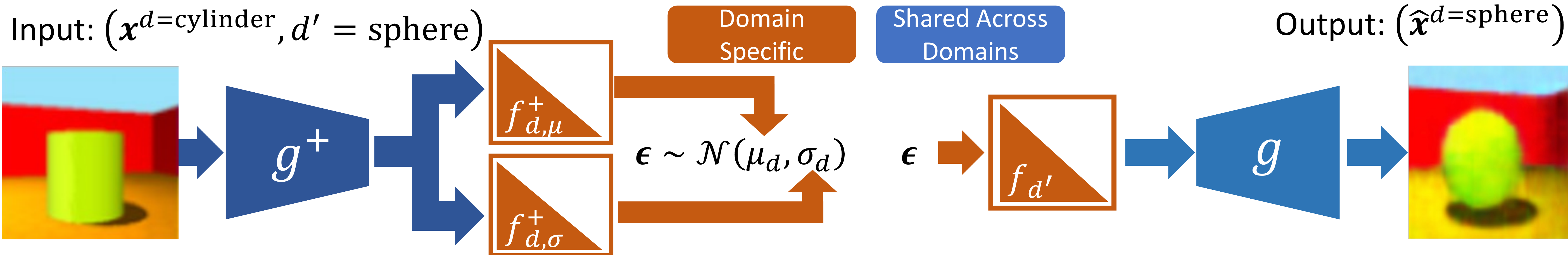}
    \caption{The model structure for the pseudo-invertible ILD model used in the high-dimensional experiments.
    The overall structure matches that of a VAE where the encoder (left) first projects to the latent space via $g^+$ (the pseudoinverse of the observation function $g$).
    This latent encoding is then passed to two domain-specific autoregressive models $f^+_{d,\mu}, f^+_{d,\sigma}$ which produce the mean and variance parameters (respectively) of the Gaussian posterior distribution.
    During training, the exogenous noise variable $\epsilon$ is then found via sampling from the posterior distribution ($\epsilon \sim \mathcal{N}(\mu_d, \sigma_d)$) which can be viewed as a stochastic SCM, however, during inference the exogenous variable is set to the mean of the latent posterior distribution (\ie $\epsilon \coloneqq \mu_d$) to reduce noise when producing counterfactuals.
    The decoder (right) follows the usual VAE decoder structure, with the exception that the initial linear layer is an autoregressive function of the $\epsilon$ input.
    The structure of all the $f$ models is determined by the type of ILD model used (\eg dense, canonical, or identity canonical) and matches that seen in \autoref{fig-app:illustration-model}.
    }
    \label{fig:rmnist-model-view}
\end{figure}

\paragraph{Training}
We train each ILD model for 300K, 300K, 300K, 500K, and 200K steps for RMNIST, RFMNIST, CRMNIST, 3D Shapes and Causal3DIdent respectively using the Adam optimizer \citep{kingma2014adam} with $\beta_1=0.5, \beta_2=0.999$, and a batch size of 1024.
The learning rate for $g$ and $g^+$ is $10^{-4}$, and all $f$ models use $10^{-3}$.
During training, we calculate two loss terms: a reconstruction loss $\ell_{recon} = \lvert \xvec - \hat{\xvec} \rvert_2^2$ where $\hat{\xvec}$ is the reconstructed image of $\xvec$ and the $\ell_{align}$ alignment loss which measures the KL-divergence between the posterior distribution $Q_d(\epsilon | \xvec)$ and the prior $P(\epsilon)$.
Following the $\beta$-VAE loss calculation in \citet{higgins2017beta}, we apply a $\beta_{KLD}$ upscaling to the alignment loss such that $\ell_{total} = \ell_{recon} + \beta_{KLD} * \ell_{align}$.
For all MNIST-like experiments, we use $\beta_{KLD}=1000$, which we found leads to the lowest counterfactual error on the validation datasets across all models; this also matches the $\beta_{KLD}$ used in \citet{burgess2018understanding}, and for 3DShape and Causal3DIdent we found $\beta_{KLD}=10$ leads to the lowest counterfactual error.

\subsection{Additional Results}\label{app-sec:image-addition-exp}

\begin{table}[]
    \centering
    \resizebox{\columnwidth}{!}{\begin{tabular}{l|llll|llll}
\hline
                & \multicolumn{4}{c|}{RMNIST}                                & \multicolumn{4}{c}{RFMNIST}                                                                                          \\ \hline
                & Comp. & Rev. & Eff. & Pre. & Comp. & Rev. & Eff. & Pre. \\ \hline
\findep  &             \bm{$99.79\pm0.44$}&               $32.56\pm0.20$&               \bm{$94.97\pm4.71$}&              $32.49\pm0.22$&      \bm{$69.75\pm1.86$}&     $22.36\pm0.76$&     \bm{$99.62\pm0.37$}&      $22.54\pm1.19$\\
\fdense       &             \bm{$99.76\pm0.28$}&               $32.60\pm0.21$&               $80.92\pm2.21$&              $32.64\pm0.23$&    \bm{$71.20\pm3.39$}&      $24.23\pm2.51$&       $98.51\pm0.93$&      $23.98\pm2.18$\\ \hline \hline
\fspa &               \bm{$99.85\pm0.27$}&               \bm{$79.84\pm17.54$}&               \bm{$96.72\pm1.89$}&              \bm{$64.99\pm9.83$}&     \bm{$71.79\pm4.55$}&  \bm{$70.44\pm3.54$}&       $98.82\pm0.73$&    \bm{$62.15\pm6.65$}\end{tabular}}
     \caption{Quantitative result with RMNIST and RFMNIST, where higher is better. They are both averaged over 20 runs.
    }
    \label{app-tab:image_quantitatvie_p1}
\end{table}
\begin{table}[]
    \centering
    \resizebox{\columnwidth}{!}{\begin{tabular}{l|llll|llll}
\hline
                                                                               & \multicolumn{4}{c|}{CRMNIST}                           & \multicolumn{4}{c}{3D Shapes} \\ \hline
                & Comp. & Rev. & Eff. & Pre. & Comp. & Rev. & Eff. & Pre.  \\ \hline
\findep  &  \bm{$87.24\pm11.98$}&   $59.88\pm6.46$&    \bm{$94.65\pm15.34$}&      $60.39\pm6.95$&     \bm{$99.79\pm0.44$}&     $32.56\pm0.20$&     \bm{$94.97\pm4.71$}&        $32.49\pm0.22$\\
\fdense       & \bm{$88.18\pm17.84$}&       $62.29\pm10.51$&     \bm{$92.72\pm15.52$}&       $59.60\pm8.92$&          \bm{$99.76\pm0.28$}&     $32.60\pm0.21$&       $80.92\pm2.21$&       $32.64\pm0.23$\\ \hline \hline
\fspa &  \bm{$92.10\pm13.24$}&     \bm{$85.74\pm13.33$}&        \bm{$94.48\pm10.71$}&     \bm{$72.95\pm12.42$}&                     \bm{$99.85\pm0.27$}&        \bm{$79.84\pm17.54$}&       \bm{$96.72\pm1.89$}&       \bm{$64.99\pm9.83$}\end{tabular}}
    \caption{Quantitative result with CRMNIST and 3D Shapes, where higher is better. CRMNIST are averaged over 20 runs and 3D Shapes are averaged over 5 runs.
    }
    \label{app-tab:image_quantitatvie_p2}
\end{table}
\begin{table}[]
    \centering
    \resizebox{0.6\columnwidth}{!}{\begin{tabular}{l|llll}
\hline
                                                                                                           & \multicolumn{4}{c}{Causal3DIdent}\\ \hline
                & Comp.& Rev.& Eff.&Pre.  \\ \hline
\findep  & $\mathbf{88.15 \pm 5.0}$& 51.43$\pm$2.7& $\mathbf{91.05\pm17.7}$&51.94$\pm$3.0\\
\fdense       & $\mathbf{83.59\pm5.4}$& 49.17$\pm$2.5& $\mathbf{92.17\pm13.6}$&48.83$\pm$3.0\\ \hline \hline
\fspa & $\mathbf{86.00\pm5.6}$& $\mathbf{79.73\pm6.6}$& $\mathbf{84.15\pm23.5}$&$\mathbf{79.73\pm8.6}$\end{tabular}}
    \caption{Quantitative result with Causal3DIdent, where higher is better. Causal3DIdent are averaged over 10 runs.
    }
    \label{app-tab:image_quantitatvie_p3}
\end{table}

The quantitative results in \Cref{app-tab:image_quantitatvie_p1}, \Cref{app-tab:image_quantitatvie_p2}, and \Cref{app-tab:image_quantitatvie_p3} match the visual result seen in \Cref{fig:rmnist-counterfactuals}, \Cref{fig:rfmnist-counterfactuals}, \Cref{fig:3dshape-counterfactuals}, where almost across all datasets the $\frelaxed$ model seems to find a proper latent causal structure that can disentangle the domain information from the class information -- unlike the baseline models which seem to commonly change the class during counterfactual.
We again note that the training process for all of the models only include the typical VAE invertibility loss (\ie reconstruction loss) and latent alignment loss (\ie the KL-divergence between the latent prior and posterior distributions) and do not specifically include any counterfactual training.
Thus, we conjecture the enforcing of sparsity in the canonical models correctly biased these models in a manner that preserved important non-domain-specific information when performing counterfactuals.
In \Cref{fig:quank-rmnist}, \Cref{fig:quank-rfmnist}, \Cref{fig:quank-crmnist} and \Cref{fig:quank-shape}, we track the change of our metrics w.r.t $|\intset|$ (we did not do this investigation for Causal3DIdent because that the training of that model takes much longer time).
We observe that as we increase $|\intset|$, the reversibility and preservation tends to decrease while the effectiveness tends to increase.
We conjecture that this is because as $|\intset|$ increases, there is less constraint on the original optimization problem (fitting the observational distribution) which could potentially increase the performance. 
However, it leads to lower chance in finding a proper latent causal structure for domain counterfactual generation, which results in the decrease in preservation.
\fdense can be regarded as an extreme case of this.
In summary, we validate the practicality of our model design in the pseudoinvertible setting with extensive study on 5 image-based datasets.

\begin{figure}[h!]
    \centering
    \includegraphics[width=\textwidth]{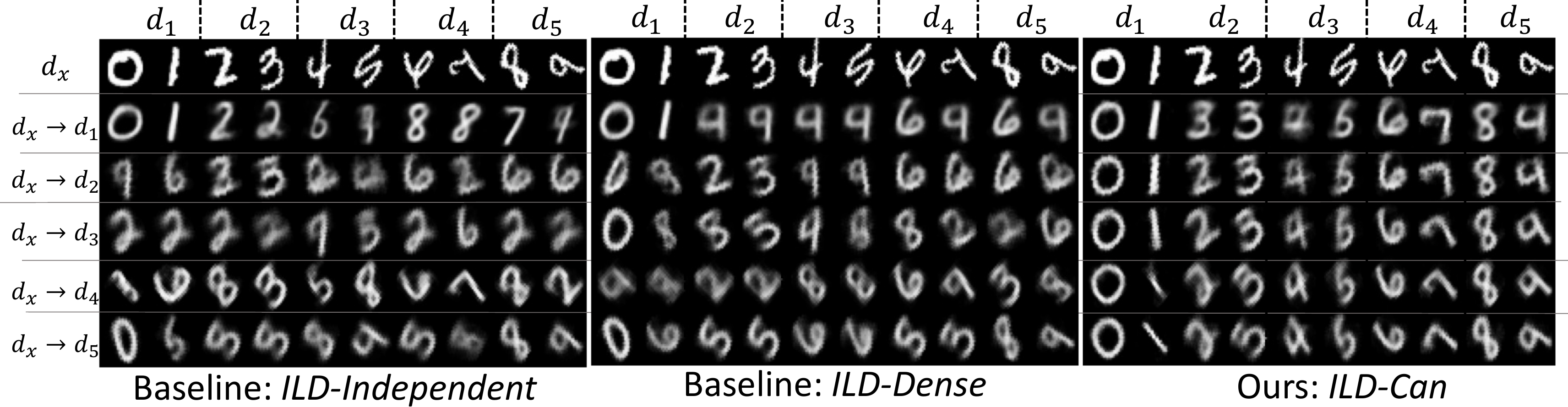}
    \caption{Counterfactual plots for the three relaxed ILD models, where across the columns we show examples of two clothing classes (\eg ``handbag'' or ``boot'')  from each domain and each row corresponds to the counterfactual to a different domain.
   It can be seen that while all models correctly recover the rotation for each domain counterfactual, the baseline models usually change the class label during counterfactual, while \frelaxed tends to preserve the clothing label, despite not being privy to any label information during training.}
    \label{fig:rmnist-counterfactuals}
\end{figure}

\begin{figure}[h!]
    \centering
    \includegraphics[width=\textwidth]{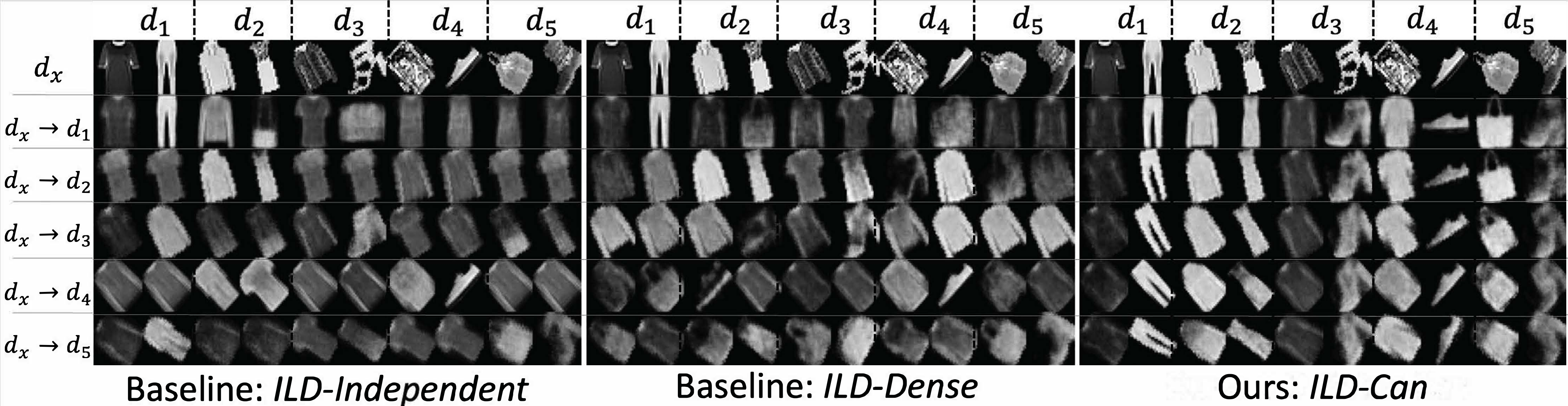}
    \caption{Counterfactual plots for the three ILD models, where across the columns we show examples of two classes from each domain and each row corresponds to the counterfactual to a different RMNIST domain.
   It can be seen that while all four models correctly recover the rotation for each domain counterfactual, the baseline models usually change the digit label during counterfactual, while \frelaxed tends to preserve the digit label, despite not being privy to any label information during training.}
    \label{fig:rfmnist-counterfactuals}
\end{figure}

\begin{figure}[h!]
    \centering
    \includegraphics[width=\textwidth]{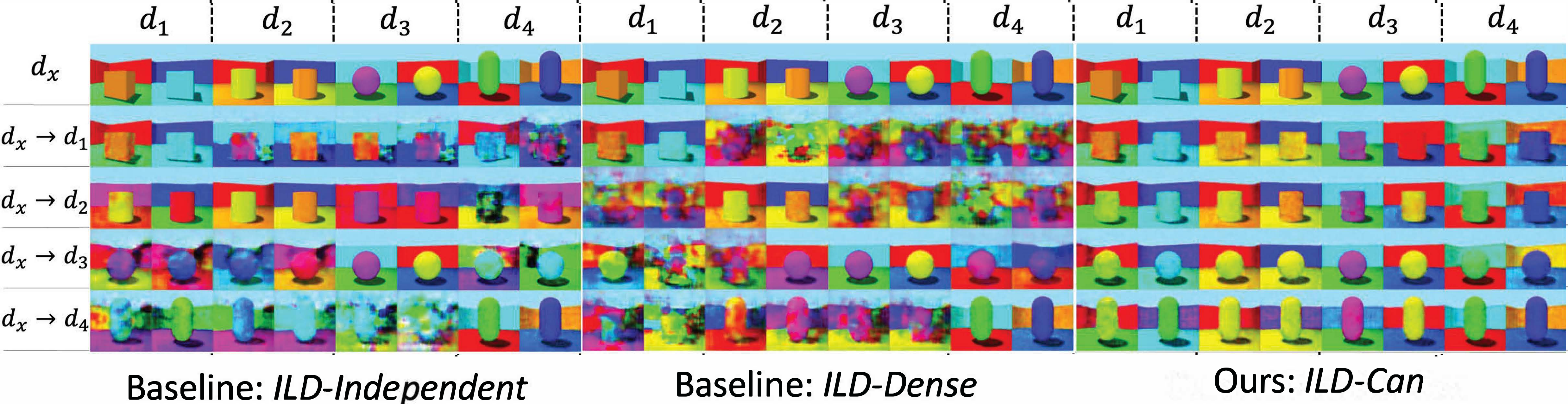}
    \caption{Counterfactual plots for the three ILD models, where across the columns we show examples of two classes from each domain and each row corresponds to the counterfactual to a different object shape domain.
   }
    \label{fig:3dshape-counterfactuals}
\end{figure}

\begin{figure}[h!]
    \centering
    \includegraphics[width=\textwidth]{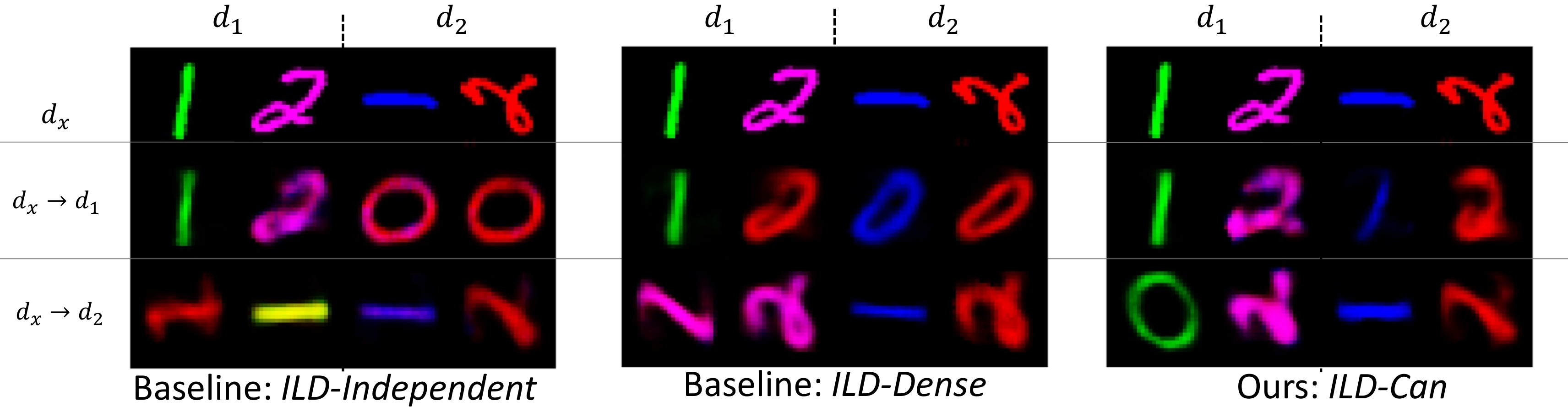}
    \caption{Counterfactual plots for the three ILD models, where across the columns we show examples of two classes from each domain and each row corresponds to the counterfactual to a different rotation domain.
   }
    \label{fig:Color-Rotated-MNIST-counterfactuals}
\end{figure}

\begin{figure}[h!]
    \centering
    \includegraphics[width=\textwidth]{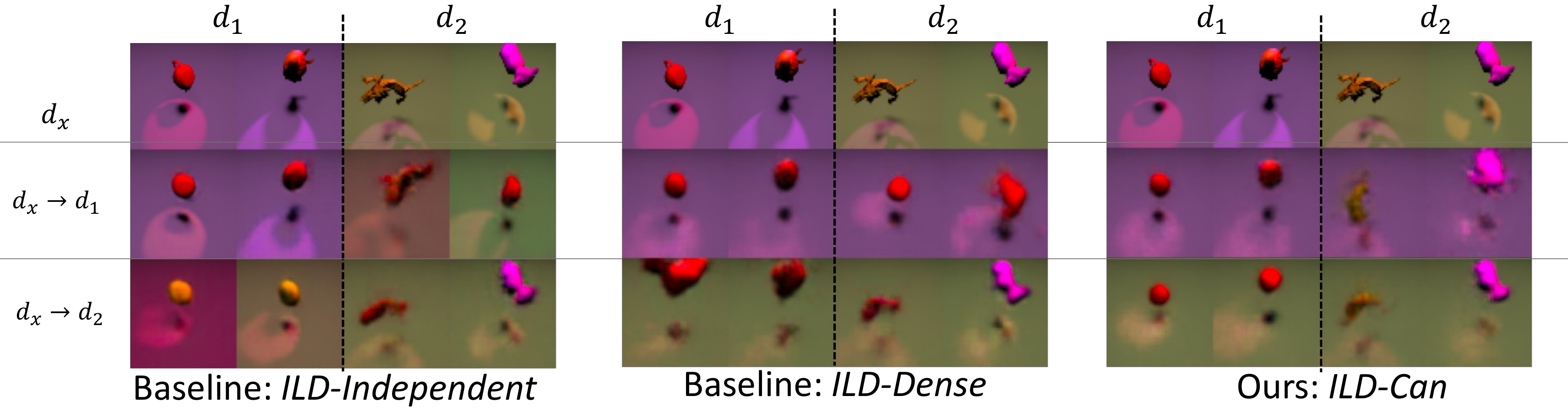}
    \caption{Counterfactual plots for the three ILD models, where across the columns we show examples of two classes from each domain and each row corresponds to the counterfactual to a different background hue domain.
   }
    \label{fig:Causal3DIdent-counterfactuals}
\end{figure}

\begin{figure*}[h!]
         \centering
         \includegraphics[width=0.8\textwidth]{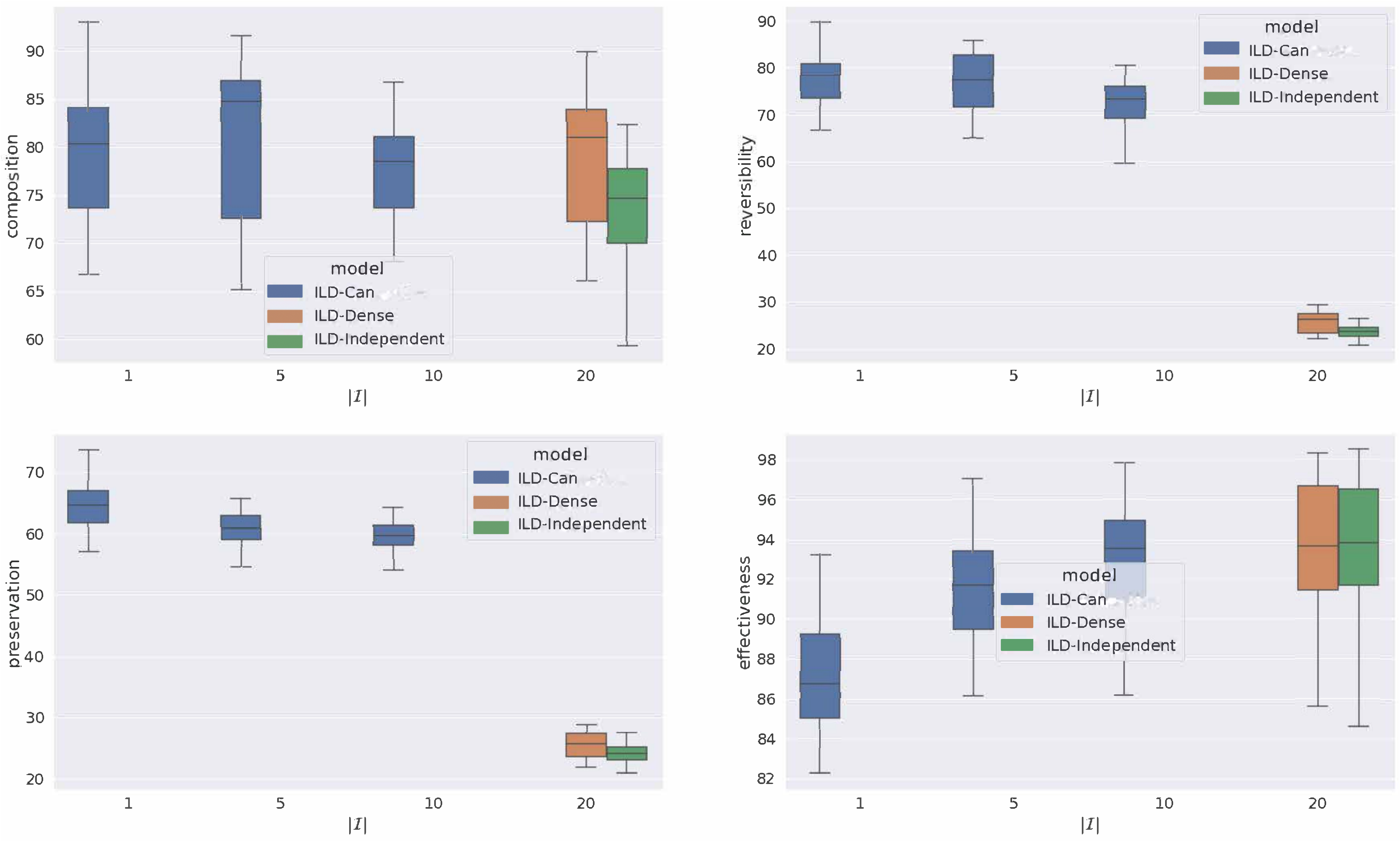}
        \caption{Change of metrics w.r.t $|\intset|$ for RMNIST.  
        Results are with 20 runs and we remove outliers when plotting.}
        \label{fig:quank-rmnist} 
\end{figure*}

\begin{figure*}[h!]
         \centering
         \includegraphics[width=0.8\textwidth]{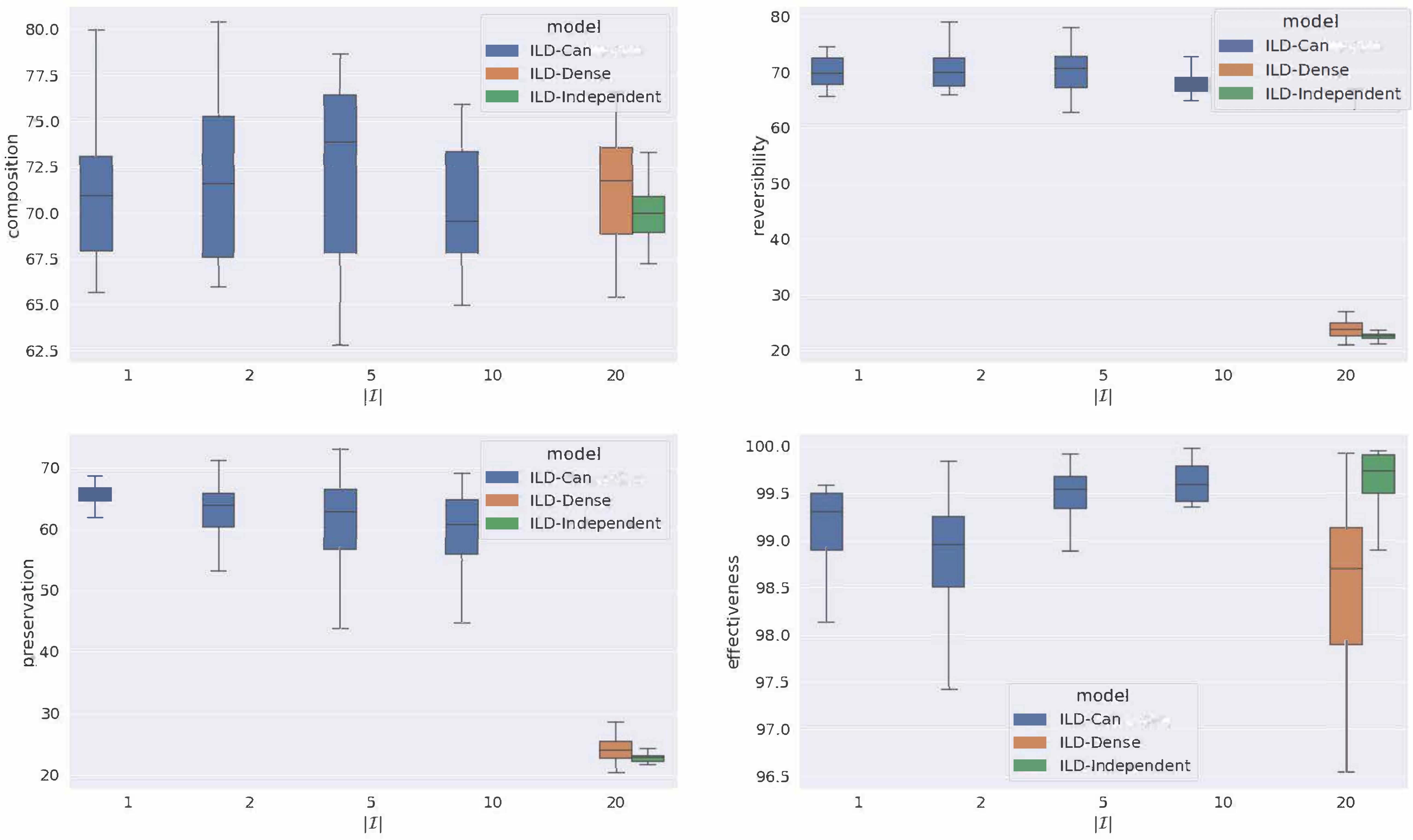}
        \caption{Change of metrics w.r.t $|\intset|$ for RFMNIST.  Results are with 20 runs and we remove outliers when plotting.
        \label{fig:quank-rfmnist} 
        }
\end{figure*}

\begin{figure*}[h!]
         \centering
         \includegraphics[width=0.8\textwidth]{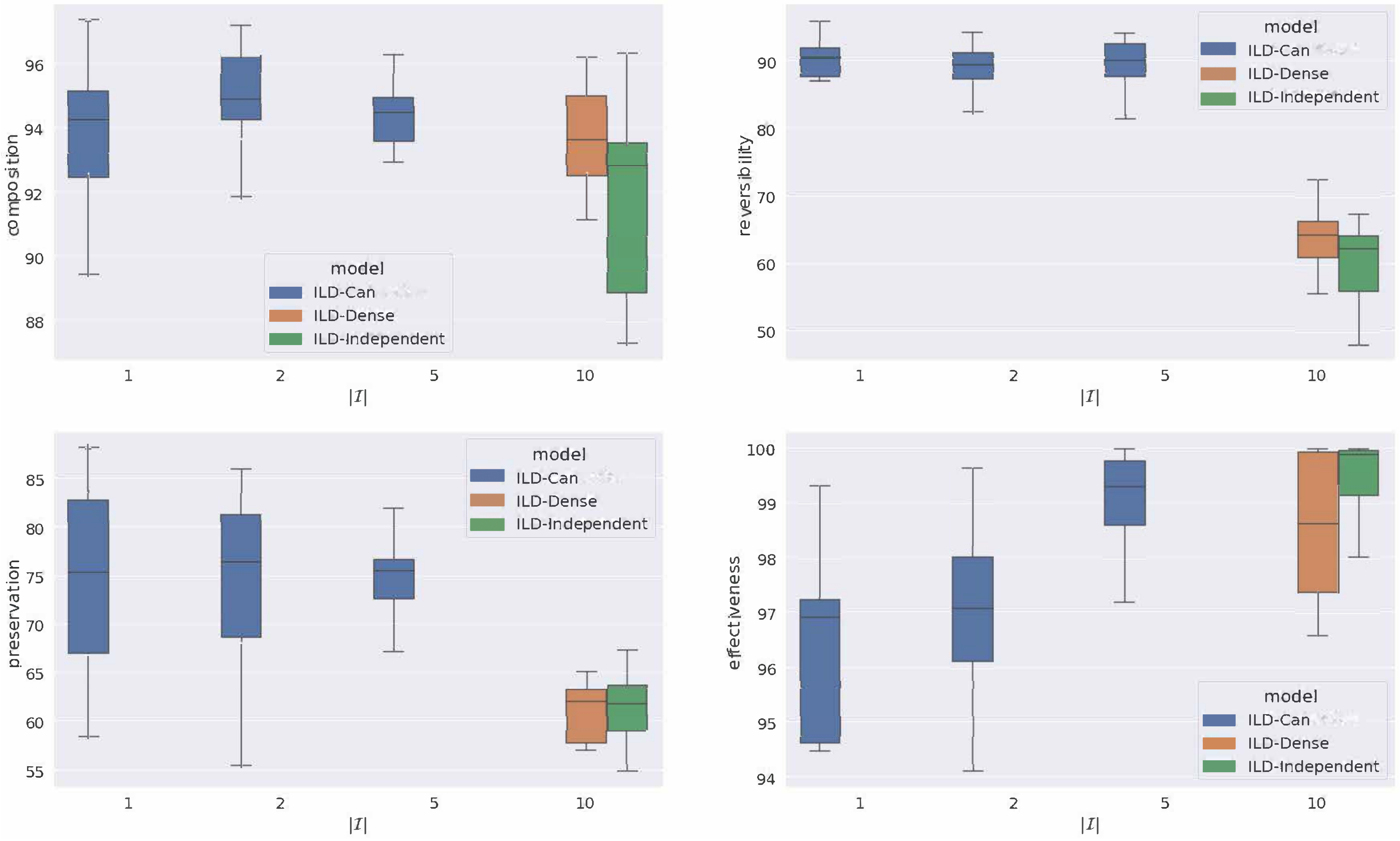}
        \caption{Change of metrics w.r.t $|\intset|$ for CRMNIST.  Results are with 20 runs and we remove outliers when plotting.
        \label{fig:quank-crmnist} 
        }
\end{figure*}

\begin{figure*}[h!]
         \centering
         \includegraphics[width=0.8\textwidth]{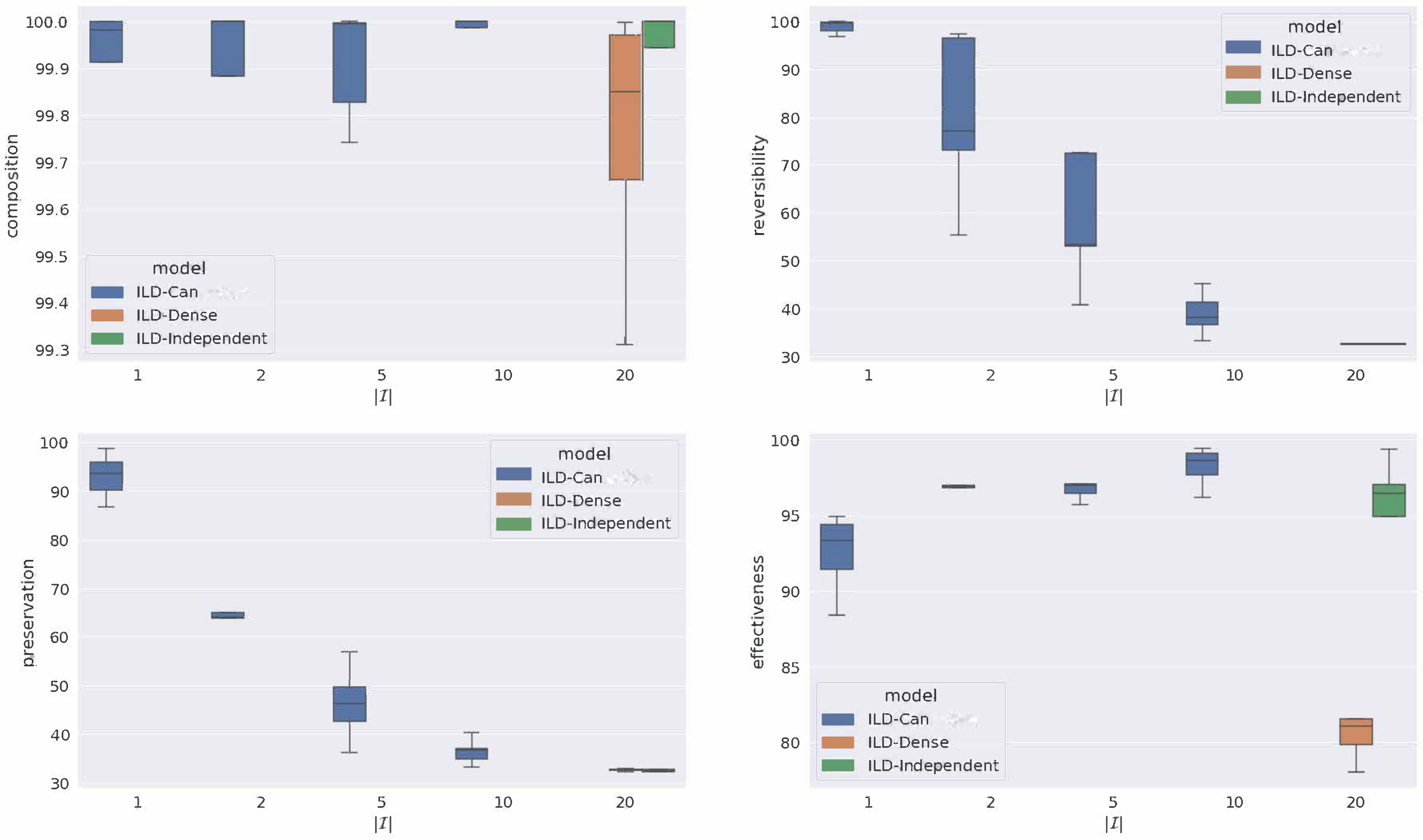}
        \caption{Change of metrics w.r.t $|\intset|$ for 3D Shapes.  Results are with 5 runs and we remove outliers when plotting.
        \label{fig:quank-shape} 
        }
\end{figure*}

\clearpage
\section{Limitations}\label{app-sec:limitation}
A practical problem we noticed in our simulated experiments is that sometimes the sparse model is harder to fit, i.e., its log-likelihood is worse than the dense model, even if we only consider the cases where the true model is in the model class being optimized (e.g., the sparsity of the model is at least as large as the sparsity of the ground truth model).
We conjecture that this results from a harder loss landscape as we add more constraints to the model.
We believe a more careful investigation of the model and algorithm could be an interesting and important future work. 
For example, if we use a more significantly overparameterized model, there are chances that the training of \fspa would become easier.
Additionally, the addition of further loss terms could aid in the training of these models, such as, assuming access to some ground truth domain counterfactuals (\eg the same patient received imaging at multiple hospitals) could be used to penalize our model when it changes latent variables which do not change under the ground truth counterfactuals.

In our experiments, we aimed to test the effects of breaking some of our assumptions (\eg ``what if our model is not strictly invertible''), and while our models still performed better in these cases, there are likely cases where the breaking of our assumptions can cause our models to fail to produce faithful counterfactuals.
For example, in a case where there is a very large difference between domains and there is no sparsity in the domain shifts, then it is likely that the constraints constituted by our sparsity assumption will make the sparse models struggle to fit the observed distributions.

\section{Expanded Related Work}\label{app-sec:expanded-related-work}

\paragraph{Causal Representation Learning}
Causal Representation Learning (CRL) is a rapidly developing field that aims to discover the underlying causal mechanisms that drive observed patterns in data and learn representations of data that are causally informative \citep{scholkopf2021toward}.  
This is in contrast to traditional representation learning, which does not consider the causal relationships between variables. 
An extensive review can be found in \citet{scholkopf2021toward}.
As this is a highly difficult task, most works make assumptions on the problem structure, such as access to atomic interventions, the graph structure (e.g., pure children assumptions), or model structure (e.g., linearity) \citep{kun1,kun2,kun3, kun5,kun7,kun8, DBLP:conf/icml/SquiresSBU23, zhang2023identifiability, sturma2023unpaired, jiang2023learning, liu2022identifying, varici2023score}.
Other works such as \citep{brehmer2022weakly, ahuja2022weakly, von2021self} assume a weakly-supervised setting where one can train on counterfactual pairs $(x, \tilde{x})$ during training.
\citet{Lachapelle2023-cn} address the identifiablity of a disentangled representation leveraging multiple sparse task-specific linear predictors.
In our work, we aim to maximize the practicality of our assumptions while still maintaining our theoretical goal of equivalent domain counterfactuals (as seen in \autoref{tab:related-works}).

\paragraph{Counterfactual Generation}
Counterfactual examples are answers to hypothetical queries such as ``What would the outcome have been if we were in setting $B$ instead of $A$?''.
A line of works focus on the identifiability of counterfactual queries \citep{nasr2023counterfactual, shah2022counterfactual}.
For example, given knowledge of the ground-truth causal structure, \citet{nasr2023counterfactual} are able to recover the structural causal models up to equivalence. 
However, they do not consider the latent causal setting and they assume some prior knowledge of underlying causal structures such as the backdoor criterion.
There is a weaker form of counterfactual generation which does not use causal reasoning but instead uses generative models to generate counterfactuals \citep{nemirovsky2022countergan, zhu2017unpaired, zhou2022iterative, choi2018stargan, fedinb,kulinski2023towards}.
These typically involve training a generative model which has a meaningful latent representation that can be intervened on to guide a counterfactual generation \citep{ilse2020diva}.
As these works do not directly incorporate causal learning in their frameworks, we consider them out of scope for this paper.
Another branch of works try to estimate causal effect without trying to learn the underlying causal structure, which typically assume all variables are observable\citep{DBLP:conf/nips/LouizosSMSZW17}.

\paragraph{Causal Discovery and nonlinear ICA}
Causal discovery focus on identifying the causal relationships from observational data.
\citet{peters2016causal,heinze2018invariant} achieve this via the invariant mechanism between certain variable and and its direct causes.
Most of these works do not assume the setting of latent variables.
Similar to CRL, nonlinear ICA typically aims at finding the mixing function.
For example, some works try to identify it with access to auxiliary variables \citep{hyvarinen2019nonlinear,khemakhem2020variational}, by adding constraint on the mixing functions \citep{gresele2021independent, moran2021identifiable} or under specific scenario such as bivariate setting \citep{Wu2020-oq}.
\citet{Zheng2022-up} relax the constraint of auxiliary variable and impose structure sparsity to achieve identifiability result, where structure sparsity is less general than the mechanism sparsity discussed in our work.
In contrast with CRL, most nonlinear ICA works do not consider latent variables that are causally related.

\end{document}